\PassOptionsToPackage{numbers,sort&compress}{natbib}
\PassOptionsToPackage{main}{neurips_2026}
\documentclass{article}

\usepackage[preprint]{neurips_2026}
\renewcommand{\acksection}{\section*{Acknowledgements}}
\usepackage[utf8]{inputenc}
\usepackage[T1]{fontenc}
\usepackage{hyperref}
\usepackage{url}
\usepackage{booktabs}
\usepackage{amsmath,amssymb,amsfonts,amsthm,mathtools}
\usepackage{bm}
\usepackage{nicefrac}
\usepackage{microtype}
\usepackage{xcolor}
\usepackage{colortbl}
\usepackage{neurips_gold_boxes}
\usepackage{graphicx}
\usepackage{subcaption}
\usepackage{placeins}
\usepackage{needspace}
\usepackage{tabularx}
\usepackage{array}
\usepackage{enumitem}
\usepackage{tikz}
\usetikzlibrary{arrows.meta,positioning,fit,shapes.geometric}

\makeatletter
\@ifundefined{nolinenumbers}{\newenvironment{nolinenumbers}{}{}}{}
\makeatother

\definecolor{paperblue}{HTML}{2F5D9B}
\definecolor{grapefruit}{HTML}{E06A6A}
\definecolor{keyteal}{HTML}{159C90}
\definecolor{keyteallight}{HTML}{F7FCFB}
\definecolor{keyshadow}{HTML}{C9D0CD}
\definecolor{resultblue}{HTML}{6F8296}
\definecolor{resultblueback}{HTML}{F5F8FA}
\definecolor{resultgreen}{HTML}{74846C}
\definecolor{resultgreenback}{HTML}{F7F9F5}
\definecolor{resultgold}{HTML}{7C866F}
\definecolor{resultgoldback}{HTML}{F7F9F5}
\definecolor{resultteal}{HTML}{6F6296}
\definecolor{resulttealback}{HTML}{F7F5FC}
\definecolor{boxink}{HTML}{2A2238}
\definecolor{figurebarrier}{HTML}{2F5D86}
\newcommand{\boxtextbf}[1]{{\bfseries\color{boxink}#1}}
\providecolor{resnethead}{HTML}{F7FAFE}
\providecolor{resnetbest}{HTML}{EDF4FF}
\providecolor{resnetsecond}{HTML}{F7FAFF}
\providecolor{densehead}{HTML}{F7FBF8}
\providecolor{densebest}{HTML}{EEF8F2}
\providecolor{densesecond}{HTML}{F8FCFA}
\providecolor{distilhead}{HTML}{FFF9F0}
\providecolor{distilbest}{HTML}{FFF3E3}
\providecolor{distilsecond}{HTML}{FFFAF3}
\providecolor{qwenhead}{HTML}{FFF6F6}
\providecolor{qwenbest}{HTML}{FFEDED}
\providecolor{qwensecond}{HTML}{FFF8F8}

\newtcolorbox{keyideabox}{
  enhanced,
  colback=keyteallight,
  colframe=keyteal,
  boxrule=0.55pt,
  arc=1.4mm,
  left=8pt,
  right=8pt,
  top=13pt,
  bottom=6pt,
  before skip=15pt,
  after skip=6pt,
  drop shadow=keyshadow,
  overlay={
    \node[
      anchor=west,
      fill=keyteal,
      text=white,
      font=\bfseries\large,
      rounded corners=1.2mm,
      inner xsep=10pt,
      inner ysep=4.5pt
    ] at ([xshift=7mm,yshift=0.5mm]frame.north west) {Key Idea};
  }
}

\newtcolorbox{mainresultbox}[3]{
  enhanced,
  breakable,
  colback=#3,
  colframe=#2!38!white,
  colbacktitle=#2,
  boxrule=0.35pt,
  arc=1.8mm,
  outer arc=1.8mm,
  left=7pt,
  right=7pt,
  top=6pt,
  bottom=6pt,
  before skip=7pt,
  after skip=7pt,
  fonttitle=\bfseries\color{white},
  fontupper=\color{boxink},
  before upper={\let\textbf\boxtextbf},
  coltitle=white,
  title={#1}
}

\title{Pairwise AUC Optimization Needs Corrective Power:\\[0.20em]
A Unified View}
\author{%
  Jia Chen$^{1}$ \quad
  Zhiyong Yang$^{1,*}$ \quad
  Shilong Bao$^{2}$ \quad
  Qianqian Xu$^{2}$ \quad
  Qingming Huang$^{1,*}$ \\
  $^{1}$School of Computer Science and Technology, University of Chinese Academy of Sciences \\
  $^{2}$Institute of Computing Technology, Chinese Academy of Sciences \\
  Beijing, China \\
  \texttt{chenjia26@mails.ucas.ac.cn}, \texttt{\{yangzhiyong21,qmhuang\}@ucas.ac.cn} \\
  \texttt{\{baoshilong,xuqianqian\}@ict.ac.cn}
}

\newtheorem{theorem}{Theorem}
\newtheorem{proposition}{Proposition}
\newtheorem{corollary}{Corollary}
\newtheorem{lemma}{Lemma}

\newenvironment{mainboxedtheorem}[2]{%
  \Needspace{4\baselineskip}%
  \refstepcounter{theorem}%
  \begin{nolinenumbers}%
  \begin{mainresultbox}{Theorem~\thetheorem\ (#1)}{resultblue}{resultblueback}%
  \label{#2}\ignorespaces%
}{%
  \end{mainresultbox}%
  \end{nolinenumbers}\ignorespacesafterend%
}
\newenvironment{mainboxedproposition}[2]{%
  \Needspace{4\baselineskip}%
  \refstepcounter{proposition}%
  \begin{nolinenumbers}%
  \begin{mainresultbox}{Proposition~\theproposition\ (#1)}{resultgreen}{resultgreenback}%
  \label{#2}\ignorespaces%
}{%
  \end{mainresultbox}%
  \end{nolinenumbers}\ignorespacesafterend%
}
\newenvironment{mainboxedcorollary}[2]{%
  \Needspace{4\baselineskip}%
  \refstepcounter{corollary}%
  \begin{nolinenumbers}%
  \begin{mainresultbox}{Corollary~\thecorollary\ (#1)}{resultteal}{resulttealback}%
  \label{#2}\ignorespaces%
}{%
  \end{mainresultbox}%
  \end{nolinenumbers}\ignorespacesafterend%
}

\newcommand{\E}{\mathbb{E}}
\newcommand{\Pbb}{\mathbb{P}}
\newcommand{\R}{\mathbb{R}}
\newcommand{\1}{\mathbf{1}}
\newcommand{\Var}{\mathrm{Var}}
\newcommand{\Cov}{\mathrm{Cov}}
\newcommand{\dist}{\mathrm{dist}}

\newcolumntype{Y}{>{\raggedright\arraybackslash}X}

\DeclareTextFontCommand{\textbf}{\bfseries\color{grapefruit}}

\hypersetup{
  hypertexnames=false,
  colorlinks=true,
  linkcolor=paperblue,
  citecolor=grapefruit,
  urlcolor=paperblue
}

\begin{document}

\maketitle
\begingroup
\renewcommand{\thefootnote}{}
\footnotetext{\textsuperscript{*}: corresponding authors}
\endgroup

\begin{abstract}
Direct pairwise AUC-surrogate training is attractive because it trains toward the ranking metric used at test time, especially under class imbalance. This creates a direct alignment between the training objective and the evaluation metric, but such metric alignment alone does not guarantee that stochastic training remains corrective. We study this gap through \emph{Corrective Power}: whether pairwise mini-batch updates produce stronger and more reliable improvements for more severe positive-negative ranking errors. Our main observation is that direct pairwise training can enter regions where severe errors remain common but no longer produce useful corrective motion. In such regions, even the mini-batch noise that would normally help SGD move along the steepest severe-error correction direction can become much weaker. 
In this paper, we seek a theoretical explanation for this by exploring the degenerate property of the U-statistics formed from the pairwise gradients therein. Our results provide a mechanistic explanation for several empirical patterns: (a) squared-hinge loss is often an effective AUC surrogate; (b) cross-entropy warm-up can help models that enter the AUC phase from a poor starting region; and (c) parameter-efficient fine-tuning from a pretrained model can reduce that need by starting closer to a favorable ranking geometry. Experiments on 8 CV tasks and 8 NLP tasks, using ResNet50, DenseNet121, DistilBERT-base, and Qwen2.5-1.5B-Instruct, support these predictions.
\end{abstract}

\section{Introduction}

AUC has a simple but important appeal: it is a threshold-free measure of ranking quality, evaluating whether positives receive higher scores than negatives rather than fixing a single operating point~\citep{hanley1982meaning,bradley1997use,fawcett2006introduction}. This perspective is especially important under class imbalance, where error rate and pointwise classification losses can be poor proxies for the positive-negative ordering that AUC measures~\citep{cortes2003auc,menon2016bipartite,yang2022auc}. A natural response is to optimize pairwise AUC surrogates directly, as studied in both classical and modern AUC maximization~\citep{yan2003optimizing,herschtal2004optimising,rakotomamonjy2004optimizing,liu2019stochastic,yuan2021large}. The difficulty we study is that the same pairwise structure that aligns training with AUC can also make the stochastic optimization dynamics fragile.

Several empirical patterns indicate that pairwise AUC training cannot be achieved simply by replacing pointwise empirical risk minimization (ERM). First, direct pairwise AUC-surrogate training from an ordinary starting point can be slow or unstable. Deep-AUC pipelines therefore often use cross-entropy (CE) warm-up or staged optimization before switching to an AUC-family objective~\citep{liu2019stochastic,yuan2021large,yuan2021compositional,yang2021deep,yang2021all,yang2022optimizing}. Second, the choice of a pairwise surrogate can determine whether training makes progress once the AUC phase begins. On difficult tasks, squared-hinge-style AUC margins have shown strong empirical performance. A deep AUC system built around this margin achieved first place in the Stanford CheXpert competition~\citep{yuan2021large}. Third, the dependence on warm-up becomes much weaker when performing parameter-efficient fine-tuning (\textit{e.g.}, using LoRA or adapters) for LLMs~\citep{houlsby2019parameter,hu2021lora,dettmers2023qlora}. Taken together, these patterns highlight the difficulty of training: the model may already contain many ranking errors, but the next pairwise update might not focus on the worst ones. A useful explanation must therefore look beyond objective alignment and ask whether the stochastic update still allocates its corrective effort to severely misranked pairs. Rather than treating the observations as separate heuristics or implementation choices, we argue that they are symptoms of one mechanism.

\begin{figure}[t]
  \centering
  \includegraphics[width=0.90\linewidth,clip]{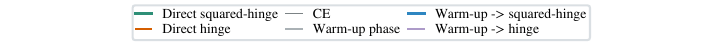}\vspace{0.1em}

  \captionsetup[subfigure]{font=footnotesize,labelfont=bf,justification=centering,singlelinecheck=false,skip=1pt}
  \begin{subfigure}[t]{0.315\linewidth}
  \centering
  \includegraphics[width=\linewidth,clip]{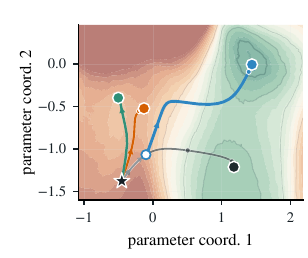}
  \caption{Bad initialization}
  \end{subfigure}\hfill
  \begin{subfigure}[t]{0.315\linewidth}
  \centering
  \includegraphics[width=\linewidth,clip]{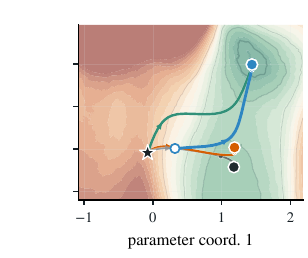}
  \caption{Slightly better initialization}
  \end{subfigure}\hfill
  \begin{subfigure}[t]{0.315\linewidth}
  \centering
  \includegraphics[width=\linewidth,clip]{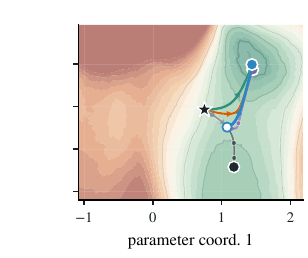}
  \caption{Good initialization}
  \end{subfigure}
  \caption{\textbf{AUC dynamics comparison under different initialization.}
  The background color shows AUC on a fixed two-dimensional parameter slice. Curves are simulated training paths; stars and large circles mark starts and final iterates. Panels \textbf{(a)}, \textbf{(b)}, and \textbf{(c)} use bad, slightly better, and good initializations. Details are in Appendix~\ref{app:fig1-simulation}.}
  \label{fig:intro-controlled-dynamics}
\end{figure}

Figure~\ref{fig:intro-controlled-dynamics} checks the training dynamics with the AUC landscape fixed while the initialization and update rule vary. In this controlled simulation, from a bad start, direct pairwise training moves weakly, and CE warm-up reaches a basin where squared-hinge succeeds. From a slightly better start, direct squared-hinge can already make corrective progress while hinge remains weak. From a good start, both hinge and squared-hinge can work without a warm-up. This motivates \emph{Corrective Power}: reliable direct pairwise training requires severe ranking errors not merely to incur pairwise loss, but to generate finite-batch updates that actively push the model toward correcting them.

This paper examines under what conditions such corrective power becomes weak. The main clue is the useful motion along a severe-error correction direction, meaning a direction that locally decreases the loss contributed by severely misranked pairs. In a \textbf{bad region}, many severely misranked pairs do not automatically create such motion. Their gradient contributions may cancel, point away from the correction direction, or receive too little surrogate weight. Along this direction, SGD has a projected mean drift and finite-batch noise. In a bad region, severe errors
remain common, but the projected mean drift is weak. Then progress depends on the noise in the same direction. For pairwise AUC training, the U-statistic structure can reduce this useful projected variance from order $B^{-1}$ to order $B^{-2}$, changing the projected escape-time scale from order $B$ to order $B^2$. 

Based on this, we figure out two ways to enter or leave the bad regions.  The first is the choice of surrogate losses. A useful pairwise loss should put enough update mass on severe active violations while ignoring already separated pairs. Squared-hinge has this profile within the standard surrogate family. The second is changing where the pairwise dynamics begin. Warm-up and pretrained parameter-efficient fine-tuning exactly serve this purpose. They can therefore help avoid the poor-start issue.

In short, this paper uses \emph{Corrective Power} to explain when direct pairwise AUC-surrogate training works and when it loses corrective strength. The contributions are:
\begin{enumerate}[leftmargin=5mm]
  \item \textbf{A mechanism for loss of Corrective Power.} We define bad regions through two observable ingredients: many severe positive-negative misrankings and weak projected motion along the steepest severe-error correction direction (Section~\ref{sec:bad-regions}).
  \item \textbf{A weight-based explanation of surrogate behavior.} We analyze standard pairwise losses by examining the weights they assign to margins during updates. Squared-hinge is well matched to difficult active pairs within this family: it increases correction with violation severity while turning off easy pairs (Section~\ref{sec:surrogate-weights}).
  \item \textbf{Understanding warm-up and pretraining.} The corrective-power view predicts that CE warm-up can help when the AUC-surrogate phase starts in a poor region, while pretrained low-dimensional trainable subspaces can make a separate warm-up less necessary. We test these predictions on computer vision and NLP tasks with ResNet, DenseNet, DistilBERT, and Qwen models (Sections~\ref{sec:warmup-pretraining} and~\ref{sec:experiments}).
\end{enumerate}

\section{Bad Regions in the Stochastic Dynamics of Pairwise AUC Optimization}\label{sec:bad-regions}

We now formalize the failure mode from the introduction. Let \(f_\theta:\mathcal X\to\mathbb R\) be the scoring function. Let
\(X^+\sim P^+\) and \(X^-\sim P^-\) be independent positive and negative examples.
We use the positive-margin convention
\[
m_\theta(x^+,x^-)=f_\theta(x^+)-f_\theta(x^-).
\]
For a pairwise surrogate \(\ell\), define
\[
R_\ell(\theta)=\mathbb E[\ell(m_\theta(X^+,X^-))],
\qquad
w_\ell(x)=-\ell'(x).
\]
Although training minimizes the surrogate risk \(R_\ell\), the quantity we ultimately seek to reduce is the hard pairwise error \(\mathbb P(m_\theta(X^+,X^-)\le 0)\), with particular attention to its severe-error subset.
For sampled examples, let \(x_i^+\) and \(x_j^-\) denote the \(i\)-th sampled positive example and the \(j\)-th sampled negative example, respectively, and write
\[
m_{ij}=f_\theta(x_i^+)-f_\theta(x_j^-),\quad
g_i^+=\nabla_\theta f_\theta(x_i^+),\quad
g_j^-=\nabla_\theta f_\theta(x_j^-).
\]
Given \(B_+\) positives and \(B_-\) negatives in a batch, the pairwise mini-batch update is
\[
\widehat g^\ell_{B_+,B_-}(\theta)
=
\frac{1}{B_+B_-}
\sum_{i=1}^{B_+}\sum_{j=1}^{B_-}
w_\ell(m_{ij}) (g^-_j-g^+_i),
\tag{1}
\]
With fresh mini-batches, the estimator is unbiased in the sense that
$\mathbb E[\widehat g^\ell_{B_+,B_-}(\theta)]=\nabla R_\ell(\theta).
$

Pairwise SGD updates the parameters by
\[
\theta_{t+1}=\theta_t-\eta \widehat g^\ell_{B_+,B_-}(\theta_t),
\]
where \(\widehat g^\ell_{B_+,B_-}(\theta_t)\) is a stochastic estimate of
\(\nabla R_\ell(\theta_t)\). Thus the update direction has two parts: the mean descent
direction \(-\nabla R_\ell(\theta_t)\) and the finite-batch noise around it.
Here, the finite-batch noise is the mini-batch gradient fluctuation \(\widehat g^\ell_{B_+,B_-}(\theta_t)-\nabla R_\ell(\theta_t)\) relative to the full-sample gradient. We ask
whether these two parts still give useful motion in a direction that reduces severe
ranking errors.

Here, \emph{local correction direction} is terminology introduced in this paper for a direction that locally reduces the loss contributed by severely misranked pairs. We are now ready to measure the useful motion along the local correction direction \(\bm v\). Along this
direction, the mean drift is
\[
d_{\bm v}(\theta)=\langle-\nabla R_\ell(\theta),\bm v\rangle .
\]
If \(d_{\bm v}(\theta)\) is large, SGD can progress even with small noise. Many severe misranked pairs do not by themselves make \(d_{\bm v}(\theta)\) large. The
mean drift is a weighted directional average, and these terms may not add up along
\(\bm v\). The hard
case is when severe errors remain common but \(d_{\bm v}(\theta)\) is small. Then progress
depends on the finite-batch noise along the same direction.

The next question is which direction to measure. We focus on hard, severely misranked pairs. To isolate this part, we introduce an intermediate loss
\(\Phi_{M,\ell}(\theta)\). For \(M>0\), define
\[
\Phi_{M,\ell}(\theta)
=
\mathbb E\!\left[
\mathbf 1\{m_\theta(X^+,X^-)\le -M\}
\,\ell(m_\theta(X^+,X^-))
\right].
\]
This is the severe-error part of the current pairwise objective. At a reference point
\(\theta_0\), if \(\nabla\Phi_{M,\ell}(\theta_0)\ne 0\), set
\[
\bm v_{\rm sev}(\theta_0)
=
-\frac{\nabla\Phi_{M,\ell}(\theta_0)}
{\|\nabla\Phi_{M,\ell}(\theta_0)\|_2}.
\]
The direction is fixed before projected drift or variance is measured. It is the
steepest local direction for reducing the hard pairs, which makes perturbations along
\(\bm v_{\rm sev}\) a targeted probe of local corrective motion. \textbf{Motion in other directions may
change easy pairs or other parts of the objective, but it does not directly reduce the loss from hard examples.} We therefore measure corrective motion along
\(\bm v_{\rm sev}\) and interpret finite interventions in light of the step size, local
curvature, and subsequent trajectory. See Appendix~\ref{app:mechanism-diagnostics} for the implementation details.

We now study the finite-batch noise along the fixed correction direction \(\bm v\).
For this direction, define the scalar pairwise-gradient kernel
\[
\psi_{\theta,\bm v}(x^+,x^-)
=
\left\langle
w_\ell(m_\theta(x^+,x^-))
\big(\nabla_\theta f_\theta(x^-)-\nabla_\theta f_\theta(x^+)\big),
\bm v
\right\rangle .
\]
For the sampled positive-negative pair indexed by \((i,j)\), define
\[
\phi_{\theta,\bm v}(i,j)
=
\psi_{\theta,\bm v}(x_i^+,x_j^-)
=
\left\langle
w_\ell(m_{ij})(g_j^- - g_i^+),
\bm v
\right\rangle .
\]
A two-sample U-statistic averages a kernel over all sampled positive-negative pairs; its Hoeffding decomposition separates the resulting fluctuation into positive-example, negative-example, and residual pair-interaction components.
In the equal-size case, write
\(\widehat g^\ell_B(\theta)=\widehat g^\ell_{B,B}(\theta)\). Then
\(\langle \widehat g^\ell_B(\theta),\bm v\rangle\) is the average of \(\phi_{\theta,\bm v}(i,j)\), equivalently a two-sample U-statistic built
from \(\psi_{\theta,\bm v}\). The unequal-size case only changes the normalizing factors:
the two first-order terms scale as \(B_+^{-1}\) and \(B_-^{-1}\), and the residual
interaction term scales as \((B_+B_-)^{-1}\). All conclusions below remain the same.

The next result identifies the variance structure by decomposing the first-order Hoeffding projections and the residual pair-interaction term~\citep{lee2019u,serfling2009approximation}.
The first two components are the leading \(B^{-1}\) stochastic term. They are the only Hoeffding terms that contribute at order \(B^{-1}\) below.

\begin{mainboxedproposition}
    {Projected Hoeffding variance split}{prop:main-hoeffding-collapse}
Fix \(\theta\) and a unit direction \(\bm v\). Assume
\(\mathbb E[\psi_{\theta,\bm v}(X^+,X^-)^2]<\infty\). Let
\(\psi^{(1,0)}_{\theta,\bm v}\), \(\psi^{(0,1)}_{\theta,\bm v}\), and
\(\psi^{(1,1)}_{\theta,\bm v}\) be the positive-example, negative-example, and residual
Hoeffding components of \(\psi_{\theta,\bm v}\). Then, with \(B\) positives and \(B\)
negatives,
\[
\operatorname{Var}\!\left(
\left\langle \widehat g^\ell_B(\theta),\bm v\right\rangle
\right)
=
\frac{
\operatorname{Var}(\psi^{(1,0)}_{\theta,\bm v}(X^+))
+
\operatorname{Var}(\psi^{(0,1)}_{\theta,\bm v}(X^-))
}{B}
+
\frac{
\operatorname{Var}(\psi^{(1,1)}_{\theta,\bm v}(X^+,X^-))
}{B^2}.
\]
Thus the projected variance has an order-\(B^{-1}\) term unless the first-order
components vanish; exact order-\(B^{-2}\) scaling additionally requires a nonzero
residual pair-interaction variance. 
\end{mainboxedproposition}

See Appendix~\ref{app:pairwise-bad-region-dynamics} for the proof. In ordinary mini-batch behavior, replacing one example changes the projected update at the order \(B^{-1}\) variance scale. Pairwise AUC mini-batches have a second possibility: the two first-order projections can vanish, leaving only the weaker pair-interaction randomness at order \(B^{-2}\). Proposition~\ref{prop:app-severe-compression-small-projections} gives a sufficient condition: the severe-truncated row and column effects are nearly constant across examples; a large severe-misranking mass alone does not imply collapse. Partial compression leaves a small but nonzero order-\(B^{-1}\) coefficient, so real checkpoints can move toward without attaining exact \(B^{-2}\) behavior. Thus the update may remain noisy at the raw pair level while being weak in the parameter direction needed for hard example correction. Appendix~\ref{app:pairwise-bad-region-dynamics} gives the local argument, and Appendix~\ref{app:transformer-degenerate-region} gives Transformer realizations with the same projected bad-component structure.

Recall that if the projected descent drift \(-\mathbb{E}\!\left[
\left\langle \widehat g^\ell_B(\theta),\bm v\right\rangle
\right]\) is weak, then \textbf{the remaining mini-batch variance is what can move the iterate along that direction}. When the projected conditional second moment changes from order \(B^{-1}\) to order \(B^{-2}\), the time needed to move a fixed projected distance should increase substantially. The next corollary makes this projected escape-time calculation explicit.

\begin{mainboxedcorollary}{Projected escape-time scale}{cor:main-escape-time-scale}
Fix a unit direction \(\bm v\), and let \(\tau_{\bm v,r}=\inf\{t\ge0:\ |\langle\theta_t-\theta_0,\bm v\rangle|\ge r\}\) be the first time the stopped SGD trajectory moves distance \(r\) along \(\bm v\). In the bad-region application, \(\bm v\) is the optimistic steepest severe-error correction direction at the reference point. Under the weak-drift and bounded-second-moment conditions stated precisely in Appendix~\ref{app:localized-escape-time}, if the projected increment has conditional second-moment scale \(\nu_B^2\), then \(\E[\tau_{\bm v,r}]=\Omega(r^2/\nu_B^2)\). With fixed step size \(\eta\), Proposition~\ref{prop:main-hoeffding-collapse} makes the projected gradient variance contribute at scale \(\eta^2/B\) when the first-order term is present and at scale \(\eta^2/B^2\) after first-order projection collapse. Thus the optimistic projected escape-time lower-bound scale changes from order \(B\) to order \(B^2\).
\end{mainboxedcorollary}

Appendix~\ref{app:localized-escape-time} proves the stopped-process escape-time bound. Figure~\ref{fig:surrogate-weight-geometry}(a) illustrates it with a one-dimensional potential \(U(z)\): the left curve marks the start well, barrier, and target well, while the right heatmaps show where simulated iterates concentrate on the same state axis. With first-order \(B^{-1}\) noise, \(76\%\) of runs cross the barrier by the plotted horizon; with collapsed \(B^{-2}\) noise, only \(3\%\) do. This echoes the main claim: a bad region is not merely high-error; it is a region where severe errors remain but the current update loses projected motion along a severe-error correction direction.

Above all, we can now identify a bad region in the following.

A \textbf{bad region} is not merely a high-error region and need not be a local minimum. It is a
high-severe-error region where SGD loses useful motion along the severe-error
correction direction \(\bm v_{\rm sev}\): \textbf{the projected mean drift is weak, and the leading
projected noise term is small or, in the exact case, vanishes}. Because these ingredients are defined independently of escape time, the result above quantifies their local consequence: under the stated conditions, variance contraction slows motion over a fixed distance along \(\bm v_{\rm sev}\), even though the full trajectory may still progress through other directions.

We next ask how the update can be changed to enter or leave a bad region.

\section{Surrogate Losses and Their Weight Geometry}
\label{sec:surrogate-weights}

\textbf{The first way to change the update is to change the surrogate loss.} The surrogate loss choice changes the weight function in the pairwise gradient. For a differentiable surrogate, the key object is the scalar pairwise weight \(w_\ell(x)=-\ell'(x)\).
More severe ranking mistakes have more negative \(x\). For a margin \(m>0\), active pairs satisfy \(x<m\), and easy pairs satisfy \(x\ge m\). A corrective weight profile emphasizes severe active mistakes while stopping already separated pairs from consuming update mass.

\begin{figure}[t]
\centering
\captionsetup[subfigure]{font=footnotesize,labelfont=bf,justification=centering,singlelinecheck=false,skip=1pt}
\begin{subfigure}[t]{0.50\linewidth}
\centering
\includegraphics[width=\linewidth,clip]{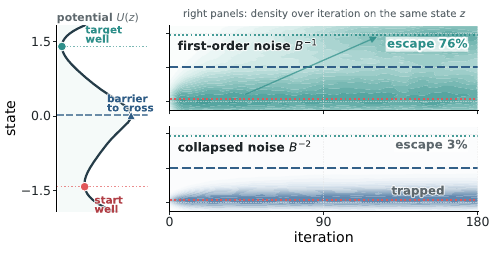}
\caption{Escape-time simulation}
\end{subfigure}\hfill
\begin{subfigure}[t]{0.356\linewidth}
\centering
\includegraphics[width=\linewidth,clip]{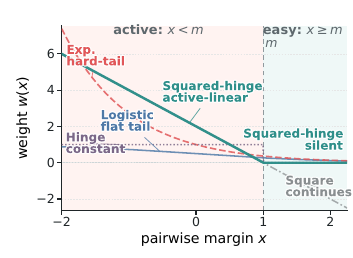}
\caption{Canonical surrogate weights}
\end{subfigure}
\caption{\textbf{Noise scale and surrogate weights illustrate two ingredients of corrective motion.}
\textbf{(a)} A one-dimensional projected escape simulation comparing a first-order \(B^{-1}\) variance regime with a collapsed \(B^{-2}\) variance regime. The left potential and the right heatmaps use the same state axis \(z\). \textbf{(b)} Update weights for five standard pairwise surrogates as functions of the pairwise margin \(x\). Simulation details are in Appendix~\ref{app:fig2-noise-escape}.
\label{fig:surrogate-weight-geometry}}
\end{figure}

Figure~\ref{fig:surrogate-weight-geometry}(b) compares popular surrogates in this sense. The relevant quantity is the allocation of update mass across pairs. The comparison is therefore organized by update weight, because the weight determines which pairs enter the mini-batch gradient and how strongly they enter. The five canonical losses lead to visibly different structures. Hinge is constant on active pairs, logistic flattens in the severe tail, exponential over-amplifies the hard tail, and square keeps nonzero easy-side weight. Within this five-loss comparison, squared-hinge realizes the desired combination of active-side severity sensitivity and easy-side silence: it grows linearly on active pairs while becoming exactly silent for pairs that already satisfy the margin. For the active-batch comparison below, a \emph{fully active} mini-batch means that every sampled positive-negative pair violates the margin. Appendix~\ref{app:surrogate-weight-calculations} gives the exact weights, the within-family characterization, and the active-batch decomposition used below.

\begin{mainboxedtheorem}{Informal surrogate-weight geometry}{thm:main-surrogate-weight-geometry}
Among logistic, exponential, square, hinge, and squared-hinge, squared-hinge is the only one in this canonical set that both ignores easy pairs and upweights active violations linearly by severity. On a fully active mini-batch, hinge contains only the class-mean score-gradient direction. Squared-hinge contains this direction plus a severity-dependent within-class correction. Thus squared-hinge, and more generally any surrogate with the same severity-increasing, easy-pair-suppressing weight structure, can retain corrective motion outside the hinge direction. 
\end{mainboxedtheorem}

The theorem gives a loss-level reason for the empirical gap between hinge and squared-hinge. Hinge ignores easy pairs, but on a fully active mini-batch it treats all active violations equally, so the update is driven only by the two class-mean score-gradient averages. If that class-mean direction is weak or poorly aligned with the direction that would correct the severe misranking pairs, hinge has no additional severity-dependent term to use. Squared-hinge also ignores easy pairs, but its active-side weight grows with violation size, letting the mini-batch update react to the distribution of active margins inside the positive and negative classes. Appendix~\ref{app:surrogate-weight-calculations} gives the calculation showing where this within-class correction enters.

The other four canonical losses considered here miss either the active-side severity weighting or the easy-pair silence that make squared-hinge useful in this comparison. Logistic has bounded hard-tail weights, so increasingly severe violations eventually receive almost the same weight. Exponential has the opposite problem: it can concentrate the update on the extreme tail. Square has the same affine active-side response as squared-hinge, but it continues to spend update mass on already separated pairs. Appendix~\ref{app:surrogate-weight-calculations} gives the hard-tail, easy-side, active-batch, and projected-variance calculations behind these comparisons. In a fully active setting, hinge can have zero projected mini-batch variance, while squared-hinge keeps a nonzero order-\(B^{-1}\) term under class-score variation. Thus the surrogate can change both deterministic correction and projected stochastic motion along the steepest correction direction.

\section{Changing Where Pairwise AUC Optimization Begins}
\label{sec:warmup-pretraining}

Section~\ref{sec:surrogate-weights} treated surrogate choice as a way to change the pairwise update. \textbf{A second intervention is to change where the AUC phase begins.} This is the role of CE warm-up and pretrained parameter-efficient fine-tuning. CE warm-up can move an ordinary model away from many severe misranking pairs before pairwise training starts. A pretrained representation with a low-dimensional trainable module can instead start in a region where severe misrankings are already controlled and first-order stochastic motion remains present. This starting-region effect is the second lesson of Figure~\ref{fig:intro-controlled-dynamics}: warm-up helps in the bad-start panel because it changes the starting region of the pairwise phase, while the good-start panel shows that the same pairwise objective can succeed without a separate warm-up when training already begins on the favorable side of the bad-region mechanism.

Define the balanced cross-entropy risk, obtained by averaging the positive and negative classes separately, as \(R_{\mathrm{CE}}(\theta)=
\E[\log(1+e^{-f_\theta(X^+)})]
+\E[\log(1+e^{f_\theta(X^-)})]\). For \(M>0\), let \(\pi_M(\theta)=\Pbb(m_\theta(X^+,X^-)\le -M)\) be the \textbf{misranking mass}. The following result links the pointwise warm-up and the severe-misranking mass.

\begin{mainboxedproposition}{CE risk controls severe misrankings}{prop:main-ce-risk-severe-mass}
For every \(M>0\), \(\pi_M(\theta)\le R_{\mathrm{CE}}(\theta)/\log(1+e^{M/2})\). Consequently, if a CE warm-up endpoint \(\theta_{\rm wu}\) satisfies \(R_{\mathrm{CE}}(\theta_{\rm wu})\le\delta\cdot\log(1+e^{M/2})\), then \(\pi_M(\theta_{\rm wu})\le\delta\).
\end{mainboxedproposition}

The proof appears in Appendix~\ref{app:ce-warmup-start}: if a positive is scored far below a negative by at least \(M\), then either the positive score is low or the negative score is high, and CE penalizes these events. Thus CE warm-up can improve the later pairwise AUC-surrogate phase even though CE itself does not optimize a pairwise AUC surrogate. The role of CE warm-up is to reduce the mass of severe misranking pairs before the pairwise phase starts.

This gives a precise condition under which warm-up improves the pairwise starting state. The bound controls the severe-misranking mass at the AUC starting point. When the subsequent AUC trajectory remains near the warm-up endpoint, local continuity of the margins transfers this low severe-misranking mass to the early pairwise phase; Appendix~\ref{app:ce-warmup-start} gives the finite-horizon statement. A poor warm-up endpoint, or a pairwise trajectory that quickly leaves that neighborhood, falls outside this local argument. We therefore treat warm-up as a starting-region intervention.

Pretrained parameter-efficient fine-tuning addresses the same question about starting regions in a different way. Let \(\bm\xi\) denote the trainable coordinates of a head, adapter, or LoRA module, with the pretrained base absorbed into \(f_{\bm\xi}\). The pairwise gradient identity and Hoeffding split remain valid after replacing \(\theta\) by \(\bm\xi\), because they use only the score map and its gradient with respect to the trainable variable. The relevant question is therefore whether the pretrained starting neighborhood is already favorable for AUC optimization. In this subspace, write \(\pi_M(\bm\xi)=\Pbb(m_{\bm\xi}(X^+,X^-)\le -M)\), where \(m_{\bm\xi}(x^+,x^-)=f_{\bm\xi}(x^+)-f_{\bm\xi}(x^-)\).

\begin{mainboxedproposition}{Finite-horizon transfer from a favorable pretrained subspace}{prop:main-peft-favorable-subspace}
Let \(\psi_{\bm\xi,\bm v}\) be the scalar projected pairwise-gradient kernel from Section~\ref{sec:bad-regions}, with \(\theta\) replaced by the trainable coordinate \(\bm\xi\). Suppose a trainable-subspace neighborhood \(\mathcal U\) satisfies \(\sup_{\bm\xi\in\mathcal U}\pi_M(\bm\xi)\le\delta_M\). Suppose also that, for some unit direction \(\bm v\), \(\Var(\psi_{\bm\xi,\bm v}^{(1,0)}(X^+))+\Var(\psi_{\bm\xi,\bm v}^{(0,1)}(X^-))\ge\sigma_{1,\bm v}^2>0\) for \(\bm\xi\in\mathcal U\). If the direct pairwise AUC-surrogate trajectory remains in \(\mathcal U\) for the finite horizon of interest, then every iterate on that horizon satisfies \(\pi_M(\bm\xi_t)\le\delta_M\) and \(\Var(\langle \widehat{\bm g}_B^\ell(\bm\xi_t),\bm v\rangle)\ge\sigma_{1,\bm v}^2/B\).
\end{mainboxedproposition}

The proposition is a local finite-horizon transfer statement conditional on a verified favorable trainable-subspace neighborhood and trajectory containment; Appendix~\ref{app:starting-region} gives sufficient score-perturbation, confinement, and local descent checks for this verification. Along such a trajectory, severe misrankings stay controlled and the first-order projected stochastic component remains present, making a separate warm-up stage less necessary.

Under the same LoRA-style fine-tuning protocol, a pretrained base with more favorable starting geometry can place the trainable trajectory closer to a favorable ranking region, reducing the marginal value of a separate CE warm-up; a weaker starting geometry leaves more room for warm-up to help. The experiments test this starting-state explanation across CV and NLP tasks.

\section{Experiments}
\label{sec:experiments}

The experiments focus on a specific question: when the evaluation metric is held fixed as AUC, which training choices preserve Corrective Power in finite-batch deep optimization? We compare CE, direct pairwise AUC-surrogate training, and CE-warm-started pairwise AUC-surrogate training on binary CV and NLP ranking tasks using ResNet50~\citep{he2016deep}, DenseNet121~\citep{huang2017densely}, DistilBERT-base~\citep{sanh2019distilbert}, and Qwen2.5-1.5B-Instruct~\citep{yang2024qwen2}. Table~\ref{tab:main-surrogate-family-results} reports the compact DenseNet121 and Qwen direct and warm-up rows with sample standard deviations, while Table~\ref{tab:app-full-resnet50-distilbert} gives the corresponding ResNet50 and DistilBERT-base blocks. Figure~\ref{fig:experiment-trajectory-diagnostics} gives trajectory-level mechanism measurements, and Figure~\ref{fig:experiment-aggregate-diagnostics} gives aggregate warm-up and weight-allocation summaries. Appendix~\ref{app:experimental-details} gives the data construction, trainable-module details, optimization protocol, and full reporting rules. We use the tables and trajectory measurements to examine four questions about when pairwise AUC training works reliably.

\definecolor{densehead}{HTML}{F7FBF8}
\definecolor{densebest}{HTML}{EEF8F2}
\definecolor{densesecond}{HTML}{F8FCFA}
\definecolor{qwenhead}{HTML}{FFF6F6}
\definecolor{qwenbest}{HTML}{FFEDED}
\definecolor{qwensecond}{HTML}{FFF8F8}

\newcommand{\mainmodelrow}[2]{\rowcolor{#1}\multicolumn{8}{c}{{\bfseries #2}}\\}
\newcommand{\mainaucstd}[2]{#1$_{\scriptstyle \pm #2}$}
\newcommand{\mainbestcell}[3]{\cellcolor{#1}{\bfseries #2}$_{\scriptstyle \pm #3}$}
\newcommand{\mainsecondcell}[3]{\cellcolor{#1}\underline{#2}$_{\scriptstyle \pm #3}$}

\begin{table}[!tbp]
\centering
\caption{\textbf{Held-out AUC (\%) for direct and CE-warm-started pairwise training.}
Representative DenseNet121 and Qwen2.5-1.5B-Instruct results are shown; ResNet50 and DistilBERT-base are reported in Table~\ref{tab:app-full-resnet50-distilbert}. Direct starts pairwise training immediately; Warm-up first runs CE and then switches to the indicated pairwise loss. Values are mean AUC (\%) with sample standard deviations as subscripts; bold and underline mark the best and second-best results within each task.}
\label{tab:main-surrogate-family-results}
\scriptsize
\setlength{\tabcolsep}{1.15pt}
\renewcommand{\arraystretch}{1.04}
\begin{tabularx}{\textwidth}{@{}>{\raggedright\arraybackslash}p{0.180\textwidth}>{\centering\arraybackslash}p{0.080\textwidth}*{6}{>{\centering\arraybackslash}X}@{}}
\toprule
Task & Mode & CE & Square & Hinge & Exp. & Logistic & Sq.-hinge \\
\midrule
\mainmodelrow{densehead}{DenseNet121}
\midrule
Apple Disease & Direct & \mainaucstd{60.64}{2.63} & \mainaucstd{61.10}{1.85} & \mainaucstd{58.81}{2.46} & \mainaucstd{60.18}{1.21} & \mainaucstd{60.46}{3.09} & \mainbestcell{densebest}{62.69}{4.86} \\
 & Warm-up & -- & \mainaucstd{61.10}{1.85} & \mainaucstd{59.91}{2.60} & \mainaucstd{61.03}{1.92} & \mainaucstd{60.54}{2.02} & \mainsecondcell{densesecond}{62.69}{4.86} \\
\addlinespace[1.2pt]
Rice Disease & Direct & \mainaucstd{72.45}{6.57} & \mainaucstd{71.43}{7.23} & \mainaucstd{70.92}{7.66} & \mainaucstd{70.61}{6.81} & \mainaucstd{71.63}{7.30} & \mainaucstd{71.53}{7.30} \\
 & Warm-up & -- & \mainsecondcell{densesecond}{74.90}{6.31} & \mainaucstd{73.98}{6.26} & \mainaucstd{74.90}{6.41} & \mainaucstd{71.63}{7.30} & \mainbestcell{densebest}{75.71}{8.61} \\
\addlinespace[1.2pt]
APTOS Sev./Mod. & Direct & \mainaucstd{82.75}{0.95} & \mainaucstd{83.83}{1.41} & \mainbestcell{densebest}{85.10}{0.76} & \mainaucstd{83.68}{1.66} & \mainaucstd{84.18}{1.37} & \mainsecondcell{densesecond}{84.20}{1.12} \\
 & Warm-up & -- & \mainaucstd{83.96}{1.04} & \mainaucstd{83.86}{0.99} & \mainaucstd{83.93}{1.25} & \mainaucstd{83.76}{1.47} & \mainaucstd{83.08}{1.49} \\
\addlinespace[1.2pt]
HAM10000 M/N & Direct & \mainaucstd{96.50}{0.58} & \mainaucstd{96.88}{0.57} & \mainaucstd{94.13}{0.81} & \mainaucstd{95.97}{0.50} & \mainaucstd{95.35}{0.56} & \mainsecondcell{densesecond}{97.32}{0.42} \\
 & Warm-up & -- & \mainaucstd{96.81}{1.02} & \mainaucstd{96.08}{0.64} & \mainaucstd{96.29}{0.46} & \mainaucstd{96.51}{0.74} & \mainbestcell{densebest}{97.70}{0.21} \\
\addlinespace[2pt]\midrule\addlinespace[1pt]
\mainmodelrow{qwenhead}{Qwen2.5-1.5B-Instruct}
\midrule
GoEmotions Annoy. & Direct & \mainaucstd{82.97}{1.27} & \mainaucstd{83.74}{1.45} & \mainaucstd{83.44}{1.13} & \mainaucstd{81.16}{1.63} & \mainaucstd{80.71}{1.87} & \mainsecondcell{qwensecond}{84.89}{0.65} \\
 & Warm-up & -- & \mainaucstd{84.34}{0.69} & \mainaucstd{83.27}{1.03} & \mainaucstd{83.36}{0.62} & \mainaucstd{83.07}{0.72} & \mainbestcell{qwenbest}{84.92}{0.66} \\
\addlinespace[1.2pt]
GoEmotions Fear & Direct & \mainaucstd{95.19}{1.22} & \mainaucstd{94.93}{1.02} & \mainaucstd{93.42}{0.77} & \mainaucstd{93.75}{1.28} & \mainaucstd{92.68}{0.76} & \mainaucstd{96.33}{0.91} \\
 & Warm-up & -- & \mainsecondcell{qwensecond}{96.69}{0.54} & \mainaucstd{94.88}{0.99} & \mainaucstd{94.28}{1.26} & \mainaucstd{94.51}{0.60} & \mainbestcell{qwenbest}{96.98}{0.62} \\
\addlinespace[1.2pt]
ANLI R1 E/N & Direct & \mainaucstd{86.01}{0.74} & \mainaucstd{84.87}{1.87} & \mainaucstd{70.88}{12.94} & \mainaucstd{79.01}{1.53} & \mainaucstd{77.82}{2.72} & \mainbestcell{qwenbest}{86.85}{0.73} \\
 & Warm-up & -- & \mainaucstd{85.41}{1.40} & \mainaucstd{83.23}{1.70} & \mainaucstd{83.44}{1.42} & \mainaucstd{83.09}{1.72} & \mainsecondcell{qwensecond}{86.45}{0.84} \\
\addlinespace[1.2pt]
ANLI R3 C/N & Direct & \mainbestcell{qwenbest}{72.39}{1.49} & \mainaucstd{72.04}{0.97} & \mainaucstd{52.02}{2.09} & \mainaucstd{69.12}{2.11} & \mainaucstd{69.33}{1.22} & \mainaucstd{71.99}{0.94} \\
 & Warm-up & -- & \mainaucstd{71.71}{0.70} & \mainaucstd{70.37}{1.65} & \mainaucstd{70.80}{1.21} & \mainaucstd{70.47}{0.97} & \mainsecondcell{qwensecond}{72.15}{0.64} \\
\addlinespace[1.2pt]
ANLI R3 E/N & Direct & \mainsecondcell{qwensecond}{79.98}{0.94} & \mainaucstd{78.27}{0.87} & \mainaucstd{63.97}{12.49} & \mainaucstd{74.46}{1.69} & \mainaucstd{75.03}{1.68} & \mainaucstd{79.72}{0.44} \\
 & Warm-up & -- & \mainaucstd{79.40}{0.62} & \mainaucstd{77.03}{1.45} & \mainaucstd{77.86}{0.88} & \mainaucstd{78.35}{0.79} & \mainbestcell{qwenbest}{80.60}{1.27} \\
\addlinespace[1.2pt]
SNLI E/N & Direct & \mainaucstd{96.69}{0.27} & \mainsecondcell{qwensecond}{96.79}{0.44} & \mainaucstd{80.58}{14.39} & \mainaucstd{90.06}{4.27} & \mainaucstd{91.21}{2.01} & \mainaucstd{96.64}{0.33} \\
 & Warm-up & -- & \mainbestcell{qwenbest}{96.93}{0.31} & \mainaucstd{94.73}{2.30} & \mainaucstd{94.11}{1.75} & \mainaucstd{94.60}{0.71} & \mainaucstd{96.65}{0.36} \\
\addlinespace[1.2pt]
WANLI C/N & Direct & \mainsecondcell{qwensecond}{89.81}{0.46} & \mainaucstd{89.35}{0.56} & \mainaucstd{67.57}{14.73} & \mainaucstd{79.72}{15.49} & \mainaucstd{85.46}{2.52} & \mainaucstd{89.77}{0.60} \\
 & Warm-up & -- & \mainaucstd{89.47}{0.39} & \mainaucstd{85.87}{3.05} & \mainaucstd{87.48}{1.11} & \mainaucstd{88.12}{0.75} & \mainbestcell{qwenbest}{89.83}{0.43} \\
\addlinespace[1.2pt]
WANLI E/N & Direct & \mainaucstd{86.50}{0.25} & \mainbestcell{qwenbest}{86.61}{0.17} & \mainaucstd{62.20}{5.34} & \mainaucstd{75.41}{14.83} & \mainaucstd{71.09}{16.10} & \mainaucstd{86.40}{0.75} \\
 & Warm-up & -- & \mainsecondcell{qwensecond}{86.50}{0.20} & \mainaucstd{82.66}{0.85} & \mainaucstd{83.88}{1.25} & \mainaucstd{83.90}{1.38} & \mainaucstd{86.29}{0.41} \\
\bottomrule
\end{tabularx}
\end{table}

\noindent\textbf{RQ1. Is metric-aligned pairwise training enough by itself?}\\
\textbf{No.} Directly optimizing a pairwise AUC surrogate can improve held-out AUC, but it is not automatically reliable. Table~\ref{tab:main-surrogate-family-results} should be read task by task: the CE column is the pointwise baseline, the direct row starts pairwise training immediately, and the warm-up row first runs CE before switching to the specified pairwise surrogate. Bold and underlined entries mark the best and second-best validation-selected test AUC values within each task. This comparison shows that pairwise losses can beat CE, but it also shows failures in which a metric-aligned pairwise objective falls below CE or becomes much worse than other pairwise surrogates. The trajectory evidence gives the same message from inside training: Figure~\ref{fig:experiment-trajectory-diagnostics}(c) plots the logged-batch misranking ratio, namely the fraction of logged positive-negative training pairs with \(f_\theta(x^+)-f_\theta(x^-)<0\). Direct hinge training can keep this ratio high for many pairwise updates, meaning that the loss is aligned with AUC but the trajectory is not quickly correcting the wrong pairs. The matched intervention in Appendix~\ref{app:matched-directional-intervention} provides a direct mechanism check: perturbations confined to \(\widehat{\bm v}_{\rm sev}\) produce markedly more finite-horizon changes than equal-energy isotropic controls. This identifies \(\widehat{\bm v}_{\rm sev}\) as a mechanism-relevant direction while showing that the realized effect depends on the perturbation sequence and subsequent trajectory. The experiments therefore support the paper's central claim that objective--metric alignment must be paired with Corrective Power.

\noindent\textbf{RQ2. Which surrogate most consistently preserves useful correction?}\\
\textbf{Squared-hinge gives the highest AUC values in the representative summary.} Among the displayed DenseNet121 and Qwen rows in Table~\ref{tab:main-surrogate-family-results}, the best result across CE and all direct/warm-started pairwise settings is usually obtained by squared-hinge, while the second-best result varies across CE and square. The weight plots in Figure~\ref{fig:experiment-aggregate-diagnostics}(c,d) explain how to read this beyond the final AUC numbers. For each logged positive-negative pair, we compute the current margin \(m_\theta=f_\theta(x^+)-f_\theta(x^-)\); wrong pairs have \(m_\theta<0\), and more negative intervals correspond to more severe misranking. For an interval \(I\), \(P_I^{\rm rel}\) is the proportion of total gradient weight assigned to pairs in \(I\), divided by the frequency of pairs in \(I\). Thus \(P_I^{\rm rel}>1\) means that the interval receives more update weight than its frequency alone would imply, while \(P_I^{\rm rel}<1\) means that it is relatively down-weighted. The useful visual signal is therefore not just whether a loss has many wrong pairs, but whether the leftmost wrong-margin intervals receive extra corrective weight. Squared-hinge assigns larger relative weight to more severe wrong-margin intervals, while hinge treats active violations uniformly; this is the empirical weight-profile counterpart of the surrogate analysis. Appendix~\ref{app:mechanism-diagnostics} gives the finite-batch formula, row/column projection definitions, and logging protocol, and Table~\ref{tab:app-full-resnet50-distilbert} reports the corresponding ResNet50 and DistilBERT-base splits.

\begin{figure}[!tbp]
\centering
\includegraphics[width=0.63\linewidth,clip]{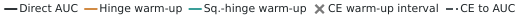}\vspace{0.2em}

\captionsetup[subfigure]{font=footnotesize,labelfont=bf,justification=centering,singlelinecheck=false,skip=1pt}
\begin{subfigure}[t]{0.238\linewidth}
\centering
\includegraphics[width=\linewidth,clip]{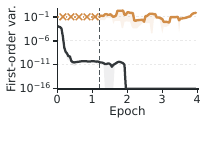}
\caption{Hinge variance term}
\end{subfigure}\hfill
\begin{subfigure}[t]{0.238\linewidth}
\centering
\includegraphics[width=\linewidth,clip]{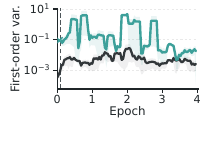}
\caption{Sq.-hinge variance term}
\end{subfigure}\hfill
\begin{subfigure}[t]{0.238\linewidth}
\centering
\includegraphics[width=\linewidth,clip]{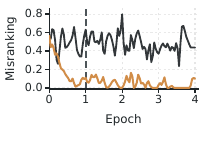}
\caption{Hinge trajectory}
\end{subfigure}\hfill
\begin{subfigure}[t]{0.238\linewidth}
\centering
\includegraphics[width=\linewidth,clip]{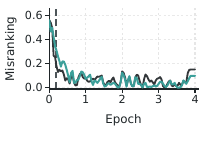}
\caption{Sq.-hinge trajectory}
\end{subfigure}
\caption{\textbf{Real-run trajectory measures support the Corrective Power explanation.}
Four training-trajectory quantities computed from experiment runs. Panels (a,b) show the scaled first-order Hoeffding variance term on SNLI E/N along the steepest severe-error correction direction; shaded bands show one standard error after the same smoothing used for the displayed mean. \(x\) marks denote the CE warm-up interval before the pairwise variance term is recorded. Panels (c,d) show ANLI R1 E/N trajectories and mark the CE-to-AUC switch. Definitions are in Appendix~\ref{app:mechanism-diagnostics}.}
\label{fig:experiment-trajectory-diagnostics}
\end{figure}

\begin{figure}[!tbp]
\centering
\includegraphics[width=0.44\linewidth,clip]{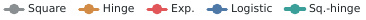}\vspace{0.2em}

\captionsetup[subfigure]{font=footnotesize,labelfont=bf,justification=centering,singlelinecheck=false,skip=1pt}
\begin{subfigure}[t]{0.238\linewidth}
\centering
\includegraphics[width=\linewidth,clip]{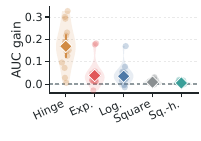}
\caption{Warm-up gain by loss}
\end{subfigure}\hfill
\begin{subfigure}[t]{0.238\linewidth}
\centering
\includegraphics[width=\linewidth,clip]{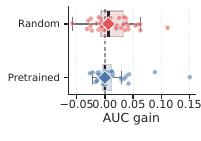}
\caption{Warm-up gain by initialization}
\end{subfigure}\hfill
\begin{subfigure}[t]{0.238\linewidth}
\centering
\includegraphics[width=\linewidth,clip]{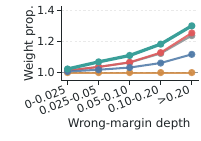}
\caption{Direct weight distribution}
\end{subfigure}\hfill
\begin{subfigure}[t]{0.238\linewidth}
\centering
\includegraphics[width=\linewidth,clip]{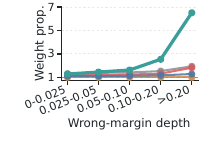}
\caption{Warm-up weight distribution}
\end{subfigure}
\caption{\textbf{Aggregate real-run measures support the Corrective Power explanation.}
Four aggregate quantities computed from the same experiment runs. Panels (a,b) report warm-up gains by loss and initialization, respectively. Panels (c,d) report relative weight proportions over wrong-margin intervals. Definitions are in Appendix~\ref{app:mechanism-diagnostics}.}
\label{fig:experiment-aggregate-diagnostics}
\end{figure}

\noindent\textbf{RQ3. When is CE warm-up helpful?}\\
\textbf{Warm-up helps most when it changes a poor pairwise starting state.} The direct/warm-up rows in Table~\ref{tab:main-surrogate-family-results} show that warm-up can give large gains for fragile pairwise settings, especially hinge on harder NLP tasks, but the gains are not uniform across losses or models. Figure~\ref{fig:experiment-aggregate-diagnostics}(a) summarizes this with warm-up gain, defined as validation-selected test AUC after CE-warm-started pairwise training minus validation-selected test AUC after direct pairwise training. Positive values mean that CE improves the starting point for the later pairwise phase; values near zero mean that the direct pairwise run already starts from, or quickly reaches, a usable region. The largest and most variable gains appear for hinge, matching the view that warm-up is most valuable when direct pairwise optimization enters a poor high-misranking state. Figure~\ref{fig:experiment-trajectory-diagnostics}(a,b) then checks the stochastic side of the same explanation: it records the scaled first-order Hoeffding variance term along the steepest severe-error correction direction. This term measures whether the mini-batch update still has first-order stochastic motion in the favorable projected direction; when it is small, the leading mini-batch component along that direction is weak. Figure~\ref{fig:experiment-trajectory-diagnostics}(c,d) records the logged-batch misranking ratio, which measures how many logged training pairs are still ordered incorrectly. In these trajectory panels, the black curve is direct pairwise AUC-surrogate training, the colored curve is CE warm-up followed by pairwise training, and the vertical dashed line marks the CE-to-AUC switch. The hinge trajectory shows the intended warm-up effect: CE lowers the misranking state before the pairwise phase begins, after which pairwise training starts from an easier region. The squared-hinge trajectory shows a different regime: direct squared-hinge can already reduce misranking quickly and preserve stronger corrective motion, so a separate warm-up has smaller marginal value. Appendix~\ref{app:mechanism-diagnostics} gives the finite-batch construction and independent logging-batch protocol, and Figure~\ref{fig:app-all-loss-trajectories} in Appendix~\ref{app:training-details} reports the all-loss trajectory counterpart.

Fixed-checkpoint batch-size scaling provides a complementary check of this stochastic explanation. Across APTOS, SNLI, and ANLI-R1, weakly corrective regions show more attenuated severe-component variance scaling than their matched corrective regions, matching the predicted shift toward the residual pair-interaction component as first-order projected motion weakens. Appendix~\ref{app:fixed-checkpoint-variance-scaling} gives the protocol and complete results.

The two trajectory quantities should be interpreted together. A low misranking ratio alone would show that the run is improving the ordering, but it would not explain whether the pairwise mini-batch update still has the stochastic component emphasized by the theory. The warm-up panels therefore support the mechanism when both signals move in the expected direction: CE changes the state before the AUC phase, the later pairwise updates operate from a lower-misranking region, and the projected first-order term remains more visible along the steepest correction direction.

\noindent\textbf{RQ4. What changes under pretrained initialization and parameter-efficient tuning?}\\
\textbf{A pretrained start can reduce the need for a separate warm-up stage when it places the trainable trajectory in a favorable region.} The Qwen block in Table~\ref{tab:main-surrogate-family-results} often has small squared-hinge warm-up gains, and the direct squared-hinge entries are already close to the corresponding warm-up entries on several NLP tasks. Figure~\ref{fig:experiment-aggregate-diagnostics}(b) provides a matched same-backbone comparison between random and ImageNet-pretrained initialization. The aggregate shift and reduced dispersion of warm-up gains under pretrained initialization support that pretraining places optimization closer to a favorable ranking state and reduces the average marginal benefit of an additional warm-up stage. This is consistent with the starting-state explanation: when the pretrained representation and trainable subspace place AUC training closer to a favorable ranking geometry, warm-up has smaller marginal value, although it remains useful in some settings.

Together, the experiments support a conditional prescription: pairwise AUC-surrogate training is most dependable when the surrogate keeps corrective gradient weight on severe wrong pairs, and warm-up or pretraining is most useful when it moves the trajectory into a state where such corrective updates remain available.

\section{Conclusion}
\label{sec:conclusion}

This paper studies why direct pairwise AUC-surrogate training can still be unreliable
despite matching the evaluation metric. The key issue is \emph{Corrective Power}:
whether SGD keeps useful motion on severe ranking errors. We identify bad regions
where severe misranked pairs remain common, but the projected mean drift along the
severe-error correction direction is weak. In these regions, progress depends on the
finite-batch noise in the same direction. The pairwise mini-batch gradient is a
two-sample U-statistic, so this projected noise has a leading order-\(B^{-1}\) term
and a residual order-\(B^{-2}\) interaction term in the equal-size case. When the leading
term vanishes, the useful noise drops to order \(B^{-2}\), and the projected
escape-time scale changes from order \(B\) to order \(B^2\). This view explains two
ways to improve corrective power. First, the surrogate loss controls the gradient
weight assigned to severe wrong pairs; squared-hinge is effective because it gives
larger weight to severe active violations while ignoring easy pairs. Second, warm-up
and PEFT change where pairwise training begins; CE warm-up can move a model away
from a poor start, while pretrained PEFT can start closer to a favorable ranking
region. The CV and NLP experiments are consistent with these two predictions.


\begin{ack}
This work was supported in part by National Natural Science Foundation of China: 62525212, U23B2051, 62236008, 62441232, 62521007, U21B2038, 62502496, and 62576332, in part by Youth Innovation Promotion Association CAS, in part by the Strategic Priority Research Program of the Chinese Academy of Sciences, Grant No. XDB0680201, in part by Beijing Natural Science Foundation–Xiaomi Innovation Joint Fund (Grant No. 26L030072),  and in part by the project ZR2025ZD01 supported by Shandong Provincial Natural Science Foundation.
The authors declare no competing interests.
\end{ack}

\FloatBarrier
\clearpage
\bibliographystyle{unsrtnat}
\bibliography{references}

\clearpage
\def\INMAINPAPER{1}
\section*{Appendix}
\appendix

\section*{Table of Contents}
\begingroup
\noindent\hyperref[app:related-work]{A\quad Related Work}\dotfill \pageref{app:related-work}\par
\noindent\hyperref[app:experimental-details]{B\quad Experimental Details}\dotfill \pageref{app:experimental-details}\par
\noindent\hspace*{1.6em}\hyperref[app:fig1-simulation]{B.1\quad Details of the Controlled Two-Dimensional Simulation}\dotfill \pageref{app:fig1-simulation}\par
\noindent\hspace*{1.6em}\hyperref[app:fig2-noise-escape]{B.2\quad Details of the Noise-Scale Escape Simulation}\dotfill \pageref{app:fig2-noise-escape}\par
\noindent\hspace*{1.6em}\hyperref[app:real-experiment-protocol]{B.3\quad Main Experimental Protocol}\dotfill \pageref{app:real-experiment-protocol}\par
\noindent\hspace*{1.6em}\hyperref[app:datasets-tasks]{B.4\quad Data Sources and Task Construction}\dotfill \pageref{app:datasets-tasks}\par
\noindent\hspace*{1.6em}\hyperref[app:training-details]{B.5\quad Models, Optimization, and Reporting Protocol}\dotfill \pageref{app:training-details}\par
\noindent\hspace*{1.6em}\hyperref[app:extended-auc-experiments]{B.6\quad Multiclass, Multilabel, and Deep-AUC Extensions}\dotfill \pageref{app:extended-auc-experiments}\par
\noindent\hspace*{1.6em}\hyperref[app:assets-licenses]{B.7\quad Assets, Licenses, and Terms}\dotfill \pageref{app:assets-licenses}\par
\noindent\hspace*{1.6em}\hyperref[app:ethics-impacts]{B.8\quad Ethics, Broader Impacts, and Safeguards}\dotfill \pageref{app:ethics-impacts}\par
\noindent\hyperref[app:pairwise-sgd-dynamics]{C\quad Direct Pairwise SGD as a Stochastic Dynamics}\dotfill \pageref{app:pairwise-sgd-dynamics}\par
\noindent\hspace*{1.6em}\hyperref[app:extensions-beyond-binary-auc]{C.1\quad Extensions Beyond Binary AUC}\dotfill \pageref{app:extensions-beyond-binary-auc}\par
\noindent\hyperref[app:pairwise-bad-region-dynamics]{D\quad Bad Regions and Hoeffding Projection Collapse}\dotfill \pageref{app:pairwise-bad-region-dynamics}\par
\noindent\hyperref[app:surrogate-weight-calculations]{E\quad Surrogate Weight Calculations}\dotfill \pageref{app:surrogate-weight-calculations}\par
\noindent\hspace*{1.6em}\hyperref[app:canonical-surrogate-weights]{E.1\quad Canonical Weights}\dotfill \pageref{app:canonical-surrogate-weights}\par
\noindent\hspace*{1.6em}\hyperref[app:active-side-surrogate-geometry]{E.2\quad Active-Side Correction Versus Class-Mean Collapse}\dotfill \pageref{app:active-side-surrogate-geometry}\par
\noindent\hspace*{1.6em}\hyperref[app:easy-side-surrogate-geometry]{E.3\quad Easy-Side Silence Versus Overshoot}\dotfill \pageref{app:easy-side-surrogate-geometry}\par
\noindent\hspace*{1.6em}\hyperref[app:hard-tail-surrogate-geometry]{E.4\quad Hard-Tail Alignment}\dotfill \pageref{app:hard-tail-surrogate-geometry}\par
\noindent\hspace*{1.6em}\hyperref[app:projected-stochastic-surrogate-separation]{E.5\quad Projected Stochastic Separation}\dotfill \pageref{app:projected-stochastic-surrogate-separation}\par
\noindent\hyperref[app:starting-region]{F\quad Changing Where Pairwise AUC Optimization Begins}\dotfill \pageref{app:starting-region}\par
\noindent\hspace*{1.6em}\hyperref[app:ce-warmup-start]{F.1\quad CE Warm-Up and Severe-Misranking Mass}\dotfill \pageref{app:ce-warmup-start}\par
\noindent\hspace*{1.6em}\hyperref[app:peft-stable-subspace]{F.2\quad Parameter-Efficient Fine-Tuning in a Stable Trainable Subspace}\dotfill \pageref{app:peft-stable-subspace}\par
\noindent\hspace*{1.6em}\hyperref[app:peft-favorable-subspace]{F.3\quad Local Pairwise AUC-Surrogate Descent in a Favorable Subspace}\dotfill \pageref{app:peft-favorable-subspace}\par
\noindent\hyperref[app:transformer-degenerate-region]{G\quad Model-Class Realizations of Projected Bad Components}\dotfill \pageref{app:transformer-degenerate-region}\par
\noindent\hspace*{1.6em}\hyperref[app:transformer-one-layer]{G.1\quad Data-Dependent One-Layer Construction}\dotfill \pageref{app:transformer-one-layer}\par
\noindent\hspace*{1.6em}\hyperref[app:transformer-trainable-neighborhoods]{G.2\quad Projected Bad Components in Trainable Transformer Neighborhoods}\dotfill \pageref{app:transformer-trainable-neighborhoods}\par
\noindent\hyperref[app:localized-escape-time]{H\quad Localized Escape-Time Bounds for Stopped SGD}\dotfill \pageref{app:localized-escape-time}\par
\noindent\hyperref[app:limitations]{I\quad Limitations}\dotfill \pageref{app:limitations}\par
\endgroup
\vspace{0.5em}

\paragraph{How to read the appendix.}
Appendix~\ref{app:related-work} gives the extended related work. Appendix~\ref{app:experimental-details} records the controlled simulations, the full protocol behind the reported CV and NLP experiments, and the asset, license, and ethics information relevant to the experiments. The remaining appendices give the proofs: Appendix~\ref{app:pairwise-sgd-dynamics} fixes the direct pairwise SGD process and its local Langevin interface; Appendix~\ref{app:pairwise-bad-region-dynamics} proves the bad-region and Hoeffding-projection statements used in Section~\ref{sec:bad-regions}; Appendix~\ref{app:surrogate-weight-calculations} proves the surrogate-weight claims used in Section~\ref{sec:surrogate-weights}; Appendix~\ref{app:starting-region} proves the warm-up and parameter-efficient fine-tuning statements behind Section~\ref{sec:warmup-pretraining}; Appendix~\ref{app:transformer-degenerate-region} shows that the projected bad-component conditions can be realized inside Transformer-style model classes; Appendix~\ref{app:localized-escape-time} gives the stopped-SGD escape-time tool that converts directional drift and second-moment bounds into the projected \(B\) versus \(B^2\) scale contrast; and Appendix~\ref{app:limitations} states the scope and limitations of the theory.

\section{Related Work}
\label{app:related-work}

This appendix organizes the literature behind the paper's main question: when direct pairwise AUC-surrogate training uses finite mini-batch SGD, do severe ranking mistakes still generate useful parameter motion? We group the discussion by statistical target, scalable AUC algorithms, surrogate validity, U-statistic structure, related auxiliary-state methods, partial-AUC variants, warm-up and pretraining practice, and local stochastic dynamics.

\paragraph{AUC, ranking, and non-decomposable learning.}
AUC is a bipartite-ranking objective rather than a decomposable classification loss: it evaluates whether positives receive larger scores than negatives across opposite-class pairs. Classical and structural views of ROC area and multivariate performance measures formalize this non-decomposable nature~\citep{joachims2005support,narasimhan2015optimizing}. Broader surveys place AUC beside bipartite ranking, partial AUC, top-focused ranking, and precision-style criteria~\citep{werner2022review}. Related work also adapts the AUC target itself to multiclass learning, cost-sensitive decision making, open-set recognition, and pixel-level long-tail segmentation~\citep{yang2021learning,shao2023weighted,wang2022openauc,han2024aucseg}. Cross-media retrieval provides another ranking-oriented application context, including unsupervised cross-media hashing based on knowledge graphs~\citep{ye2022unsupervised}. Adjacent learning-based vision problems include under-sampled single-pixel reconstruction and joint denoising and demosaicing~\citep{tian2023transformer,zhang2024deep}, but these are reconstruction and restoration tasks rather than the binary-ranking problem studied here. This literature fixes the target of interest. Our analysis focuses on the training dynamics induced by that target: a pairwise SGD update must convert opposite-class ranking errors into corrective motion of the scorer.

\paragraph{AUC optimization algorithms.}
The quadratic coupling between positives and negatives makes AUC maximization structurally different from pointwise ERM. This has motivated online AUC optimization, pairwise stochastic methods, one-pass and incomplete-pair methods, proximal methods, triply stochastic methods, distributed methods, and nonlinear extensions~\citep{zhao2011online,kar2013generalization,gao2013one,clemenccon2016scaling,natole2018stochastic,lei2021stochastic,guo2020communication,dang2020large}. Deep AUC training adds representation learning, robust surrogates, warm-up, and task-dependent optimization design~\citep{liu2019stochastic,yuan2021compositional}. AUC-oriented algorithms have also been developed for long-tail adversarial training and domain-robust or distributionally robust settings~\citep{hou2022adauc,dai2023drauc}. These works make pairwise AUC computable at scale. Our contribution is complementary: even when the pairwise update is computable, its local drift and covariance can stop turning severe misrankings into effective motion.

\paragraph{Surrogate validity versus surrogate dynamics.}
Population-level theory explains why several losses are meaningful for ranking. Bayes-optimal bipartite ranking, univariate surrogate control, proper-loss regret transfers, and pairwise consistency results address calibration, regret, and statistical validity~\citep{menon2014bayes,kotlowski2011bipartite,agarwal2013surrogate,agarwal2014surrogate,menon2016bipartite,gao2012consistency}. The present paper studies a different property of the same losses. Logistic, exponential, square, hinge, and squared-hinge losses assign different weights to pairwise margins; those weights determine whether severe mistakes receive stronger correction, whether easy pairs continue to consume update mass, and whether useful stochastic variation remains available along difficult trajectories.

\paragraph{U-statistics and finite-batch stochastic structure.}
Empirical AUC risks are two-sample U-statistics. Classical U-statistic theory gives the Hoeffding projection decomposition and the corresponding first-order and canonical components~\citep{lee2019u,serfling2009approximation}. We use this decomposition as a stochastic-dynamics tool. For pairwise mini-batches with \(B_+\) positives and \(B_-\) negatives, the first-order projection scales are \(B_+^{-1}\) and \(B_-^{-1}\), while the residual pair-interaction scale is \((B_+B_-)^{-1}\). When the first-order projections vanish in a bad region, the remaining canonical term has scale \((B_+B_-)^{-1}\), which becomes \(B^{-2}\) in the equal-size case. This variance split is the technical source of the escape-time contrast used in the main text.

\paragraph{Auxiliary-state AUC methods.}
Several scalable AUC methods reformulate the update process rather than optimize a raw pairwise objective in the most direct mini-batch form. Stochastic saddle-point AUC maximization~\citep{ying2016stochastic}, stochastic proximal AUC methods~\citep{natole2018stochastic}, deep AUC~\citep{liu2019stochastic}, robust deep-AUC surrogates~\citep{yuan2021large}, compositional AUC training~\citep{wang2017stochastic,yuan2021compositional,wang2022finite}, and adversarial or distributionally robust AUC formulations~\citep{hou2022adauc,dai2023drauc} introduce dual variables, proximal structure, auxiliary trackers, nested stochastic updates, or robustness-oriented inner problems. These methods are related because they redesign the stochastic process used for AUC training. The main text studies direct pairwise mini-batch SGD; auxiliary-state methods provide broader algorithmic context.

\paragraph{Partial AUC and top-focused variants.}
Partial AUC and top-focused ranking sharpen the same non-decomposable structure by emphasizing selected portions of the ROC curve or the highest-ranked examples. Structural partial-AUC optimization~\citep{pmlr-v28-narasimhan13}, generic and end-to-end two-way partial-AUC frameworks~\citep{yang2021all,yang2022optimizing}, asymptotically unbiased instance-wise partial-AUC optimization~\citep{shao2022asymptotically}, distributionally robust deep partial-AUC methods~\citep{zhu2022auc}, and stochastic primal-dual algorithms for two-way partial AUC~\citep{zhou2025stochastic} introduce selection, sorting, distributional robustness, min-max reformulations, or primal-dual state. We use full pairwise AUC as the clean setting for the proofs, while these variants reinforce the need for optimization analyses that go beyond pointwise ERM.

\paragraph{Warm-up, pretraining, and parameter-efficient fine-tuning.}
Practical deep-AUC pipelines often use pointwise warm-up, pretrained backbones, or low-dimensional trainable modules before or during AUC optimization~\citep{liu2019stochastic,yuan2021large,yuan2021compositional}. In large-model settings, adapters, prefix tuning, prompt tuning, LoRA, and QLoRA are standard forms of parameter-efficient fine-tuning from pretrained models~\citep{houlsby2019parameter,li2021prefix,lester2021power,hu2021lora,dettmers2023qlora}. These questions become increasingly relevant as pretrained systems extend to fine-grained multimodal large models and as resource-efficient architectures and hardware--software co-design receive growing attention~\citep{peng2026survey,xiao2026survey}. Initialization and alignment results further show that the starting point can change learnability and optimization behavior~\citep{erhan2010does,abbe2022initial}. Our theory separates the roles of these practices: CE warm-up changes the starting region of the AUC phase, while the pretrained base and trainable subspace can place the pairwise phase closer to a favorable starting region.

\paragraph{Local stochastic dynamics and escape.}
The bridge from discrete SGD updates to local stochastic dynamics uses standard stochastic-approximation and Markov-process tools~\citep{robbins1951stochastic,kushner2003stochastic,ethier2009markov,li2019stochastic,fontaine2021convergence}. In our setting, the drift is governed by severity-weighted pairwise correction, while the covariance is governed by the Hoeffding projection structure of the mini-batch U-statistic. Appendix~\ref{app:localized-escape-time} turns the \(B^{-1}\) versus \(B^{-2}\) directional second-moment regimes into the projected escape-time statement used in the main text.

\section{Experimental Details}
\label{app:experimental-details}

\subsection{Details of the Controlled Two-Dimensional Simulation}
\label{app:fig1-simulation}

This subsection gives the full configuration for Figure~\ref{fig:intro-controlled-dynamics}. The simulation makes the local dynamical mechanism in the introduction reproducible on a fixed two-dimensional parameter plane. All curves in the figure are generated by running the stated update rules; no trajectory is hand drawn or smoothed.

\paragraph{Data generation.}
The controlled simulation uses a fixed synthetic binary ranking dataset generated from the following configuration. The random seed is \(1136848004\). Each group parameter is \(n_{\mathrm{per\ group}}=12\), giving four groups with counts
\begin{equation*}
\max(2,\lfloor n_{\mathrm{per\ group}}/3\rfloor),\quad
2\cdot n_{\mathrm{per\ group}},\quad
n_{\mathrm{per\ group}},\quad
4\cdot n_{\mathrm{per\ group}},
\end{equation*}
where groups \(0\) and \(2\) are positive and groups \(1\) and \(3\) are negative. The data-generation parameters are
\begin{equation*}
\alpha=0.012,\qquad
\beta=0.64,\qquad
\texttt{hard\_offset}=0.16,\qquad
\texttt{nonlinear}=0.16,
\end{equation*}
\begin{equation*}
\texttt{bump}=0.46,\qquad
\texttt{rbf\_scale}=0.08,\qquad
\texttt{fourier\_scale}=0.15.
\end{equation*}
For a point \(\theta=(\theta_1,\theta_2)\), the sample score has the form
\begin{equation*}
f_\theta(x_i)
=
a_i\cdot \theta_1+b_i\cdot \theta_2+c_{1i}\cdot h_{1i}(\theta_1,\theta_2)
+c_{2i}\cdot h_{2i}(\theta_1,\theta_2)
+c_{3i}\cdot q_i(\theta_1,\theta_2)
+r_i(\theta_1,\theta_2)+s_i(\theta_1,\theta_2)+o_i.
\end{equation*}
Here \(h_{1i}\) is a tanh feature, \(h_{2i}\) is a sinusoidal feature, \(q_i\) is a Gaussian bump, \(r_i\) is a sum of \(14\) radial-basis features, \(s_i\) is a sum of \(8\) Fourier features, and \(o_i\) is a sample offset. Their coefficients are generated once from the seed above and then held fixed for all panels and all update rules.

\paragraph{Landscape definition.}
The displayed landscape is the empirical AUC of the same fixed sample set on a two-dimensional parameter slice. For each grid point \((\theta_1,\theta_2)\), the positive-margin convention is
\begin{equation*}
m_\theta(x_i^+,x_j^-)=f_\theta(x_i^+)-f_\theta(x_j^-),
\end{equation*}
and the plotted value is
\begin{equation*}
\widehat{\operatorname{AUC}}(\theta)
=
\frac{1}{N_+\cdot N_-}
\sum_{i:y_i=+1}\sum_{j:y_j=-1}
\left\{
\1_{\{m_\theta(x_i,x_j)>0\}}
+
\frac{1}{2}\cdot\1_{\{m_\theta(x_i,x_j)=0\}}
\right\}.
\end{equation*}
The contour is evaluated on the fixed window
\begin{equation*}
\theta_1\in[-1.08,2.18],\qquad \theta_2\in[-1.60,0.46],
\end{equation*}
using a \(300\times250\) grid. The filled contour levels are evenly spaced between \(0.50\) and \(1.00\).

\paragraph{Surrogate losses and update rules.}
Figure~\ref{fig:intro-controlled-dynamics} displays direct hinge and direct squared-hinge AUC trajectories, together with CE warm-up trajectories that switch to hinge or squared-hinge after the warm-up phase. The pairwise losses used in the displayed AUC phases are
\begin{equation*}
\ell_{\mathrm{hinge}}(x)=(m-x)_+,\qquad
\ell_{\mathrm{squared\mbox{-}hinge}}(x)=(m-x)_+^2,
\end{equation*}
with margin \(m=0.8\). The AUC vector field uses the full finite-sample pair average
\begin{equation*}
\frac{1}{N_+\cdot N_-}
\sum_{i:y_i=+1}\sum_{j:y_j=-1}
w_\ell(m_\theta(x_i,x_j))\cdot
\nabla_\theta m_\theta(x_i,x_j),
\qquad
w_\ell(x)=-\ell'(x).
\end{equation*}
The CE vector field is the full finite-sample gradient direction of the binary cross-entropy objective on the same samples. Direct pairwise AUC-surrogate curves repeatedly apply the corresponding pairwise surrogate vector field. Warm-up curves apply the CE field first and then switch to the chosen pairwise surrogate field.

\paragraph{Optimization settings.}
The AUC step size is
\begin{equation*}
\eta_{\mathrm{auc}}=0.15,
\end{equation*}
the CE step size is
\begin{equation*}
\eta_{\mathrm{ce}}=0.105,
\end{equation*}
the warm-up length is \(26\) steps, and the maximum trajectory length is \(360\) steps. The severe-misranking threshold used by the scan is \(M=0.2\), and the severe-badness exit threshold is \(0.0015\), used only to detect when a trajectory leaves the severe-start region. The trajectories are clamped to the numerical box \([-8,8]^2\) only as a stability guard; the displayed trajectories stay in the plotted window.

\paragraph{Panel initializations.}
All three panels use the same generated sample set, the same landscape, and the same optimization hyperparameters. They differ only in the initialization:
\begin{equation*}
\begin{array}{ccl}
\text{panel (a)} &:& (\theta_{1,0},\theta_{2,0})=(-0.450,-1.3775),\\
\text{panel (b)} &:& (\theta_{1,0},\theta_{2,0})=(-0.075,-1.0450),\\
\text{panel (c)} &:& (\theta_{1,0},\theta_{2,0})=(0.750,-0.5375).
\end{array}
\end{equation*}
The corresponding initial severe-misranking fractions are \(0.3993\), \(0.2752\), and \(0.1372\), respectively. Final empirical AUC values for the trajectories emphasized in the main figure are:
\begin{equation*}
\begin{array}{c|ccccc}
\text{panel} & \text{CE} & \text{hinge} & \text{squared-hinge} &
\text{warm}\to\text{hinge} & \text{warm}\to\text{squared-hinge}\\
\hline
\text{(a)} & 0.9106 & 0.6016 & 0.6207 & 0.9028 & 0.9766\\
\text{(b)} & 0.9106 & 0.9028 & 0.9766 & 0.9028 & 0.9766\\
\text{(c)} & 0.9106 & 0.9757 & 0.9766 & 0.9757 & 0.9766
\end{array}
\end{equation*}

\paragraph{Plotting choices.}
The plotted paths are raw simulated iterates. Stars mark initializations, large circles mark final iterates, and small dots mark every \(20\) update steps. The contour is evaluated once on the fixed grid above and reused across all panels. The displayed runs are fixed reproducible simulations corresponding to the three regimes discussed in the introduction: weak correction from a severe start, recovery from a milder start, and favorable-start behavior. All reported trajectories are raw iterates from the corresponding update rules.

\subsection{Details of the Noise-Scale Escape Simulation}
\label{app:fig2-noise-escape}

This subsection gives the configuration for panel (a) of Figure~\ref{fig:surrogate-weight-geometry}. The simulation serves as a controlled visualization of the escape-time mechanism in Corollary~\ref{cor:main-escape-time-scale}. It keeps the potential, initialization, drift, and time horizon fixed, and changes only the variance scale from the first-order \(B^{-1}\) regime to the collapsed \(B^{-2}\) regime.

\paragraph{Potential and landscape.}
The one-dimensional state is denoted by \(z\). The unnormalized potential is
\begin{equation*}
\begin{aligned}
U_{\mathrm{raw}}(z)
={}&
0.105\cdot(z+1.45)^2\cdot(z-1.34)^2\\
&+0.18\cdot\exp\!\left(-\left(\frac{z-0.05}{0.37}\right)^2\right)
-0.075\cdot z\\
&+0.018\cdot\sin(2.6\cdot z+0.2).
\end{aligned}
\end{equation*}
For plotting and simulation, this potential is rescaled on the grid \(z\in[-2.10,2.02]\) by
\begin{equation*}
U(z)=0.04+
0.90\cdot
\frac{U_{\mathrm{raw}}(z)-\min_{\tilde z}U_{\mathrm{raw}}(\tilde z)}
{\max_{\tilde z}U_{\mathrm{raw}}(\tilde z)-\min_{\tilde z}U_{\mathrm{raw}}(\tilde z)}.
\end{equation*}
The grid has \(3000\) points. The displayed state window is \(z\in[-1.92,1.82]\). The left well minimum, barrier, right well minimum, and escape threshold are
\begin{equation*}
z_{\mathrm{left}}=-1.3691,\qquad
z_{\mathrm{barrier}}=0.0170,\qquad
z_{\mathrm{right}}=1.4032,\qquad
z_{\mathrm{esc}}=0.5370.
\end{equation*}
The initialization is \(z_0=-1.42\).

\paragraph{Stochastic update.}
The simulation uses an Euler discretization of overdamped motion in the potential:
\begin{equation*}
z_{t+1}
=
z_t-\Delta t\cdot U'(z_t)+\sigma_B\cdot\xi_t,
\qquad
\xi_t\sim\mathcal N(0,1),
\end{equation*}
with independent \(\xi_t\)'s across time and replicates. The derivative \(U'\) is computed by finite differences on the same \(3000\)-point grid and linearly interpolated at the current state. States are clipped to \([-2.05,1.95]\) as a numerical guard.

\paragraph{Noise regimes and hyperparameters.}
Both regimes use
\begin{equation*}
B=8,\qquad
\Delta t=0.025,\qquad
\texttt{noise\_base}=4.5,\qquad
T=180
\end{equation*}
iterations. The first-order regime uses
\begin{equation*}
\sigma_B=\texttt{noise\_base}\cdot\sqrt{\Delta t/B},
\end{equation*}
while the collapsed regime uses
\begin{equation*}
\sigma_B=\texttt{noise\_base}\cdot\sqrt{\Delta t}/B.
\end{equation*}
The empirical density in Figure~\ref{fig:surrogate-weight-geometry}(a) is computed from \(12000\) independent replicates per regime and sampled every \(2\) iterations. The random seeds are \(20260524\) for the \(B^{-1}\) regime and \(20260512\) for the \(B^{-2}\) regime. A replicate is counted as escaped once \(z_t\ge z_{\mathrm{esc}}\) for some \(t\le T\). With the above configuration, the final escape fractions are \(0.76275\) for the first-order regime and \(0.0265\) for the collapsed regime, shown in the figure as \(76\%\) and \(3\%\).

\paragraph{Rendering choices.}
The left curve in panel (a) is the same potential \(U\), rotated so that the vertical axis is the state variable \(z\). The two right subpanels show smoothed empirical state-density heatmaps over iteration and state. Horizontal guide lines mark the initialization, barrier, and target well. The density smoothing is used only for rendering; the escape fractions are computed from the raw simulated paths.

\subsection{Main Experimental Protocol}
\label{app:real-experiment-protocol}

This subsection fixes the reporting protocol behind Table~\ref{tab:main-surrogate-family-results} and the expanded tables below. Each experiment is a binary ranking task evaluated by held-out AUC. For every model--task pair, the protocol compares CE with five pairwise surrogate families: square, hinge, exponential, logistic, and squared-hinge. Pairwise AUC training is run either directly from the beginning or after a CE warm-up phase.

\paragraph{Reported quantity.}
Every number in the result tables is the held-out test AUC of the validation-best checkpoint, averaged over matching runs. The main table and the expanded appendix tables report mean held-out AUC values with sample standard deviations. When a dataset does not provide fixed train, validation, and test splits, the same seed convention is used to create stratified splits before training.

\paragraph{Direct and warm-started rows.}
For CE, the main table reports the CE run. For each pairwise surrogate family, we run both direct pairwise AUC-surrogate training and CE-warm-started pairwise AUC-surrogate training when both are part of the grid. Table~\ref{tab:main-surrogate-family-results} reports these two modes as separate direct and warm-up rows for each task, with bold and underline marking the best and second-best results within each task. The expanded appendix tables keep the same direct/warm-started separation for the remaining model-family blocks.

\paragraph{Expanded results corresponding to Table 1.}
Table~\ref{tab:main-surrogate-family-results} displays DenseNet121 and Qwen2.5-1.5B-Instruct with direct and CE-warm-started rows for all five pairwise surrogate families. Table~\ref{tab:app-full-resnet50-distilbert} gives the corresponding ResNet50 and DistilBERT-base results: each task is split into a direct row and a CE-warm-started row, and each cell reports the mean held-out AUC together with the corresponding sample standard deviation.

\newcommand{\aucstd}[2]{#1$_{\scriptstyle \pm #2}$}
\newcommand{\appbestcell}[3]{\cellcolor{#1}{\bfseries #2}$_{\scriptstyle \pm #3}$}
\newcommand{\appsecondcell}[3]{\cellcolor{#1}\underline{#2}$_{\scriptstyle \pm #3}$}
\makeatletter
\newcommand{\appinplacetablecaption}[2]{%
  \refstepcounter{table}%
  \@makecaption{\tablename~\thetable}{#2}%
  \label{#1}%
}
\makeatother

\par\Needspace{0.30\textheight}
\begingroup
\centering
\appinplacetablecaption{tab:app-full-resnet50-distilbert}{\textbf{Expanded direct and CE-warm-started results (AUC \%) for ResNet50 and DistilBERT-base.} This table reports the model-family blocks not shown in Table~\ref{tab:main-surrogate-family-results}. Each task has a direct row and a warm-up row. Values are mean held-out AUC (\%), with the \(\pm\) sample standard deviation shown as a subscript. Within each task, the best and second-best configurations are bolded with darker shading and underlined with lighter shading, respectively.}
\scriptsize
\setlength{\tabcolsep}{1.25pt}
\renewcommand{\arraystretch}{1.10}
\begin{tabularx}{\textwidth}{@{}>{\raggedright\arraybackslash}p{0.205\textwidth}>{\centering\arraybackslash}p{0.083\textwidth}*{6}{>{\centering\arraybackslash}X}@{}}
\toprule
Task & Mode & CE & Square & Hinge & Exp. & Logistic & Sq.-hinge \\
\midrule
\rowcolor{resnethead}\multicolumn{8}{c}{{\bfseries ResNet50}}\\
\midrule
ImageNet-LT Dogs & Direct & \aucstd{91.34}{1.00} & \aucstd{93.56}{0.38} & \aucstd{79.50}{3.75} & \aucstd{91.35}{0.86} & \aucstd{93.67}{0.47} & \appbestcell{resnetbest}{95.83}{0.14} \\
 & Warm-up & -- & \aucstd{93.84}{0.34} & \aucstd{82.33}{2.75} & \aucstd{91.00}{1.48} & \aucstd{93.51}{0.54} & \appsecondcell{resnetsecond}{94.42}{0.41} \\
\addlinespace[1.5pt]
Dog Breeds & Direct & \aucstd{86.06}{1.09} & \aucstd{85.09}{1.08} & \aucstd{84.63}{1.28} & \aucstd{84.68}{0.84} & \aucstd{84.82}{1.28} & \aucstd{85.65}{1.04} \\
 & Warm-up & -- & \appsecondcell{resnetsecond}{86.52}{1.03} & \aucstd{83.72}{1.74} & \aucstd{85.87}{0.91} & \aucstd{85.30}{1.79} & \appbestcell{resnetbest}{87.00}{1.44} \\
\addlinespace[1.5pt]
CRJ700/CRJ900 & Direct & \aucstd{53.57}{7.75} & \aucstd{54.99}{8.32} & \aucstd{54.77}{8.25} & \aucstd{54.93}{8.27} & \aucstd{55.54}{7.40} & \appsecondcell{resnetsecond}{55.57}{8.12} \\
 & Warm-up & -- & \aucstd{54.66}{7.30} & \aucstd{54.77}{8.25} & \aucstd{54.73}{7.27} & \aucstd{54.51}{7.30} & \appbestcell{resnetbest}{56.62}{6.45} \\
\addlinespace[1.5pt]
APTOS Sev./Mod. & Direct & \aucstd{81.02}{1.47} & \appsecondcell{resnetsecond}{82.70}{0.63} & \aucstd{81.54}{0.84} & \aucstd{82.01}{0.34} & \aucstd{81.79}{1.04} & \appbestcell{resnetbest}{83.92}{1.09} \\
 & Warm-up & -- & \aucstd{80.49}{0.95} & \aucstd{80.62}{1.61} & \aucstd{80.70}{1.22} & \aucstd{80.38}{0.78} & \aucstd{81.26}{1.34} \\
\addlinespace[1.5pt]
DeepWeeds P/S & Direct & \aucstd{96.39}{1.37} & \aucstd{96.98}{2.35} & \aucstd{96.99}{2.36} & \aucstd{96.98}{2.37} & \aucstd{96.90}{2.41} & \appsecondcell{resnetsecond}{97.04}{2.38} \\
 & Warm-up & -- & \aucstd{96.98}{2.35} & \aucstd{96.15}{2.76} & \aucstd{96.98}{2.37} & \aucstd{96.90}{2.41} & \appbestcell{resnetbest}{97.12}{2.61} \\
\addlinespace[2pt]\midrule\addlinespace[1pt]
\rowcolor{distilhead}\multicolumn{8}{c}{{\bfseries DistilBERT-base}}\\
\midrule
GoEmotions Annoy. & Direct & \aucstd{83.97}{0.35} & \aucstd{82.73}{1.61} & \aucstd{80.39}{3.96} & \aucstd{79.91}{4.16} & \aucstd{79.20}{4.16} & \aucstd{83.87}{0.50} \\
 & Warm-up & -- & \aucstd{83.97}{0.35} & \aucstd{82.75}{1.16} & \appsecondcell{distilsecond}{83.98}{0.35} & \aucstd{83.97}{0.35} & \appbestcell{distilbest}{84.31}{0.38} \\
\addlinespace[1.5pt]
GoEmotions Fear & Direct & \aucstd{94.74}{0.58} & \aucstd{94.05}{1.65} & \aucstd{93.50}{0.73} & \aucstd{93.07}{0.80} & \aucstd{94.67}{1.48} & \appsecondcell{distilsecond}{95.62}{0.82} \\
 & Warm-up & -- & \aucstd{93.89}{2.03} & \aucstd{93.84}{1.74} & \aucstd{94.25}{1.22} & \aucstd{93.26}{0.60} & \appbestcell{distilbest}{96.48}{0.31} \\
\addlinespace[1.5pt]
ANLI R1 E/N & Direct & \appbestcell{distilbest}{70.21}{1.21} & \aucstd{66.37}{2.07} & \aucstd{61.51}{5.60} & \aucstd{64.44}{1.84} & \aucstd{63.32}{2.28} & \aucstd{67.84}{1.63} \\
 & Warm-up & -- & \aucstd{69.07}{1.52} & \appsecondcell{distilsecond}{69.86}{1.40} & \aucstd{69.09}{1.10} & \aucstd{68.48}{2.16} & \aucstd{69.77}{1.35} \\
\addlinespace[1.5pt]
ANLI R3 C/N & Direct & \appsecondcell{distilsecond}{62.09}{0.84} & \aucstd{61.48}{0.63} & \aucstd{57.40}{2.26} & \appbestcell{distilbest}{62.47}{2.17} & \aucstd{61.76}{1.62} & \aucstd{61.22}{1.30} \\
 & Warm-up & -- & \aucstd{61.88}{0.64} & \aucstd{61.09}{0.77} & \aucstd{59.64}{1.35} & \aucstd{61.31}{0.98} & \aucstd{61.81}{0.69} \\
\addlinespace[1.5pt]
ANLI R3 E/N & Direct & \appsecondcell{distilsecond}{63.49}{0.62} & \aucstd{62.40}{1.23} & \aucstd{58.85}{0.79} & \aucstd{58.49}{2.04} & \aucstd{60.11}{0.85} & \aucstd{62.24}{0.86} \\
 & Warm-up & -- & \aucstd{62.59}{1.60} & \aucstd{62.86}{0.59} & \aucstd{63.00}{0.67} & \aucstd{63.14}{0.57} & \appbestcell{distilbest}{63.87}{0.51} \\
\addlinespace[1.5pt]
SNLI E/N & Direct & \appsecondcell{distilsecond}{92.56}{0.34} & \aucstd{89.07}{2.98} & \aucstd{86.21}{1.25} & \aucstd{86.69}{1.06} & \aucstd{84.42}{3.97} & \aucstd{92.52}{0.43} \\
 & Warm-up & -- & \aucstd{92.21}{0.44} & \aucstd{92.21}{0.49} & \aucstd{92.37}{0.20} & \aucstd{92.21}{0.44} & \appbestcell{distilbest}{92.98}{0.22} \\
\addlinespace[1.5pt]
WANLI C/N & Direct & \appsecondcell{distilsecond}{82.39}{0.49} & \aucstd{80.58}{0.35} & \aucstd{75.51}{3.21} & \aucstd{79.54}{0.61} & \aucstd{77.89}{0.61} & \aucstd{82.08}{0.36} \\
 & Warm-up & -- & \aucstd{81.74}{0.43} & \aucstd{81.79}{0.48} & \aucstd{81.90}{0.34} & \aucstd{81.84}{0.40} & \appbestcell{distilbest}{82.41}{0.32} \\
\addlinespace[1.5pt]
WANLI E/N & Direct & \appbestcell{distilbest}{75.44}{0.50} & \aucstd{74.43}{0.27} & \aucstd{65.37}{4.09} & \aucstd{73.42}{0.35} & \aucstd{71.60}{1.88} & \aucstd{74.76}{0.67} \\
 & Warm-up & -- & \aucstd{74.72}{0.27} & \aucstd{73.99}{0.68} & \aucstd{74.63}{0.36} & \aucstd{74.24}{0.56} & \appsecondcell{distilsecond}{75.41}{0.29} \\
\bottomrule
\end{tabularx}
\par\endgroup
\par\addvspace{8pt}

\subsection{Data Sources and Task Construction}
\label{app:datasets-tasks}

The main experiments use binary ranking tasks. Appendix~\ref{app:extended-auc-experiments} further reports long-tailed multiclass, multilabel, and large-scale multiclass settings. For the binary tasks, each row defines a positive class and a negative class or negative class set, and all models are evaluated by held-out AUC on the resulting binary task. When a source dataset provides official train, validation, and test splits, those splits are used. Otherwise, stratified splits follow the reporting protocol. Text fields are converted into a single model input for NLP tasks, and labels are binarized before training.

\paragraph{CV tasks.}
The CV tasks are constructed from public image datasets and trained with ImageNet-normalized image pipelines. The CV rows use the following sources and binary definitions.
\begin{table}[t]
\caption{CV task construction. Each row gives the source dataset and the binary ranking task used in the experiments.}
\label{tab:app-cv-task-construction}
\centering
\small
\begin{tabularx}{\textwidth}{p{0.25\textwidth}p{0.26\textwidth}Y}
\toprule
\textbf{Task} & \textbf{Source} & \textbf{Binary construction}\\
\midrule
ImageNet-LT Dogs & ImageNet-LT split files with ImageNet images & Tail dog classes versus sibling dog classes.\\
Dog Breeds & Stanford Dogs & Spitz, husky, and malamute versus Eskimo dog and Samoyed.\\
CRJ700/CRJ900 & FGVC-Aircraft & Bombardier CRJ700 versus Bombardier CRJ900.\\
APTOS Sev./Mod. & APTOS diabetic-retinopathy fundus images & Severe-or-worse retinopathy versus moderate retinopathy.\\
DeepWeeds P/S & DeepWeeds & Parthenium weed versus Siam weed, with low-shot training construction.\\
Apple Disease & D-KAP apple leaf disease archive & Apple scab versus apple rot, with low-shot training construction.\\
Rice Disease & Rice Leaf Diseases & Brown spot versus leaf smut, with low-shot training construction.\\
HAM10000 M/N & HAM10000 & Melanoma versus melanocytic nevus.\\
\bottomrule
\end{tabularx}
\end{table}
The public CV sources are ImageNet-LT split files derived from ImageNet~\citep{liu2019large,deng2009imagenet}, Stanford Dogs~\citep{dataset2011novel}, FGVC-Aircraft~\citep{maji2013fine}, APTOS 2019~\citep{aptos2019}, DeepWeeds~\citep{olsen2019deepweeds}, D-KAP~\citep{sharma2022d}, Rice Leaf Diseases~\citep{prajapati2017detection}, and HAM10000~\citep{tschandl2018ham10000}. Public releases are available for \href{https://www.kaggle.com/datasets/miljan/stanford-dogs-dataset-traintest}{Stanford Dogs}, \href{https://www.kaggle.com/datasets/seryouxblaster764/fgvc-aircraft}{FGVC-Aircraft}, \href{https://www.kaggle.com/datasets/sovitrath/diabetic-retinopathy-224x224-2019-data}{APTOS 2019}, \href{https://www.kaggle.com/datasets/imsparsh/deepweeds}{DeepWeeds}, \href{https://www.kaggle.com/datasets/hsmcaju/d-kap}{D-KAP}, \href{https://www.kaggle.com/datasets/vbookshelf/rice-leaf-diseases}{Rice Leaf Diseases}, and \href{https://www.kaggle.com/datasets/kmader/skin-cancer-mnist-ham10000}{HAM10000}. The image preprocessing uses random resized crop to \(224\), random horizontal flip, RandAugment~\citep{cubuk2020randaugment} with \(2\) operations and magnitude \(9\), conversion to tensor, and ImageNet normalization for training; validation and test use resize, center crop, tensor conversion, and ImageNet normalization.

The extension experiments additionally use CIFAR-100~\citep{krizhevsky2009learning}, iNaturalist 2018~\citep{van2018inaturalist}, and CheXpert~\citep{irvin2019chexpert}. The CIFAR-100 experiments construct long-tailed medium-mammals and vehicles tasks from the official classes; iNaturalist 2018 uses the official training and validation sets for an \(8{,}142\)-class long-tailed task; and CheXpert uses patient-disjoint training and internal-validation data, with the official validation set reserved for final testing. The task compositions, split sizes, and evaluation metrics are specified in Appendix~\ref{app:extended-auc-experiments}.

\paragraph{NLP tasks.}
The DistilBERT and Qwen experiments use the same NLP task definitions. GoEmotions tasks are one-versus-rest emotion detection tasks; the official test split is used and a stratified validation split is carved from the official training split. ANLI, SNLI, and WANLI rows are prepared binary natural-language-inference contrasts with train, validation, and test splits fixed before model training.
\begin{table}[t]
\caption{NLP task construction. DistilBERT and Qwen use the same binary task definitions.}
\label{tab:app-nlp-task-construction}
\centering
\small
\begin{tabularx}{\textwidth}{p{0.255\textwidth}p{0.305\textwidth}Y}
\toprule
\textbf{Task} & \textbf{Source} & \textbf{Binary construction}\\
\midrule
GoEmotions Annoyance & GoEmotions, simplified & Annoyance label versus all other examples.\\
GoEmotions Fear & GoEmotions, simplified & Fear label versus all other examples.\\
ANLI R1 Ent./Neu. & ANLI Round 1 prepared split & Entailment versus neutral.\\
ANLI R3 Con./Neu. & ANLI Round 3 prepared split & Contradiction versus neutral.\\
ANLI R3 Ent./Neu. & ANLI Round 3 prepared split & Entailment versus neutral.\\
SNLI High-Overlap & SNLI prepared high-overlap split & Entailment versus neutral on a high-overlap slice.\\
WANLI Con./Neu. & WANLI prepared split & Contradiction versus neutral.\\
WANLI Ent./Neu. & WANLI prepared split & Entailment versus neutral.\\
\bottomrule
\end{tabularx}
\end{table}
The NLP sources are Hugging Face \texttt{go\_emotions} with the simplified configuration~\citep{demszky2020goemotions}, \texttt{anli}~\citep{nie2020adversarial}, \texttt{snli}~\citep{bowman2015large}, and \texttt{alisawuffles/WANLI}~\citep{liu2022wanli}. GoEmotions uses the official text field. NLI examples are formatted as premise--hypothesis inputs. The Qwen2.5-1.5B-Instruct experiments use the same prepared binary NLP tasks as the DistilBERT experiments, so the comparison changes the pretrained model family, tokenizer, scale, and resulting LoRA trainable-subspace geometry rather than the task distribution.

\paragraph{Split sizes.}
The following tables report the exact sample counts used by the result tables. Counts are written as total examples with positive/negative counts in parentheses.
\begin{table}[t]
\caption{CV split sizes. In the split columns, \(N\) is the total number of examples and pos./neg. gives the positive/negative counts.}
\label{tab:app-cv-split-sizes}
\centering
\small
\begin{tabularx}{\textwidth}{p{0.20\textwidth}p{0.245\textwidth}p{0.255\textwidth}Y}
\toprule
\textbf{CV task} & \textbf{Train (\(N\); pos./neg.)} & \textbf{Val. (\(N\); pos./neg.)} & \textbf{Test (\(N\); pos./neg.)}\\
\midrule
ImageNet-LT Dogs & \(6788\) (\(202/6586\)) & \(1080\) (\(360/720\)) & \(2700\) (\(900/1800\))\\
Dog Breeds & \(517\) (\(259/258\)) & \(111\) (\(56/55\)) & \(110\) (\(55/55\))\\
CRJ700/CRJ900 & \(50\) (\(25/25\)) & \(30\) (\(15/15\)) & \(66\) (\(33/33\))\\
APTOS Sev./Mod. & \(1041\) (\(342/699\)) & \(223\) (\(73/150\)) & \(223\) (\(73/150\))\\
DeepWeeds P/S & \(50\) (\(25/25\)) & \(30\) (\(15/15\)) & \(80\) (\(40/40\))\\
Apple Disease & \(50\) (\(25/25\)) & \(30\) (\(15/15\)) & \(80\) (\(40/40\))\\
Rice Disease & \(36\) (\(18/18\)) & \(16\) (\(8/8\)) & \(28\) (\(14/14\))\\
HAM10000 M/N & \(5473\) (\(779/4694\)) & \(1173\) (\(167/1006\)) & \(1172\) (\(167/1005\))\\
\bottomrule
\end{tabularx}
\end{table}
\begin{table}[t]
\caption{NLP split sizes. In the split columns, \(N\) is the total number of examples and pos./neg. gives the positive/negative counts.}
\label{tab:app-nlp-split-sizes}
\centering
\small
\begin{tabularx}{\textwidth}{p{0.20\textwidth}p{0.255\textwidth}p{0.255\textwidth}Y}
\toprule
\textbf{NLP task} & \textbf{Train (\(N\); pos./neg.)} & \textbf{Val. (\(N\); pos./neg.)} & \textbf{Test (\(N\); pos./neg.)}\\
\midrule
GoEmotions Annoy. & \(39069\) (\(2223/36846\)) & \(4341\) (\(247/4094\)) & \(5427\) (\(320/5107\))\\
GoEmotions Fear & \(39069\) (\(536/38533\)) & \(4341\) (\(60/4281\)) & \(5427\) (\(78/5349\))\\
ANLI R1 E/N & \(12423\) (\(5371/7052\)) & \(667\) (\(334/333\)) & \(667\) (\(334/333\))\\
ANLI R3 C/N & \(20000\) (\(10000/10000\)) & \(798\) (\(396/402\)) & \(798\) (\(396/402\))\\
ANLI R3 E/N & \(20000\) (\(10000/10000\)) & \(804\) (\(402/402\)) & \(804\) (\(402/402\))\\
SNLI High-Overlap & \(20000\) (\(10000/10000\)) & \(4000\) (\(2000/2000\)) & \(4000\) (\(2000/2000\))\\
WANLI C/N & \(20000\) (\(10000/10000\)) & \(3540\) (\(1540/2000\)) & \(3142\) (\(745/2397\))\\
WANLI E/N & \(20000\) (\(10000/10000\)) & \(4000\) (\(2000/2000\)) & \(3858\) (\(1858/2000\))\\
\bottomrule
\end{tabularx}
\end{table}

\subsection{Models, Optimization, and Reporting Protocol}
\label{app:training-details}

All deep-learning experiments were run on a server with four NVIDIA RTX 3090 GPUs, each with 24GB of memory. Each training run uses one GPU; runs are scheduled independently across available devices rather than by distributed data-parallel training. Device assignment is not used as a statistical unit. Based on the available experiment logs, all experiments involved in this paper required approximately \(1.8\times 10^3\) GPU-hours in total.

\paragraph{Models.}
The CV experiments use ImageNet-pretrained ResNet50~\citep{he2016deep} and DenseNet121~\citep{huang2017densely} backbones with a binary classification head. The NLP experiments use DistilBERT-base-uncased~\citep{sanh2019distilbert} and Qwen2.5-1.5B-Instruct~\citep{yang2024qwen2} sequence-classification models. The Qwen experiments use LoRA adapters~\citep{hu2021lora} with rank \(16\), LoRA alpha \(32\), dropout \(0.05\), and bfloat16 computation. The DistilBERT runs use the same binary sequence-classification pipeline with LoRA rank \(16\), LoRA alpha \(32\), dropout \(0.05\), and float32 computation.

\paragraph{Losses.}
The pairwise losses are square, hinge, exponential, logistic, and squared-hinge, using the positive margin convention
\begin{equation*}
m_\theta(x^+,x^-)=f_\theta(x^+)-f_\theta(x^-).
\end{equation*}
Direct pairwise AUC-surrogate training starts immediately with the specified pairwise loss. CE-warm-started AUC training uses CE for an initial fraction of the update budget and then switches to the specified pairwise loss.

\paragraph{Base training settings.}
The following settings define the training configurations used in the final tables. ImageNet-LT uses a separate ResNet50 configuration because the training split is much larger than the low-shot and medium-scale CV tasks.
APTOS Sev./Mod. is the only CV task that appears under both ResNet50 and DenseNet121; it uses the same binary construction and split protocol in both rows.
\begin{table}[t]
\caption{Base training settings for the experimental regimes.}
\label{tab:app-base-training-settings}
\centering
\small
\begin{tabularx}{\textwidth}{p{0.19\textwidth}p{0.30\textwidth}Y}
\toprule
\textbf{Regime} & \textbf{Models/tasks} & \textbf{Optimization settings}\\
\midrule
CV, ImageNet-LT & ResNet50 on ImageNet-LT Dogs & \(20\) epochs, batch/eval batch \(128/256\), SGD with momentum \(0.9\), learning rate \(0.05\), weight decay \(10^{-4}\), cosine annealing, positive batch fraction \(0.25\), CE-to-AUC switch LR scale \(1.0\).\\
CV, other ResNet50 & Dog Breeds, CRJ700/CRJ900, APTOS Sev./Mod., DeepWeeds P/S & \(10\) epochs, batch/eval batch \(64/128\), AdamW~\citep{loshchilov2017decoupled}, learning rate \(10^{-4}\), weight decay \(10^{-4}\), cosine annealing, positive batch fraction \(0.25\), CE-to-AUC switch LR scale \(0.5\).\\
CV, DenseNet121 & Apple Disease, Rice Disease, APTOS Sev./Mod., HAM10000 M/N & \(10\) epochs, batch/eval batch \(48/128\), AdamW, learning rate \(10^{-4}\), weight decay \(10^{-4}\), cosine annealing, positive batch fraction \(0.25\), CE-to-AUC switch LR scale \(0.5\).\\
DistilBERT NLP & Eight NLP tasks in Appendix~\ref{app:datasets-tasks} & \(4\) epochs, batch/eval batch \(32/64\), gradient accumulation \(1\), AdamW, learning rate \(2\cdot10^{-4}\), weight decay \(0.01\), cosine schedule with warm-up steps equal to \(5\%\) of total optimizer updates, maximum length \(128\), positive batch fraction \(0.25\), max gradient norm \(1.0\), CE-to-AUC switch LR scale \(0.5\).\\
Qwen NLP & Same eight NLP tasks & \(4\) epochs, batch/eval batch \(24/48\), gradient accumulation \(1\), AdamW, learning rate \(8\cdot10^{-5}\), weight decay \(0.01\), cosine schedule with warm-up steps equal to \(5\%\) of total optimizer updates, maximum length \(256\), positive batch fraction \(0.25\), max gradient norm \(1.0\), CE-to-AUC switch LR scale \(0.5\).\\
\bottomrule
\end{tabularx}
\end{table}
The positive batch fraction fixes the class-batch ratio used in the experiments. Thus the implementation uses unequal class counts \(B_+\) and \(B_-\), while the \(B\)-positive and \(B\)-negative notation in the main text gives the same \(B^{-1}\) versus \(B^{-2}\) Hoeffding scaling up to constants.

\paragraph{Warm-up and margin grids.}
The result grids tune the CE warm-up ratio and the pairwise margin. Direct pairwise AUC-surrogate training is the zero-warm-up case.
\begin{table}[t]
\caption{Warm-up-ratio and pairwise-margin grids used for pairwise AUC-surrogate runs.}
\label{tab:app-warmup-margin-grids}
\centering
\small
\begin{tabularx}{\textwidth}{p{0.16\textwidth}Y}
\toprule
\textbf{Regime} & \textbf{Grid}\\
\midrule
CV & Warm-up ratios \(0,0.01,0.025,0.03,0.05,0.075,0.10,0.15,0.20,0.25,0.30\). Margins \(0.1,0.2,0.3,0.4,0.5,0.6,0.7,0.8,0.9\).\\
DistilBERT & Warm-up ratios \(0,0.01,0.025,0.05,0.075,0.10,0.15,0.20,0.25,0.30\). Margins \(0.1,0.2,0.4,0.6,0.7\).\\
Qwen & Warm-up ratios \(0,0.02,0.04,0.06,0.08,0.10\). Margins \(0.1,0.2,0.4,0.6\).\\
\bottomrule
\end{tabularx}
\end{table}

\paragraph{Configuration selection and statistics.}
Within every individual run, train, validation, and test AUC are evaluated once per epoch, so candidate checkpoints are the end-of-epoch states and no intra-epoch checkpoint is used for table selection. The test AUC reported for that run is taken from the epoch with the largest validation AUC. The table-generation step groups runs with identical model, task, loss, mode, warm-up ratio, margin, epoch count, batch size, and learning rate. Each configuration is summarized by the arithmetic mean and sample standard deviation of the validation-best test AUC values across matching runs. Hyperparameters are selected by mean validation AUC across seeds; the reported value is the corresponding held-out test AUC. For the expanded direct/warm-up tables, ties are broken by smaller sample standard deviation.

\paragraph{Matched initialization control.}
To isolate initialization from backbone architecture, we compare random and ImageNet-pretrained initialization over 45 matched settings, holding the backbone, task, surrogate, direct/warm-up configuration, and model-selection rule fixed. Warm-up gain is computed within each setting as the validation-selected test AUC after CE-warm-started pairwise training minus that after direct pairwise training. All matched settings are retained, and the mean and interquartile range are reported. The comparison is treated as descriptive because no paired inferential test is added.

\paragraph{Splits, seeds, and reproducibility.}
The reported seed set is \(\{123,124,125,126,127\}\) for all experiments. Means, sample standard deviations, and validation-selected hyperparameters are computed over the corresponding seed set for each model family. Seeds are used for model initialization, data-loader shuffling, stratified batch sampling, and any stratified split construction when the source task does not provide the exact split used by the pipeline. Thus the split construction is reproducible from the stated seed and task script, while the split-size tables in Appendix~\ref{app:datasets-tasks} give the resulting positive and negative counts. Prepared NLI splits are written once before model training; GoEmotions one-versus-rest runs use the official test split and a stratified validation split carved from the official training split by seed.

\paragraph{Implementation environment.}
The experiments are implemented in Python with PyTorch, torchvision, Hugging Face Transformers, Hugging Face Datasets, PEFT, NumPy, pandas, and scikit-learn ROC-AUC evaluation. The recorded Python environment uses Python \(3.12.10\), PyTorch \(2.6.0+\mathrm{cu124}\), torchvision \(0.21.0+\mathrm{cu124}\), Transformers \(4.51.3\), tokenizers \(0.21.1\), huggingface-hub \(0.30.2\), safetensors \(0.5.3\), NumPy \(2.1.2\), pandas \(2.3.3\), and scikit-learn \(1.7.2\). CV preprocessing uses torchvision image transforms, and NLP preprocessing uses Hugging Face tokenizers with dynamic padding.

\paragraph{Evaluation, checkpointing, and mechanism logs.}
\label{app:mechanism-diagnostics}
Validation and test AUC are computed with the standard ROC-AUC estimator. The same training pipeline also records the pairwise quantities used for the mechanism measurements in Figures~\ref{fig:experiment-trajectory-diagnostics} and~\ref{fig:experiment-aggregate-diagnostics}. These mechanism measurements are computed on logged training mini-batches rather than on the held-out test set. Scores are converted to sigmoid scores before pairwise margin statistics are computed. This logged sigmoid-margin scale is used for cross-model comparability; it only changes the empirical logging scale and does not change the score-margin convention or theoretical results. Thus the logged severe-misranking thresholds are fixed thresholds in sigmoid-score difference units. The primary unthresholded statistic is the logged-batch misranking ratio
\begin{equation*}
\Pbb\{m_\theta(X^+,X^-)<0\},
\end{equation*}
which measures the fraction of logged positive-negative training pairs whose score margin has the wrong sign. We additionally log thresholded severe-misranking ratios at thresholds \(-0.01\), \(-0.025\), \(-0.05\), \(-0.075\), \(-0.1\), \(-0.15\), and \(-0.2\). At the start of pairwise training, namely the first pairwise update for direct pairwise training and the first update after the CE-to-AUC switch for warm-started training, logged positive-negative pairs are binned by their current margin. Figure~\ref{fig:experiment-aggregate-diagnostics}(c,d) uses disjoint wrong-pair intervals \(I\) and reports the relative weight proportion
\[
P_I^{\rm rel}
=
\frac{
\sum_{(i,j):\,m_{ij}\in I} q_{ij}\big/
\sum_{i=1}^{B_+}\sum_{j=1}^{B_-} q_{ij}
}{
\#\{(i,j):m_{ij}\in I\}/(B_+B_-)
},
\qquad
q_{ij}=\bigl\|w_\ell(m_{ij})\cdot(\bm g_j^- - \bm g_i^+)\bigr\|_2 .
\]
Thus the numerator is the proportion of weighted pair-gradient contribution assigned to interval \(I\), and the denominator is the pair proportion in the same interval. This quantity is the empirical counterpart of the surrogate-weight calculations in Appendix~\ref{app:surrogate-weight-calculations}: a value above \(1\) means that the interval receives more gradient weight than its frequency alone would imply, and a value below \(1\) means that it receives less. These logs support the two state-level checks used in Section~\ref{sec:experiments}: favorable surrogate weights should place more relative gradient weight on severe wrong pairs, and CE warm-up should reduce the misranking ratio at the point where pairwise AUC optimization begins.

For checks involving the severe-misranking mass and correction-direction criterion in Appendix~\ref{app:pairwise-bad-region-dynamics}, the direction is fixed mechanically before the projected quantities are measured. On a selection batch, define the severe-pair set \(\mathcal S_M(\theta)=\{(i,j):m_{ij}\le -M\}\). The empirical severe-pair objective on this batch has first-order directional change proportional to
\[
\frac{1}{|\mathcal S_M(\theta)|}
\sum_{(i,j)\in\mathcal S_M(\theta)}
w_\ell(m_{ij})
\left\langle
\bm g_j^- - \bm g_i^+,\bm u
\right\rangle .
\]
Thus the vector below is the finite-batch severe-error normal: its negative gives the steepest local direction for reducing the surrogate-weighted severe-pair margin error among unit directions in the trainable parameter space. The sign convention is immaterial for the variance term and for the absolute projected update size reported in the mechanism measurements. When \(\mathcal S_M(\theta)\) is nonempty, compute
\[
\widehat{\bm u}_{\rm sev}
=
\frac{1}{|\mathcal S_M(\theta)|}
\sum_{(i,j)\in\mathcal S_M(\theta)}
w_\ell(m_{ij})\cdot(\bm g_j^- - \bm g_i^+).
\]
If \(\mathcal S_M(\theta)\) is empty or \(\|\widehat{\bm u}_{\rm sev}\|_2\) is negligible, the checkpoint is recorded as having no stable severe-pair correction direction. Otherwise, the empirical severe-error direction is
\begin{equation}
\widehat{\bm v}_{\rm sev}
=
-\frac{\widehat{\bm u}_{\rm sev}}{\|\widehat{\bm u}_{\rm sev}\|_2}.
\label{eq:app-empirical-severe-direction}
\end{equation}
This matches the steepest-decrease orientation in Appendix~\ref{app:pairwise-bad-region-dynamics}. The deterministic rule is applied to every method, task, seed, and checkpoint before the projected update and Hoeffding terms are computed. On an independent logging batch, write
\[
\widehat\psi_{ij}
=
\left\langle
w_\ell(m_{ij})\cdot(\bm g_j^- - \bm g_i^+),
\widehat{\bm v}_{\rm sev}
\right\rangle ,
\qquad
\overline\psi
=
\frac{1}{B_+B_-}\sum_{i=1}^{B_+}\sum_{j=1}^{B_-}\widehat\psi_{ij}.
\]
The projected update size is \(|\overline\psi|\). The first-order Hoeffding variance estimate uses the row and column means
\[
\widehat r_i=\frac{1}{B_-}\sum_{j=1}^{B_-}\widehat\psi_{ij},
\qquad
\widehat c_j=\frac{1}{B_+}\sum_{i=1}^{B_+}\widehat\psi_{ij},
\]
and records
\[
\widehat V_1
=
\widehat{\Var}_i(\widehat r_i-\overline\psi)
+
\widehat{\Var}_j(\widehat c_j-\overline\psi).
\]
The first-order variance term reported in Figure~\ref{fig:experiment-trajectory-diagnostics}(a,b) is the scaled quantity
\[
\widehat V_{1,\mathrm{scaled}}
=
\frac{\widehat{\Var}_i(\widehat r_i-\overline\psi)}{B_+}
+
\frac{\widehat{\Var}_j(\widehat c_j-\overline\psi)}{B_-}.
\]
This is the finite-batch counterpart of the order-\(B^{-1}\) term in Proposition~\ref{prop:main-hoeffding-collapse}. The residual pair-interaction estimate is obtained from
\[
\widehat V_2
=
\widehat{\Var}_{i,j}(\widehat\psi_{ij}-\widehat r_i-\widehat c_j+\overline\psi),
\]
which enters the same Hoeffding split at scale \((B_+B_-)^{-1}\), reducing to \(B^{-2}\) in the equal-size case. Proposition~\ref{prop:app-finite-pair-high-error-check} gives the finite-pair check for severe-misranking mass, and Proposition~\ref{prop:app-severe-compression-small-projections} states when severe-pair compression makes \(\widehat V_1\) small.

\paragraph{Practical diagnose--intervene--recheck protocol.}
First, use the logging-batch severe-misranking ratio as an error gate: if it decreases, continue direct pairwise training; if it remains high, proceed to the motion gate. Second, construct \(\widehat{\bm v}_{\rm sev}\) on a selection batch and estimate the projected drift \(|\overline\psi|\) and first-order variance \(\widehat V_{1,\mathrm{scaled}}\) on an independent logging batch. If either source of corrective motion remains substantial, continue direct pairwise training; if both are weak, apply CE warm-up and repeat both checks before switching to pairwise training. Among the five losses studied here, squared-hinge provides the intended severity-increasing and easy-pair-suppressing weight profile; other surrogates with the same profile can be used analogously. These diagnostics are interpreted along the observed training trajectory rather than through a universal numerical threshold.

\paragraph{Fixed-checkpoint batch-size variance scaling.}
\label{app:fixed-checkpoint-variance-scaling}
We measure fixed-checkpoint batch-size scaling on APTOS, SNLI, and ANLI-R1. At each checkpoint, the empirical severe-error direction \(\widehat{\bm v}_{\rm sev}\) is constructed by Eq.~\eqref{eq:app-empirical-severe-direction} from a fixed reference pool of \(64\) positive and \(64\) negative examples and is then held fixed across all batch-size measurements at that checkpoint. A disjoint logging pool contains \(128\) examples per class. Matched checkpoints use the same sample indices but compute their own checkpoint-specific directions.

Severe pairs satisfy \(f_\theta(x^+)-f_\theta(x^-)\le -0.05\) on the sigmoid-score difference scale used by the mechanism logs. For each per-class batch size \(B\in\{4,8,16,32,64\}\), we use \(5{,}000\) with-replacement draws from the logging pool and float64 aggregation to estimate the variance of the severe-truncated pairwise mini-batch update projected onto \(\widehat{\bm v}_{\rm sev}\). We fit the log empirical variance against \(\log B\). Each setting is repeated over three independent reference/logging-pool splits, and Table~\ref{tab:app-fixed-checkpoint-variance-scaling} reports the ranges of severe projected drift and fitted slope across the three splits. Slope stability is checked with \(1{,}000\) bootstrap repetitions. Good and bad regions are identified by severe projected drift before the slopes are fitted.

For equal positive and negative batch sizes, the measured projected variance uses the decomposition in Eq.~\eqref{eq:app-var-split}, written as
\[
\Var(\widehat g_B)
=
\frac{V_1}{B}
+
\frac{V_2}{B^2},
\]
where \(V_1\) is the sum of the first-order row and column Hoeffding projection variances and \(V_2\) is the residual pair-interaction variance.

\begin{table}[!htbp]
\centering
\caption{Fixed-checkpoint batch-size scaling of the severe-truncated projected variance. Good and bad regions are determined by severe projected drift. Entries report ranges over three independent reference/logging-pool splits.}
\label{tab:app-fixed-checkpoint-variance-scaling}
\small
\setlength{\tabcolsep}{4pt}
\begin{tabularx}{\linewidth}{>{\raggedright\arraybackslash}X l c c}
\toprule
Task / surrogate & Region & \shortstack{Severe projected\\drift} & \shortstack{Log-variance--log-\(B\)\\slope}\\
\midrule
APTOS / direct square & Good & \(+0.273\) to \(+1.043\) & \(-1.102\) to \(-1.070\)\\
APTOS / direct square & Bad & \(-0.187\) to \(+0.105\) & \(-1.250\) to \(-1.185\)\\
APTOS / direct exponential & Good & \(+0.574\) to \(+1.304\) & \(-1.099\) to \(-1.081\)\\
APTOS / direct exponential & Bad & \(-0.104\) to \(+0.164\) & \(-1.470\) to \(-1.351\)\\
APTOS / direct logistic & Good & \(+0.552\) to \(+0.687\) & \(-1.091\) to \(-1.060\)\\
APTOS / direct logistic & Bad & \(-0.003\) to \(+0.016\) & \(-1.562\) to \(-1.412\)\\
SNLI / direct squared-hinge & Good & \(+0.213\) to \(+0.384\) & \(-1.091\) to \(-1.032\)\\
SNLI / direct squared-hinge & Bad & \(-0.0004\) to \(+0.0011\) & \(-1.595\) to \(-1.492\)\\
ANLI-R1 / direct squared-hinge & Good & \(+0.107\) to \(+0.151\) & \(-1.062\) to \(-1.038\)\\
ANLI-R1 / direct squared-hinge & Bad & \(+0.011\) to \(+0.037\) & \(-1.201\) to \(-1.141\)\\
\bottomrule
\end{tabularx}
\end{table}

All variance calculations record the corresponding float64 numerical floor. The smallest measured severe projected variance remains \(1.39\times10^9\) times its corresponding numerical floor. As a separate numerical control, applying the same scaling procedure to an exact-degeneracy construction satisfying Corollary~\ref{cor:app-fully-severe-compressed-collapse} gives a log-variance--log-\(B\) slope of approximately \(-2.000\); this controlled result is not included in Table~\ref{tab:app-fixed-checkpoint-variance-scaling}.

\begin{figure}[!htbp]
\centering
\includegraphics[width=\linewidth]{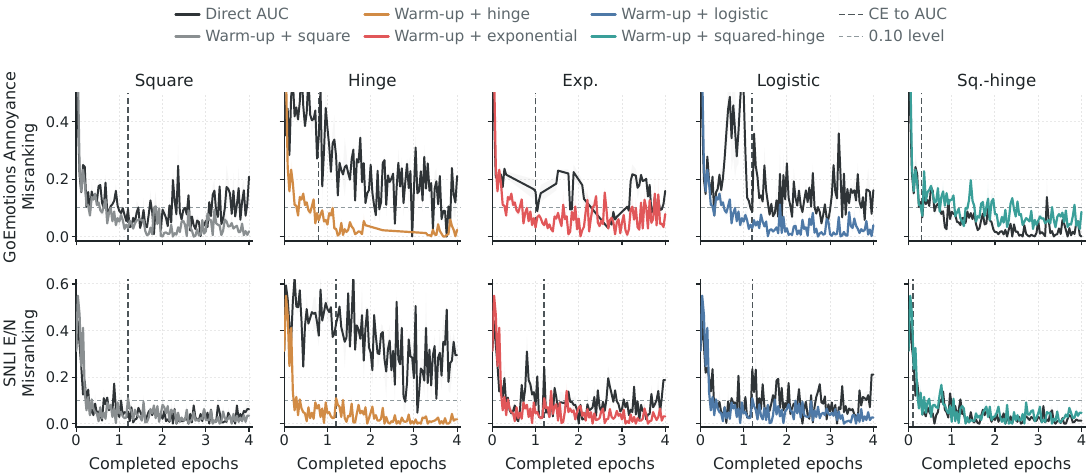}
\caption{Misranking trajectories for all five pairwise losses on DistilBERT GoEmotions Annoyance and SNLI E/N. Black curves are direct pairwise training; colored curves are CE warm-up followed by AUC training, with loss colors shown in the legend. Vertical dashed lines mark the CE-to-AUC switch.}
\label{fig:app-all-loss-trajectories}
\end{figure}

Figure~\ref{fig:app-all-loss-trajectories} extends the trajectory check in Figure~\ref{fig:experiment-trajectory-diagnostics}(c,d). On GoEmotions Annoyance and SNLI E/N, direct training remains visibly separated from warm-started training for square, hinge, exponential, and logistic losses, while squared-hinge shows much smaller separation. This supports the use of the main-text trajectory panels as representative examples rather than isolated cases.

\begin{figure}[!htbp]
\centering
\includegraphics[width=\linewidth]{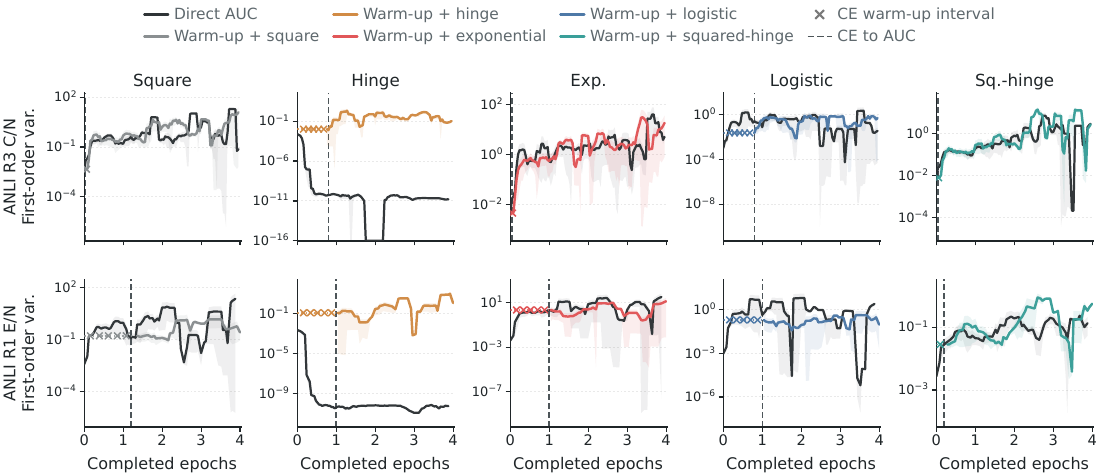}
\caption{Scaled first-order Hoeffding variance trajectories for all five pairwise losses on DistilBERT ANLI R3 C/N and ANLI R1 E/N. Black curves are direct pairwise training; colored curves are CE warm-up followed by AUC training, with loss colors shown in the legend. Shaded bands show one standard error across matching runs after the same trajectory smoothing used for the displayed mean. \(x\) marks denote the CE warm-up interval before the pairwise variance term is recorded, and vertical dashed lines mark the CE-to-AUC switch.}
\label{fig:app-first-order-variance-five-loss}
\end{figure}

Figure~\ref{fig:app-first-order-variance-five-loss} complements Figure~\ref{fig:app-all-loss-trajectories} by plotting the corresponding first-order Hoeffding variance term rather than the misranking ratio. The panels use the same correction-direction construction as Figure~\ref{fig:experiment-trajectory-diagnostics}(a,b), but show all five pairwise losses on two representative NLP tasks.

\FloatBarrier
\paragraph{Matched directional intervention.}
\label{app:matched-directional-intervention}
We run matched interventions from fixed direct-hinge checkpoints on ImageNet-LT dog-tail versus siblings and APTOS severe-plus versus moderate. At each checkpoint, the empirical severe-pair normal is computed from a fixed reference pool of \(64\) positive and \(64\) negative training examples with \(M=0\), and \(\widehat{\bm v}_{\rm sev}\) is obtained by Eq.~\eqref{eq:app-empirical-severe-direction} and then held fixed throughout the intervention. A disjoint pool of \(128\) examples per class is retained for mechanism logging. All arms use training/data seed \(123\) and pool seed \(20260725\), and share the same checkpoint, data order, augmentations, optimizer, and learning-rate schedule.

For ImageNet-LT dog-tail versus siblings, the intervention starts from update \(690\), uses learning rate \(0.025\) and per-update parameter \(L_2\) magnitude \(\sigma=0.05\), and continues for three epochs (\(207\) optimizer updates) over \(16\) noise seeds \(20262000\)--\(20262015\). For APTOS severe-plus versus moderate, it starts from update \(110\), uses learning rate \(5\times10^{-5}\) and \(\sigma=0.01\), and continues for three epochs (\(66\) optimizer updates) over \(24\) noise seeds \(20263000\)--\(20263023\). All \(40\) realizations are retained.

After each optimizer step, the directional arm adds \(\epsilon_t\sigma\widehat{\bm v}_{\rm sev}\), where the \(\epsilon_t\in\{-1,+1\}\) are independent Rademacher signs. The matched isotropic arm adds an independent normalized Gaussian direction with the same per-update \(L_2\) magnitude, refreshed at every update. One deterministic no-noise run provides the shared baseline for each task. Endpoint effects are measured at the end of the third intervention epoch on the test set. Let \(\Delta\mathrm{AUC}\) and \(\Delta S_{\rm sev}\) denote the changes in test AUC and test severe-misranking mass \(S_{\rm sev}=\mathbb{P}\{m_\theta(X^+,X^-)\le 0\}\) relative to this baseline. A realization is classified as rescue when \(\Delta\mathrm{AUC}\ge 0.5\) and \(\Delta S_{\rm sev}\le -0.5\) percentage points, as harm when \(\Delta\mathrm{AUC}\le -0.5\) and \(\Delta S_{\rm sev}\ge 0.5\) percentage points, and as no clear effect otherwise. The same rule is applied to the directional and isotropic arms.

\begin{table}[!ht]
\centering
\caption{Matched directional-intervention data at fixed direct-hinge checkpoints. R/H/N denotes rescue/harm/no clear effect. Endpoint changes are measured at the final intervention epoch relative to the shared no-noise baseline and are reported in percentage points; D/I denotes directional/isotropic.}
\label{tab:app-directional-intervention}
\small
\setlength{\tabcolsep}{4pt}
\begin{tabular}{lccccc}
\toprule
Task
& \(n\)
& \shortstack{Directional\\R/H/N}
& \shortstack{Isotropic\\R/H/N}
& \shortstack{Mean \(\Delta\)AUC\\D/I}
& \shortstack{Mean \(\Delta S_{\rm sev}\)\\D/I} \\
\midrule
ImageNet-LT dog-tail
& 16
& 2/6/8
& 0/0/16
& \(-2.541/-0.001\)
& \(+6.036/-0.046\) \\
APTOS
& 24
& 6/13/5
& 1/1/22
& \(-1.316/+0.035\)
& \(+1.317/-0.033\) \\
\midrule
Total
& 40
& 8/19/13
& 1/1/38
& \textemdash
& \textemdash \\
\bottomrule
\end{tabular}
\end{table}

\FloatBarrier
\subsection{Multiclass, Multilabel, and Deep-AUC Extensions}
\label{app:extended-auc-experiments}

\paragraph{Small-scale multiclass experiments.}
We evaluate CE, direct hinge, warm-up hinge, direct squared-hinge, and warm-up squared-hinge on three long-tailed multiclass tasks. Aircraft Airbus-6 LT contains the A318, A319, A320, A321, A330-200, and A330-300 classes, with \(93/200/200\) training/validation/test examples. CIFAR-100 Medium-Mammals-5 LT contains camel, cattle, chimpanzee, elephant, and kangaroo, with \(431/250/250\) examples. CIFAR-100 Vehicles-10 LT contains bicycle, bus, motorcycle, pickup truck, train, lawn mower, rocket, streetcar, tank, and tractor, with \(818/500/500\) examples.

The experiments use a randomly initialized ResNet18 trained for \(10\) epochs, with \(24\) batches per epoch, \(64\times64\) inputs, learning rate \(0.002\), and weight decay \(10^{-4}\). Pairwise training uses a macro one-vs-rest objective over class logits, and checkpoints are selected by validation Macro-OVO-AUC.

\begin{table}[!htbp]
\caption{Results on three long-tailed multiclass tasks (AUC\%).}
\label{tab:app-small-multiclass-results}
\centering
\small
\setlength{\tabcolsep}{4pt}
\begin{tabularx}{\textwidth}{Xccccc}
\toprule
{\bfseries Multiclass task} & {\bfseries CE} & \shortstack{{\bfseries Direct}\\{\bfseries hinge}} & \shortstack{{\bfseries Warm-up}\\{\bfseries hinge}} & \shortstack{{\bfseries Direct}\\{\bfseries squared-hinge}} & \shortstack{{\bfseries Warm-up}\\{\bfseries squared-hinge}}\\
\midrule
Aircraft Airbus-6 LT & 60.92 & 61.43 & 61.61 & {\bfseries 61.68} & 60.91\\
CIFAR-100 Medium-Mammals-5 LT & 81.08 & 81.13 & 82.91 & 83.62 & {\bfseries 84.44}\\
CIFAR-100 Vehicles-10 LT & 86.47 & 88.30 & 88.64 & 87.97 & {\bfseries 89.08}\\
\bottomrule
\end{tabularx}
\end{table}

\paragraph{Multilabel experiments.}
We also evaluate multilabel AUC optimization on CheXpert. Competition-5 uses Cardiomegaly, Edema, Consolidation, Atelectasis, and Pleural Effusion; Cardiac/Fluid-4 uses Cardiomegaly, Edema, Atelectasis, and Pleural Effusion; and Consolidation/Cardiac-2 uses Consolidation and Cardiomegaly. Each image may have multiple positive labels.

The experiments use an ImageNet-pretrained DenseNet121 with a randomly initialized multilabel head. Under the U-Zero rule, uncertain labels (\(-1\)) and missing labels are mapped to zero. Patient-disjoint training and internal-validation splits are constructed from \texttt{train.csv}, and the official \texttt{valid.csv} is used only as the test set. The training and internal-validation sets contain at most \(12{,}000\) and \(3{,}000\) examples, respectively. Models are trained for \(8\) epochs with batch size \(64\), \(224\times224\) inputs, learning rate \(10^{-4}\), and weight decay \(10^{-4}\).

The pairwise loss is computed separately between positive and negative examples for each label and macro-averaged over labels that contain both classes in the current batch. Checkpoints are selected by internal-validation macro ROC-AUC, and the final metric is the macro-average of label-wise test ROC-AUC.

\begin{table}[!htbp]
\caption{Results on three CheXpert multilabel tasks (AUC\%).}
\label{tab:app-multilabel-results}
\centering
\small
\setlength{\tabcolsep}{4pt}
\begin{tabularx}{\textwidth}{Xccccc}
\toprule
{\bfseries Multilabel task} & {\bfseries BCE} & \shortstack{{\bfseries Direct}\\{\bfseries hinge}} & \shortstack{{\bfseries Warm-up}\\{\bfseries hinge}} & \shortstack{{\bfseries Direct}\\{\bfseries squared-hinge}} & \shortstack{{\bfseries Warm-up}\\{\bfseries squared-hinge}}\\
\midrule
Competition-5 & 80.86 & 82.75 & {\bfseries 83.40} & 81.51 & 83.09\\
Cardiac/Fluid-4 & 80.76 & 81.14 & {\bfseries 82.13} & 79.96 & 81.44\\
Consolidation/Cardiac-2 & 51.52 & 57.28 & 56.09 & 57.08 & {\bfseries 59.69}\\
\bottomrule
\end{tabularx}
\end{table}

\paragraph{Large-scale multiclass experiments.}
We further evaluate large-scale multiclass learning on iNaturalist 2018 and ImageNet-LT, retaining their natural long-tailed training distributions without class resampling. iNaturalist 2018 contains \(437{,}513\) training examples and \(24{,}426\) official validation examples, for \(461{,}939\) samples over \(8{,}142\) classes. ImageNet-LT contains \(115{,}845\) valid training examples, \(20{,}000\) validation examples, and \(50{,}000\) test examples, for \(185{,}845\) samples over \(1{,}000\) classes.

The iNaturalist 2018 experiment uses an ImageNet-supervised pretrained ResNet50 with a randomly initialized \(8{,}142\)-class head. It is trained for \(3\) epochs with training and evaluation batch size \(128\) and \(128\times128\) inputs; the official validation set is evaluated once after the fixed training budget. The ImageNet-LT experiment uses a Places365-supervised pretrained ResNet50~\citep{zhou2017places} with a randomly initialized classification head. It is trained for \(4\) epochs with training batch size \(128\), evaluation batch size \(256\), and \(128\times128\) inputs; checkpoints are selected by validation Macro-OVO-AUC, and the test set is evaluated only after the selected checkpoint is restored.

We compare CE, direct hinge, warm-up hinge, direct squared-hinge, and warm-up squared-hinge, and report Macro-OVO-AUC and Macro-OVR-AUC.

\begin{table}[!htbp]
\caption{Large-scale multiclass results on iNaturalist 2018 and ImageNet-LT (AUC\%).}
\label{tab:app-large-multiclass-results}
\centering
\small
\setlength{\tabcolsep}{4pt}
\begin{tabular}{llccccc}
\toprule
{\bfseries Dataset} & {\bfseries Metric} & {\bfseries CE} & \shortstack{{\bfseries Hinge}\\{\bfseries direct}} & \shortstack{{\bfseries Hinge}\\{\bfseries warm-up}} & \shortstack{{\bfseries Sq.-hinge}\\{\bfseries direct}} & \shortstack{{\bfseries Sq.-hinge}\\{\bfseries warm-up}}\\
\midrule
iNaturalist & Macro-OVO-AUC & 96.64 & 92.45 & 99.70 & 89.18 & {\bfseries 99.74}\\
ImageNet-LT & Macro-OVO-AUC & 98.50 & 97.87 & 98.45 & 97.69 & {\bfseries 98.91}\\
iNaturalist & Macro-OVR-AUC & 97.70 & 88.72 & 97.31 & 79.94 & {\bfseries 98.13}\\
ImageNet-LT & Macro-OVR-AUC & 94.80 & 93.21 & 94.81 & 92.95 & {\bfseries 96.08}\\
\bottomrule
\end{tabular}
\end{table}

\paragraph{Deep-AUC comparison.}
We compare the results in this paper with Deep-AUC as implemented by LibAUC~\citep{yuan2023libauc} using \texttt{AUCMLoss(v2)} and the PESG optimizer. \texttt{AUCMLoss(v2)} uses the squared AUC-margin objective with margin \(1.0\); PESG uses SGD mode, momentum \(0.9\), epoch decay \(0.002\), weight decay \(10^{-5}\), and positive fraction \(0.25\).

The squared AUC-margin objective can be reconstructed with auxiliary variables in a min--max form, avoiding explicit construction of all positive--negative pairs at each update. This reconstruction uses class-dependent first- and second-order statistics. In contrast, hinge, logistic, exponential, and squared-hinge assign weights that depend on each pair's margin and generally cannot be reconstructed from fixed class means and second moments alone.

More explicitly, for a target margin \(c\), the quadratic objective
\[
\mathbb{E}\!\left[(c-f_\theta(X^+)+f_\theta(X^-))^2\right]
\]
expands exactly into the positive- and negative-class score means, second moments, and their interaction. Auxiliary primal and dual variables represent these quantities as an instance-wise min--max problem, thereby removing explicit pair construction and the corresponding two-sample pairwise estimator for this quadratic objective. By contrast, the margin-dependent weights of hinge, logistic, exponential, and squared-hinge are not generally determined by fixed class means and second moments; reproducing them without explicit pairs would require a different loss-specific reconstruction.

\begin{table}[!htbp]
\caption{Deep-AUC and our results on \(17\) task--model settings (AUC\%).}
\label{tab:app-deep-auc-comparison}
\centering
\small
\setlength{\tabcolsep}{3pt}
\begin{tabularx}{\textwidth}{>{\raggedright\arraybackslash}X>{\centering\arraybackslash}p{0.16\textwidth}>{\centering\arraybackslash}p{0.16\textwidth}}
\toprule
{\bfseries Task--model setting} & {\bfseries Deep-AUC} & {\bfseries Ours}\\
\midrule
Apple Disease--DenseNet121 & 55.56 & {\bfseries 62.69}\\
Rice Disease--DenseNet121 & {\bfseries 77.76} & 75.71\\
APTOS Sev./Mod.--DenseNet121 & 82.12 & {\bfseries 84.20}\\
HAM10000 M/N--DenseNet121 & 97.16 & {\bfseries 97.70}\\
ImageNet-LT Dogs--ResNet50 & 94.56 & {\bfseries 95.83}\\
Dog Breeds--ResNet50 & 85.74 & {\bfseries 87.00}\\
CRJ700/CRJ900--ResNet50 & 53.52 & {\bfseries 56.62}\\
APTOS Sev./Mod.--ResNet50 & 82.40 & {\bfseries 83.92}\\
DeepWeeds P/S--ResNet50 & 98.43 & {\bfseries 98.72}\\
GoEmotions Annoyance--DistilBERT & {\bfseries 84.51} & 84.38\\
GoEmotions Fear--DistilBERT & 96.11 & {\bfseries 96.48}\\
ANLI R1 E/N--DistilBERT & {\bfseries 71.67} & 70.21\\
ANLI R3 C/N--DistilBERT & {\bfseries 62.63} & 62.47\\
ANLI R3 E/N--DistilBERT & 63.65 & {\bfseries 63.87}\\
SNLI E/N--DistilBERT & 92.94 & {\bfseries 92.98}\\
WANLI C/N--DistilBERT & 82.38 & {\bfseries 82.41}\\
WANLI E/N--DistilBERT & 75.40 & {\bfseries 75.44}\\
\bottomrule
\end{tabularx}
\end{table}

\FloatBarrier

Deep-AUC is stronger in \(4\) of the \(17\) task--model settings, while our validation-selected direct configuration is stronger in \(13\). The comparison identifies Deep-AUC as a strong scalable solution when its quadratic reconstruction is appropriate, while showing that direct pairwise optimization remains competitive and supports a broader range of surrogate dynamics.

\subsection{Assets, Licenses, and Terms}
\label{app:assets-licenses}

The experiments use existing public datasets, pretrained models, and standard software assets. Dataset sources and task constructions are listed in Appendix~\ref{app:datasets-tasks} and Appendix~\ref{app:extended-auc-experiments}. The CV experiments use ImageNet-LT split files derived from ImageNet, Stanford Dogs, FGVC-Aircraft, CIFAR-100, iNaturalist 2018, CheXpert, APTOS 2019, DeepWeeds, D-KAP, Rice Leaf Diseases, and HAM10000. The NLP experiments use the Hugging Face releases or prepared splits of GoEmotions, ANLI, SNLI, and WANLI. The model assets are randomly initialized ResNet18, ImageNet- or Places365-supervised pretrained ResNet50, ImageNet-pretrained DenseNet121, DistilBERT-base, and Qwen2.5-1.5B-Instruct; the DistilBERT and Qwen NLP runs both use LoRA as the parameter-efficient fine-tuning method. The Deep-AUC comparison uses LibAUC \texttt{AUCMLoss(v2)} and PESG.

The asset licenses and terms are as follows. ImageNet/ImageNet-LT and Stanford Dogs are used under ImageNet-style non-commercial research and educational terms; FGVC-Aircraft is non-commercial research only, with original photographer copyright retained. CIFAR-100 is distributed through its official project page and used subject to its source terms; iNaturalist 2018 is used subject to the competition data terms and the licenses attached to its images; and CheXpert is obtained through Stanford's official release channel and used under its data-use terms. APTOS 2019 \(224\times224\) is CC0 1.0; DeepWeeds is CC BY 4.0; D-KAP is listed as ``Database: Open Database, Contents: Database Contents''; the Rice Leaf Diseases release used here does not list a license; and HAM10000 is CC BY-NC-SA 4.0 in the Kaggle release used here. GoEmotions is Apache 2.0, ANLI is CC BY-NC 4.0, SNLI is CC BY-SA 4.0, and WANLI is CC BY 4.0. DistilBERT-base-uncased and Qwen2.5-1.5B-Instruct are Apache 2.0; TorchVision ImageNet-pretrained weights are used subject to TorchVision notices and ImageNet-derived terms, and Places365-pretrained weights are used subject to their release terms. The software stack uses standard open-source licenses, including Apache 2.0 for Hugging Face Transformers/Datasets/PEFT and BSD-style licenses for PyTorch/TorchVision, NumPy, pandas, and scikit-learn; LibAUC is used under its project license.

Source and use details for these datasets and models appear in Appendix~\ref{app:datasets-tasks}, Appendix~\ref{app:training-details}, and Appendix~\ref{app:extended-auc-experiments}. We used the assets for research experiments and did not redistribute the original datasets, pretrained checkpoints, or derived datasets as part of this manuscript. Users who reproduce the experiments should obtain each dataset and pretrained model from the official source or public release and follow the corresponding license, data-use terms, and model-use terms. Any anonymized code or reproduction package should include its own license file and instructions for separately downloading third-party assets from their official sources.

\subsection{Ethics, Broader Impacts, and Safeguards}
\label{app:ethics-impacts}

This work studies optimization dynamics for AUC training and does not introduce a deployed decision system. The potential positive impact is improved understanding of when pairwise AUC optimization is reliable under class imbalance and ranking-based evaluation. Potential negative impact could arise if AUC-optimized models are deployed in high-stakes domains without suitable validation, calibration, uncertainty assessment, subgroup analysis, or human oversight. The experiments include public medical-image and natural-language datasets, so any downstream use of the trained models should follow the dataset terms and the safeguards required by the target application.

We do not release a new high-risk model, scraped dataset, or human-subject dataset. The manuscript reports experiments on existing public datasets and pretrained models, and it does not involve new data collection, crowdsourcing, interaction with human subjects, or interventions on human participants. Accordingly, no new compensation protocol, participant instruction text, or institution-specific IRB information is part of this work. No additional release safeguard is introduced beyond respecting the original asset terms.

\section{Direct Pairwise SGD as a Stochastic Dynamics}
\label{app:pairwise-sgd-dynamics}

This section fixes the direct pairwise SGD process used in the paper and states its local stochastic-dynamics interface. When each step uses a fresh positive-negative mini-batch, the parameter update is Markov in the current parameter: its drift is the negative population pairwise gradient, and its covariance is the mini-batch covariance later decomposed by Hoeffding projections.

\paragraph{Pairwise setup.}
We use the positive-margin convention
\begin{equation*}
m_\theta(x^+,x^-)=f_\theta(x^+)-f_\theta(x^-).
\end{equation*}
For a pairwise surrogate \(\ell\), define
\begin{equation*}
R_\ell(\theta)=\E[\ell(m_\theta(X^+,X^-))],
\qquad
w_\ell(x)=-\ell'(x),
\end{equation*}
where \(X^+\sim P_+\), \(X^-\sim P_-\), and the two draws are independent. Given \(B_+\) positives and \(B_-\) negatives, let
\begin{equation*}
m_{ij}=f_\theta(x_i^+)-f_\theta(x_j^-),
\qquad
\bm g_i^+=\nabla_\theta f_\theta(x_i^+),
\qquad
\bm g_j^-=\nabla_\theta f_\theta(x_j^-).
\end{equation*}
The direct pairwise mini-batch gradient is
\begin{equation}
\widehat{\bm g}_{B_+,B_-}^\ell(\theta)
=
\frac{1}{B_+B_-}
\sum_{i=1}^{B_+}\sum_{j=1}^{B_-}
w_\ell(m_{ij})\cdot(\bm g_j^- - \bm g_i^+).
\label{eq:app-pairwise-gradient-general}
\end{equation}
When \(B_+=B_-=B\), we write \(\widehat{\bm g}_{B}^\ell(\theta)=\widehat{\bm g}_{B,B}^\ell(\theta)\), so the equal-size specialization is
\begin{equation}
\widehat{\bm g}_B^\ell(\theta)
=
\frac{1}{B^2}
\sum_{i=1}^B\sum_{j=1}^B
w_\ell(m_{ij})\cdot(\bm g_j^- - \bm g_i^+).
\label{eq:app-pairwise-gradient}
\end{equation}

For local uses, we follow the localization convention in \citet[Remark 11]{li2019stochastic}: when the smoothness and Lipschitz assumptions hold only on a neighborhood, the analysis is applied to the stopped process until it exits that neighborhood, or to a globally regularized objective that agrees with the original objective on the neighborhood. The drift and covariance identities below are unchanged before the exit time.

\begin{theorem}[Markov-state finite-time weak SGD diffusion bridge]
\label{thm:app-markov-state-weak-bridge}
Let \((\Gamma_k)_{k\ge0}\) be i.i.d. random variables, let \(F(\cdot,\Gamma_k)\) be random objectives on \(\R^q\), and let the Markov state \(\bm y_k\in\R^q\), initialized at \(\bm y_0=\bm y_{\mathrm{init}}\), evolve by constant-step SGD
\begin{equation*}
\bm y_{k+1}
=
\bm y_k-\eta\cdot\nabla F(\bm y_k,\Gamma_k),
\qquad
k=0,\ldots,N-1,
\quad
N=\lfloor T/\eta\rfloor .
\end{equation*}
Assume the random objective family satisfies the hypotheses of \citet[Corollary 10]{li2019stochastic} with \(f_\gamma=F(\cdot,\Gamma_0)\) and \(f=\bar F\), where
\begin{equation*}
\bar F(\bm y)=\E[F(\bm y,\Gamma_0)].
\end{equation*}
Define
\begin{equation*}
\bm A(\bm y)
=
\Cov(\nabla F(\bm y,\Gamma_0)).
\end{equation*}
Then the It\^o SDE
\begin{equation*}
d\widetilde{\bm y}_t
=
-\nabla \bar F(\widetilde{\bm y}_t)\,dt
+
(\eta\cdot\bm A(\widetilde{\bm y}_t))^{1/2}\,d\bm W_t,
\qquad
\widetilde{\bm y}_0=\bm y_{\mathrm{init}},
\end{equation*}
where \(\bm W_t\) is a standard Brownian motion in \(\R^q\), is an order-\(1\) finite-time weak approximation to the discrete dynamics: for every test function \(g\) in the \(G_2\) class of \citet[Corollary 10]{li2019stochastic}, there is a constant \(C_{T,g}\) independent of \(\eta\) such that
\begin{equation*}
\max_{0\le k\le N}
\left|
\E[g(\bm y_k)]
-
\E[g(\widetilde{\bm y}_{k\eta})]
\right|
\le
C_{T,g}\cdot\eta.
\end{equation*}
\end{theorem}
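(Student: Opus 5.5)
The plan is to reduce the statement to \citet[Corollary 10]{li2019stochastic} by matching notation, not to re-derive the weak-approximation theory. In Li--Tai--E, the generic SGD iteration is $x_{k+1}=x_k-\eta\nabla f_{\gamma_k}(x_k)$ with i.i.d.\ indices $\gamma_k$ and mean objective $f=\E f_\gamma$. The associated first-order stochastic modified equation is $dX_t=-\nabla f(X_t)\,dt+(\eta\Sigma(X_t))^{1/2}dW_t$, where $\Sigma(x)=\E[(\nabla f_\gamma-\nabla f)(\nabla f_\gamma-\nabla f)^\top]$. Corollary 10 asserts the order-$1$ weak bound for test functions in $G_2$, uniformly over $k\eta\le T$. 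Identify $\gamma_k=\Gamma_k$, $f_\gamma=F(\cdot,\Gamma_0)$ and $f=\bar F$. Then the hypotheses of the theorem are exactly the hypotheses of that corollary, so it remains to check that the objects in the statement coincide with theirs.

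The checks proceed in three steps.

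\textbf{Step 1: the drift.} Show that $\E[\nabla F(\bm y,\Gamma_0)]=\nabla\bar F(\bm y)$. This interchange of gradient and expectation is part of the regularity assumed in Corollary 10, since the $G$-class growth and Lipschitz conditions on $\nabla f_\gamma$ allow dominated convergence. It follows that the drift $-\nabla\bar F$ in the SDE is the drift of the modified equation.

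\textbf{Step 2: the diffusion.} Observe that $\bm A(\bm y)=\Cov(\nabla F(\bm y,\Gamma_0))$ is the centered second moment $\Sigma$ by definition. Since $\bm A$ is symmetric positive semidefinite, its principal square root is well defined, so the diffusion coefficients agree. The regularity of $(\eta\bm A)^{1/2}$ needed for strong existence and uniqueness of the SDE is supplied by the corollary's assumptions; Li--Tai--E phrase these directly in terms of $\Sigma^{1/2}$. Strong existence and uniqueness ensure $\widetilde{\bm y}$ is well defined.

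\textbf{Step 3: time discretization and the conclusion.} Note that $N=\lfloor T/\eta\rfloor$ gives $k\eta\le T$ for all $k\le N$, which is the time range covered by the corollary. The constant $C_{T,g}$ from the corollary depends only on $T$, on $g$, and on the constants in the assumptions, not on $\eta$. This yields the displayed bound.

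The main obstacle is bookkeeping rather than analysis: we must make sure the Markov-state formulation matches the corollary's setting. The update uses a fresh, independent $\Gamma_k$ at each step, so $\bm y_k$ is a time-homogeneous Markov chain whose one-step increment law depends only on $\bm y_k$. This is precisely the setting of Li--Tai--E. For the localized uses in the paper, the convention of \citet[Remark 11]{li2019stochastic} stated above applies. Under that convention we either replace $F$ by a globally regularized version agreeing with it on the neighborhood, or stop the process at exit. The drift and covariance identities of Steps 1--2 are unaffected before the exit time.
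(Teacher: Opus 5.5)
Your proposal is correct and follows the same route as the paper. Both reduce the statement to Corollary~10 of Li--Tai--E by setting $f_\gamma=F(\cdot,\Gamma_0)$, and both obtain the drift $-\nabla\bar F$ from the gradient--expectation interchange contained in that corollary's regularity hypotheses. Both also identify the diffusion through $\bm A(\bm y)=\Cov(\nabla F(\bm y,\Gamma_0))$, using that fresh i.i.d.\ $\Gamma_k$ make the one-step law depend only on $\bm y_k$; your added remarks on the square root of $\bm A$, the range $k\eta\le T$, and localization are simply more explicit than the paper's version.
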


\begin{proof}
The state process is Markov by assumption: the conditional law of \(\bm y_{k+1}\) given the past is determined by the current state \(\bm y_k\) and a fresh noise variable \(\Gamma_k\). Thus the update has the constant-step stochastic-gradient form required by \citet[Corollary 10]{li2019stochastic}:
\begin{equation*}
\bm y_{k+1}
=
\bm y_k-\eta\cdot\nabla F(\bm y_k,\Gamma_k).
\end{equation*}
The mean objective is
\begin{equation*}
\bar F(\bm y)=\E[F(\bm y,\Gamma_0)].
\end{equation*}
Under the exchange-of-derivative and regularity conditions included in \citet[Corollary 10]{li2019stochastic},
\begin{equation*}
\E[\nabla F(\bm y,\Gamma_0)]
=
\nabla \bar F(\bm y).
\end{equation*}
Hence the conditional first moment of one step is
\begin{equation*}
\E[\bm y_{k+1}-\bm y_k\mid \bm y_k=\bm y]
=
-\eta\cdot\nabla \bar F(\bm y),
\end{equation*}
so the continuous-time drift is \(-\nabla\bar F(\bm y)\). The stochastic-gradient covariance is
\begin{equation*}
\Cov(\nabla F(\bm y,\Gamma_k)\mid \bm y_k=\bm y)
=
\Cov(\nabla F(\bm y,\Gamma_0))
=
\bm A(\bm y),
\end{equation*}
where the second equality uses the identical distribution of the noise variables. Therefore the finite-time weak modified equation has diffusion coefficient
\begin{equation*}
(\eta\cdot\bm A(\bm y))^{1/2}.
\end{equation*}
Applying \citet[Corollary 10]{li2019stochastic} with \(f_\gamma=F(\cdot,\Gamma_0)\), mean objective \(\bar F\), and covariance field \(\bm A\) gives the displayed SDE and the weak-error bound for every \(G_2\) test function \(g\).
\end{proof}

\begin{proposition}[Plain pairwise mini-batch gradients are unbiased]
\label{prop:app-pairwise-batch-unbiased}
Assume \(\nabla_\theta \ell(m_\theta(X^+,X^-))\) is integrable and differentiation can be exchanged with expectation. Then the direct pairwise mini-batch gradient in Eq.~\eqref{eq:app-pairwise-gradient-general} satisfies
\begin{equation*}
\E[\widehat{\bm g}_{B_+,B_-}^\ell(\theta)]
=
\nabla R_\ell(\theta).
\end{equation*}
The equal-size shorthand \(\widehat{\bm g}_{B}^\ell\) satisfies the same identity.
\end{proposition}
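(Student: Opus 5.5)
The plan is to use linearity of expectation together with the fact that every pair in the mini-batch has the same distribution as a population pair. First, I would note that the positives $x_1^+,\dots,x_{B_+}^+$ are drawn i.i.d.\ from $P_+$ and the negatives $x_1^-,\dots,x_{B_-}^-$ i.i.d.\ from $P_-$, independently of each other. Then for every index pair $(i,j)$ the pair $(x_i^+,x_j^-)$ has law $P_+\otimes P_-$, which is exactly the law of $(X^+,X^-)$. The pairs are dependent across indices sharing an $i$ or a $j$. That dependence matters for the variance in Proposition~\ref{prop:main-hoeffding-collapse}, but it plays no role in the mean.

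Second, I would identify each summand as a per-pair gradient. By the chain rule,
\[
\nabla_\theta \ell(m_\theta(x^+,x^-)) = \ell'(m_\theta(x^+,x^-))\bigl(\nabla_\theta f_\theta(x^+)-\nabla_\theta f_\theta(x^-)\bigr) = w_\ell(m_\theta(x^+,x^-))\bigl(\nabla_\theta f_\theta(x^-)-\nabla_\theta f_\theta(x^+)\bigr),
\]
which is precisely the $(i,j)$ term $w_\ell(m_{ij})(\bm g_j^- - \bm g_i^+)$ in Eq.~\eqref{eq:app-pairwise-gradient-general}. The integrability assumption guarantees that each term has a finite expectation. Therefore
\[
\E\bigl[w_\ell(m_{ij})(\bm g_j^- - \bm g_i^+)\bigr] = \E\bigl[\nabla_\theta \ell(m_\theta(X^+,X^-))\bigr].
\]
Averaging these $B_+B_-$ identical expectations with weight $1/(B_+B_-)$ gives the same vector.

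Third, I would apply the assumed exchange of differentiation and expectation:
\[
\E[\nabla_\theta \ell(m_\theta(X^+,X^-))] = \nabla_\theta \E[\ell(m_\theta(X^+,X^-))] = \nabla R_\ell(\theta).
\]
The equal-size statement is the special case $B_+=B_-=B$.

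No step is a real obstacle. The only delicate point is differentiability, since for hinge the derivative $\ell'$ fails to exist at the kink. I would handle this by reading $w_\ell$ as a fixed a.e.\ derivative (or subgradient selection) and noting that the exchange hypothesis is stated for exactly this per-pair gradient, so it covers this case.
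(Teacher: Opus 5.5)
Your proposal is correct and follows the paper's proof essentially step for step. Both arguments identify each summand with the per-pair gradient via the chain rule, note that every mini-batch pair has the product law of $(X^+,X^-)$ despite the cross-pair dependence, and then conclude by linearity of expectation together with the assumed exchange of differentiation and expectation. Your remark on the hinge kink is a harmless addition; the paper handles that point separately by fixing the boundary subgradient to zero.
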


\begin{proof}
For one independent positive-negative pair,
\begin{equation*}
\nabla_\theta \ell(m_\theta(X^+,X^-))
=
\ell'(m_\theta(X^+,X^-))
\cdot
\bigl(\nabla f_\theta(X^+)-\nabla f_\theta(X^-)\bigr).
\end{equation*}
Since \(w_\ell=-\ell'\), the same vector can be written as
\begin{equation*}
w_\ell(m_\theta(X^+,X^-))
\cdot
\bigl(\nabla f_\theta(X^-)-\nabla f_\theta(X^+)\bigr).
\end{equation*}
The exchange assumption gives
\begin{equation*}
\nabla R_\ell(\theta)
=
\E[\nabla_\theta \ell(m_\theta(X^+,X^-))].
\end{equation*}
Every pair \((X_i^+,X_j^-)\) inside the mini-batch has the same product distribution as \((X^+,X^-)\), although the \(B_+B_-\) pair terms are not mutually independent. Therefore linearity of expectation yields
\begin{equation*}
\E[\widehat{\bm g}_{B_+,B_-}^\ell(\theta)]
=
\frac{1}{B_+B_-}
\sum_{i=1}^{B_+}\sum_{j=1}^{B_-}
\E[
w_\ell(m_\theta(X_i^+,X_j^-))
\cdot
(\nabla f_\theta(X_j^-)-\nabla f_\theta(X_i^+))
]
=
\nabla R_\ell(\theta).
\end{equation*}
\end{proof}

\begin{proposition}[One-step conditional moments for pairwise dynamics]
\label{prop:app-pairwise-one-step-moments}
Consider the parameter dynamics
\begin{equation*}
\theta_{k+1}
=
\theta_k-\eta_k\cdot\widehat{\bm g}_{B_+,B_-,k}^\ell(\theta_k),
\end{equation*}
where, conditionally on \(\theta_k\), \(\widehat{\bm g}_{B_+,B_-,k}^\ell(\theta_k)\) is computed from a fresh positive-negative mini-batch. Let
\begin{equation*}
\Delta_k=\theta_{k+1}-\theta_k.
\end{equation*}
Under Proposition~\ref{prop:app-pairwise-batch-unbiased},
\begin{equation*}
\E[\Delta_k\mid\theta_k]
=
-\eta_k\cdot\nabla R_\ell(\theta_k).
\end{equation*}
If the conditional second moment is finite, then
\begin{equation*}
\E[\Delta_k\cdot\Delta_k^\top\mid\theta_k]
=
\eta_k^2\cdot
\left\{
\Cov(\widehat{\bm g}_{B_+,B_-,k}^\ell(\theta_k)\mid\theta_k)
+
\nabla R_\ell(\theta_k)\cdot\nabla R_\ell(\theta_k)^\top
\right\}.
\end{equation*}
\end{proposition}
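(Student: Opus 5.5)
The plan is to compute both conditional moments directly from the update rule $\Delta_k=-\eta_k\,\widehat{\bm g}_{B_+,B_-,k}^\ell(\theta_k)$. The key fact is that, conditionally on $\theta_k$, the mini-batch is fresh. Hence the conditional law of $\widehat{\bm g}_{B_+,B_-,k}^\ell(\theta_k)$ given $\theta_k=\theta$ equals the unconditional law of $\widehat{\bm g}_{B_+,B_-}^\ell(\theta)$ at the frozen parameter $\theta$. This is the same Markov structure already used in Theorem~\ref{thm:app-markov-state-weak-bridge}. The step size $\eta_k$ is treated as deterministic, or at least $\theta_k$-measurable, so it factors out of every conditional expectation.

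For the first moment, I will pull $-\eta_k$ outside the conditional expectation. I then use the freshness of the batch to replace $\E[\widehat{\bm g}^\ell_{B_+,B_-,k}(\theta_k)\mid\theta_k]$ by the function $\theta\mapsto\E[\widehat{\bm g}^\ell_{B_+,B_-}(\theta)]$ evaluated at $\theta_k$. Proposition~\ref{prop:app-pairwise-batch-unbiased} identifies this function as $\nabla R_\ell$, which gives $\E[\Delta_k\mid\theta_k]=-\eta_k\nabla R_\ell(\theta_k)$. Formally, the substitution is the freezing lemma: the independence of the fresh batch from $\theta_k$ lets one condition by evaluating at a fixed $\theta$. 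Integrability is inherited from the hypotheses of Proposition~\ref{prop:app-pairwise-batch-unbiased}.

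For the second moment, I will write $\Delta_k\Delta_k^\top=\eta_k^2\,\widehat{\bm g}\,\widehat{\bm g}^\top$, with $\widehat{\bm g}$ short for the conditional mini-batch gradient. I then apply the elementary identity $\E[\bm Z\bm Z^\top\mid\mathcal F]=\Cov(\bm Z\mid\mathcal F)+\E[\bm Z\mid\mathcal F]\,\E[\bm Z\mid\mathcal F]^\top$, valid whenever the conditional second moment is finite. This identity follows by expanding $(\bm Z-\E[\bm Z\mid\mathcal F])(\bm Z-\E[\bm Z\mid\mathcal F])^\top$ and using the $\mathcal F$-measurability of the conditional mean. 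Substituting the conditional mean $\nabla R_\ell(\theta_k)$ from the previous step yields the stated formula.

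No step is a real obstacle. The only point that needs care is justifying that conditioning on $\theta_k$ amounts to evaluating at a fixed parameter. This rests on the freshness assumption together with measurability of $(\theta,\text{batch})\mapsto\widehat{\bm g}^\ell(\theta)$, and I would state it explicitly rather than leave it implicit.
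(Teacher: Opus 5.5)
Your proposal is correct and follows the paper's proof essentially step for step. The paper also uses the freshness of the batch to apply Proposition~\ref{prop:app-pairwise-batch-unbiased} conditionally on \(\theta_k\) for the drift, and then applies the conditional identity \(\E[YY^\top]=\Cov(Y)+\E[Y]\E[Y]^\top\) with \(Y=\widehat{\bm g}^\ell_{B_+,B_-,k}(\theta_k)\) for the second moment. Your explicit appeal to the freezing lemma, together with the measurability of \(\eta_k\) and of the batch-gradient map, makes precise what the paper leaves implicit.
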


\begin{proof}
The increment is
\begin{equation*}
\Delta_k=-\eta_k\cdot\widehat{\bm g}_{B_+,B_-,k}^\ell(\theta_k).
\end{equation*}
Because the fresh mini-batch used at step \(k\) is independent of the past, the conditional law of \(\widehat{\bm g}_{B_+,B_-,k}^\ell(\theta_k)\) given the full history depends on the history only through \(\theta_k\). Proposition~\ref{prop:app-pairwise-batch-unbiased} therefore applies conditionally:
\begin{equation*}
\E[\widehat{\bm g}_{B_+,B_-,k}^\ell(\theta_k)\mid\theta_k]
=
\nabla R_\ell(\theta_k).
\end{equation*}
Multiplying by \(-\eta_k\) gives the displayed drift identity.

For the second moment,
\begin{equation*}
\E[\Delta_k\cdot\Delta_k^\top\mid\theta_k]
=
\eta_k^2\cdot
\E[
\widehat{\bm g}_{B_+,B_-,k}^\ell(\theta_k)
\cdot
\widehat{\bm g}_{B_+,B_-,k}^\ell(\theta_k)^\top
\mid\theta_k].
\end{equation*}
For any square-integrable vector \(Y\),
\begin{equation*}
\E[Y\cdot Y^\top]
=
\Cov(Y)+\E[Y]\cdot\E[Y]^\top.
\end{equation*}
Applying this identity conditionally with
\begin{equation*}
Y=\widehat{\bm g}_{B_+,B_-,k}^\ell(\theta_k)
\end{equation*}
and substituting the conditional unbiasedness identity proves the formula.
\end{proof}

\begin{corollary}[Mean-ODE generator for decreasing steps]
\label{cor:app-pairwise-mean-ode}
In Proposition~\ref{prop:app-pairwise-one-step-moments}, suppose \(\eta_k\downarrow0\). For the one-step kernel
\begin{equation*}
(K_k\varphi)(\theta)
=
\E[\varphi(\theta-\eta_k\cdot\widehat{\bm g}_{B_+,B_-}^\ell(\theta))],
\end{equation*}
assume that for every compact \(C\) in a local open set \(U\),
\begin{equation*}
\sup_{\theta\in C}
\E[\|\widehat{\bm g}_{B_+,B_-}^\ell(\theta)\|_2^3]
<\infty.
\end{equation*}
Then for every \(\varphi\in C_b^3(\R^d)\) and every compact \(C\subset U\),
\begin{equation*}
\sup_{\theta\in C}
\left|
\frac{(K_k\varphi)(\theta)-\varphi(\theta)}{\eta_k}
+
\langle\nabla\varphi(\theta),\nabla R_\ell(\theta)\rangle
\right|
\to0.
\end{equation*}
\end{corollary}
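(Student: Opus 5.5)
The plan is a second-order Taylor expansion of \(\varphi\) along the one-step increment, with the cubic moment bound used to control the remainder uniformly on \(C\). Fix \(\theta\in C\) and write \(\widehat{\bm g}=\widehat{\bm g}_{B_+,B_-}^\ell(\theta)\), computed from a fresh mini-batch. Since \(\varphi\in C_b^3(\R^d)\), Taylor's theorem with Lagrange remainder gives
\begin{equation*}
\varphi(\theta-\eta_k\widehat{\bm g})
=
\varphi(\theta)-\eta_k\langle\nabla\varphi(\theta),\widehat{\bm g}\rangle
+\tfrac{\eta_k^2}{2}\,\widehat{\bm g}^\top\nabla^2\varphi(\theta)\,\widehat{\bm g}
+\mathcal R_k(\theta),
\qquad
|\mathcal R_k(\theta)|\le \tfrac{\eta_k^3}{6}\,\|\nabla^3\varphi\|_\infty\,\|\widehat{\bm g}\|_2^3 .
\end{equation*}
Taking expectations is legitimate because every term is integrable under the third-moment hypothesis; the quadratic term is integrable since \(\E\|\widehat{\bm g}\|_2^2\le(\E\|\widehat{\bm g}\|_2^3)^{2/3}\) by Jensen's inequality.

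The first-order term is handled by Proposition~\ref{prop:app-pairwise-batch-unbiased}: \(\E[\langle\nabla\varphi(\theta),\widehat{\bm g}\rangle]=\langle\nabla\varphi(\theta),\nabla R_\ell(\theta)\rangle\). Hence
\begin{equation*}
\frac{(K_k\varphi)(\theta)-\varphi(\theta)}{\eta_k}+\langle\nabla\varphi(\theta),\nabla R_\ell(\theta)\rangle
=
\tfrac{\eta_k}{2}\,\E[\widehat{\bm g}^\top\nabla^2\varphi(\theta)\widehat{\bm g}]+\E[\mathcal R_k(\theta)]/\eta_k .
\end{equation*}
The quadratic term is bounded by \(\tfrac{\eta_k}{2}\|\nabla^2\varphi\|_\infty\,\E\|\widehat{\bm g}\|_2^2\). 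As in Proposition~\ref{prop:app-pairwise-one-step-moments}, this second moment equals the trace of the covariance plus \(\|\nabla R_\ell(\theta)\|_2^2\), but we do not need that identity. The remainder contributes at most \(\tfrac{\eta_k^2}{6}\|\nabla^3\varphi\|_\infty\,\E\|\widehat{\bm g}\|_2^3\).

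To get uniformity, set \(S_C=\sup_{\theta\in C}\E\|\widehat{\bm g}\|_2^3<\infty\), which is finite by hypothesis. Applying Jensen's inequality pointwise gives \(\sup_C\E\|\widehat{\bm g}\|_2^2\le S_C^{2/3}\). The supremum over \(C\) of the left-hand side is therefore at most
\begin{equation*}
\tfrac{\eta_k}{2}\|\nabla^2\varphi\|_\infty S_C^{2/3}+\tfrac{\eta_k^2}{6}\|\nabla^3\varphi\|_\infty S_C ,
\end{equation*}
which tends to zero as \(\eta_k\downarrow0\).

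The main point to check is that the Taylor remainder bound holds globally. This needs no localization of \(\theta-\eta_k\widehat{\bm g}\) to \(U\), because \(\varphi\in C_b^3(\R^d)\) has bounded third derivatives on all of \(\R^d\). The moment hypothesis is the only place where \(C\subset U\) is used; the rest of the argument is routine.
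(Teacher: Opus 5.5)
Your proposal is correct and follows essentially the same route as the paper. Both arguments use a second-order Taylor expansion with a cubic remainder bounded through $C_b^3$, use unbiasedness (Proposition~\ref{prop:app-pairwise-batch-unbiased}) to identify the first-order term, and use the uniform third-moment bound to send the quadratic and remainder terms to zero uniformly on $C$. The only cosmetic difference is how the second moment is controlled: you use Jensen, $\E\|\widehat{\bm g}\|_2^2\le(\E\|\widehat{\bm g}\|_2^3)^{2/3}$, while the paper uses the pointwise bound $\|u\|_2^2\le 1+\|u\|_2^3$; either gives an explicit $O(\eta_k)$ rate.
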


\begin{proof}
Taylor expand around \(\theta\). For \(G=\widehat{\bm g}_{B_+,B_-}^\ell(\theta)\),
\begin{equation*}
\varphi(\theta-\eta_k\cdot G)
=
\varphi(\theta)
-\eta_k\cdot\langle\nabla\varphi(\theta),G\rangle
+\frac{\eta_k^2}{2}\cdot
G^\top\cdot\nabla^2\varphi(\theta)\cdot G
+\mathcal R_k(\theta,G),
\end{equation*}
where, uniformly over \(\theta\in C\),
\begin{equation*}
|\mathcal R_k(\theta,G)|
\le
C_\varphi\cdot\eta_k^3\cdot\|G\|_2^3
\end{equation*}
because \(\varphi\in C_b^3(\R^d)\). Taking expectation and dividing by \(\eta_k\) gives
\begin{align*}
\frac{(K_k\varphi)(\theta)-\varphi(\theta)}{\eta_k}
&=
-\langle\nabla\varphi(\theta),\E[\widehat{\bm g}_{B_+,B_-}^\ell(\theta)]\rangle \\
&\quad+
\frac{\eta_k}{2}\cdot
\E[
\widehat{\bm g}_{B_+,B_-}^\ell(\theta)^\top
\cdot
\nabla^2\varphi(\theta)
\cdot
\widehat{\bm g}_{B_+,B_-}^\ell(\theta)
] \\
&\quad+
O\!\left(
\eta_k^2\cdot
\E[\|\widehat{\bm g}_{B_+,B_-}^\ell(\theta)\|_2^3]
\right).
\end{align*}
The first term equals
\begin{equation*}
-\langle\nabla\varphi(\theta),\nabla R_\ell(\theta)\rangle
\end{equation*}
by Proposition~\ref{prop:app-pairwise-batch-unbiased}. On the compact set \(C\),
\begin{equation*}
\left|
\frac{\eta_k}{2}\cdot
\E[
\widehat{\bm g}_{B_+,B_-}^\ell(\theta)^\top
\cdot
\nabla^2\varphi(\theta)
\cdot
\widehat{\bm g}_{B_+,B_-}^\ell(\theta)
]
\right|
\le
\frac{\eta_k}{2}\cdot
\|\nabla^2\varphi\|_{\infty,C}
\cdot
\E[\|\widehat{\bm g}_{B_+,B_-}^\ell(\theta)\|_2^2].
\end{equation*}
Since \(\|u\|_2^2\le1+\|u\|_2^3\), the uniform third-moment assumption makes this term converge to \(0\) uniformly on \(C\). The remainder term also converges to \(0\) uniformly on \(C\). This proves the displayed convergence.
\end{proof}

\begin{corollary}[Pairwise AUC SGD Langevin interface]
\label{cor:app-pairwise-langevin-interface}
Consider constant-step direct pairwise AUC SGD
\begin{equation*}
\theta_{k+1}
=
\theta_k-\eta\cdot\widehat{\bm g}_{B_+,B_-,k}^\ell(\theta_k),
\end{equation*}
with fresh pairwise mini-batches at each step. Define
\begin{equation*}
\bm\Sigma_{B_+,B_-,\ell}(\theta)
=
\Cov(\widehat{\bm g}_{B_+,B_-}^\ell(\theta)).
\end{equation*}
For a unit direction \(\bm v\), define the scalar kernel
\begin{equation*}
\psi_{\theta,\bm v}(x^+,x^-)
=
\left\langle
\ell'(m_\theta(x^+,x^-))
\cdot
\bigl(\nabla f_\theta(x^+)-\nabla f_\theta(x^-)\bigr),
\bm v
\right\rangle .
\end{equation*}
Let \(\psi_{\theta,\bm v}^{(1,0)}\), \(\psi_{\theta,\bm v}^{(0,1)}\), and \(\psi_{\theta,\bm v}^{(1,1)}\) denote the positive-class first-order projection, the negative-class first-order projection, and the residual pair-interaction component of this kernel, as defined explicitly in Theorem~\ref{thm:app-hoeffding}.
Under the usual local smoothness and moment assumptions for stochastic-approximation diffusion approximations~\citep{ethier2009markov,kushner2003stochastic,li2019stochastic,fontaine2021convergence}, the local Langevin interface for the pairwise dynamics is
\begin{equation*}
d\bm\Theta_t
=
-\nabla R_\ell(\bm\Theta_t)\,dt
+
(\eta\cdot\bm\Sigma_{B_+,B_-,\ell}(\bm\Theta_t))^{1/2}\,d\bm W_t.
\end{equation*}
Moreover, along every unit direction \(\bm v\),
\begin{equation*}
\bm v^\top\cdot\bm\Sigma_{B_+,B_-,\ell}(\theta)\cdot\bm v
=
\frac{
\Var(\psi_{\theta,\bm v}^{(1,0)}(X^+))
}{B_+}
+
\frac{
\Var(\psi_{\theta,\bm v}^{(0,1)}(X^-))
}{B_-}
+
\frac{
\Var(\psi_{\theta,\bm v}^{(1,1)}(X^+,X^-))
}{B_+B_-}.
\end{equation*}
In the equal-size case \(B_+=B_-=B\), this reduces to the \(B^{-1}\) versus \(B^{-2}\) split used below.
\end{corollary}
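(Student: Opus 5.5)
The proof has two parts, handled separately: the Langevin interface, and the exact directional variance identity.

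For the SDE, I would apply Theorem~\ref{thm:app-markov-state-weak-bridge} with state \(\bm y_k=\theta_k\). The noise variable \(\Gamma_k\) is the fresh mini-batch of \(B_+\) positives and \(B_-\) negatives, and the random objective is \(F(\theta,\Gamma)=\frac{1}{B_+B_-}\sum_{i,j}\ell(m_\theta(x_i^+,x_j^-))\), so that \(\nabla F(\theta,\Gamma)=\widehat{\bm g}^\ell_{B_+,B_-}(\theta)\). Proposition~\ref{prop:app-pairwise-batch-unbiased} gives \(\bar F=R_\ell\) and \(\E[\nabla F]=\nabla R_\ell\). By definition \(\bm A=\bm\Sigma_{B_+,B_-,\ell}\). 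The ``usual local smoothness and moment assumptions'' are read as exactly the hypotheses of \citet[Corollary 10]{li2019stochastic}, applied to the stopped or regularized process under the localization convention already stated. Proposition~\ref{prop:app-pairwise-one-step-moments} confirms the drift and covariance identification. This part is bookkeeping.

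For the variance identity, I would first note that \(\bm v^\top\bm\Sigma\bm v=\Var(\langle\widehat{\bm g},\bm v\rangle)\). I would then check that \(\langle\widehat{\bm g},\bm v\rangle=\frac{1}{B_+B_-}\sum_{i,j}\psi_{\theta,\bm v}(X_i^+,X_j^-)\), using \(\ell'=-w_\ell\) so the sign convention matches the kernel. Write \(\mu=\E\psi\), \(\psi^{(1,0)}(x)=\E[\psi(x,X^-)]-\mu\), \(\psi^{(0,1)}(y)=\E[\psi(X^+,y)]-\mu\), and \(\psi^{(1,1)}=\psi-\mu-\psi^{(1,0)}-\psi^{(0,1)}\). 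The Hoeffding decomposition (Theorem~\ref{thm:app-hoeffding}) then gives
\[
\frac{1}{B_+B_-}\sum_{i,j}\psi(X_i^+,X_j^-)-\mu=\frac{1}{B_+}\sum_i\psi^{(1,0)}(X_i^+)+\frac{1}{B_-}\sum_j\psi^{(0,1)}(X_j^-)+\frac{1}{B_+B_-}\sum_{i,j}\psi^{(1,1)}(X_i^+,X_j^-).
\]

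The main step is showing that all cross covariances vanish. The relevant facts are the degeneracy relations \(\E[\psi^{(1,1)}(x,X^-)]=0\) and \(\E[\psi^{(1,1)}(X^+,y)]=0\), which hold for every \(x,y\), together with \(\E\psi^{(1,0)}=\E\psi^{(0,1)}=0\) and independence of all sampled points.
\begin{itemize}
\item The two first-order sums are independent, so their covariance is zero.
\item For \(\Cov(\psi^{(1,0)}(X_i^+),\psi^{(1,1)}(X_{i'}^+,X_j^-))\), condition on the positives. The conditional mean over \(X_j^-\) of \(\psi^{(1,1)}\) is zero, so this covariance vanishes. The same argument handles the negative-side terms.
\item For two residual terms indexed by \((i,j)\ne(i',j')\), some index differs, say \(j\ne j'\). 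Conditioning on everything except \(X_j^-\) kills the product.
\end{itemize}
What remains are \(B_+\), \(B_-\), and \(B_+B_-\) diagonal terms, respectively. These give \(\Var(\psi^{(1,0)})/B_+\), \(\Var(\psi^{(0,1)})/B_-\), and \(\Var(\psi^{(1,1)})/(B_+B_-)\).

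Square integrability of \(\psi\), which is implied by the finite second moment of \(\widehat{\bm g}\), guarantees that all components lie in \(L^2\) and that Fubini applies. Setting \(B_+=B_-=B\) yields the \(B^{-1}\) versus \(B^{-2}\) split of Proposition~\ref{prop:main-hoeffding-collapse}.
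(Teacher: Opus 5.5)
Your proposal is correct and follows the paper's proof. You feed the same random batch objective $F$ into Theorem~\ref{thm:app-markov-state-weak-bridge}, with Proposition~\ref{prop:app-pairwise-one-step-moments} identifying the drift and covariance, and you obtain the directional identity from the same two-sample Hoeffding orthogonality argument; the paper cites that argument as Corollary~\ref{cor:app-unequal-batch-hoeffding}, whereas you re-derive it inline.

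Your aside that $\E[\psi_{\theta,\bm v}^2]<\infty$ follows from a finite second moment of $\widehat{\bm g}$ is true, since conditioning on one class at a time gives $\E[\psi_{\theta,\bm v}(X^+,X^-)^2]\le B_+B_-\,\E[\langle\widehat{\bm g}^\ell_{B_+,B_-}(\theta),\bm v\rangle^2]$. It does need that short argument, though; the paper instead simply assumes square integrability, as in Theorem~\ref{thm:app-hoeffding}.
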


\begin{proof}
Let
\begin{equation*}
\Gamma
=
(X_1^+,\ldots,X_{B_+}^+,X_1^-,\ldots,X_{B_-}^-)
\end{equation*}
denote one full positive-negative mini-batch, and define the random batch objective
\begin{equation*}
F_{B_+,B_-}^\ell(\theta,\Gamma)
=
\frac{1}{B_+B_-}
\sum_{i=1}^{B_+}\sum_{j=1}^{B_-}
\ell(m_\theta(X_i^+,X_j^-)).
\end{equation*}
Its gradient is exactly the direct pairwise mini-batch gradient because
\begin{equation*}
\nabla_\theta \ell(m_\theta(X_i^+,X_j^-))
=
\ell'(m_{ij})\cdot(\bm g_i^+-\bm g_j^-)
=
w_\ell(m_{ij})\cdot(\bm g_j^- - \bm g_i^+).
\end{equation*}
Thus
\begin{equation*}
\nabla F_{B_+,B_-}^\ell(\theta,\Gamma)
=
\widehat{\bm g}_{B_+,B_-}^\ell(\theta).
\end{equation*}
The population objective associated with \(F_{B_+,B_-}^\ell\) is
\begin{equation*}
\E[F_{B_+,B_-}^\ell(\theta,\Gamma)]
=
R_\ell(\theta),
\end{equation*}
because each pair \((X_i^+,X_j^-)\) has the same product distribution as \((X^+,X^-)\). Theorem~\ref{thm:app-markov-state-weak-bridge}, applied with Markov state \(\bm y_k=\theta_k\), gives the local Langevin interface. Equivalently, the first two conditional moments in Proposition~\ref{prop:app-pairwise-one-step-moments} identify the local drift as
\begin{equation*}
-\nabla R_\ell(\theta)
\end{equation*}
and the local diffusion covariance as
\begin{equation*}
\eta\cdot\bm\Sigma_{B_+,B_-,\ell}(\theta),
\end{equation*}
under the stated local regularity and moment assumptions.

For the directional coefficient, use
\begin{equation*}
\bm v^\top\cdot\bm\Sigma_{B_+,B_-,\ell}(\theta)\cdot\bm v
=
\Var(\langle\widehat{\bm g}_{B_+,B_-}^\ell(\theta),\bm v\rangle).
\end{equation*}
The exact variance split in Corollary~\ref{cor:app-unequal-batch-hoeffding} gives
\begin{equation*}
\Var(\langle\widehat{\bm g}_{B_+,B_-}^\ell(\theta),\bm v\rangle)
=
\frac{
\Var(\psi_{\theta,\bm v}^{(1,0)}(X^+))
}{B_+}
+
\frac{
\Var(\psi_{\theta,\bm v}^{(0,1)}(X^-))
}{B_-}
+
\frac{
\Var(\psi_{\theta,\bm v}^{(1,1)}(X^+,X^-))
}{B_+B_-}.
\end{equation*}
Thus the first-order Hoeffding projections are the directional variance terms with scales \(B_+^{-1}\) and \(B_-^{-1}\) in the diffusion covariance of the local Langevin interface. When those projections vanish, the directional variance contribution is governed by the canonical term with scale \((B_+B_-)^{-1}\), which is \(B^{-2}\) in the equal-size case.
\end{proof}

\subsection{Extensions Beyond Binary AUC}
\label{app:extensions-beyond-binary-auc}

Multiclass one-vs-one AUC is a weighted combination of binary AUC comparisons between class pairs, while one-vs-rest AUC combines binary comparisons between each class and the remaining classes. Multilabel macro-AUC is an average of label-wise binary AUC objectives. The corresponding pairwise gradients can therefore be decomposed through the binary comparisons that constitute each metric.

In these settings, the same example may participate in several class-pair or label-wise objectives. All first-order contributions produced by that example must be aggregated before the variance of the first-order Hoeffding term \(V_1\) is computed. The disappearance of a first-order projection for one class pair or one label is not sufficient for the complete multiclass or multilabel first-order term to vanish. Exact \(B^{-2}\) scaling requires the aggregated first-order projection to vanish as a whole.

Partial AUC and fixed top-\(k\) metrics can likewise be written as restricted or weighted pairwise objectives, with weights selecting pairs in a specified false-positive-rate region or ranking position. If a partial-AUC threshold or top-\(k\) boundary is estimated from the current mini-batch, the sampling fluctuation of that threshold or boundary contributes an additional first-order random term. The corresponding degeneracy condition must therefore apply to the complete aggregated gradient, including the effect of boundary estimation. These decompositions provide a unified form for applying our first-order projection and variance-scaling analysis to multiclass, multilabel, and restricted ranking objectives, while making the corresponding aggregate degeneracy conditions explicit.

\section{Bad Regions and Hoeffding Projection Collapse}
\label{app:pairwise-bad-region-dynamics}

This section collects the technical statements used by Section~\ref{sec:bad-regions}. The statements isolate three ingredients: severe misrankings, loss of deterministic corrective motion, and collapse of the first-order stochastic term in the two-sample mini-batch gradient.

The section begins by defining the severe-error functional and its steepest local correction direction; the corrective-cone formulation is recorded only as an appendix-level extension. Proposition~\ref{prop:app-ustat} through Corollary~\ref{cor:app-first-second-order} then prove the U-statistic variance decomposition, and Proposition~\ref{prop:app-cone-bad-region} records the corresponding appendix-level cone extension. Proposition~\ref{prop:app-severe-compression-small-projections} through Corollary~\ref{cor:app-fully-severe-compressed-collapse} give conditional sufficient conditions under which severe-misranking regions make the first-order stochastic term small, increasingly small as severe-misranking mass grows, or exactly degenerate. Proposition~\ref{prop:app-explicit-bad-regions} through Corollary~\ref{cor:app-mean-badmass-seed} give elementary constructions and initialization-distribution lower bounds for severe-misranking mass. Proposition~\ref{prop:app-finite-pair-high-error-check} through Corollary~\ref{cor:app-init-distribution-badseed} give finite-sample verification statements. Proposition~\ref{prop:app-degenerate-near-stationary-bad-region} and Corollary~\ref{cor:app-degenerate-saddle-bad-region} combine high-error mass, weak drift, and projection collapse into the correction-direction bad-region sufficient condition used in the main text.

\paragraph{Severe-misranking mass and bad-region criterion.}
For \(M>0\), define
\begin{equation*}
\pi_M(\theta)
=
\Pbb(m_\theta(X^+,X^-)\le -M).
\end{equation*}
A region \(\mathcal U\) is high-error at level \((M,\rho)\) if
\begin{equation*}
\inf_{\theta\in\mathcal U}\pi_M(\theta)\ge\rho.
\end{equation*}
It is bad for SGD in a specified correction direction \(\bm v\) when high-error mass persists while both local correction components are weak: the projected drift \(|\langle\nabla R_\ell(\theta),\bm v\rangle|\) is small, and the first-order Hoeffding variance defined below is small relative to the residual pair-interaction variance. The next two paragraphs give the canonical steepest construction of this direction from the severe-error statistic and the corresponding cone version for comparable first-order corrections.

\paragraph{Severe-error functional and steepest correction direction.}
The directional conditions above are local statements about motion that can reduce the severe-misranking statistic. To make this explicit, fix a nonnegative severe-pair weight \(a_M(x)\). For the differentiable statements below, take \(a_M\) to be a smooth approximation of \(\mathbf 1\{x\le -M\}\); the hard-mask version is used only in the finite-batch logging rule in Appendix~\ref{app:mechanism-diagnostics}. Let \(A_{M,\ell}\) be a differentiable margin-error potential satisfying
\begin{equation}
\label{eq:app-severe-potential-derivative}
A_{M,\ell}'(x)
=
-a_M(x)\cdot w_\ell(x),
\qquad
w_\ell(x)=-\ell'(x).
\end{equation}
Define the local severe-error functional
\begin{equation}
\label{eq:app-severe-error-functional}
\Phi_{M,\ell}(\theta)
=
\E\!\left[
A_{M,\ell}(m_\theta(X^+,X^-))
\right].
\end{equation}
When differentiation under the expectation is justified, for example under the same local smoothness and integrability assumptions used for the pairwise gradient in Appendix~\ref{app:pairwise-sgd-dynamics}, its gradient is
\begin{equation}
\label{eq:app-severe-error-gradient}
\nabla\Phi_{M,\ell}(\theta)
=
\E\!\left[
a_M(m_\theta(X^+,X^-))\cdot w_\ell(m_\theta(X^+,X^-))
\cdot
\left(\nabla f_\theta(X^-)-\nabla f_\theta(X^+)\right)
\right].
\end{equation}
When \(\nabla \Phi_{M,\ell}(\theta)\ne \bm 0\), the unit direction
\begin{equation}
\label{eq:app-steepest-severe-direction}
\bm v_{\rm sev}(\theta)
=
-
\frac{\nabla \Phi_{M,\ell}(\theta)}
{\|\nabla \Phi_{M,\ell}(\theta)\|_2}
\end{equation}
is the steepest local direction for reducing \(\Phi_{M,\ell}\) under the Euclidean parameter norm. Indeed, by Cauchy--Schwarz, for any unit \(\bm u\),
\begin{equation}
\label{eq:app-steepest-ineq}
-\langle\nabla \Phi_{M,\ell}(\theta),\bm u\rangle
\le
\|\nabla \Phi_{M,\ell}(\theta)\|_2,
\end{equation}
with equality at \(\bm u=\bm v_{\rm sev}(\theta)\). Thus a mini-batch update \(\Delta\theta\) gives first-order improvement of the severe-error functional according to
\begin{equation}
\label{eq:app-first-order-severe-improvement}
\Phi_{M,\ell}(\theta+\Delta\theta)-\Phi_{M,\ell}(\theta)
=
\langle\nabla \Phi_{M,\ell}(\theta),\Delta\theta\rangle
+o(\|\Delta\theta\|_2).
\end{equation}
For this mechanism, stochastic motion is measured after projection onto the steepest direction that decreases \(\Phi_{M,\ell}\), with \(\bm v_{\rm sev}\) being the canonical direction.

\paragraph{Corrective cone.}
The projected statements used in the main text can be stated for a fixed correction direction or for a cone of comparable first-order corrections. For \(\alpha>0\), define
\begin{equation}
\label{eq:app-corrective-cone}
\mathcal C_\alpha(\theta)
=
\left\{
\bm u:\|\bm u\|_2=1,\ 
\left\langle\nabla \Phi_{M,\ell}(\theta),\bm u\right\rangle
\le
-\alpha
\right\}.
\end{equation}
Directions in \(\mathcal C_\alpha(\theta)\) all reduce the severe-error functional at first order by at least \(\alpha\). The single direction \(\bm v_{\rm sev}\) in Eq.~\eqref{eq:app-steepest-severe-direction} gives the steepest correction direction, while the cone formulation records the appendix-level comparison class. When \(\|\nabla\Phi_{M,\ell}(\theta)\|_2\ge\alpha\), this steepest direction belongs to \(\mathcal C_\alpha(\theta)\).

\begin{proposition}[Directional pairwise gradients are two-sample U-statistics]
\label{prop:app-ustat}
Fix \(\theta\) and a unit vector \(\bm v\). Define
\begin{equation*}
\psi_{\theta,\bm v}(x^+,x^-)
=
\left\langle
\ell'(m_\theta(x^+,x^-))
\cdot
\bigl(\nabla f_\theta(x^+)-\nabla f_\theta(x^-)\bigr),
\bm v
\right\rangle .
\end{equation*}
Then
\begin{equation*}
\langle \widehat{\bm g}_B^\ell(\theta),\bm v\rangle
=
\frac{1}{B^2}
\sum_{i=1}^B\sum_{j=1}^B
\psi_{\theta,\bm v}(X_i^+,X_j^-).
\end{equation*}
\end{proposition}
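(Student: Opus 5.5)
The plan is to prove the identity by direct algebra, using only the definition of the equal-size pairwise mini-batch gradient in Eq.~\eqref{eq:app-pairwise-gradient} and linearity of the inner product. No probabilistic argument is needed for the identity itself. The U-statistic interpretation then follows by noting the sampling structure of the batch.

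First, I rewrite each summand of Eq.~\eqref{eq:app-pairwise-gradient} in terms of \(\ell'\). Since \(w_\ell=-\ell'\), the sign change gives
\begin{equation*}
w_\ell(m_{ij})\cdot(\bm g_j^- - \bm g_i^+)
=
\ell'(m_{ij})\cdot(\bm g_i^+ - \bm g_j^-),
\end{equation*}
where \(m_{ij}=m_\theta(X_i^+,X_j^-)\), \(\bm g_i^+=\nabla f_\theta(X_i^+)\), and \(\bm g_j^-=\nabla f_\theta(X_j^-)\). This is exactly the vector inside the inner product defining \(\psi_{\theta,\bm v}(X_i^+,X_j^-)\). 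It also agrees with the \(w_\ell\)-form of the kernel used in Section~\ref{sec:bad-regions}.

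Second, I take the inner product of the double sum with the fixed vector \(\bm v\). By bilinearity, it passes through the finite sum and the scalar factor \(B^{-2}\). The result is
\begin{equation*}
\langle \widehat{\bm g}_B^\ell(\theta),\bm v\rangle
=
\frac{1}{B^2}\sum_{i,j}\psi_{\theta,\bm v}(X_i^+,X_j^-),
\end{equation*}
which is the claimed identity.

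Finally, I record why this is a two-sample U-statistic. The positives \(X_1^+,\ldots,X_B^+\) are i.i.d.\ from \(P^+\), and the negatives \(X_1^-,\ldots,X_B^-\) are i.i.d.\ from \(P^-\), independent of the positives. The kernel \(\psi_{\theta,\bm v}\) is a fixed measurable function of one positive and one negative argument, because \(\theta\) and \(\bm v\) are held fixed. The average over all \(B\cdot B\) cross pairs is therefore a two-sample U-statistic of degree \((1,1)\) with this kernel; such statistics coincide with the corresponding V-statistics since there are no diagonal terms to exclude. This is the form to which the Hoeffding decomposition of Theorem~\ref{thm:app-hoeffding} applies.

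The main point to watch is not an obstacle but the sign and measurability bookkeeping. One must check that the \(\ell'\)-form used here and the \(w_\ell\)-form of the main text define the same kernel. One must also check that \(\bm v\) is deterministic, not chosen from the same batch, so that \(\psi_{\theta,\bm v}\) is a genuine fixed kernel. This is why the logging protocol fixes \(\widehat{\bm v}_{\rm sev}\) on an independent selection batch.
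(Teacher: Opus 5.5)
Your proposal is correct and follows the same route as the paper: rewrite each summand using \(w_\ell=-\ell'\) as \(\ell'(m_{ij})\cdot(\bm g_i^+-\bm g_j^-)\), then pull the inner product with the fixed \(\bm v\) through the finite double sum by bilinearity. Your extra remarks on the two-sample \((1,1)\) U-statistic structure and on \(\bm v\) being fixed independently of the batch are accurate. They go slightly beyond what the paper's proof records.
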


\begin{proof}
Taking the inner product of Eq.~\eqref{eq:app-pairwise-gradient} with \(\bm v\) and using \(w_\ell=-\ell'\) gives
\begin{equation*}
\langle \widehat{\bm g}_B^\ell(\theta),\bm v\rangle
=
\frac{1}{B^2}
\sum_{i=1}^B\sum_{j=1}^B
\left\langle
-\ell'(m_{ij})\cdot(\bm g_j^- - \bm g_i^+),
\bm v
\right\rangle .
\end{equation*}
This equals
\begin{equation*}
\frac{1}{B^2}
\sum_{i=1}^B\sum_{j=1}^B
\left\langle
\ell'(m_{ij})\cdot(\bm g_i^+ - \bm g_j^-),
\bm v
\right\rangle .
\end{equation*}
Substituting \(m_{ij}=m_\theta(X_i^+,X_j^-)\), \(\bm g_i^+=\nabla f_\theta(X_i^+)\), and \(\bm g_j^-=\nabla f_\theta(X_j^-)\) yields the displayed two-sample U-statistic representation.
\end{proof}

\begin{theorem}[Two-sample Hoeffding decomposition and variance split]
\label{thm:app-hoeffding}
Assume the positive batch is i.i.d., the negative batch is i.i.d., the two classes are independent, and
\begin{equation*}
\E[\psi_{\theta,\bm v}(X^+,X^-)^2]<\infty.
\end{equation*}
Let
\begin{equation*}
\mu_{\theta,\bm v}
=
\E[\psi_{\theta,\bm v}(X^+,X^-)],
\end{equation*}
\begin{equation*}
\psi_{\theta,\bm v}^{(1,0)}(x^+)
=
\E[\psi_{\theta,\bm v}(x^+,X^-)]-\mu_{\theta,\bm v},
\qquad
\psi_{\theta,\bm v}^{(0,1)}(x^-)
=
\E[\psi_{\theta,\bm v}(X^+,x^-)]-\mu_{\theta,\bm v},
\end{equation*}
and
\begin{equation*}
\psi_{\theta,\bm v}^{(1,1)}(x^+,x^-)
=
\psi_{\theta,\bm v}(x^+,x^-)
-\E[\psi_{\theta,\bm v}(x^+,X^-)]
-\E[\psi_{\theta,\bm v}(X^+,x^-)]
+\mu_{\theta,\bm v}.
\end{equation*}
Then
\begin{align}
\langle \widehat{\bm g}_B^\ell(\theta),\bm v\rangle-\mu_{\theta,\bm v}
&=
\frac{1}{B}\sum_{i=1}^B
\psi_{\theta,\bm v}^{(1,0)}(X_i^+)
+
\frac{1}{B}\sum_{j=1}^B
\psi_{\theta,\bm v}^{(0,1)}(X_j^-)
\notag\\
&\quad+
\frac{1}{B^2}\sum_{i=1}^B\sum_{j=1}^B
\psi_{\theta,\bm v}^{(1,1)}(X_i^+,X_j^-),
\label{eq:app-hoeffding-decomp}
\end{align}
and
\begin{equation}
\Var(\langle \widehat{\bm g}_B^\ell(\theta),\bm v\rangle)
=
\frac{
\Var(\psi_{\theta,\bm v}^{(1,0)}(X^+))
+
\Var(\psi_{\theta,\bm v}^{(0,1)}(X^-))
}{B}
+
\frac{
\Var(\psi_{\theta,\bm v}^{(1,1)}(X^+,X^-))
}{B^2}.
\label{eq:app-var-split}
\end{equation}
\end{theorem}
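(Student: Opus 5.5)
The plan is to first verify the algebraic identity \eqref{eq:app-hoeffding-decomp} pointwise, and then compute the variance of each of the three sums. We will use the orthogonality of the Hoeffding components in $L^2$.

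For the decomposition, we start from Proposition~\ref{prop:app-ustat}, which writes $\langle\widehat{\bm g}_B^\ell(\theta),\bm v\rangle$ as $B^{-2}\sum_{i,j}\psi_{\theta,\bm v}(X_i^+,X_j^-)$. Square-integrability of $\psi_{\theta,\bm v}$ makes the conditional means $\E[\psi_{\theta,\bm v}(x^+,X^-)]$ and $\E[\psi_{\theta,\bm v}(X^+,x^-)]$ well defined and square integrable, by Jensen together with Fubini over the product law. By the definition of $\psi^{(1,1)}_{\theta,\bm v}$, each summand splits as
\[
\psi_{\theta,\bm v}(x^+,x^-)=\mu_{\theta,\bm v}+\psi^{(1,0)}_{\theta,\bm v}(x^+)+\psi^{(0,1)}_{\theta,\bm v}(x^-)+\psi^{(1,1)}_{\theta,\bm v}(x^+,x^-).
\]
Averaging over $i,j$ gives the stated decomposition. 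The $\psi^{(1,0)}$ term depends only on $i$ and so averages to $B^{-1}\sum_i$; the $\psi^{(0,1)}$ term does the same over $j$.

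For the variance, the main work is the set of degeneracy identities
\[
\E[\psi^{(1,0)}_{\theta,\bm v}(X^+)]=0,\qquad \E[\psi^{(1,1)}_{\theta,\bm v}(x^+,X^-)]=0\ \text{for a.e. } x^+,\qquad \E[\psi^{(1,1)}_{\theta,\bm v}(X^+,x^-)]=0\ \text{for a.e. } x^-,
\]
together with the analogous statement for $\psi^{(0,1)}_{\theta,\bm v}$. Each follows directly by integrating the definitions. These identities give three facts.
\begin{itemize}
\item The three sums are pairwise uncorrelated. Positive and negative examples are independent, so the $(1,0)$ and $(0,1)$ sums are uncorrelated. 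For a cross term such as $\E[\psi^{(1,0)}(X_i^+)\psi^{(1,1)}(X_{i'}^+,X_j^-)]$, we condition on the positive samples and use that $\psi^{(1,1)}$ is centered in its negative argument.
\item Within the $(1,0)$ sum, distinct $i$ give independent mean-zero terms. Hence its variance is $\Var(\psi^{(1,0)}_{\theta,\bm v}(X^+))/B$, and likewise for $(0,1)$.
\item For the residual sum, the covariance of $\psi^{(1,1)}(X_i^+,X_j^-)$ and $\psi^{(1,1)}(X_{i'}^+,X_{j'}^-)$ vanishes unless $(i,j)=(i',j')$. When the pairs share neither index, the terms are independent and mean zero. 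When they share exactly one index, say $i=i'$ with $j\ne j'$, we condition on $X_i^+$ and $X_j^-$ and use that the $j'$ term is centered in $X_{j'}^-$. The residual sum then contributes $B^2\cdot\Var(\psi^{(1,1)})/B^4=\Var(\psi^{(1,1)})/B^2$.
\end{itemize}
Summing the three contributions yields \eqref{eq:app-var-split}.

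The main obstacle is the careful bookkeeping of these conditional-expectation arguments. The pair terms share sample indices and are therefore not independent, so the shared-index case of the residual covariances needs the conditioning argument above. Every product involved is integrable by Cauchy--Schwarz, because each component lies in $L^2$.
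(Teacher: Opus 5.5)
Your proposal is correct and follows essentially the same route as the paper. Both arguments start from the pointwise Hoeffding identity, then show the three sums are orthogonal by conditioning and using the zero conditional means of the canonical term, so only the $B^2$ diagonal terms survive in the residual variance. The only differences are cosmetic: for shared-index residual covariances you condition on $(X_i^+,X_j^-)$ where the paper conditions on the shared sample alone, and you spell out the Fubini/Cauchy--Schwarz integrability bookkeeping that the paper leaves implicit.
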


\begin{proof}
For every \((x^+,x^-)\),
\begin{equation*}
\psi_{\theta,\bm v}(x^+,x^-)-\mu_{\theta,\bm v}
=
\psi_{\theta,\bm v}^{(1,0)}(x^+)
+
\psi_{\theta,\bm v}^{(0,1)}(x^-)
+
\psi_{\theta,\bm v}^{(1,1)}(x^+,x^-),
\end{equation*}
which follows by expanding the definition of \(\psi_{\theta,\bm v}^{(1,1)}\). Averaging this identity over the \(B^2\) pairs proves Eq.~\eqref{eq:app-hoeffding-decomp}.

The two first-order projections are centered:
\begin{equation*}
\E[\psi_{\theta,\bm v}^{(1,0)}(X^+)]=0,
\qquad
\E[\psi_{\theta,\bm v}^{(0,1)}(X^-)]=0.
\end{equation*}
The canonical term satisfies
\begin{equation*}
\E[\psi_{\theta,\bm v}^{(1,1)}(X^+,X^-)\mid X^+]=0,
\qquad
\E[\psi_{\theta,\bm v}^{(1,1)}(X^+,X^-)\mid X^-]=0,
\end{equation*}
because the positive and negative classes are independent and the conditional expectations subtracted in its definition match the corresponding conditional means.

Let
\begin{equation*}
P_B^+
=
\frac{1}{B}\sum_{i=1}^B
\psi_{\theta,\bm v}^{(1,0)}(X_i^+),
\quad
P_B^-
=
\frac{1}{B}\sum_{j=1}^B
\psi_{\theta,\bm v}^{(0,1)}(X_j^-),
\end{equation*}
and
\begin{equation*}
H_B
=
\frac{1}{B^2}\sum_{i=1}^B\sum_{j=1}^B
\psi_{\theta,\bm v}^{(1,1)}(X_i^+,X_j^-).
\end{equation*}
The decomposition gives
\begin{equation*}
\langle \widehat{\bm g}_B^\ell(\theta),\bm v\rangle-\mu_{\theta,\bm v}
=
P_B^+ + P_B^- + H_B.
\end{equation*}
The positive and negative batches are independent, so
\begin{equation*}
\Cov(P_B^+,P_B^-)=0.
\end{equation*}
Conditioning on the whole positive batch gives
\begin{equation*}
\E[H_B\mid X_1^+,\ldots,X_B^+]=0,
\end{equation*}
and therefore \(\Cov(P_B^+,H_B)=0\). Conditioning on the whole negative batch similarly gives \(\Cov(P_B^-,H_B)=0\). Thus
\begin{equation*}
\Var(P_B^+ + P_B^- + H_B)
=
\Var(P_B^+)+\Var(P_B^-)+\Var(H_B).
\end{equation*}
Since the positive samples are i.i.d. and the first-order projection is centered,
\begin{equation*}
\Var(P_B^+)
=
\frac{1}{B}\Var(\psi_{\theta,\bm v}^{(1,0)}(X^+)),
\qquad
\Var(P_B^-)
=
\frac{1}{B}\Var(\psi_{\theta,\bm v}^{(0,1)}(X^-)).
\end{equation*}
For \(H_B\), every off-diagonal covariance is zero. If one index is shared, conditioning on the shared sample gives zero because the canonical term has zero conditional mean. If both indices differ, the two terms are independent and centered. Therefore only the \(B^2\) diagonal terms remain:
\begin{equation*}
\Var(H_B)
=
\frac{B^2}{B^4}
\cdot
\Var(\psi_{\theta,\bm v}^{(1,1)}(X^+,X^-))
=
\frac{\Var(\psi_{\theta,\bm v}^{(1,1)}(X^+,X^-))}{B^2}.
\end{equation*}
Combining the three variance identities proves Eq.~\eqref{eq:app-var-split}. The same decomposition is the standard two-sample Hoeffding decomposition~\citep{lee2019u,serfling2009approximation}.
\end{proof}

\begin{corollary}[Unequal positive and negative batch sizes]
\label{cor:app-unequal-batch-hoeffding}
Let the positive and negative mini-batch sizes be \(B_+\) and \(B_-\), and define
\begin{equation*}
\widehat{\bm g}_{B_+,B_-}^\ell(\theta)
=
\frac{1}{B_+B_-}
\sum_{i=1}^{B_+}\sum_{j=1}^{B_-}
w_\ell(m_{ij})\cdot(\bm g_j^- - \bm g_i^+).
\end{equation*}
Under the assumptions of Theorem~\ref{thm:app-hoeffding},
\begin{equation*}
\Var(\langle \widehat{\bm g}_{B_+,B_-}^\ell(\theta),\bm v\rangle)
=
\frac{\Var(\psi_{\theta,\bm v}^{(1,0)}(X^+))}{B_+}
+
\frac{\Var(\psi_{\theta,\bm v}^{(0,1)}(X^-))}{B_-}
+
\frac{\Var(\psi_{\theta,\bm v}^{(1,1)}(X^+,X^-))}{B_+B_-}.
\end{equation*}
Thus unequal class counts change only the constants attached to the two first-order class projections; when \(B_+\) and \(B_-\) grow proportionally, the first-order and residual terms retain the same \(B^{-1}\) versus \(B^{-2}\) scaling used in the main text.
\end{corollary}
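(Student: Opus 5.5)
The proof repeats the argument of Theorem~\ref{thm:app-hoeffding}, carrying the two batch sizes separately. The pointwise identity
\[
\psi_{\theta,\bm v}(x^+,x^-)-\mu_{\theta,\bm v}=\psi^{(1,0)}_{\theta,\bm v}(x^+)+\psi^{(0,1)}_{\theta,\bm v}(x^-)+\psi^{(1,1)}_{\theta,\bm v}(x^+,x^-)
\]
does not involve batch sizes at all. By the argument of Proposition~\ref{prop:app-ustat}, the projection \(\langle \widehat{\bm g}_{B_+,B_-}^\ell(\theta),\bm v\rangle\) equals the average of \(\psi_{\theta,\bm v}(X_i^+,X_j^-)\) over the \(B_+B_-\) pairs. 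Averaging the identity over \(i\le B_+\) and \(j\le B_-\) therefore gives
\[
\langle \widehat{\bm g}_{B_+,B_-}^\ell(\theta),\bm v\rangle-\mu_{\theta,\bm v}=P^+_{B_+}+P^-_{B_-}+H_{B_+,B_-}.
\]
Here \(P^+_{B_+}\) is the mean of \(\psi^{(1,0)}_{\theta,\bm v}(X_i^+)\) over the positive batch, \(P^-_{B_-}\) is the mean of \(\psi^{(0,1)}_{\theta,\bm v}(X_j^-)\) over the negative batch, and \(H_{B_+,B_-}\) is the mean of the canonical term over all \(B_+B_-\) pairs.

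Next I show the three pieces are pairwise uncorrelated, using the same facts as before. \(P^+\) and \(P^-\) are functions of independent batches, so they are uncorrelated. For the canonical term, \(\E[\psi^{(1,1)}\mid X^+]=0\) and \(\E[\psi^{(1,1)}\mid X^-]=0\). Conditioning \(H\) on the whole positive batch therefore gives zero, which makes \(\Cov(P^+,H)=0\). Conditioning on the whole negative batch gives \(\Cov(P^-,H)=0\) in the same way. The variance of the projection is then the sum of the three variances.

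Finally I compute each variance. By the i.i.d. assumption, \(\Var(P^+)=\Var(\psi^{(1,0)}_{\theta,\bm v}(X^+))/B_+\) and \(\Var(P^-)=\Var(\psi^{(0,1)}_{\theta,\bm v}(X^-))/B_-\).

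For \(H\), the main step is to check that the off-diagonal covariances \(\Cov(\psi^{(1,1)}(X_i^+,X_j^-),\psi^{(1,1)}(X_{i'}^+,X_{j'}^-))\) vanish when \((i,j)\ne(i',j')\). There are three cases:
\begin{itemize}
\item If \(i=i'\) and \(j\ne j'\), condition on \(X_i^+\) and \(X_j^-\); the other factor then has zero conditional mean given \(X_i^+\).
\item The case \(j=j'\), \(i\ne i'\) is symmetric.
\item If \(i\ne i'\) and \(j\ne j'\), the two terms are independent and centered.
\end{itemize}
The cases with one shared index need square-integrability of \(\psi^{(1,1)}\) so that the covariances are defined. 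This follows from \(\E[\psi^2]<\infty\) and Jensen's inequality applied to the conditional means.

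Only the \(B_+B_-\) diagonal terms survive, so \(\Var(H)=B_+B_-\,\Var(\psi^{(1,1)})/(B_+B_-)^2=\Var(\psi^{(1,1)})/(B_+B_-)\). This gives the displayed formula.

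For the scaling remark, take \(B_+=\lceil cB\rceil\) and \(B_-=\lceil (1-c)B\rceil\) for a fixed \(c\in(0,1)\), as in the positive batch fraction of the experiments. The first two terms are then \(\Theta(B^{-1})\) times fixed constants, and the last is \(\Theta(B^{-2})\).
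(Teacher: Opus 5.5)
Your proposal is correct and follows the paper's own proof: you rerun the Hoeffding decomposition of Theorem~\ref{thm:app-hoeffding} with $B_+$ and $B_-$ kept separate, reuse the same orthogonality facts, and note that only the $B_+B_-$ diagonal canonical terms survive, which gives the $(B_+B_-)^{-1}$ factor. Your version also spells out the three off-diagonal covariance cases and the square-integrability of $\psi^{(1,1)}_{\theta,\bm v}$, both of which the paper's proof leaves implicit.
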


\begin{proof}
The Hoeffding decomposition in Eq.~\eqref{eq:app-hoeffding-decomp} becomes
\begin{align*}
\langle \widehat{\bm g}_{B_+,B_-}^\ell(\theta),\bm v\rangle-\mu_{\theta,\bm v}
={}&
\frac{1}{B_+}\sum_{i=1}^{B_+}
\psi_{\theta,\bm v}^{(1,0)}(X_i^+)
+
\frac{1}{B_-}\sum_{j=1}^{B_-}
\psi_{\theta,\bm v}^{(0,1)}(X_j^-)\\
&+
\frac{1}{B_+B_-}\sum_{i=1}^{B_+}\sum_{j=1}^{B_-}
\psi_{\theta,\bm v}^{(1,1)}(X_i^+,X_j^-).
\end{align*}
The orthogonality arguments in the proof of Theorem~\ref{thm:app-hoeffding} are unchanged. The two first-order averages have variances \(B_+^{-1}\Var(\psi_{\theta,\bm v}^{(1,0)}(X^+))\) and \(B_-^{-1}\Var(\psi_{\theta,\bm v}^{(0,1)}(X^-))\). For the residual term, only the \(B_+B_-\) diagonal pair terms contribute, giving the factor \((B_+B_-)^{-1}\).
\end{proof}

\begin{proposition}[Pointwise projected-variance baseline]
\label{prop:app-pointwise-baseline}
Let \(Z_1,\ldots,Z_B\) be i.i.d. and define
\begin{equation*}
\bar H_B(\theta)
=
\frac{1}{B}\sum_{i=1}^B h_\theta(Z_i).
\end{equation*}
For any unit vector \(\bm v\) with
\begin{equation*}
\Var(\langle h_\theta(Z),\bm v\rangle)>0,
\end{equation*}
one has
\begin{equation*}
\Var(\langle \bar H_B(\theta),\bm v\rangle)
=
\frac{1}{B}\Var(\langle h_\theta(Z),\bm v\rangle).
\end{equation*}
\end{proposition}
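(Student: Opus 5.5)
The identity is the elementary variance formula for an i.i.d. average, applied to the scalar projection. I will first reduce to a scalar problem by linearity of the inner product:
\[
\langle \bar H_B(\theta),\bm v\rangle=\frac{1}{B}\sum_{i=1}^B Y_i,
\qquad
Y_i=\langle h_\theta(Z_i),\bm v\rangle .
\]
Since \(Z_1,\ldots,Z_B\) are i.i.d. and \(z\mapsto\langle h_\theta(z),\bm v\rangle\) is a fixed measurable map, the \(Y_i\) are i.i.d. copies of \(Y=\langle h_\theta(Z),\bm v\rangle\). Implicit in the hypothesis is that \(\Var(Y)\) is finite; if it is infinite, both sides equal \(+\infty\) and the identity holds trivially. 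So I will assume \(\E[Y^2]<\infty\).

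Next I will expand the variance of the sum. Independence gives \(\Cov(Y_i,Y_k)=0\) for \(i\ne k\), so
\[
\Var\Big(\frac{1}{B}\sum_i Y_i\Big)=\frac{1}{B^2}\sum_{i=1}^B\Var(Y_i)=\frac{B\Var(Y)}{B^2}=\frac{1}{B}\Var(Y).
\]
Identical distribution is what lets every \(\Var(Y_i)\) be replaced by \(\Var(Y)\).

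Finally, I will note how this reads against Theorem~\ref{thm:app-hoeffding}. A pointwise estimator has only a single ``first-order'' component, with no canonical interaction term. The hypothesis \(\Var(Y)>0\) therefore guarantees that the exact \(B^{-1}\) rate holds with a strictly positive coefficient.

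There is no real obstacle. The only point needing care is the finiteness of the second moment, which I handle either by assuming it or by the \(+\infty\) convention above.
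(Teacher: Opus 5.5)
Your proposal is correct and follows the paper's proof essentially step for step: reduce to the scalar i.i.d. variables $Y_i=\langle h_\theta(Z_i),\bm v\rangle$, expand the variance of their average, and use independence to kill the cross-covariances. Your explicit treatment of the finite-second-moment issue is a small piece of care that the paper leaves implicit.
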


\begin{proof}
Set \(Y_i=\langle h_\theta(Z_i),\bm v\rangle\). Then \(Y_1,\ldots,Y_B\) are i.i.d., and
\begin{equation*}
\langle \bar H_B(\theta),\bm v\rangle
=
\frac{1}{B}\sum_{i=1}^B Y_i.
\end{equation*}
Expanding the variance gives
\begin{equation*}
\Var\!\left(\frac{1}{B}\sum_{i=1}^B Y_i\right)
=
\frac{1}{B^2}\sum_{i=1}^B\Var(Y_i)
+
\frac{1}{B^2}\sum_{i\neq j}\Cov(Y_i,Y_j).
\end{equation*}
The cross-covariances vanish by independence, which gives the stated \(B^{-1}\) variance.
\end{proof}

\begin{corollary}[First-order versus second-order stochasticity]
\label{cor:app-first-second-order}
If
\begin{equation*}
\Var(\psi_{\theta,\bm v}^{(1,0)}(X^+))
+
\Var(\psi_{\theta,\bm v}^{(0,1)}(X^-))
>0,
\end{equation*}
then
\begin{equation*}
\Var(\langle \widehat{\bm g}_B^\ell(\theta),\bm v\rangle)
=
\Omega(B^{-1}).
\end{equation*}
If
\begin{equation*}
\Var(\psi_{\theta,\bm v}^{(1,0)}(X^+))
=
\Var(\psi_{\theta,\bm v}^{(0,1)}(X^-))
=0
\end{equation*}
and
\begin{equation*}
\Var(\psi_{\theta,\bm v}^{(1,1)}(X^+,X^-))>0,
\end{equation*}
then
\begin{equation*}
\Var(\langle \widehat{\bm g}_B^\ell(\theta),\bm v\rangle)
=
\Theta(B^{-2}).
\end{equation*}
\end{corollary}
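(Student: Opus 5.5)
The plan is to read both claims directly off the exact variance split in Theorem~\ref{thm:app-hoeffding}. Write
\[
V_1=\Var(\psi^{(1,0)}_{\theta,\bm v}(X^+))+\Var(\psi^{(0,1)}_{\theta,\bm v}(X^-)),\qquad V_2=\Var(\psi^{(1,1)}_{\theta,\bm v}(X^+,X^-)).
\]
Eq.~\eqref{eq:app-var-split} then gives, for every \(B\ge1\),
\[
\Var(\langle\widehat{\bm g}_B^\ell(\theta),\bm v\rangle)=V_1/B+V_2/B^2 .
\]
Both constants are finite under the standing assumption \(\E[\psi_{\theta,\bm v}^2]<\infty\). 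This follows because each Hoeffding component is an \(L^2\) projection, or a difference of such projections, of \(\psi_{\theta,\bm v}\). In particular, \(\psi^{(1,0)}\) and \(\psi^{(0,1)}\) are conditional expectations minus a constant, so Jensen's inequality bounds their second moments by \(\E[\psi^2]\). The term \(\psi^{(1,1)}\) is a finite sum of square-integrable terms. Moreover \(V_1,V_2\) depend only on \(\theta\), \(\bm v\), and the class distributions, not on \(B\).

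For the first claim, assume \(V_1>0\). Since \(V_2\ge0\), the lower bound \(\Var(\langle\widehat{\bm g}_B^\ell(\theta),\bm v\rangle)\ge V_1/B\) holds for every \(B\). This is \(\Omega(B^{-1})\) with constant \(V_1\). If one also wants the matching upper bound, \(V_1/B+V_2/B^2\le (V_1+V_2)/B\) gives \(\Theta(B^{-1})\). The statement only asks for \(\Omega\), but this is a free remark.

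For the second claim, assume both first-order variances vanish. Then \(\Var(\langle\widehat{\bm g}_B^\ell(\theta),\bm v\rangle)=V_2/B^2\) exactly, and \(V_2\in(0,\infty)\) yields \(\Theta(B^{-2})\) with both constants equal to \(V_2\). As a remark, vanishing variance means \(\psi^{(1,0)}\) and \(\psi^{(0,1)}\) are almost surely zero, since they are centered. Consequently the first-order averages \(P_B^\pm\) in the proof of Theorem~\ref{thm:app-hoeffding} are almost surely zero, which is consistent with the formula.

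There is essentially no obstacle once Theorem~\ref{thm:app-hoeffding} is granted. The only point needing care is finiteness of \(V_2\), so that the \(B^{-2}\) term does not dominate or break the \(\Theta\) statement. I would handle this with the \(L^2\)-projection remark above, using the bound \(\|\psi^{(1,1)}\|_{L^2}\le \|\psi\|_{L^2}\) that follows from orthogonality of the Hoeffding components.
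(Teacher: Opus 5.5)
Your proposal is correct and follows the paper's proof, which likewise reads both claims directly off the exact split in Eq.~\eqref{eq:app-var-split}: a positive $B^{-1}$ coefficient gives $\Omega(B^{-1})$, and vanishing first-order variances leave exactly $\Var(\psi^{(1,1)}_{\theta,\bm v}(X^+,X^-))/B^2$. Your finiteness check on $V_1,V_2$ and the $\Theta(B^{-1})$ upper bound are valid; the paper leaves finiteness implicit in the standing assumption $\E[\psi_{\theta,\bm v}(X^+,X^-)^2]<\infty$.
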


\begin{proof}
The conclusion follows directly from Eq.~\eqref{eq:app-var-split}. A nonzero coefficient on \(B^{-1}\) gives the first claim. If both first-order variances vanish and the residual pair-interaction variance is positive, the variance equals
\begin{equation*}
\frac{
\Var(\psi_{\theta,\bm v}^{(1,1)}(X^+,X^-))
}{B^2}.
\end{equation*}
\end{proof}

\begin{proposition}[Cone-level sufficient bad-region criterion]
\label{prop:app-cone-bad-region}
Let \(\mathcal U\subset\R^d\) be a neighborhood, and let \(\mathcal C_\alpha(\theta)\) be defined by Eq.~\eqref{eq:app-corrective-cone}. Suppose that \(\mathcal U\) is high-error at level \((M,\rho)\) and that, for every \(\theta\in\mathcal U\) and every \(\bm u\in\mathcal C_\alpha(\theta)\),
\begin{equation*}
|\langle\nabla R_\ell(\theta),\bm u\rangle|
\le
\varepsilon_{\rm drift},
\end{equation*}
and
\begin{equation*}
\Var(\psi_{\theta,\bm u}^{(1,0)}(X^+))
+
\Var(\psi_{\theta,\bm u}^{(0,1)}(X^-))
\le
\varepsilon_1.
\end{equation*}
Then every first-order corrective direction in \(\mathcal C_\alpha(\theta)\) has projected deterministic correction bounded by \(\varepsilon_{\rm drift}\), and its order-\(B^{-1}\) Hoeffding contribution is at most \(\varepsilon_1/B\). In particular, if \(\bm v_{\rm sev}(\theta)\in\mathcal C_\alpha(\theta)\), the same bound applies to the steepest severe-error correction direction.
\end{proposition}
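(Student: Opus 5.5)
The statement is essentially a pointwise unpacking of the hypotheses combined with the exact variance split of Theorem~\ref{thm:app-hoeffding}. I would therefore prove it direction by direction. Fix \(\theta\in\mathcal U\) and an arbitrary \(\bm u\in\mathcal C_\alpha(\theta)\); by definition \(\bm u\) is a unit vector. The high-error hypothesis \(\inf_{\theta\in\mathcal U}\pi_M(\theta)\ge\rho\) plays no role in the inequalities. It only certifies that \(\mathcal U\) is a region where severe misrankings persist, so that the conclusion is meaningful as a bad-region statement. I would note this explicitly so the reader does not look for where \(\rho\) enters.

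\textbf{Drift bound.} The projected mean drift of the SGD increment along \(\bm u\) is computed from Proposition~\ref{prop:app-pairwise-batch-unbiased}, or equivalently from the one-step moments in Proposition~\ref{prop:app-pairwise-one-step-moments}. It gives
\[
\E[\langle \widehat{\bm g}_B^\ell(\theta),\bm u\rangle]=\langle\nabla R_\ell(\theta),\bm u\rangle=\mu_{\theta,\bm u}.
\]
The deterministic correction along \(\bm u\) is \(d_{\bm u}(\theta)=\langle-\nabla R_\ell(\theta),\bm u\rangle\). Its absolute value is bounded by \(\varepsilon_{\rm drift}\) directly from the first hypothesis.

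\textbf{Variance bound.} Apply Theorem~\ref{thm:app-hoeffding} with \(\bm v=\bm u\). Its square-integrability assumption is implicit in the hypothesis that the Hoeffding variances are defined; I would state it as a standing assumption. The theorem identifies the order-\(B^{-1}\) Hoeffding contribution as
\[
\frac{\Var(\psi^{(1,0)}_{\theta,\bm u}(X^+))+\Var(\psi^{(0,1)}_{\theta,\bm u}(X^-))}{B},
\]
which is at most \(\varepsilon_1/B\) by the second hypothesis. For unequal class sizes, Corollary~\ref{cor:app-unequal-batch-hoeffding} gives the analogous bound \(\varepsilon_1/\min(B_+,B_-)\). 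This is the only point requiring care: the bound applies to the first-order part of the split, not to the total variance, which also contains the residual term divided by \(B^2\).

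\textbf{Steepest direction.} Suppose \(\bm v_{\rm sev}(\theta)\in\mathcal C_\alpha(\theta)\). This holds, for instance, whenever \(\|\nabla\Phi_{M,\ell}(\theta)\|_2\ge\alpha\), by the equality case of Eq.~\eqref{eq:app-steepest-ineq}. Then both bounds specialize to \(\bm u=\bm v_{\rm sev}(\theta)\).

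There is no genuine obstacle. The main thing to get right is the attribution of the \(\varepsilon_1/B\) bound to the first-order Hoeffding term, with the residual \(B^{-2}\) term left untouched. A secondary point is checking the cone membership of \(\bm v_{\rm sev}\), which I would do via Cauchy--Schwarz as above.
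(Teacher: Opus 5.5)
Your proposal is correct and follows essentially the same route as the paper. Fix $\theta\in\mathcal U$ and $\bm u\in\mathcal C_\alpha(\theta)$, read the drift bound directly off the hypothesis, apply the Hoeffding variance split (Theorem~\ref{thm:app-hoeffding}) with $\bm v=\bm u$ to bound the order-$B^{-1}$ coefficient by $\varepsilon_1$, and then specialize to $\bm v_{\rm sev}(\theta)$ under cone membership. Your extra remarks are also accurate: the high-error hypothesis is unused, the bound becomes $\varepsilon_1/\min(B_+,B_-)$ for unequal class sizes, and $\|\nabla\Phi_{M,\ell}(\theta)\|_2\ge\alpha$ suffices for cone membership. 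One small correction: finite first-order projection variances do not by themselves imply $\E[\psi_{\theta,\bm u}(X^+,X^-)^2]<\infty$, so it is not implicit in the hypotheses. Treating square-integrability as a standing assumption, as you suggest, is the right fix.
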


\begin{proof}
Fix \(\theta\in\mathcal U\) and \(\bm u\in\mathcal C_\alpha(\theta)\). Since \(\bm u\) is fixed after \(\theta\) is fixed, Proposition~\ref{prop:app-ustat} gives the scalar two-sample U-statistic representation of \(\langle \widehat{\bm g}_B^\ell(\theta),\bm u\rangle\), and Theorem~\ref{thm:app-hoeffding} gives
\begin{equation*}
\Var(\langle \widehat{\bm g}_B^\ell(\theta),\bm u\rangle)
=
\frac{
\Var(\psi_{\theta,\bm u}^{(1,0)}(X^+))
+
\Var(\psi_{\theta,\bm u}^{(0,1)}(X^-))
}{B}
+
\frac{
\Var(\psi_{\theta,\bm u}^{(1,1)}(X^+,X^-))
}{B^2}.
\end{equation*}
The order-\(B^{-1}\) coefficient is exactly
\begin{equation*}
\Var(\psi_{\theta,\bm u}^{(1,0)}(X^+))
+
\Var(\psi_{\theta,\bm u}^{(0,1)}(X^-)).
\end{equation*}
By assumption, this coefficient is at most \(\varepsilon_1\), so the first-order Hoeffding contribution is at most \(\varepsilon_1/B\). The drift assumption gives
\begin{equation*}
|\langle\nabla R_\ell(\theta),\bm u\rangle|
\le
\varepsilon_{\rm drift}.
\end{equation*}
Because the argument holds for an arbitrary \(\theta\in\mathcal U\) and an arbitrary \(\bm u\in\mathcal C_\alpha(\theta)\), the bounds are uniform over the stated cone. Finally, \(\bm v_{\rm sev}(\theta)\) is a unit direction by Eq.~\eqref{eq:app-steepest-severe-direction}; if it also satisfies the cone membership condition, substituting \(\bm u=\bm v_{\rm sev}(\theta)\) gives the last claim.
\end{proof}

\begin{proposition}[Severe-pair compression makes first-order projections small]
\label{prop:app-severe-compression-small-projections}
Fix \(\theta\), a unit direction \(\bm v\), and \(M>0\). Write
\begin{equation*}
S_M^\theta
=
\{(x^+,x^-):m_\theta(x^+,x^-)\le -M\},
\qquad
\delta_M(\theta)
=
\Pbb((X^+,X^-)\notin S_M^\theta),
\end{equation*}
and abbreviate \(\psi=\psi_{\theta,\bm v}\). Assume
\begin{equation*}
|\psi(X^+,X^-)|\le K
\qquad\text{a.s.}
\end{equation*}
for some \(K<\infty\). Define the severe-truncated row and column means
\begin{equation*}
m_{+,M}(x^+)
=
\E\!\left[
\psi(x^+,X^-)\cdot
\1_{\{(x^+,X^-)\in S_M^\theta\}}
\right],
\end{equation*}
\begin{equation*}
m_{-,M}(x^-)
=
\E\!\left[
\psi(X^+,x^-)\cdot
\1_{\{(X^+,x^-)\in S_M^\theta\}}
\right].
\end{equation*}
If there exist constants \(a_+\) and \(a_-\) such that
\begin{equation*}
\E[(m_{+,M}(X^+)-a_+)^2]\le \varepsilon_+^2,
\qquad
\E[(m_{-,M}(X^-)-a_-)^2]\le \varepsilon_-^2,
\end{equation*}
then the coefficient of the order-\(B^{-1}\) term in the Hoeffding variance split satisfies
\begin{equation*}
\Var(\psi_{\theta,\bm v}^{(1,0)}(X^+))
+
\Var(\psi_{\theta,\bm v}^{(0,1)}(X^-))
\le
2\cdot\varepsilon_+^2
+
2\cdot\varepsilon_-^2
+
4\cdot K^2\cdot\delta_M(\theta).
\end{equation*}
\end{proposition}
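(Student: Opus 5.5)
The plan is to split each first-order projection into a severe part and a non-severe remainder, and then control the two parts separately. For the positive projection, the row mean is
\[
r(x^+)=\E[\psi(x^+,X^-)]=m_{+,M}(x^+)+e_+(x^+),
\qquad
e_+(x^+)=\E\!\left[\psi(x^+,X^-)\,\1_{\{(x^+,X^-)\notin S_M^\theta\}}\right].
\]
Since $\psi^{(1,0)}_{\theta,\bm v}(x^+)=r(x^+)-\mu_{\theta,\bm v}$, its variance equals $\Var(r(X^+))$. The negative side is handled identically, with column mean $c(x^-)=m_{-,M}(x^-)+e_-(x^-)$.

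First, I will use that a variance is the minimum mean-squared deviation from a constant. This gives
\[
\Var(r(X^+))\le \E[(r(X^+)-a_+)^2]\le 2\,\E[(m_{+,M}(X^+)-a_+)^2]+2\,\E[e_+(X^+)^2],
\]
by $(u+w)^2\le 2u^2+2w^2$. The first term is at most $2\varepsilon_+^2$ by hypothesis.

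Second, I bound the remainder using $|\psi|\le K$. Pointwise, $|e_+(x^+)|\le K\,p(x^+)$, where $p(x^+)=\Pbb((x^+,X^-)\notin S_M^\theta)\in[0,1]$. Then $\E[e_+(X^+)^2]\le K^2\E[p(X^+)^2]\le K^2\E[p(X^+)]=K^2\delta_M(\theta)$, where the last equality is Fubini together with independence of $X^+$ and $X^-$. The same argument gives $\E[e_-(X^-)^2]\le K^2\delta_M(\theta)$. Combining these, each side contributes at most $2\varepsilon_\pm^2+2K^2\delta_M(\theta)$. Summing the two sides yields the claimed bound $2\varepsilon_+^2+2\varepsilon_-^2+4K^2\delta_M(\theta)$.

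No step looks hard. The only point needing care is the remainder bound, specifically the step $p^2\le p$, which gives linear rather than square-root dependence on $\delta_M(\theta)$. Measurability of the truncated row and column means follows from boundedness and Fubini.
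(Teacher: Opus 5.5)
Your proposal is correct and follows the paper's proof step for step. The paper uses the same decomposition of the row and column means into the severe-truncated mean plus a non-severe remainder. It then applies the minimal-mean-squared-deviation bound for the variance with $(u+w)^2\le 2u^2+2w^2$, and bounds the remainder by $|e_\pm|\le K\,p_\pm$ together with $p^2\le p$ and Fubini, which gives $K^2\delta_M(\theta)$ on each side.
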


\begin{proof}
Let
\begin{equation*}
m_+(x^+)=\E[\psi(x^+,X^-)],
\qquad
m_-(x^-)=\E[\psi(X^+,x^-)].
\end{equation*}
By definition,
\begin{equation*}
\psi_{\theta,\bm v}^{(1,0)}(X^+)=m_+(X^+)-\mu_{\theta,\bm v},
\qquad
\psi_{\theta,\bm v}^{(0,1)}(X^-)=m_-(X^-)-\mu_{\theta,\bm v}.
\end{equation*}
Therefore, for any constant \(a_+\),
\begin{equation*}
\Var(\psi_{\theta,\bm v}^{(1,0)}(X^+))
=
\Var(m_+(X^+))
\le
\E[(m_+(X^+)-a_+)^2].
\end{equation*}
Decompose
\begin{equation*}
m_+(x^+)
=
m_{+,M}(x^+)+r_+(x^+),
\end{equation*}
where
\begin{equation*}
r_+(x^+)
=
\E\!\left[
\psi(x^+,X^-)\cdot
\1_{\{(x^+,X^-)\notin S_M^\theta\}}
\right].
\end{equation*}
The inequality \((u+v)^2\le 2\cdot u^2+2\cdot v^2\) gives
\begin{equation*}
\E[(m_+(X^+)-a_+)^2]
\le
2\cdot\E[(m_{+,M}(X^+)-a_+)^2]
+
2\cdot\E[r_+(X^+)^2].
\end{equation*}
Let
\begin{equation*}
q_+(x^+)=\Pbb((x^+,X^-)\notin S_M^\theta).
\end{equation*}
Since \(|\psi|\le K\),
\begin{equation*}
|r_+(x^+)|\le K\cdot q_+(x^+),
\end{equation*}
and hence
\begin{equation*}
\E[r_+(X^+)^2]
\le
K^2\cdot\E[q_+(X^+)^2]
\le
K^2\cdot\E[q_+(X^+)]
=
K^2\cdot\delta_M(\theta).
\end{equation*}
Thus
\begin{equation*}
\Var(\psi_{\theta,\bm v}^{(1,0)}(X^+))
\le
2\cdot\varepsilon_+^2
+
2\cdot K^2\cdot\delta_M(\theta).
\end{equation*}
The column projection is identical. Namely, with
\begin{equation*}
r_-(x^-)
=
\E\!\left[
\psi(X^+,x^-)\cdot
\1_{\{(X^+,x^-)\notin S_M^\theta\}}
\right],
\qquad
q_-(x^-)=\Pbb((X^+,x^-)\notin S_M^\theta),
\end{equation*}
we have
\begin{equation*}
\E[q_-(X^-)]=\delta_M(\theta)
\end{equation*}
and therefore
\begin{equation*}
\Var(\psi_{\theta,\bm v}^{(0,1)}(X^-))
\le
2\cdot\varepsilon_-^2
+
2\cdot K^2\cdot\delta_M(\theta).
\end{equation*}
Adding the two bounds proves the claim.
\end{proof}

\begin{corollary}[Conditional monotonicity in severe-misranking mass]
\label{cor:app-severe-mass-tightens-projection-bound}
Fix \(M>0\), a unit direction \(\bm v\), and \(K<\infty\). Consider any collection of parameters for which the assumptions of Proposition~\ref{prop:app-severe-compression-small-projections} hold with the same envelope \(K\). For each such \(\theta\), let \(\varepsilon_+(\theta)\) and \(\varepsilon_-(\theta)\) be admissible row and column compression errors in Proposition~\ref{prop:app-severe-compression-small-projections}. Then
\begin{equation*}
\Var(\psi_{\theta,\bm v}^{(1,0)}(X^+))
+
\Var(\psi_{\theta,\bm v}^{(0,1)}(X^-))
\le
2\cdot\varepsilon_+(\theta)^2
+
2\cdot\varepsilon_-(\theta)^2
+
4\cdot K^2\cdot(1-\pi_M(\theta)).
\end{equation*}
Consequently, if two parameters \(\theta_1,\theta_2\) have the same admissible compression errors and
\begin{equation*}
\pi_M(\theta_1)\ge \pi_M(\theta_2),
\end{equation*}
then the projection-collapse upper bound for \(\theta_1\) is no larger than that for \(\theta_2\). More generally, along any sequence \((\theta_n)_{n\ge1}\) for which
\begin{equation*}
\pi_M(\theta_n)\to1,
\qquad
\varepsilon_+(\theta_n)\to0,
\qquad
\varepsilon_-(\theta_n)\to0,
\end{equation*}
the order-\(B^{-1}\) coefficient in the Hoeffding variance split converges to \(0\). If, in addition,
\begin{equation*}
\liminf_{n\to\infty}
\Var(\psi_{\theta_n,\bm v}^{(1,1)}(X^+,X^-))>0,
\end{equation*}
then the leading single-class stochastic term disappears along the sequence while the canonical pair-interaction term remains nontrivial.
\end{corollary}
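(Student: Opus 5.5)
The argument is a direct specialization of Proposition~\ref{prop:app-severe-compression-small-projections}, followed by elementary limit arguments. For each parameter \(\theta\) in the collection, I first identify the residual-mass term. By the definition of \(S_M^\theta\),
\[
\delta_M(\theta)=\Pbb\big((X^+,X^-)\notin S_M^\theta\big)=1-\Pbb\big(m_\theta(X^+,X^-)\le -M\big)=1-\pi_M(\theta).
\]
The hypotheses of Proposition~\ref{prop:app-severe-compression-small-projections} hold with the common envelope \(K\) and the admissible errors \(\varepsilon_\pm(\theta)\). Substituting this identity into its conclusion gives the displayed bound
\[
2\varepsilon_+(\theta)^2+2\varepsilon_-(\theta)^2+4K^2\big(1-\pi_M(\theta)\big).
\]

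Next comes the comparison statement. If \(\theta_1\) and \(\theta_2\) share the same \(\varepsilon_\pm\) and the same \(K\), their bounds differ only in the term \(4K^2(1-\pi_M(\theta_i))\). That term is nonincreasing in \(\pi_M\), so \(\pi_M(\theta_1)\ge\pi_M(\theta_2)\) makes the bound at \(\theta_1\) no larger. I will stress that this is monotonicity of the \emph{upper bound}, not of the variance itself, because the statement is phrased that way.

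For a sequence \(\theta_n\), all three terms of the bound tend to \(0\): \(\varepsilon_\pm(\theta_n)\to0\) and \(1-\pi_M(\theta_n)\to0\), while \(K\) is fixed. Since variances are nonnegative, the order-\(B^{-1}\) coefficient is squeezed to \(0\).

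For the final claim, I invoke the variance split Eq.~\eqref{eq:app-var-split} of Theorem~\ref{thm:app-hoeffding} at each \(\theta_n\), which requires square integrability. This holds because \(|\psi|\le K\). The first-order contribution then vanishes asymptotically, while \(\liminf_n\Var(\psi^{(1,1)}_{\theta_n,\bm v})=:c>0\) means the \(B^{-2}\) term stays at least \(c/(2B^2)\) for all large \(n\).

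The only delicate point is wording rather than mathematics: "disappears" has to be read as the coefficient tending to zero, with no claim of exact \(\Theta(B^{-2})\) behavior at any finite \(n\). In the write-up I will state the conclusion as a ratio: for fixed \(B\), the ratio of the first-order term to the residual term tends to \(0\). Everything else is a direct substitution.
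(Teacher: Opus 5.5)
Your proposal is correct and matches the paper's proof essentially step for step: you substitute $\delta_M(\theta)=1-\pi_M(\theta)$ into Proposition~\ref{prop:app-severe-compression-small-projections}, note that $4K^2(1-\pi_M(\theta))$ is the only term that depends on $\pi_M$, squeeze the first-order coefficient to zero along the sequence, and combine this with the $\liminf$ condition in the variance split Eq.~\eqref{eq:app-var-split}. Your extra remarks are harmless refinements that the paper leaves implicit: $|\psi|\le K$ gives square integrability, and for fixed $B$ the ratio of the first-order term to the residual term tends to zero.
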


\begin{proof}
The displayed bound is Proposition~\ref{prop:app-severe-compression-small-projections} with
\begin{equation*}
\delta_M(\theta)=\Pbb((X^+,X^-)\notin S_M^\theta)=1-\pi_M(\theta).
\end{equation*}
If the admissible compression errors are the same for \(\theta_1\) and \(\theta_2\), the only term in the bound that depends on the severe-misranking mass is
\begin{equation*}
4\cdot K^2\cdot(1-\pi_M(\theta)).
\end{equation*}
This term is nonincreasing in \(\pi_M(\theta)\), proving the monotonic comparison. For the sequence statement, the right-hand side satisfies
\begin{equation*}
2\cdot\varepsilon_+(\theta_n)^2
+
2\cdot\varepsilon_-(\theta_n)^2
+
4\cdot K^2\cdot(1-\pi_M(\theta_n))
\to0.
\end{equation*}
Therefore
\begin{equation*}
\Var(\psi_{\theta_n,\bm v}^{(1,0)}(X^+))
+
\Var(\psi_{\theta_n,\bm v}^{(0,1)}(X^-))
\to0.
\end{equation*}
The final claim follows by combining this convergence with the assumed positive lower limit of the canonical variance in the variance split of Theorem~\ref{thm:app-hoeffding}.
\end{proof}

\begin{corollary}[Exact collapse in a fully severe compressed region]
\label{cor:app-fully-severe-compressed-collapse}
In the setting of Proposition~\ref{prop:app-severe-compression-small-projections}, suppose
\begin{equation*}
\Pbb((X^+,X^-)\in S_M^\theta)=1
\end{equation*}
and the row and column conditional means are constant:
\begin{equation*}
\E[\psi_{\theta,\bm v}(X^+,X^-)\mid X^+]=a_+
\quad\text{a.s.},
\qquad
\E[\psi_{\theta,\bm v}(X^+,X^-)\mid X^-]=a_-
\quad\text{a.s.}
\end{equation*}
Then \(a_+=a_-=\mu_{\theta,\bm v}\), both first-order Hoeffding projections vanish, and
\begin{equation*}
\Var(\langle \widehat{\bm g}_B^\ell(\theta),\bm v\rangle)
=
\frac{
\Var(\psi_{\theta,\bm v}^{(1,1)}(X^+,X^-))
}{B^2}.
\end{equation*}
\end{corollary}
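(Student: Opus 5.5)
The plan is to identify the constants with the overall mean, use them to show the first-order Hoeffding projections vanish almost surely, and then read the variance off the split in Theorem~\ref{thm:app-hoeffding}.

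\textbf{Step 1: the constants equal the mean.} Take expectations of the two conditional-mean identities. The tower property gives
\[
a_+=\E\bigl[\E[\psi_{\theta,\bm v}(X^+,X^-)\mid X^+]\bigr]=\mu_{\theta,\bm v},
\]
and likewise \(a_-=\mu_{\theta,\bm v}\). Integrability is automatic: the bound \(|\psi|\le K\) assumed in Proposition~\ref{prop:app-severe-compression-small-projections} implies \(\E[\psi^2]<\infty\).

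\textbf{Step 2: the first-order projections vanish.} Because the classes are independent, \(\E[\psi_{\theta,\bm v}(x^+,X^-)]\) evaluated at \(x^+=X^+\) is a version of \(\E[\psi_{\theta,\bm v}(X^+,X^-)\mid X^+]\). Hence
\[
\psi^{(1,0)}_{\theta,\bm v}(X^+)=a_+-\mu_{\theta,\bm v}=0 \quad\text{a.s.},
\]
and symmetrically \(\psi^{(0,1)}_{\theta,\bm v}(X^-)=0\) a.s. Both first-order variances in Eq.~\eqref{eq:app-var-split} are therefore zero.

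\textbf{Step 3: read off the variance.} Substituting into Eq.~\eqref{eq:app-var-split} leaves only the \(B^{-2}\) residual term, which is the claimed identity.

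The full-severity hypothesis \(\Pbb((X^+,X^-)\in S_M^\theta)=1\) is not needed for the collapse itself. It only places the example inside the setting of Proposition~\ref{prop:app-severe-compression-small-projections}, where it gives \(\delta_M(\theta)=0\). As a consistency check, that proposition's bound with \(\varepsilon_\pm=0\) also gives a zero first-order coefficient: under full severity the truncated row and column means \(m_{\pm,M}\) coincide with the untruncated conditional means, which are the constants \(a_\pm\).

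The only delicate point is the almost-sure identification of the conditional expectation with the integrated kernel in Step 2. This identification relies on the independence of \(X^+\) and \(X^-\), which Theorem~\ref{thm:app-hoeffding} already assumes.
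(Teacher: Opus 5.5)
Your proposal is correct and follows the paper's proof essentially verbatim: the tower property identifies $a_\pm$ with $\mu_{\theta,\bm v}$, so the first-order projections vanish almost surely, and the variance identity follows from Eq.~\eqref{eq:app-var-split}. Your added remarks are accurate refinements the paper leaves implicit: independence identifies $\E[\psi_{\theta,\bm v}(X^+,X^-)\mid X^+]$ with the integrated kernel, and the full-severity hypothesis is never actually used for the collapse.
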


\begin{proof}
Taking expectations in the row-mean identity gives
\begin{equation*}
a_+
=
\E[\E[\psi_{\theta,\bm v}(X^+,X^-)\mid X^+]]
=
\mu_{\theta,\bm v}.
\end{equation*}
Taking expectations in the column-mean identity gives \(a_-=\mu_{\theta,\bm v}\). Hence
\begin{equation*}
\psi_{\theta,\bm v}^{(1,0)}(X^+)
=
\E[\psi_{\theta,\bm v}(X^+,X^-)\mid X^+]-\mu_{\theta,\bm v}
=0
\quad\text{a.s.},
\end{equation*}
and similarly
\begin{equation*}
\psi_{\theta,\bm v}^{(0,1)}(X^-)=0
\quad\text{a.s.}
\end{equation*}
The variance identity follows from Theorem~\ref{thm:app-hoeffding}.
\end{proof}

Proposition~\ref{prop:app-severe-compression-small-projections} is deliberately conditional. A large value of \(\pi_M(\theta)\) alone does not force projection collapse. It makes collapse relevant when the severe pairs that dominate the local pair distribution have nearly constant row and column effects in the projected update. Corollary~\ref{cor:app-severe-mass-tightens-projection-bound} states the precise monotonic consequence: under such compression, increasing severe-misranking mass tightens the upper bound on the first-order projections. Corollary~\ref{cor:app-fully-severe-compressed-collapse} records the corresponding exact case.

\begin{proposition}[Elementary severe-misranking and projection-collapse constructions]
\label{prop:app-explicit-bad-regions}
The two ingredients used in the bad-region mechanism are nonempty.
\begin{enumerate}[leftmargin=5mm]
  \item Let the sample space contain one positive point \(x^+\) and one negative point \(x^-\), and define
  \begin{equation*}
  f_\theta(x^+)=-\theta,
  \qquad
  f_\theta(x^-)=\theta.
  \end{equation*}
  Then \(m_\theta(x^+,x^-)=-2\cdot\theta\), and for every \(M>0\), the ray \(\{\theta\ge M/2\}\) satisfies \(\pi_M(\theta)=1\).
  \item Let \(A\) and \(C\) be independent Rademacher random variables, let \(f_\theta(z)=\theta\cdot z\), and use \(\ell(x)=x^2\). For \(\bm v=1\), the scalar pairwise-gradient kernel is
  \begin{equation*}
  G_\theta(A,C)
  =
  2\cdot\theta\cdot(A-C)^2
  =
  4\cdot\theta-4\cdot\theta\cdot A\cdot C.
  \end{equation*}
  Both first-order Hoeffding projections vanish, and
  \begin{equation*}
  \Var(\langle \widehat{\bm g}_B^\ell(\theta),\bm v\rangle)
  =
  \frac{16\cdot\theta^2}{B^2}.
  \end{equation*}
\end{enumerate}
\end{proposition}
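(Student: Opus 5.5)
Both items are verified by direct computation; the second then feeds into Theorem~\ref{thm:app-hoeffding}.

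For item 1, the margin is \(m_\theta(x^+,x^-)=f_\theta(x^+)-f_\theta(x^-)=-\theta-\theta=-2\theta\). The sample space has a single pair, so \(\pi_M(\theta)=\1\{-2\theta\le -M\}=\1\{\theta\ge M/2\}\), which equals \(1\) on the stated ray.

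For item 2, I read the positive and negative samples as \(A\) and \(C\), with score \(f_\theta(z)=\theta z\). Then \(m=\theta(A-C)\) and \(\nabla_\theta f_\theta(z)=z\). The kernel of Proposition~\ref{prop:app-ustat} with \(\ell'(x)=2x\) and \(\bm v=1\) is
\[
\psi=2\theta(A-C)\cdot(A-C)=2\theta(A-C)^2 .
\]
Since \(A^2=C^2=1\), this simplifies to \(2\theta(2-2AC)=4\theta-4\theta AC\), which is the stated form of \(G_\theta\). Its mean is \(\mu=4\theta\), because \(\E[AC]=0\).

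The row conditional mean is \(\E[\psi\mid A]=4\theta-4\theta A\,\E[C]=4\theta=\mu\). By symmetry the column conditional mean is also \(4\theta\). Hence \(\psi^{(1,0)}\equiv 0\) and \(\psi^{(0,1)}\equiv 0\), and the canonical component is \(\psi^{(1,1)}=-4\theta AC\).

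Because \(AC\) is itself Rademacher, \(\Var(\psi^{(1,1)})=16\theta^2\). Substituting into the variance split of Theorem~\ref{thm:app-hoeffding} gives \(16\theta^2/B^2\).

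The only subtlety is the interpretation of item 2. One must treat \(A\) as the positive input and \(C\) as the negative input, with both drawn from the same Rademacher law. The two-sample framework only requires the classes to be independent, which holds here. Beyond fixing this reading, every step is a routine calculation.
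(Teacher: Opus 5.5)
Your proposal is correct and matches the paper's proof essentially step for step. For item~1, both compute the margin \(-2\theta\) directly. For item~2, both reduce the kernel to \(4\theta-4\theta AC\), show that the row and column conditional means equal \(4\theta\) because \(\E[A]=\E[C]=0\), identify the canonical term as \(-4\theta AC\) with variance \(16\theta^2\), and apply the variance split. The only cosmetic difference is that you write the kernel with \(\ell'\), as in Proposition~\ref{prop:app-ustat}, while the paper writes it with \(w=-\ell'\) times \((C-A)\); the two are the same quantity.
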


\begin{proof}
The first claim follows from
\begin{equation*}
m_\theta(x^+,x^-)=(-\theta)-\theta=-2\cdot\theta.
\end{equation*}
If \(\theta\ge M/2\), then \(m_\theta(x^+,x^-)\le -M\), so the severe-misranking event has probability \(1\).

For the second claim,
\begin{equation*}
m_\theta(A,C)=\theta\cdot(A-C).
\end{equation*}
Since \(w(x)=-\ell'(x)=-2\cdot x\), the scalar gradient kernel in direction \(1\) is
\begin{equation*}
G_\theta(A,C)
=
w(m_\theta(A,C))\cdot(C-A)
=
2\cdot\theta\cdot(A-C)^2.
\end{equation*}
Using \(A^2=C^2=1\) gives
\begin{equation*}
(A-C)^2=2-2\cdot A\cdot C,
\end{equation*}
and hence
\begin{equation*}
G_\theta(A,C)=4\cdot\theta-4\cdot\theta\cdot A\cdot C.
\end{equation*}
Its mean is \(4\cdot\theta\). Conditioning on \(A=a\) gives
\begin{equation*}
\E[G_\theta(a,C)]-4\cdot\theta
=
-4\cdot\theta\cdot a\cdot\E[C]
=0,
\end{equation*}
and conditioning on \(C=c\) gives the same result. Therefore both first-order projections vanish. The centered canonical term is
\begin{equation*}
-4\cdot\theta\cdot A\cdot C,
\end{equation*}
whose variance is \(16\cdot\theta^2\). Applying Eq.~\eqref{eq:app-var-split} gives the displayed \(B^{-2}\) variance.
\end{proof}

\begin{proposition}[High-error starts under random or label-blind initialization]
\label{prop:app-high-error-starts}
High-error starts can be phrased as probability statements.
\begin{enumerate}[leftmargin=5mm]
  \item In the severe-misranking ray of Proposition~\ref{prop:app-explicit-bad-regions}, if \(\Theta_0\sim N(0,\sigma^2)\), then
  \begin{equation*}
  \Pbb(\Theta_0\ge M/2)
  =
  1-\Phi\!\left(\frac{M}{2\cdot\sigma}\right).
  \end{equation*}
  \item Let \(\Theta_0\) be a random initialization. Conditional on \(\Theta_0\), define
  \begin{equation*}
  Y^+=f_{\Theta_0}(X^+),
  \qquad
  Y^-=f_{\Theta_0}(X^-).
  \end{equation*}
  If \(Y^+\) and \(Y^-\) are conditionally independent and identically distributed, then
  \begin{equation*}
  \pi_0(\Theta_0)\ge\frac{1}{2}
  \qquad\text{a.s.}
  \end{equation*}
  If additionally
  \begin{equation*}
  Y^+,Y^- \stackrel{\mathrm{i.i.d.}}{\sim} N(0,\sigma^2)
  \quad\text{conditional on }\Theta_0,
  \end{equation*}
  then for every \(M\ge0\),
  \begin{equation*}
  \pi_M(\Theta_0)
  =
  1-\Phi\!\left(\frac{M}{\sqrt2\cdot\sigma}\right).
  \end{equation*}
\end{enumerate}
\end{proposition}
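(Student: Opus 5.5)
The statement has two parts, and both are direct probability computations. The plan is to treat them separately, using only the definition $\pi_M(\theta)=\Pbb(m_\theta(X^+,X^-)\le -M)$ and the construction in Proposition~\ref{prop:app-explicit-bad-regions}.

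\textbf{Part 1.} We use Proposition~\ref{prop:app-explicit-bad-regions}(1). It shows that every $\theta\ge M/2$ satisfies $\pi_M(\theta)=1$, so the ray $\{\theta\ge M/2\}$ consists of fully severe parameters. It therefore suffices to compute $\Pbb(\Theta_0\ge M/2)$ for $\Theta_0\sim N(0,\sigma^2)$. Standardize by writing $\Theta_0=\sigma Z$ with $Z\sim N(0,1)$. Then
\[
\Pbb(\Theta_0\ge M/2)=\Pbb\bigl(Z\ge M/(2\sigma)\bigr)=1-\Phi\bigl(M/(2\sigma)\bigr),
\]
where the last step uses continuity of $\Phi$, so no atom at the endpoint matters.

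\textbf{Part 2, general bound.} Condition on $\Theta_0$. By definition,
\[
\pi_0(\Theta_0)=\Pbb(Y^+-Y^-\le 0\mid\Theta_0)=\Pbb(Y^+\le Y^-\mid\Theta_0).
\]
Since $Y^+$ and $Y^-$ are conditionally i.i.d., the pair $(Y^+,Y^-)$ is exchangeable. Hence $\Pbb(Y^+\le Y^-\mid\Theta_0)=\Pbb(Y^-\le Y^+\mid\Theta_0)$. The union of the two events $\{Y^+\le Y^-\}$ and $\{Y^-\le Y^+\}$ is the whole space, so the two probabilities sum to at least $1$. Each is therefore at least $1/2$, almost surely. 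The only subtlety is ties, and they only make the two events overlap, which strengthens the bound rather than weakening it.

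\textbf{Part 2, Gaussian case.} Conditionally on $\Theta_0$, the difference $D=Y^+-Y^-$ is $N(0,2\sigma^2)$, because the two scores are independent Gaussians with equal variance. Then
\[
\pi_M(\Theta_0)=\Pbb(D\le -M)=\Phi\bigl(-M/(\sqrt2\sigma)\bigr)=1-\Phi\bigl(M/(\sqrt2\sigma)\bigr),
\]
using the symmetry of $\Phi$. At $M=0$ this gives exactly $1/2$, which is consistent with the general bound.

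\textbf{Main obstacle.} No step is genuinely hard. The only point needing care is formal: stating the conditional-on-$\Theta_0$ probabilities as regular conditional distributions, so that "a.s." is meaningful. The assumption that the scores are conditionally i.i.d. given $\Theta_0$ already supplies this, so I would simply fix a version of the conditional law.
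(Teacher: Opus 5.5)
Your proposal is correct and follows the paper's proof: standardization for the Gaussian ray, symmetry for the $1/2$ bound, and $Y^+-Y^-\mid\Theta_0\sim N(0,2\sigma^2)$ for the Gaussian formula. For the $1/2$ bound, the paper phrases it as symmetry of $Y^+-Y^-$ about zero and obtains the exact value $\tfrac12\bigl(1+\Pbb(Y^+=Y^-\mid\Theta_0)\bigr)$, which is your two-overlapping-events exchangeability argument with the tie probability made explicit.
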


\begin{proof}
The Gaussian ray formula follows by standardization. For the label-blind statement, conditional on \(\Theta_0\), the distribution of \(Y^+-Y^-\) is symmetric about \(0\) because \((Y^+,Y^-)\) and \((Y^-,Y^+)\) have the same conditional law. Hence
\begin{equation*}
\Pbb(Y^+-Y^-\le0\mid \Theta_0)
=
\frac{1+\Pbb(Y^+-Y^-=0\mid\Theta_0)}{2}
\ge
\frac{1}{2}.
\end{equation*}
Under the conditional Gaussian assumption,
\begin{equation*}
Y^+-Y^- \mid \Theta_0
\sim
N(0,2\cdot\sigma^2),
\end{equation*}
which gives the displayed formula for \(\pi_M(\Theta_0)\).
\end{proof}

\begin{proposition}[Random isotropic linear heads give severe-misranking lower bounds]
\label{prop:app-isotropic-head-badmass}
Let \(h:\mathcal X\to\R^d\) be measurable and consider
\begin{equation*}
f_{\bm w}(x)=\bm w^\top h(x).
\end{equation*}
Let
\begin{equation*}
\bm W_0\sim N(0,\alpha^2\cdot\bm I_d),
\qquad
\alpha>0,
\end{equation*}
independently of \((X^+,X^-)\). Define
\begin{equation*}
\bm D=h(X^+)-h(X^-),
\qquad
\pi_M(\bm w)=\Pbb(\bm w^\top\bm D\le -M).
\end{equation*}
Then
\begin{equation*}
\E[\pi_0(\bm W_0)]
=
\frac{1}{2}+\frac{1}{2}\cdot\Pbb(\bm D=\bm 0)
\ge
\frac{1}{2}.
\end{equation*}
For \(M>0\),
\begin{equation*}
\E[\pi_M(\bm W_0)]
=
\E[H_M(\|\bm D\|_2)],
\end{equation*}
where
\begin{equation*}
H_M(r)
=
\begin{cases}
1-\Phi\!\left(\dfrac{M}{\alpha\cdot r}\right), & r>0,\\[6pt]
0, & r=0.
\end{cases}
\end{equation*}
If \(R>0\) and \(\Pbb(\|\bm D\|_2\ge R)\ge q\), then
\begin{equation*}
\E[\pi_M(\bm W_0)]
\ge
q\cdot
\left\{
1-\Phi\!\left(\frac{M}{\alpha\cdot R}\right)
\right\}.
\end{equation*}
\end{proposition}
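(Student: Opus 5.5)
The plan is to condition on \(\bm D\) and use the rotational invariance of the isotropic Gaussian head. Since \(\bm W_0\) is independent of \((X^+,X^-)\), Fubini/Tonelli gives
\begin{equation*}
\E[\pi_M(\bm W_0)]=\Pbb(\bm W_0^\top\bm D\le -M)=\E\bigl[\Pbb(\bm W_0^\top\bm D\le -M\mid \bm D)\bigr].
\end{equation*}
Thus the problem reduces to computing, for a fixed vector \(\bm d\in\R^d\), the probability \(\Pbb(\bm W_0^\top\bm d\le -M)\). For \(\bm d\ne\bm 0\), the scalar \(\bm W_0^\top\bm d\) is distributed as \(N(0,\alpha^2\|\bm d\|_2^2)\). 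For \(\bm d=\bm 0\), it is identically zero. Measurability of the conditional probability as a function of \(\bm D\) is standard, because \(h\) is measurable and the joint law is a product.

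For \(M=0\), the fixed-\(\bm d\) probability is \(1/2\) when \(\bm d\ne\bm 0\), by symmetry and the absence of an atom at zero. When \(\bm d=\bm 0\), the probability is \(1\), since the event \(\{0\le 0\}\) always holds. Averaging over \(\bm D\) gives
\begin{equation*}
\E[\pi_0(\bm W_0)]=\tfrac12\Pbb(\bm D\ne\bm 0)+\Pbb(\bm D=\bm 0)=\tfrac12+\tfrac12\Pbb(\bm D=\bm 0)\ge\tfrac12.
\end{equation*}

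For \(M>0\), standardizing gives \(\Pbb(\bm W_0^\top\bm d\le -M)=\Phi(-M/(\alpha\|\bm d\|_2))=1-\Phi(M/(\alpha\|\bm d\|_2))\) when \(\bm d\ne\bm 0\). When \(\bm d=\bm 0\), the probability is \(0\), since \(0\le -M\) fails. This is exactly \(H_M(\|\bm D\|_2)\), which yields the identity \(\E[\pi_M(\bm W_0)]=\E[H_M(\|\bm D\|_2)]\).

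For the final lower bound, I would note that \(H_M\) is nondecreasing and nonnegative on \([0,\infty)\). It takes the value \(0\) at \(r=0\), and for \(r>0\) the quantity \(M/(\alpha r)\) decreases in \(r\). It follows that \(H_M(r)\ge H_M(R)\mathbf 1\{r\ge R\}\). Taking expectations gives \(\E[\pi_M(\bm W_0)]\ge q\,\{1-\Phi(M/(\alpha R))\}\).

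The only delicate point is bookkeeping at \(\bm D=\bm 0\). The boundary event has probability \(1\) when \(M=0\) but probability \(0\) when \(M>0\), and this case split is what produces both the extra atom term in the first formula and the convention \(H_M(0)=0\).
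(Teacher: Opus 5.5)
Your proof is correct and takes the same route as the paper: condition on \(\bm D=\bm d\), use \(\bm W_0^\top\bm d\sim N(0,\alpha^2\|\bm d\|_2^2)\) for \(\bm d\neq\bm 0\) and \(\bm W_0^\top\bm d=0\) otherwise, then use that \(1-\Phi\) is nonincreasing on the event \(\{\|\bm D\|_2\ge R\}\). Your explicit treatment of the atom at \(\bm D=\bm 0\) and the pointwise bound \(H_M(r)\ge H_M(R)\,\mathbf 1\{r\ge R\}\) write out steps the paper leaves implicit.
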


\begin{proof}
Condition on \(\bm D=\bm d\). If \(\bm d\neq\bm 0\), then
\begin{equation*}
\bm W_0^\top\bm d
\sim
N(0,\alpha^2\cdot\|\bm d\|_2^2).
\end{equation*}
If \(\bm d=\bm 0\), then \(\bm W_0^\top\bm d=0\) almost surely. This proves the \(M=0\) identity and the exact \(H_M\) formula. If \(R>0\) and \(\|\bm D\|_2\ge R\), then
\begin{equation*}
\frac{M}{\alpha\cdot\|\bm D\|_2}
\le
\frac{M}{\alpha\cdot R},
\end{equation*}
and \(1-\Phi(\cdot)\) is nonincreasing. Taking expectation over this event gives the lower bound.
\end{proof}

\begin{lemma}[One-sided second-moment lower bound]
\label{lem:app-one-sided-second-moment}
Let \(X\ge0\), \(0<\E[X]<\infty\), and \(\E[X^2]<\infty\). For every \(\lambda\in[0,1)\),
\begin{equation*}
\Pbb(X\ge \lambda\cdot\E[X])
\ge
\frac{(1-\lambda)^2\cdot\E[X]^2}{\E[X^2]}.
\end{equation*}
\end{lemma}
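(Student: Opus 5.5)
This is the Paley--Zygmund inequality in its second-moment form, and I will prove it by combining a split of the mean over a threshold event with Cauchy--Schwarz. Write \(\mu=\E[X]\in(0,\infty)\) and consider the event \(E=\{X\ge\lambda\mu\}\). Split the mean as
\[
\mu=\E[X\,\1_{E^c}]+\E[X\,\1_{E}].
\]
On \(E^c\) we have \(0\le X<\lambda\mu\), so the first term is at most \(\lambda\mu\). This uses \(X\ge0\), which is what keeps the complement contribution nonnegative and bounded in this way. Rearranging gives the lower bound \(\E[X\,\1_E]\ge(1-\lambda)\mu\), and this quantity is positive because \(\lambda<1\) and \(\mu>0\).

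The second step applies Cauchy--Schwarz to the product \(X\cdot\1_E\):
\[
\E[X\,\1_E]\le\bigl(\E[X^2]\bigr)^{1/2}\bigl(\Pbb(E)\bigr)^{1/2}.
\]
The right-hand side is finite because \(\E[X^2]<\infty\).

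Combining the two bounds and squaring, which is legitimate since both sides are nonnegative, gives \((1-\lambda)^2\mu^2\le\E[X^2]\,\Pbb(E)\). Note that \(\E[X^2]\ge\mu^2>0\) by Jensen's inequality, so we may divide by \(\E[X^2]\), and this yields the claimed inequality.

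There is no real obstacle here. The only points that need checking are that the hypotheses \(X\ge0\), \(\mu>0\), and \(\lambda\in[0,1)\) make every quantity finite and positive where it is used, and that the boundary case \(\lambda=0\) is covered, where the argument reduces to \(\Pbb(X>0)\ge\mu^2/\E[X^2]\) with the event taken as \(\{X\ge0\}\).
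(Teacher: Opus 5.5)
Your proof is correct and is the same argument as the paper's. Both lower-bound $\E[X\,\1_E]$ by $(1-\lambda)\E[X]$ using $X<\lambda\E[X]$ on the complement, then apply Cauchy--Schwarz to $X\cdot\1_E$ and combine. (Your closing remark on $\lambda=0$ conflates $\{X\ge 0\}$ with $\{X>0\}$, but this is harmless because the general argument already covers $\lambda=0$ without a special case.)
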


\begin{proof}
Let \(A_\lambda=\{X\ge\lambda\cdot\E[X]\}\). On \(A_\lambda^c\), \(X<\lambda\cdot\E[X]\), so
\begin{equation*}
\E[X\cdot\1_{A_\lambda}]
=
\E[X]-\E[X\cdot\1_{A_\lambda^c}]
\ge
(1-\lambda)\cdot\E[X].
\end{equation*}
By Cauchy-Schwarz,
\begin{equation*}
\E[X\cdot\1_{A_\lambda}]^2
\le
\E[X^2]\cdot\Pbb(A_\lambda).
\end{equation*}
Combining the two inequalities proves the claim.
\end{proof}

\begin{proposition}[High-dimensional scaled random heads retain nontrivial bad mass]
\label{prop:app-highdim-scaled-head}
For each \(d\ge1\), let \(h_d:\mathcal X\to\R^d\) and
\begin{equation*}
\bm d_d=h_d(X^+)-h_d(X^-).
\end{equation*}
Let
\begin{equation*}
\bm w_{0,d}\sim N\!\left(0,\frac{\sigma^2}{d}\cdot\bm I_d\right),
\qquad
\sigma>0,
\end{equation*}
independently of \((X^+,X^-)\). Assume there exist constants \(a,b>0\), independent of \(d\), such that
\begin{equation*}
\E[\|\bm d_d\|_2^2]\ge a^2\cdot d,
\qquad
\E[\|\bm d_d\|_2^4]\le b^4\cdot d^2.
\end{equation*}
Then for every \(M>0\),
\begin{equation*}
\E[\pi_{M,d}(\bm w_{0,d})]
\ge
\frac{a^4}{4\cdot b^4}
\left\{
1-\Phi\!\left(\frac{\sqrt2\cdot M}{\sigma\cdot a}\right)
\right\},
\end{equation*}
where \(\pi_{M,d}(\bm w)=\Pbb(\bm w^\top\bm d_d\le -M)\).
\end{proposition}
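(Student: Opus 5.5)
The plan is to condition on the difference vector and reduce to the isotropic-head formula of Proposition~\ref{prop:app-isotropic-head-badmass}, then lower-bound the probability that \(\|\bm d_d\|_2\) is of order \(\sqrt d\) using Lemma~\ref{lem:app-one-sided-second-moment}. Conditionally on \(\bm d_d=\bm d\ne\bm 0\), the projection \(\bm w_{0,d}^\top\bm d\) is \(N(0,\sigma^2\|\bm d\|_2^2/d)\). This is exactly Proposition~\ref{prop:app-isotropic-head-badmass} with \(\alpha=\sigma/\sqrt d\), which gives
\[
\E[\pi_{M,d}(\bm w_{0,d})]=\E\!\left[H_M(\|\bm d_d\|_2)\right],\qquad H_M(r)=1-\Phi\!\left(\frac{M\sqrt d}{\sigma r}\right)\ (r>0).
\]
By the final part of that proposition, it suffices to find a radius \(R\) and a probability \(q\) with \(\Pbb(\|\bm d_d\|_2\ge R)\ge q\). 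The bound is then \(q\{1-\Phi(M\sqrt d/(\sigma R))\}\).

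To find \(R\) and \(q\), apply Lemma~\ref{lem:app-one-sided-second-moment} to \(X=\|\bm d_d\|_2^2\) with \(\lambda=1/2\). This gives
\[
\Pbb\!\left(\|\bm d_d\|_2^2\ge \tfrac12\E[\|\bm d_d\|_2^2]\right)\ge \frac{\E[\|\bm d_d\|_2^2]^2}{4\,\E[\|\bm d_d\|_2^4]}\ge \frac{a^4d^2}{4b^4d^2}=\frac{a^4}{4b^4}.
\]
On this event, \(\|\bm d_d\|_2^2\ge \tfrac12\E[\|\bm d_d\|_2^2]\ge a^2d/2\), so \(\|\bm d_d\|_2\ge a\sqrt{d/2}=:R\). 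The lemma's hypotheses hold: \(\E X\ge a^2d>0\), and \(\E X^2\le b^4d^2<\infty\). Substituting \(R\) gives \(M\sqrt d/(\sigma R)=\sqrt2 M/(\sigma a)\), so the dimension cancels and the stated bound follows.

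The only delicate point is the event used. One must apply Lemma~\ref{lem:app-one-sided-second-moment} to the event \(\{X\ge\tfrac12\E X\}\), not to \(\{X\ge a^2 d/2\}\) directly. Since \(\E X\ge a^2d\), the first event is contained in the second, so its probability bound transfers monotonically. The lower bound \(\E X\ge a^2d\) is used twice: once to bound the ratio \(\E[X]^2/\E[X^2]\) from below, and once to fix the radius.

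Finally, \(H_M\) is nondecreasing in \(r\) and vanishes at \(r=0\). So on the complementary event \(H_M\ge 0\) can simply be discarded, and on the good event \(H_M(\|\bm d_d\|_2)\ge H_M(R)\). The last step of Proposition~\ref{prop:app-isotropic-head-badmass} handles exactly this, so no further work is needed.
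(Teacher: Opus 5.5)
Your proposal is correct and follows the paper's proof step for step. You apply Lemma~\ref{lem:app-one-sided-second-moment} with \(\lambda=1/2\) to \(\|\bm d_d\|_2^2\) to get the event \(\|\bm d_d\|_2\ge a\sqrt{d/2}\) with probability at least \(a^4/(4b^4)\), then invoke Proposition~\ref{prop:app-isotropic-head-badmass} with \(\alpha=\sigma/\sqrt d\) and \(R=a\sqrt{d/2}\), so that \(\alpha R=\sigma a/\sqrt2\) and the dimension cancels.
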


\begin{proof}
Let \(X_d=\|\bm d_d\|_2^2\). Lemma~\ref{lem:app-one-sided-second-moment} with \(\lambda=1/2\) gives
\begin{equation*}
\Pbb\!\left(X_d\ge\frac{1}{2}\cdot\E[X_d]\right)
\ge
\frac{\E[X_d]^2}{4\cdot\E[X_d^2]}
\ge
\frac{a^4}{4\cdot b^4}.
\end{equation*}
Since \(\E[X_d]\ge a^2\cdot d\), this event implies
\begin{equation*}
\|\bm d_d\|_2\ge a\cdot\sqrt{d/2}.
\end{equation*}
Apply Proposition~\ref{prop:app-isotropic-head-badmass} with
\begin{equation*}
\alpha=\frac{\sigma}{\sqrt d},
\qquad
R=a\cdot\sqrt{d/2},
\qquad
q=\frac{a^4}{4\cdot b^4}.
\end{equation*}
Because \(\alpha\cdot R=\sigma\cdot a/\sqrt2\), the result follows.
\end{proof}

\begin{corollary}[Average severe-misranking mass gives a nontrivial high-error initialization probability]
\label{cor:app-mean-badmass-seed}
Let \(W_0\) be any random initialization and \(Y_M=\pi_M(W_0)\in[0,1]\). If \(\E[Y_M]\ge\mu_M\) for \(\mu_M\in(0,1]\), then for every \(\tau\in[0,\mu_M)\),
\begin{equation*}
\Pbb(Y_M\ge\tau)
\ge
\frac{\mu_M-\tau}{1-\tau}.
\end{equation*}
In particular,
\begin{equation*}
\Pbb\!\left(Y_M\ge\frac{\mu_M}{2}\right)
\ge
\frac{\mu_M}{2-\mu_M}
\ge
\frac{\mu_M}{2}.
\end{equation*}
\end{corollary}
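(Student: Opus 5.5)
The plan is to apply the reverse Markov inequality to the bounded random variable \(Y_M\in[0,1]\). Since \(1-Y_M\) is nonnegative and bounded by \(1\), the upper bound \(Y_M\le 1\) lets us convert a lower bound on \(\E[Y_M]\) into a lower bound on \(\Pbb(Y_M\ge\tau)\). No structure of the initialization \(W_0\) or of \(\pi_M\) is needed beyond measurability and the range \([0,1]\), which holds because \(\pi_M\) is a probability.

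The first step is to split the expectation on the event \(E_\tau=\{Y_M\ge\tau\}\) and its complement. On \(E_\tau\) we bound \(Y_M\le1\), and on \(E_\tau^c\) we bound \(Y_M<\tau\). Writing \(p=\Pbb(E_\tau)\), this gives
\begin{equation*}
\mu_M\le\E[Y_M]\le p+\tau\cdot(1-p).
\end{equation*}
Rearranging yields \(p\cdot(1-\tau)\ge\mu_M-\tau\). Because \(\tau<\mu_M\le1\), we have \(1-\tau>0\), so dividing gives the first display.

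The second step is the specialization \(\tau=\mu_M/2\), which lies in \([0,\mu_M)\) since \(\mu_M>0\). It gives
\begin{equation*}
\frac{\mu_M-\mu_M/2}{1-\mu_M/2}=\frac{\mu_M}{2-\mu_M}.
\end{equation*}
The last inequality \(\mu_M/(2-\mu_M)\ge\mu_M/2\) holds because \(0<2-\mu_M\le2\).

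No step is a real obstacle; the only care needed is the strictness of \(Y_M<\tau\) on the complement and the positivity of \(1-\tau\), both guaranteed by the stated ranges. In application, \(\mu_M\) can be taken from Proposition~\ref{prop:app-isotropic-head-badmass} or Proposition~\ref{prop:app-highdim-scaled-head}.
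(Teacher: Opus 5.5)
Your proposal is correct and is essentially the paper's argument. Splitting \(\E[Y_M]\) on \(\{Y_M\ge\tau\}\) and its complement is the same as taking expectations in the pointwise bound \(Y_M\le\tau+(1-\tau)\cdot\1_{\{Y_M\ge\tau\}}\), and your specialization to \(\tau=\mu_M/2\) matches the paper's.
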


\begin{proof}
Since \(0\le Y_M\le1\),
\begin{equation*}
Y_M
\le
\tau+(1-\tau)\cdot\1_{\{Y_M\ge\tau\}}.
\end{equation*}
Taking expectations and using \(\E[Y_M]\ge\mu_M\) gives the first inequality. The final display follows by taking \(\tau=\mu_M/2\).
\end{proof}

\begin{proposition}[Finite-pair high-error check]
\label{prop:app-finite-pair-high-error-check}
Fix \(\theta\), \(M\ge0\), and
\begin{equation*}
\pi_M(\theta)=\Pbb(m_\theta(X^+,X^-)\le -M).
\end{equation*}
Let \((\widetilde X_k^+,\widetilde X_k^-)_{k=1}^K\) be independent positive-negative pairs and define
\begin{equation*}
\widehat\pi_{M,K}(\theta)
=
\frac{1}{K}\sum_{k=1}^K
\1_{\{m_\theta(\widetilde X_k^+,\widetilde X_k^-)\le -M\}}.
\end{equation*}
Then
\begin{equation*}
\E[\widehat\pi_{M,K}(\theta)]=\pi_M(\theta),
\qquad
\Var(\widehat\pi_{M,K}(\theta))
\le
\frac{1}{4\cdot K}.
\end{equation*}
Consequently, for every \(\varepsilon>0\),
\begin{equation*}
\Pbb\!\left(
|\widehat\pi_{M,K}(\theta)-\pi_M(\theta)|\ge\varepsilon
\right)
\le
\frac{1}{4\cdot K\cdot\varepsilon^2}.
\end{equation*}
Equivalently,
\begin{equation*}
\Pbb\!\left(
\pi_M(\theta)\ge \widehat\pi_{M,K}(\theta)-\varepsilon
\right)
\ge
1-\frac{1}{4\cdot K\cdot\varepsilon^2},
\end{equation*}
and
\begin{equation*}
\Pbb\!\left(
\pi_M(\theta)\le \widehat\pi_{M,K}(\theta)+\varepsilon
\right)
\ge
1-\frac{1}{4\cdot K\cdot\varepsilon^2}.
\end{equation*}
\end{proposition}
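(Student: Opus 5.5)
The estimator is an average of i.i.d. Bernoulli indicators, so the plan is to reduce the whole statement to a mean identity, a one-line variance bound, and Chebyshev's inequality.

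\emph{Step 1 (unbiasedness).} Set
\[
Z_k=\1_{\{m_\theta(\widetilde X_k^+,\widetilde X_k^-)\le -M\}}.
\]
Since \(\theta\) is fixed and the pairs \((\widetilde X_k^+,\widetilde X_k^-)\) are independent with the product law of \((X^+,X^-)\), the \(Z_k\) are i.i.d. Bernoulli with parameter \(\pi_M(\theta)\). Linearity of expectation then gives \(\E[\widehat\pi_{M,K}(\theta)]=\pi_M(\theta)\).

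\emph{Step 2 (variance bound).} Independence removes all cross-covariances, as in Proposition~\ref{prop:app-pointwise-baseline}. Hence
\[
\Var(\widehat\pi_{M,K}(\theta))=\frac{\pi_M(\theta)\,(1-\pi_M(\theta))}{K}.
\]
The elementary bound \(p(1-p)\le 1/4\) on \([0,1]\) yields \(\Var(\widehat\pi_{M,K}(\theta))\le 1/(4K)\).

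\emph{Step 3 (deviation bound).} Chebyshev's inequality applied to \(\widehat\pi_{M,K}(\theta)\), whose mean is \(\pi_M(\theta)\), gives
\[
\Pbb\bigl(|\widehat\pi_{M,K}(\theta)-\pi_M(\theta)|\ge\varepsilon\bigr)\le \frac{\Var(\widehat\pi_{M,K}(\theta))}{\varepsilon^2}\le \frac{1}{4K\varepsilon^2}.
\]

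\emph{Step 4 (one-sided forms).} The event \(\{\pi_M(\theta)<\widehat\pi_{M,K}(\theta)-\varepsilon\}\) is contained in the two-sided deviation event \(\{|\widehat\pi_{M,K}(\theta)-\pi_M(\theta)|\ge\varepsilon\}\). Taking complements therefore gives the lower-confidence statement. The upper-confidence statement follows symmetrically from \(\{\pi_M(\theta)>\widehat\pi_{M,K}(\theta)+\varepsilon\}\).

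The only point needing care is modeling: the check pairs must be genuinely independent across \(k\), with disjoint examples in each pair. This is unlike the \(B_+B_-\) pairs of a single mini-batch, which share examples; for those, the variance would instead follow the Hoeffding split of Theorem~\ref{thm:app-hoeffding}. Under the stated independence this issue does not arise, so I expect no real obstacle.
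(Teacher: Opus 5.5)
Your proposal is correct and follows essentially the same route as the paper's proof. Both treat the indicators as i.i.d.\ Bernoulli$(\pi_M(\theta))$, use the variance $\pi_M(\theta)(1-\pi_M(\theta))/K\le 1/(4K)$ and Chebyshev's inequality, and obtain the one-sided statements from the containment of each one-sided deviation event in the two-sided one.
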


\begin{proof}
The indicators in the empirical average are i.i.d. Bernoulli random variables with mean \(\pi_M(\theta)\). Their average has expectation \(\pi_M(\theta)\) and variance
\begin{equation*}
\frac{\pi_M(\theta)\cdot(1-\pi_M(\theta))}{K}
\le
\frac{1}{4\cdot K}.
\end{equation*}
Chebyshev's inequality gives the deviation bound. The one-sided lower and upper statements follow because the events
\begin{equation*}
\left\{
\widehat\pi_{M,K}(\theta)-\pi_M(\theta)\ge\varepsilon
\right\},
\qquad
\left\{
\pi_M(\theta)-\widehat\pi_{M,K}(\theta)\ge\varepsilon
\right\}
\end{equation*}
are both contained in
\begin{equation*}
\left\{
|\widehat\pi_{M,K}(\theta)-\pi_M(\theta)|\ge\varepsilon
\right\}.
\end{equation*}
\end{proof}

\begin{proposition}[Finite-initialization lower bound for high-error starts]
\label{prop:app-bad-seed-frequency}
Fix \(M\ge0\) and a seed-level threshold \(\tau\in[0,1]\). Let
\begin{equation*}
\theta^{(1)},\ldots,\theta^{(S)}
\end{equation*}
be \(S\) parameter points. For each seed \(s\), construct an independent pairwise estimate
\begin{equation*}
\widehat\pi^{(s)}_{M,K_s}(\theta^{(s)})
\end{equation*}
as in Proposition~\ref{prop:app-finite-pair-high-error-check}, choose \(\varepsilon_s>0\), and define
\begin{equation*}
C_s
=
\1_{\{\widehat\pi^{(s)}_{M,K_s}(\theta^{(s)})-\varepsilon_s\ge\tau\}}.
\end{equation*}
Then
\begin{equation*}
\Pbb\!\left(
\frac{1}{S}\sum_{s=1}^S
\1_{\{\pi_M(\theta^{(s)})\ge\tau\}}
\ge
\frac{1}{S}\sum_{s=1}^S C_s
\right)
\ge
1-\sum_{s=1}^S
\frac{1}{4\cdot K_s\cdot\varepsilon_s^2}.
\end{equation*}
\end{proposition}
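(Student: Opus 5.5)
The plan is a union bound over seeds combined with Proposition~\ref{prop:app-finite-pair-high-error-check}. For each seed \(s\), define the good event
\begin{equation*}
E_s=\left\{\pi_M(\theta^{(s)})\ge \widehat\pi^{(s)}_{M,K_s}(\theta^{(s)})-\varepsilon_s\right\}.
\end{equation*}
Proposition~\ref{prop:app-finite-pair-high-error-check} is applied conditionally on the parameter point \(\theta^{(s)}\), with the fresh pairs drawn independently of it. Its one-sided lower statement gives \(\Pbb(E_s^c)\le 1/(4K_s\varepsilon_s^2)\). If the \(\theta^{(s)}\) are themselves random, for example random initializations, this conditional bound holds for each fixed value. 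Averaging over the law of \(\theta^{(s)}\) then gives the same unconditional bound, since the estimate is built from pairs independent of \(\theta^{(s)}\).

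Next comes the pointwise implication on each good event. On \(E_s\), if \(C_s=1\), then \(\widehat\pi^{(s)}_{M,K_s}-\varepsilon_s\ge\tau\). This gives \(\pi_M(\theta^{(s)})\ge\tau\), so \(\1_{\{\pi_M(\theta^{(s)})\ge\tau\}}\ge C_s\). If \(C_s=0\), the same inequality holds trivially because indicators are nonnegative. Hence, on \(\bigcap_{s}E_s\), the inequality \(\1_{\{\pi_M(\theta^{(s)})\ge\tau\}}\ge C_s\) holds for every \(s\). Summing over \(s\) and dividing by \(S\) gives the event inside the probability in the statement.

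Finally, the union bound gives
\begin{equation*}
\Pbb\Bigl(\bigcap_s E_s\Bigr)\ge 1-\sum_s \Pbb(E_s^c)\ge 1-\sum_{s=1}^S\frac{1}{4K_s\varepsilon_s^2},
\end{equation*}
and monotonicity of probability finishes the proof. Independence across seeds is not actually needed, since the union bound suffices.

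The only step needing care is the first one. There I must justify applying the fixed-\(\theta\) Chebyshev bound when \(\theta^{(s)}\) may be random, which is done by conditioning on \(\theta^{(s)}\). I would state explicitly that the pairs used for seed \(s\) are independent of \(\theta^{(s)}\).
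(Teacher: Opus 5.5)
Your proof is correct and follows the paper's argument essentially verbatim: define \(E_s=\{\pi_M(\theta^{(s)})\ge\widehat\pi^{(s)}_{M,K_s}(\theta^{(s)})-\varepsilon_s\}\), bound each \(\Pbb(E_s^c)\) by the one-sided statement of Proposition~\ref{prop:app-finite-pair-high-error-check}, take a union bound, and use \(C_s\le \1_{\{\pi_M(\theta^{(s)})\ge\tau\}}\) on \(\bigcap_s E_s\) before averaging. Your conditioning remark for random \(\theta^{(s)}\), with pairs drawn independently of them, is a harmless refinement. The paper leaves this implicit here and relies on it in Corollary~\ref{cor:app-init-distribution-badseed}.
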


\begin{proof}
For each \(s\), define
\begin{equation*}
E_s
=
\left\{
\pi_M(\theta^{(s)})
\ge
\widehat\pi^{(s)}_{M,K_s}(\theta^{(s)})-\varepsilon_s
\right\}.
\end{equation*}
By Proposition~\ref{prop:app-finite-pair-high-error-check},
\begin{equation*}
\Pbb(E_s)
\ge
1-\frac{1}{4\cdot K_s\cdot\varepsilon_s^2}.
\end{equation*}
The union bound gives
\begin{equation*}
\Pbb\!\left(\bigcap_{s=1}^S E_s\right)
\ge
1-\sum_{s=1}^S
\frac{1}{4\cdot K_s\cdot\varepsilon_s^2}.
\end{equation*}
On \(\bigcap_{s=1}^S E_s\), if \(C_s=1\), then
\begin{equation*}
\widehat\pi^{(s)}_{M,K_s}(\theta^{(s)})-\varepsilon_s\ge\tau
\end{equation*}
and therefore
\begin{equation*}
\pi_M(\theta^{(s)})\ge\tau.
\end{equation*}
Thus
\begin{equation*}
C_s
\le
\1_{\{\pi_M(\theta^{(s)})\ge\tau\}}
\end{equation*}
for every \(s\). Averaging over \(s=1,\ldots,S\) proves the claim.
\end{proof}

\begin{corollary}[Initialization-distribution lower bound for high-error starts]
\label{cor:app-init-distribution-badseed}
Let \(W_0\) be a random initialization and define
\begin{equation*}
p_{M,\tau}
=
\Pbb(\pi_M(W_0)\ge\tau).
\end{equation*}
Let \(W_0^{(1)},\ldots,W_0^{(S)}\) be independent draws from the same initialization distribution. For each \(s\), build \(C_s\) as in Proposition~\ref{prop:app-bad-seed-frequency}, using pairwise samples independent of the initializations. Then for every \(\rho>0\),
\begin{equation*}
\Pbb\!\left(
p_{M,\tau}
\ge
\frac{1}{S}\sum_{s=1}^S C_s-\rho
\right)
\ge
1-\sum_{s=1}^S
\frac{1}{4\cdot K_s\cdot\varepsilon_s^2}
-\frac{1}{4\cdot S\cdot\rho^2}.
\end{equation*}
\end{corollary}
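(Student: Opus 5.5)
The plan is to combine the finite-seed estimate of Proposition~\ref{prop:app-bad-seed-frequency} with a Chebyshev bound on the empirical frequency of genuinely high-error seeds, and then merge the two events with a union bound. Write
\begin{equation*}
I_s=\1_{\{\pi_M(W_0^{(s)})\ge\tau\}},
\qquad
\bar I_S=\frac{1}{S}\sum_{s=1}^S I_s,
\qquad
\bar C_S=\frac{1}{S}\sum_{s=1}^S C_s .
\end{equation*}
The target event \(\{p_{M,\tau}\ge \bar C_S-\rho\}\) contains the intersection of \(E=\{\bar I_S\ge \bar C_S\}\) and \(F=\{p_{M,\tau}\ge \bar I_S-\rho\}\). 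Indeed, on \(E\cap F\) we have \(p_{M,\tau}\ge \bar I_S-\rho\ge \bar C_S-\rho\).

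For \(E\), I will condition on the initializations \(W_0^{(1)},\ldots,W_0^{(S)}\). Because the pairwise samples are independent of the initializations, the conditional law of each estimate \(\widehat\pi^{(s)}_{M,K_s}\) is the one treated in Proposition~\ref{prop:app-finite-pair-high-error-check}, with fixed parameter points \(\theta^{(s)}=W_0^{(s)}\). Proposition~\ref{prop:app-bad-seed-frequency} then gives
\begin{equation*}
\Pbb\bigl(E\mid W_0^{(1)},\ldots,W_0^{(S)}\bigr)\ge 1-\sum_{s=1}^S\frac{1}{4K_s\varepsilon_s^2}.
\end{equation*}
This bound is uniform in the conditioning values, so taking expectations yields the same unconditional bound. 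This conditioning is the only delicate step. One has to check that the Chebyshev argument in Proposition~\ref{prop:app-finite-pair-high-error-check} applies with random but independent \(\theta^{(s)}\), which requires measurability of \(\theta\mapsto\pi_M(\theta)\) and a regular conditional distribution. I will state this assumption and otherwise treat it as routine.

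For \(F\), the indicators \(I_s\) are i.i.d. Bernoulli\((p_{M,\tau})\), since the initializations are independent draws from the same distribution. Hence \(\E[\bar I_S]=p_{M,\tau}\) and \(\Var(\bar I_S)\le 1/(4S)\). Chebyshev's inequality, used one-sidedly as in Proposition~\ref{prop:app-finite-pair-high-error-check}, gives
\begin{equation*}
\Pbb(F^c)=\Pbb(\bar I_S-p_{M,\tau}>\rho)\le\frac{1}{4S\rho^2}.
\end{equation*}

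Finally, the union bound gives
\begin{equation*}
\Pbb(E\cap F)\ge 1-\Pbb(E^c)-\Pbb(F^c),
\end{equation*}
which is exactly the claimed lower bound.
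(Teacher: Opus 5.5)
Your proposal is correct and follows essentially the same route as the paper: Chebyshev on the i.i.d. true-seed indicators, Proposition~\ref{prop:app-bad-seed-frequency} for the comparison \(\frac{1}{S}\sum_s \1_{\{\pi_M(W_0^{(s)})\ge\tau\}}\ge\frac{1}{S}\sum_s C_s\), and a union bound. Your explicit conditioning on the initializations, which justifies applying that proposition at random but independent parameter points, is a detail the paper leaves implicit; it does not change the argument.
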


\begin{proof}
Define the true seed indicators
\begin{equation*}
B_s=\1_{\{\pi_M(W_0^{(s)})\ge\tau\}},
\qquad
s=1,\ldots,S.
\end{equation*}
Then \(B_1,\ldots,B_S\) are i.i.d. Bernoulli random variables with mean \(p_{M,\tau}\), and
\begin{equation*}
\Var\!\left(\frac{1}{S}\sum_{s=1}^S B_s\right)
=
\frac{p_{M,\tau}\cdot(1-p_{M,\tau})}{S}
\le
\frac{1}{4\cdot S}.
\end{equation*}
Chebyshev's inequality gives
\begin{equation*}
\Pbb\!\left(
p_{M,\tau}
\ge
\frac{1}{S}\sum_{s=1}^S B_s-\rho
\right)
\ge
1-\frac{1}{4\cdot S\cdot\rho^2}.
\end{equation*}
By Proposition~\ref{prop:app-bad-seed-frequency},
\begin{equation*}
\Pbb\!\left(
\frac{1}{S}\sum_{s=1}^S B_s
\ge
\frac{1}{S}\sum_{s=1}^S C_s
\right)
\ge
1-\sum_{s=1}^S
\frac{1}{4\cdot K_s\cdot\varepsilon_s^2}.
\end{equation*}
On the intersection of these two events,
\begin{equation*}
p_{M,\tau}
\ge
\frac{1}{S}\sum_{s=1}^S B_s-\rho
\ge
\frac{1}{S}\sum_{s=1}^S C_s-\rho.
\end{equation*}
Another union bound proves the result.
\end{proof}

\begin{proposition}[Severe-misranking neighborhoods persist under local score stability]
\label{prop:app-bad-neighborhood}
Fix \(M>0\) and \(\delta\in(0,1)\). Suppose there exist \(\bar\theta\), \(\rho>0\), and \(L_{\rm sc}>0\) such that
\begin{equation*}
\Pbb(m_{\bar\theta}(X^+,X^-)\le -(M+\rho))\ge1-\delta
\end{equation*}
and
\begin{equation*}
|m_\theta(x^+,x^-)-m_{\bar\theta}(x^+,x^-)|
\le
L_{\rm sc}\cdot\|\theta-\bar\theta\|_2
\end{equation*}
for all relevant pairs. Then every \(\theta\) with
\begin{equation*}
\|\theta-\bar\theta\|_2\le \rho/L_{\rm sc}
\end{equation*}
satisfies
\begin{equation*}
\pi_M(\theta)\ge1-\delta.
\end{equation*}
\end{proposition}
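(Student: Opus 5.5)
The plan is a pointwise event inclusion followed by taking probabilities. Fix \(\theta\) with \(\|\theta-\bar\theta\|_2\le\rho/L_{\rm sc}\). Consider any pair \((x^+,x^-)\) in the reference severe event
\begin{equation*}
E_{\bar\theta}=\{(x^+,x^-): m_{\bar\theta}(x^+,x^-)\le -(M+\rho)\}.
\end{equation*}
I will show that every such pair also lies in the event \(\{m_\theta(x^+,x^-)\le -M\}\).

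For a pair in \(E_{\bar\theta}\), the score-stability assumption gives
\begin{equation*}
m_\theta(x^+,x^-)\le m_{\bar\theta}(x^+,x^-)+L_{\rm sc}\cdot\|\theta-\bar\theta\|_2\le -(M+\rho)+L_{\rm sc}\cdot\frac{\rho}{L_{\rm sc}}=-M.
\end{equation*}
So the margin perturbation is absorbed by the buffer \(\rho\) built into the reference threshold. This yields the inclusion
\begin{equation*}
E_{\bar\theta}\subseteq\{m_\theta\le -M\}.
\end{equation*}

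Monotonicity of probability then gives
\begin{equation*}
\pi_M(\theta)=\Pbb(m_\theta(X^+,X^-)\le -M)\ge\Pbb(E_{\bar\theta})\ge 1-\delta.
\end{equation*}
Since \(\theta\) was an arbitrary point of the ball, the bound holds uniformly on the neighborhood.

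There is no real obstacle. The only care point is measure-theoretic: "for all relevant pairs" must cover a set of full \(P^+\otimes P^-\) measure, so that the inclusion holds almost surely and the probability comparison is valid.
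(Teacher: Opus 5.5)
Your proposal is correct and matches the paper's proof. The paper makes the same event inclusion \(\{m_{\bar\theta}\le -(M+\rho)\}\subseteq\{m_\theta\le -M\}\) via the score-stability bound and then takes probabilities; your almost-sure remark about ``relevant pairs'' is a harmless extra precision.
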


\begin{proof}
On the event \(m_{\bar\theta}(X^+,X^-)\le -(M+\rho)\), the score-stability bound gives
\begin{equation*}
m_\theta(X^+,X^-)
\le
-(M+\rho)+L_{\rm sc}\cdot\|\theta-\bar\theta\|_2
\le
-M.
\end{equation*}
Thus the severe event at \(\bar\theta\) is contained in the severe event at \(\theta\). Taking probabilities proves the claim.
\end{proof}

\begin{proposition}[Single-coordinate degenerate high-error near-stationary regions are bad]
\label{prop:app-degenerate-near-stationary-bad-region}
Let \(\mathcal U\subset\R^d\) be a neighborhood and fix \(M,\rho>0\), a unit direction \(\bm v\), and constants \(\varepsilon_{\rm drift},\varepsilon_1,c_2\ge0\) with \(c_2>0\). Assume
\begin{equation*}
\inf_{\theta\in\mathcal U}\pi_M(\theta)\ge\rho,
\qquad
\sup_{\theta\in\mathcal U}
|\langle\nabla R_\ell(\theta),\bm v\rangle|
\le
\varepsilon_{\rm drift},
\end{equation*}
\begin{equation*}
\sup_{\theta\in\mathcal U}
\left\{
\Var(\psi_{\theta,\bm v}^{(1,0)}(X^+))
+
\Var(\psi_{\theta,\bm v}^{(0,1)}(X^-))
\right\}
\le
\varepsilon_1,
\qquad
\inf_{\theta\in\mathcal U}
\Var(\psi_{\theta,\bm v}^{(1,1)}(X^+,X^-))
\ge c_2.
\end{equation*}
Then \(\mathcal U\) is bad for SGD in the specified correction direction \(\bm v\): severe misrankings persist, deterministic correction along \(\bm v\) is bounded by \(\varepsilon_{\rm drift}\), and the first-order Hoeffding contribution to the projected variance is at most \(\varepsilon_1/B\). This fixed-direction statement is the form used in the projected escape-time argument. When the steepest direction \(\bm v_{\rm sev}(\theta)\) varies with \(\theta\), Proposition~\ref{prop:app-cone-bad-region} gives the corresponding cone-level uniform version. If \(\varepsilon_1=0\), then
\begin{equation*}
\Var(\langle\widehat{\bm g}_B^\ell(\theta),\bm v\rangle)
=
\frac{
\Var(\psi_{\theta,\bm v}^{(1,1)}(X^+,X^-))
}{B^2}
\qquad
\text{for all }\theta\in\mathcal U.
\end{equation*}
\end{proposition}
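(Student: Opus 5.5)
The plan is to verify the three defining ingredients of a bad region one at a time, pointwise in \(\theta\in\mathcal U\), and then take suprema and infima over \(\mathcal U\). No new analytic input is needed beyond Theorem~\ref{thm:app-hoeffding}. Fix an arbitrary \(\theta\in\mathcal U\); since \(\bm v\) is fixed independently of the sampled batch, Proposition~\ref{prop:app-ustat} shows that \(\langle\widehat{\bm g}_B^\ell(\theta),\bm v\rangle\) is a two-sample U-statistic with kernel \(\psi_{\theta,\bm v}\). Its square-integrability is implicit in the assumed finite variances of the Hoeffding components, which is exactly the hypothesis Theorem~\ref{thm:app-hoeffding} requires.

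First, persistence of severe misrankings is immediate from \(\inf_{\mathcal U}\pi_M\ge\rho\), i.e.\ \(\mathcal U\) is high-error at level \((M,\rho)\) in the sense defined at the start of this appendix. Second, the deterministic correction along \(\bm v\) is the projected mean drift. By Proposition~\ref{prop:app-pairwise-batch-unbiased}, \(\E\langle\widehat{\bm g}_B^\ell(\theta),\bm v\rangle=\langle\nabla R_\ell(\theta),\bm v\rangle\), so its absolute value is at most \(\varepsilon_{\rm drift}\) by the drift hypothesis. Third, apply the variance split Eq.~\eqref{eq:app-var-split}. The \(B^{-1}\) coefficient is exactly \(\Var(\psi^{(1,0)}_{\theta,\bm v})+\Var(\psi^{(0,1)}_{\theta,\bm v})\le\varepsilon_1\), giving the \(\varepsilon_1/B\) bound. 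The assumption \(\Var(\psi^{(1,1)}_{\theta,\bm v})\ge c_2>0\) certifies that the residual term is nontrivial, so the first-order term is small relative to a genuinely present pair-interaction term. This matches the criterion that the first-order variance be small relative to the residual variance, at least once \(\varepsilon_1\ll c_2/B\).

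For the exact case \(\varepsilon_1=0\), both first-order variances are nonnegative and sum to zero, so each vanishes. Substituting into Eq.~\eqref{eq:app-var-split} leaves only \(\Var(\psi^{(1,1)}_{\theta,\bm v})/B^2\), which by \(c_2>0\) is strictly positive, recovering the \(\Theta(B^{-2})\) regime of Corollary~\ref{cor:app-first-second-order}. For the remark about \(\theta\)-dependent steepest directions, I would simply point to Proposition~\ref{prop:app-cone-bad-region}, which gives the same bounds uniformly over \(\mathcal C_\alpha(\theta)\).

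The only real subtlety, and the step I would be careful about, is interpretive rather than computational. The statement's notion of \emph{bad} is the appendix definition: high-error mass, weak projected drift, and a small first-order variance coefficient relative to the residual. The proof must match each hypothesis to one clause of that definition and stress that \(\bm v\) is held fixed, not \(\theta\)-adapted. Fixing \(\bm v\) is what keeps the kernel deterministic given \(\theta\), and it is what makes the fixed-direction form usable in the escape-time argument of Appendix~\ref{app:localized-escape-time}.
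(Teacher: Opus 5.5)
Your proposal is correct and follows the paper's proof essentially verbatim. The high-error and drift clauses are read off directly from the first two hypotheses. The variance split of Theorem~\ref{thm:app-hoeffding} then bounds the $B^{-1}$ coefficient by $\varepsilon_1$ and, when $\varepsilon_1=0$, leaves only the residual $B^{-2}$ term.

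One small correction to your setup remark. The hypotheses give only a \emph{lower} bound on $\Var(\psi^{(1,1)}_{\theta,\bm v}(X^+,X^-))$, so they do not imply the condition $\E[\psi_{\theta,\bm v}(X^+,X^-)^2]<\infty$ that Theorem~\ref{thm:app-hoeffding} requires. That condition should be stated as a standing assumption, which is what the paper's proof tacitly does.
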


\begin{proof}
The first assumption says exactly that \(\mathcal U\) is high-error at level \((M,\rho)\). The second assumption gives the stated projected-drift bound. The variance split in Theorem~\ref{thm:app-hoeffding} gives, for each \(\theta\in\mathcal U\),
\begin{equation*}
\Var(\langle\widehat{\bm g}_B^\ell(\theta),\bm v\rangle)
=
\frac{
\Var(\psi_{\theta,\bm v}^{(1,0)}(X^+))
+
\Var(\psi_{\theta,\bm v}^{(0,1)}(X^-))
}{B}
+
\frac{
\Var(\psi_{\theta,\bm v}^{(1,1)}(X^+,X^-))
}{B^2}.
\end{equation*}
The first-order contribution is therefore at most \(\varepsilon_1/B\), while the residual pair-interaction component remains nontrivial because its variance is at least \(c_2\). If \(\varepsilon_1=0\), the \(B^{-1}\) term vanishes identically and the displayed \(B^{-2}\) identity follows.
\end{proof}

\begin{corollary}[Degenerate stationary neighborhoods are bad in a correction direction]
\label{cor:app-degenerate-saddle-bad-region}
Let \(\theta_\star\) be a stationary point of \(R_\ell\), let \(\mathcal U=B(\theta_\star,r)\), and fix \(M,\rho>0\), a unit direction \(\bm v\), and constants \(\varepsilon_1,c_2\ge0\) with \(c_2>0\). Assume \(\nabla R_\ell\) is \(L_R\)-Lipschitz on \(\mathcal U\). If
\begin{equation*}
\inf_{\theta\in\mathcal U}\pi_M(\theta)\ge\rho,
\end{equation*}
\begin{equation*}
\sup_{\theta\in\mathcal U}
\left\{
\Var(\psi_{\theta,\bm v}^{(1,0)}(X^+))
+
\Var(\psi_{\theta,\bm v}^{(0,1)}(X^-))
\right\}
\le
\varepsilon_1,
\qquad
\inf_{\theta\in\mathcal U}
\Var(\psi_{\theta,\bm v}^{(1,1)}(X^+,X^-))
\ge c_2>0,
\end{equation*}
then \(\mathcal U\) is bad for SGD in the sense of Proposition~\ref{prop:app-degenerate-near-stationary-bad-region}, with
\begin{equation*}
|\langle\nabla R_\ell(\theta),\bm v\rangle|
\le
L_R\cdot r
\qquad
\text{for every }\theta\in\mathcal U.
\end{equation*}
\end{corollary}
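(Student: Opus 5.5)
The plan is to reduce the statement to Proposition~\ref{prop:app-degenerate-near-stationary-bad-region}. Three of its four hypotheses are assumed verbatim in the corollary: the high-error condition \(\inf_{\theta\in\mathcal U}\pi_M(\theta)\ge\rho\), the first-order Hoeffding bound \(\varepsilon_1\), and the residual lower bound \(c_2>0\). The only missing ingredient is the projected-drift bound, which I will supply with \(\varepsilon_{\rm drift}=L_R\cdot r\).

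For that step, fix \(\theta\in\mathcal U=B(\theta_\star,r)\). Because \(\theta_\star\) is stationary, \(\nabla R_\ell(\theta_\star)=\bm 0\). Write
\[
\langle\nabla R_\ell(\theta),\bm v\rangle
=
\langle\nabla R_\ell(\theta)-\nabla R_\ell(\theta_\star),\bm v\rangle .
\]
Cauchy--Schwarz with \(\|\bm v\|_2=1\), followed by the \(L_R\)-Lipschitz property of \(\nabla R_\ell\) on \(\mathcal U\), gives
\[
|\langle\nabla R_\ell(\theta),\bm v\rangle|
\le
L_R\cdot\|\theta-\theta_\star\|_2
\le
L_R\cdot r .
\]
The Lipschitz bound is used only between \(\theta\) and the centre \(\theta_\star\), both of which lie in the ball, so no convexity or path argument is needed.

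With this, every hypothesis of Proposition~\ref{prop:app-degenerate-near-stationary-bad-region} holds with \(\varepsilon_{\rm drift}=L_R r\). That proposition then yields the conclusions directly: \(\mathcal U\) is bad in direction \(\bm v\), the first-order contribution to the projected variance is at most \(\varepsilon_1/B\), and the residual term is nontrivial. When \(\varepsilon_1=0\), it also gives the exact \(B^{-2}\) identity.

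I expect no step to be a real obstacle; the argument is essentially bookkeeping. The one point to check is whether the ball is open or closed. If the Lipschitz constant holds only on an open ball, the bound \(\|\theta-\theta_\star\|_2\le r\) still holds, possibly with strict inequality, so the stated inequality is unaffected.
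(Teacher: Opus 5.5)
Your proposal is correct and matches the paper's proof. Both use stationarity and the Lipschitz gradient to get $|\langle\nabla R_\ell(\theta),\bm v\rangle|\le\|\nabla R_\ell(\theta)-\nabla R_\ell(\theta_\star)\|_2\le L_R\cdot r$ on the ball, then apply Proposition~\ref{prop:app-degenerate-near-stationary-bad-region} with $\varepsilon_{\rm drift}=L_R\cdot r$, since the other three hypotheses are assumed verbatim.
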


\begin{proof}
Since \(\theta_\star\) is stationary, \(\nabla R_\ell(\theta_\star)=\bm 0\). For every \(\theta\in B(\theta_\star,r)\),
\begin{equation*}
\|\nabla R_\ell(\theta)\|_2
=
\|\nabla R_\ell(\theta)-\nabla R_\ell(\theta_\star)\|_2
\le
L_R\cdot\|\theta-\theta_\star\|_2
\le
L_R\cdot r.
\end{equation*}
Hence
\begin{equation*}
|\langle\nabla R_\ell(\theta),\bm v\rangle|
\le
L_R\cdot r.
\end{equation*}
Apply Proposition~\ref{prop:app-degenerate-near-stationary-bad-region} with \(\varepsilon_{\rm drift}=L_R\cdot r\).
\end{proof}

\section{Surrogate Weight Calculations}
\label{app:surrogate-weight-calculations}

This section records the surrogate-weight calculations used by Section~\ref{sec:surrogate-weights}. The comparison is local and weight-based. It asks how the pairwise weight
\begin{equation*}
w_\ell(x)=-\ell'(x)
\end{equation*}
changes as a positive-negative margin \(x\) moves from severe violation to easy separation. The five canonical profiles studied in the main text are logistic, exponential, square, hinge, and squared-hinge.

\subsection{Canonical Weights}
\label{app:canonical-surrogate-weights}

\begin{proposition}[Canonical pairwise weights]
\label{prop:app-canonical-weights}
For
\begin{equation*}
\ell_{\log}(x)=\log(1+e^{-x}),
\qquad
\ell_{\exp}(x)=e^{-x},
\qquad
\ell_{\mathrm{sq},m}(x)=(m-x)^2,
\end{equation*}
\begin{equation*}
\ell_{\mathrm h,m}(x)=(m-x)_+,
\qquad
\ell_{\mathrm{sqh},m}(x)=(m-x)_+^2,
\end{equation*}
the corresponding weights are
\begin{equation*}
w_{\log}(x)=\frac{1}{1+e^x},
\qquad
w_{\exp}(x)=e^{-x},
\qquad
w_{\mathrm{sq},m}(x)=2\cdot(m-x),
\end{equation*}
\begin{equation*}
w_{\mathrm h,m}(x)=\1_{\{x<m\}},
\qquad
w_{\mathrm{sqh},m}(x)=2\cdot(m-x)_+.
\end{equation*}
For hinge, the boundary subgradient at \(x=m\) is chosen to be zero.
\end{proposition}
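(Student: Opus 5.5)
The plan is a direct differentiation of each of the five losses, recalling \(w_\ell(x)=-\ell'(x)\) and handling the two piecewise-defined losses separately. All five are treated as functions of the scalar margin \(x\in\R\), so the computation is one-variable calculus with no dependence on \(\theta\) or on the data distribution.

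For the three smooth losses the chain rule suffices. For logistic, \(\ell_{\log}'(x)=-e^{-x}/(1+e^{-x})\). Multiplying numerator and denominator by \(e^{x}\) gives \(-1/(1+e^{x})\), hence \(w_{\log}(x)=1/(1+e^x)\). For exponential, \(\ell_{\exp}'(x)=-e^{-x}\). For square, \(\ell_{\mathrm{sq},m}'(x)=-2(m-x)\). Negating each derivative gives the stated weights.

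For hinge and squared-hinge I would split into the regions \(x<m\) and \(x>m\) and treat \(x=m\) separately.

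For hinge, \(\ell_{\mathrm h,m}(x)=m-x\) on \(x<m\), so the derivative is \(-1\) and the weight is \(1\). On \(x>m\) the loss is \(0\), so the weight is \(0\). At the kink \(x=m\) the subdifferential of \(\ell_{\mathrm h,m}\) is \([-1,0]\). The statement selects the subgradient \(0\), which makes the weight exactly \(\1_{\{x<m\}}\).

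For squared-hinge, \(\ell_{\mathrm{sqh},m}(x)=(m-x)^2\) on \(x<m\), with derivative \(-2(m-x)\). On \(x>m\) the derivative is \(0\). At \(x=m\) both one-sided derivatives equal \(0\), so the function is differentiable there with derivative \(0\). Hence \(w_{\mathrm{sqh},m}(x)=2(m-x)_+\) everywhere, and no subgradient convention is needed.

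There is no real obstacle here. The only point needing care is the boundary \(x=m\): hinge requires the explicit convention given in the statement, while squared-hinge is in fact \(C^1\).
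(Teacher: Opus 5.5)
Your proposal is correct and follows the paper's own proof: a direct one-variable differentiation of each loss, with \(w_\ell=-\ell'\), a case split at \(x=m\) for hinge and squared-hinge, and the stated zero-subgradient convention at the hinge kink. Your explicit check that squared-hinge has matching one-sided derivatives (both \(0\)) at \(x=m\) is a small clarification the paper leaves implicit when it writes the derivative as \(0\) on \(x\ge m\).
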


\begin{proof}
For logistic,
\begin{equation*}
\ell_{\log}'(x)
=
\frac{-e^{-x}}{1+e^{-x}}
=
-\frac{1}{1+e^x},
\end{equation*}
and hence \(w_{\log}(x)=1/(1+e^x)\). For exponential,
\begin{equation*}
\ell_{\exp}'(x)=-e^{-x},
\end{equation*}
so \(w_{\exp}(x)=e^{-x}\). For square,
\begin{equation*}
\ell_{\mathrm{sq},m}'(x)
=
2\cdot(m-x)\cdot(-1)
=
-2\cdot(m-x),
\end{equation*}
so \(w_{\mathrm{sq},m}(x)=2\cdot(m-x)\).

For hinge, \((m-x)_+\) has derivative \(-1\) on \(x<m\) and derivative \(0\) on \(x>m\). With the stated boundary convention, \(w_{\mathrm h,m}(x)=\1_{\{x<m\}}\). For squared-hinge, the derivative is \(-2\cdot(m-x)\) on \(x<m\) and \(0\) on \(x\ge m\), which gives \(w_{\mathrm{sqh},m}(x)=2\cdot(m-x)_+\).
\end{proof}

\subsection{Active-Side Correction Versus Class-Mean Collapse}
\label{app:active-side-surrogate-geometry}

The next statements describe what happens on a batch where all pairs are active, \(m_{ij}<m\). Define
\begin{equation*}
\bar s_+=\frac{1}{B}\sum_{i=1}^B f_\theta(x_i^+),
\qquad
\bar s_-=\frac{1}{B}\sum_{j=1}^B f_\theta(x_j^-),
\end{equation*}
\begin{equation*}
\bar{\bm g}_+=\frac{1}{B}\sum_{i=1}^B\bm g_i^+,
\qquad
\bar{\bm g}_-=\frac{1}{B}\sum_{j=1}^B\bm g_j^-,
\qquad
\bm d_B=\bar{\bm g}_- - \bar{\bm g}_+,
\end{equation*}
and the centered score-gradient terms
\begin{equation*}
\bm c_+=\frac{1}{B}\sum_{i=1}^B
\bigl(f_\theta(x_i^+)-\bar s_+\bigr)\cdot\bm g_i^+,
\qquad
\bm c_-=\frac{1}{B}\sum_{j=1}^B
\bigl(f_\theta(x_j^-)-\bar s_-\bigr)\cdot\bm g_j^-,
\end{equation*}
\begin{equation*}
\bm c_B=\bm c_+ + \bm c_-.
\end{equation*}
The vector \(\bm d_B\) is the class-mean gradient direction, while \(\bm c_B\) records within-class score-gradient variation.

\begin{proposition}[Hinge exact decoupling on the active region]
\label{prop:app-hinge-decoupling}
If all current pairs satisfy \(m_{ij}<m\), then the hinge mini-batch gradient satisfies
\begin{equation*}
\widehat{\bm g}_B^{\mathrm h,m}(\theta)
=
\bar{\bm g}_- - \bar{\bm g}_+,
\qquad
\bar{\bm g}_+
=
\frac{1}{B}\sum_{i=1}^B\bm g_i^+,
\quad
\bar{\bm g}_-
=
\frac{1}{B}\sum_{j=1}^B\bm g_j^-.
\end{equation*}
\end{proposition}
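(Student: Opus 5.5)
The plan is to substitute the hinge weight from Proposition~\ref{prop:app-canonical-weights} into the equal-size pairwise gradient Eq.~\eqref{eq:app-pairwise-gradient} and then separate the double sum. With the boundary convention chosen there, \(w_{\mathrm h,m}(x)=\1_{\{x<m\}}\). By the active-batch hypothesis, \(m_{ij}<m\) holds for every \(i,j\in\{1,\ldots,B\}\), so \(w_{\mathrm h,m}(m_{ij})=1\) for all \(B^2\) pairs. The hinge mini-batch gradient therefore reduces to the unweighted average
\begin{equation*}
\widehat{\bm g}_B^{\mathrm h,m}(\theta)
=
\frac{1}{B^2}\sum_{i=1}^B\sum_{j=1}^B(\bm g_j^- - \bm g_i^+).
\end{equation*}

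The second step splits this sum by linearity into two parts. In the first part the summand \(\bm g_j^-\) does not depend on \(i\), so summing over \(i\) contributes a factor \(B\), and this term becomes \(\frac{1}{B}\sum_j \bm g_j^-=\bar{\bm g}_-\). In the second part the summand \(\bm g_i^+\) does not depend on \(j\), so the same argument gives \(\frac{1}{B}\sum_i \bm g_i^+=\bar{\bm g}_+\). Subtracting the two parts yields \(\bar{\bm g}_- - \bar{\bm g}_+=\bm d_B\), which is the claimed identity.

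There is no genuine obstacle here. The only point needing care is the boundary case: a pair with \(m_{ij}=m\) would receive weight \(0\) under the stated convention and would spoil the identity. The hypothesis is a strict inequality \(m_{ij}<m\), so this case cannot occur. I would also remark that the identity is purely algebraic: it holds for every realization of the batch and uses no independence or distributional assumption.
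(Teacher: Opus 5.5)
Your proposal is correct and matches the paper's proof. On a fully active batch the hinge weight equals one for every pair, so the double average in Eq.~\eqref{eq:app-pairwise-gradient} splits by linearity into $\bar{\bm g}_- - \bar{\bm g}_+$.
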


\begin{proof}
If \(m_{ij}<m\) for every pair, then \(w_{\mathrm h,m}(m_{ij})=1\). Substituting this into Eq.~\eqref{eq:app-pairwise-gradient} gives
\begin{equation*}
\widehat{\bm g}_B^{\mathrm h,m}(\theta)
=
\frac{1}{B^2}\sum_{i=1}^B\sum_{j=1}^B
(\bm g_j^- - \bm g_i^+)
=
\bar{\bm g}_- - \bar{\bm g}_+.
\end{equation*}
\end{proof}

\begin{lemma}[Affine active-side decomposition]
\label{lem:app-affine-active-decomposition}
Suppose \(w_\ell(x)=a\cdot x+b\) on the current batch. Then
\begin{equation*}
\widehat{\bm g}_B^\ell(\theta)
=
\bigl(a\cdot(\bar s_+-\bar s_-)+b\bigr)\cdot \bm d_B
-a\cdot\bm c_B.
\end{equation*}
\end{lemma}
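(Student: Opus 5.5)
The identity is purely algebraic, so I will substitute the affine weight into the equal-size pairwise gradient Eq.~\eqref{eq:app-pairwise-gradient} and expand the double sum. Writing \(m_{ij}=s_i^+-s_j^-\) with \(s_i^+=f_\theta(x_i^+)\) and \(s_j^-=f_\theta(x_j^-)\), the weight becomes \(w_\ell(m_{ij})=a\cdot(s_i^+-s_j^-)+b\). Then
\begin{equation*}
\widehat{\bm g}_B^\ell(\theta)
=
\frac{1}{B^2}\sum_{i=1}^B\sum_{j=1}^B
\bigl(a\cdot s_i^+-a\cdot s_j^-+b\bigr)\cdot(\bm g_j^- - \bm g_i^+).
\end{equation*}
The constant part \(b\) contributes \(b\cdot\bm d_B\) immediately, exactly as in Proposition~\ref{prop:app-hinge-decoupling}.

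For the \(a\)-part, I will center the scores. Write \(s_i^+=\bar s_+ + \tilde s_i^+\) and \(s_j^-=\bar s_- + \tilde s_j^-\), where the centered scores \(\tilde s_i^+\) and \(\tilde s_j^-\) each average to zero over their class. The constant piece \(a\cdot(\bar s_+-\bar s_-)\) multiplies the double average of \(\bm g_j^- - \bm g_i^+\). That average is \(\bm d_B\), so this piece gives \(a\cdot(\bar s_+-\bar s_-)\cdot\bm d_B\). Adding the \(b\) term produces the first summand of the claim.

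The remaining term is \(\frac{a}{B^2}\sum_{i,j}(\tilde s_i^+-\tilde s_j^-)(\bm g_j^- - \bm g_i^+)\), which I split into four cross products.
\begin{itemize}
\item \(\tilde s_i^+\bm g_j^-\): summing over \(i\) first gives zero.
\item \(\tilde s_j^-\bm g_i^+\): summing over \(j\) first gives zero.
\item \(-\tilde s_i^+\bm g_i^+\): averaging gives \(-\bm c_+\).
\item \(-\tilde s_j^-\bm g_j^-\): averaging gives \(-\bm c_-\). This comes from \(-(\tilde s_j^-)(\bm g_j^-)\), where the minus sign in front of \(\tilde s_j^-\) pairs with the plus sign on \(\bm g_j^-\).
\end{itemize}
So the \(a\)-part contributes \(-a\cdot(\bm c_+ + \bm c_-)=-a\cdot\bm c_B\), which is the second summand.

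The only step needing care is the sign bookkeeping in this last expansion. I want both within-class terms to come out with the same sign, so that the result is \(-a\cdot\bm c_B\) rather than \(-a\cdot(\bm c_+-\bm c_-)\). The zero-mean property of the centered scores is what kills the two mixed terms. No probabilistic assumptions are needed, since the statement holds batchwise whenever \(w_\ell\) is affine on the observed margins.
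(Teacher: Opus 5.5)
Your proof is correct and takes essentially the same route as the paper. Both substitute the affine weight into the equal-size pairwise gradient, peel off the constant part as $b\cdot\bm d_B$, and expand the $a$-part into four cross products. The only cosmetic difference is ordering: you center the scores before expanding, so the two mixed terms vanish directly, whereas the paper expands with raw scores and then uses $\frac{1}{B}\sum_i f_\theta(x_i^+)\bm g_i^+=\bm c_+ + \bar s_+\bar{\bm g}_+$ (and its negative-class analogue) to recover $(\bar s_+-\bar s_-)\cdot\bm d_B-\bm c_B$.
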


\begin{proof}
Split the weight into constant and affine parts. The constant part gives
\begin{equation*}
\frac{b}{B^2}\sum_{i=1}^B\sum_{j=1}^B(\bm g_j^- - \bm g_i^+)
=
b\cdot\bm d_B.
\end{equation*}
For the affine part, using \(m_{ij}=f_\theta(x_i^+)-f_\theta(x_j^-)\),
\begin{align*}
\frac{1}{B^2}\sum_{i,j}m_{ij}\cdot(\bm g_j^- - \bm g_i^+)
&=
\frac{1}{B^2}\sum_{i,j}f_\theta(x_i^+)\cdot\bm g_j^-
-\frac{1}{B^2}\sum_{i,j}f_\theta(x_i^+)\cdot\bm g_i^+ \\
&\quad
-\frac{1}{B^2}\sum_{i,j}f_\theta(x_j^-)\cdot\bm g_j^-
+\frac{1}{B^2}\sum_{i,j}f_\theta(x_j^-)\cdot\bm g_i^+ \\
&=
\bar s_+\cdot\bar{\bm g}_-
-\frac{1}{B}\sum_i f_\theta(x_i^+)\cdot\bm g_i^+
-\frac{1}{B}\sum_j f_\theta(x_j^-)\cdot\bm g_j^- \\
&\quad
+\bar s_-\cdot\bar{\bm g}_+.
\end{align*}
By the definitions of \(\bm c_+\) and \(\bm c_-\),
\begin{equation*}
\frac{1}{B}\sum_i f_\theta(x_i^+)\cdot\bm g_i^+
=
\bm c_+ + \bar s_+\cdot\bar{\bm g}_+,
\qquad
\frac{1}{B}\sum_j f_\theta(x_j^-)\cdot\bm g_j^-
=
\bm c_- + \bar s_-\cdot\bar{\bm g}_-.
\end{equation*}
Substitution gives
\begin{equation*}
\frac{1}{B^2}\sum_{i,j}m_{ij}\cdot(\bm g_j^- - \bm g_i^+)
=
(\bar s_+-\bar s_-)\cdot(\bar{\bm g}_- - \bar{\bm g}_+)
-(\bm c_+ + \bm c_-),
\end{equation*}
or equivalently
\begin{equation*}
\frac{1}{B^2}\sum_{i,j}m_{ij}\cdot(\bm g_j^- - \bm g_i^+)
=
(\bar s_+-\bar s_-)\cdot\bm d_B-\bm c_B.
\end{equation*}
Multiplying this identity by \(a\) and adding the constant part proves the claim.
\end{proof}

\begin{corollary}[Squared-hinge retains within-class correction]
\label{cor:app-sqh-active-correction}
If all current pairs satisfy \(m_{ij}<m\), then
\begin{equation*}
\widehat{\bm g}_B^{\mathrm{sqh},m}(\theta)
=
2\cdot(m-\bar s_+ + \bar s_-)\cdot\bm d_B
+2\cdot\bm c_B.
\end{equation*}
Consequently,
\begin{equation*}
\dist\!\left(
\widehat{\bm g}_B^{\mathrm{sqh},m}(\theta),
\operatorname{span}\{\bm d_B\}
\right)
=
2\cdot
\dist\!\left(
\bm c_B,
\operatorname{span}\{\bm d_B\}
\right).
\end{equation*}
\end{corollary}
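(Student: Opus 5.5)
The plan is to apply Lemma~\ref{lem:app-affine-active-decomposition} directly. On a fully active batch every pair satisfies \(m_{ij}<m\). By Proposition~\ref{prop:app-canonical-weights}, the squared-hinge weight is
\[
w_{\mathrm{sqh},m}(m_{ij})=2\cdot(m-m_{ij})_+=2\cdot(m-m_{ij})
\]
for every pair in the batch. So on the current batch the weight is affine, \(w(x)=a\cdot x+b\), with \(a=-2\) and \(b=2m\).

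Substituting these values into the lemma gives
\[
\widehat{\bm g}_B^{\mathrm{sqh},m}(\theta)=\bigl(-2(\bar s_+-\bar s_-)+2m\bigr)\cdot\bm d_B+2\cdot\bm c_B .
\]
This is exactly the first display of the corollary, \(2(m-\bar s_++\bar s_-)\,\bm d_B+2\,\bm c_B\). The only care needed is the sign bookkeeping, since \(-a\cdot\bm c_B=+2\,\bm c_B\). One should also note that the lemma only needs the affine identity on the realized margins \(m_{ij}\), not globally; the kink at \(x=m\) is irrelevant because the inequality \(m_{ij}<m\) is strict.

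For the distance identity, write the gradient as \(\lambda\,\bm d_B+2\,\bm c_B\) with scalar \(\lambda=2(m-\bar s_++\bar s_-)\). The distance to a linear subspace \(S\) is invariant under translation by elements of \(S\), so
\[
\dist(\lambda\bm d_B+2\bm c_B,S)=\dist(2\bm c_B,S)\quad\text{for } S=\operatorname{span}\{\bm d_B\}.
\]
It is also positively homogeneous, because the orthogonal projection \(P_S^\perp\) is linear. Hence \(\dist(2\bm c_B,S)=\|P_S^\perp(2\bm c_B)\|_2=2\,\dist(\bm c_B,S)\). This argument covers the degenerate case \(\bm d_B=\bm 0\) as well, where the span is \(\{\bm 0\}\) and the distance reduces to the norm.

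There is no real obstacle here. The only step to double-check is the sign convention linking \(w_\ell=-\ell'\), the affine coefficients \((a,b)\), and the term \(-a\,\bm c_B\) in the lemma, so that the within-class term enters with coefficient \(+2\).
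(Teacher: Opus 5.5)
Your proposal is correct and follows the paper's proof: it applies Lemma~\ref{lem:app-affine-active-decomposition} with \(a=-2\), \(b=2m\), then observes that the \(\bm d_B\)-term lies in \(\operatorname{span}\{\bm d_B\}\), including the case \(\bm d_B=\bm 0\). Your explicit homogeneity step, \(\dist(2\bm c_B,S)=2\,\dist(\bm c_B,S)\), fills in the factor of \(2\) that the paper's proof leaves implicit.
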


\begin{proof}
On the active region, squared-hinge has
\begin{equation*}
w_{\mathrm{sqh},m}(x)=2\cdot(m-x)=-2\cdot x+2\cdot m.
\end{equation*}
Lemma~\ref{lem:app-affine-active-decomposition} applies with \(a=-2\) and \(b=2\cdot m\), giving the displayed decomposition.

Let \(L_\theta=\operatorname{span}\{\bm d_B\}\). The first term in the decomposition lies in \(L_\theta\), so the distance from \(\widehat{\bm g}_B^{\mathrm{sqh},m}(\theta)\) to \(L_\theta\) is exactly the distance from \(2\cdot\bm c_B\) to \(L_\theta\). This proves the distance identity, including the case \(\bm d_B=\bm 0\), where \(L_\theta=\{\bm 0\}\).
\end{proof}

\begin{proposition}[Logistic near-decoupling on severe tails]
\label{prop:app-logistic-decoupling}
Assume
\begin{equation*}
\|\bm g_i^+\|_2\le L_B(\theta),
\qquad
\|\bm g_j^-\|_2\le L_B(\theta)
\end{equation*}
for all \(i,j\), and define
\begin{equation*}
q_{B,M}(\theta)
=
\frac{1}{B^2}\sum_{i=1}^B\sum_{j=1}^B
\1_{\{m_{ij}(\theta)\le -M\}}.
\end{equation*}
Then
\begin{equation*}
\left\|
\widehat{\bm g}_B^{\log}(\theta)
-
(\bar{\bm g}_- - \bar{\bm g}_+)
\right\|_2
\le
2\cdot L_B(\theta)
\cdot
\{1-q_{B,M}(\theta)\cdot(1-e^{-M})\}.
\end{equation*}
\end{proposition}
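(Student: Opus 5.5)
The plan is to compare the logistic weights with the hinge-like constant weight \(1\). Subtracting \(\bar{\bm g}_- - \bar{\bm g}_+\), which is the batch average of the unweighted pair vectors \(\bm g_j^- - \bm g_i^+\), gives
\begin{equation*}
\widehat{\bm g}_B^{\log}(\theta)-(\bar{\bm g}_- - \bar{\bm g}_+)
=
\frac{1}{B^2}\sum_{i=1}^B\sum_{j=1}^B
\bigl(w_{\log}(m_{ij})-1\bigr)\cdot(\bm g_j^- - \bm g_i^+).
\end{equation*}
This is the same substitution as in Proposition~\ref{prop:app-hinge-decoupling}, now with weight deficit \(1-w_{\log}\). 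By the triangle inequality and \(\|\bm g_j^- - \bm g_i^+\|_2\le 2\cdot L_B(\theta)\), the norm is at most
\begin{equation*}
2\cdot L_B(\theta)\cdot\frac{1}{B^2}\sum_{i,j}\bigl|1-w_{\log}(m_{ij})\bigr|.
\end{equation*}
Proposition~\ref{prop:app-canonical-weights} gives \(w_{\log}(x)=1/(1+e^x)\in(0,1)\), so each deficit is nonnegative and equals \(1-w_{\log}(x)=e^x/(1+e^x)\).

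Next, split the pairs into the severe set \(\{m_{ij}\le -M\}\) and its complement. For every pair the deficit is at most \(1\). On the severe set,
\begin{equation*}
\frac{e^x}{1+e^x}\le e^x\le e^{-M}.
\end{equation*}
The average deficit is therefore bounded by
\begin{equation*}
q_{B,M}\cdot e^{-M}+(1-q_{B,M})
=
1-q_{B,M}\cdot(1-e^{-M}).
\end{equation*}
Multiplying by \(2\cdot L_B(\theta)\) gives the stated bound.

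No step is a real obstacle. The only care needed is the sign convention: the deficit must be \(1-w_{\log}\), which is small in the severe tail \(x\to-\infty\), and not \(w_{\log}\) itself. A tighter bound \(e^{-M}/(1+e^{-M})\) is available on the severe set but is not needed.
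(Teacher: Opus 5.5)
Your proposal is correct and matches the paper's proof essentially step for step. Both write the difference as the pair average of the deficits \(1-w_{\log}(m_{ij})\) times \(\bm g_j^- - \bm g_i^+\), bound each pair vector by \(2\cdot L_B(\theta)\), and bound the deficit by \(e^{-M}\) on severe pairs and by \(1\) on the remaining pairs.
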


\begin{proof}
Let \(\delta_{ij}=1-w_{\log}(m_{ij})\). Since
\begin{equation*}
w_{\log}(x)=\frac{1}{1+e^x},
\end{equation*}
we have
\begin{equation*}
\widehat{\bm g}_B^{\log}(\theta)-(\bar{\bm g}_- - \bar{\bm g}_+)
=
-\frac{1}{B^2}\sum_{i=1}^B\sum_{j=1}^B
\delta_{ij}\cdot(\bm g_j^- - \bm g_i^+).
\end{equation*}
Taking norms and using
\begin{equation*}
\|\bm g_j^- - \bm g_i^+\|_2\le2\cdot L_B(\theta)
\end{equation*}
gives
\begin{equation*}
\left\|
\widehat{\bm g}_B^{\log}(\theta)-(\bar{\bm g}_- - \bar{\bm g}_+)
\right\|_2
\le
\frac{2\cdot L_B(\theta)}{B^2}
\sum_{i=1}^B\sum_{j=1}^B\delta_{ij}.
\end{equation*}
If \(m_{ij}\le -M\), then
\begin{equation*}
\delta_{ij}
=
\frac{e^{m_{ij}}}{1+e^{m_{ij}}}
\le
e^{-M}.
\end{equation*}
If \(m_{ij}>-M\), then \(\delta_{ij}\le1\). Hence
\begin{equation*}
\frac{1}{B^2}\sum_{i,j}\delta_{ij}
\le
q_{B,M}(\theta)\cdot e^{-M}
+
(1-q_{B,M}(\theta))
=
1-q_{B,M}(\theta)\cdot(1-e^{-M}).
\end{equation*}
Substituting this bound proves the result.
\end{proof}

\subsection{Easy-Side Silence Versus Overshoot}
\label{app:easy-side-surrogate-geometry}

\begin{proposition}[Active-linear easy-side silent profile of squared-hinge within the studied family]
\label{prop:app-sqh-unique-profile}
Among logistic, exponential, square, hinge, and squared-hinge, squared-hinge has a weight that is affine with nonzero slope on the whole active region \(x<m\) and identically zero on the whole easy region \(x\ge m\).
\end{proposition}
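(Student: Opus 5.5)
The plan is a direct case check over the five canonical weights computed in Proposition~\ref{prop:app-canonical-weights}. For each loss we test two properties. (P1) The weight is affine with nonzero slope on the entire active region \(x<m\). (P2) The weight vanishes identically on the entire easy region \(x\ge m\). We first show that squared-hinge satisfies both. We then show that each of the other four violates at least one, which establishes that squared-hinge is the only member of the family with this profile. For the two losses that carry no margin parameter (logistic and exponential), the active/easy split is read with respect to an arbitrary fixed \(m\). This suffices, because they will fail (P2) for every \(m\).

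For squared-hinge, Proposition~\ref{prop:app-canonical-weights} gives \(w_{\mathrm{sqh},m}(x)=2(m-x)_+\). On \(x<m\) this equals \(-2x+2m\), which is affine with slope \(-2\ne0\). On \(x\ge m\) it equals \(0\). Both properties hold, which is the affine form already used in Corollary~\ref{cor:app-sqh-active-correction}.

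The exclusions are as follows.
\begin{itemize}
\item Hinge: \(w_{\mathrm h,m}=\1_{\{x<m\}}\) is constant on the active region, so its slope is zero and (P1) fails, although (P2) holds.
\item Square: \(w_{\mathrm{sq},m}(x)=2(m-x)\) is affine with slope \(-2\) everywhere, so (P1) holds. However, at any \(x>m\) it equals \(2(m-x)\neq0\), so (P2) fails.
\item Exponential: \(w_{\exp}(x)=e^{-x}>0\) for every \(x\), so (P2) fails. It is also not affine, since its second derivative \(e^{-x}>0\), so (P1) fails as well.
\item Logistic: \(w_{\log}(x)=1/(1+e^x)>0\) everywhere, so (P2) fails. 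It is not affine on any interval, because its second derivative \(e^x(e^x-1)/(1+e^x)^3\) vanishes only at \(x=0\), so (P1) fails too.
\end{itemize}

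No step is genuinely hard. The only point needing care is the boundary convention for hinge at \(x=m\), which Proposition~\ref{prop:app-canonical-weights} fixes to zero, and the corresponding behavior of squared-hinge at \(x=m\), where \(2(m-m)_+=0\) makes the easy side closed. For the two non-affine weights, exhibiting a nonvanishing second derivative is the cleanest route. Since failure of (P2) alone already excludes them, that second-derivative computation is optional.
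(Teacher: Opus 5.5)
Your proposal is correct and follows essentially the same route as the paper's proof: a direct case check over the five explicit weights from Proposition~\ref{prop:app-canonical-weights}, with squared-hinge the only one that is both affine with nonzero slope on \(x<m\) and identically zero on \(x\ge m\). Your second-derivative computations for logistic and exponential are correct but, as you note, optional, since strict positivity already excludes both.
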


\begin{proof}
Use the explicit weights in Proposition~\ref{prop:app-canonical-weights}. Logistic has
\begin{equation*}
w_{\log}(x)=\frac{1}{1+e^x},
\end{equation*}
which is not affine on \(x<m\) and is strictly positive for every \(x\). Exponential has
\begin{equation*}
w_{\exp}(x)=e^{-x},
\end{equation*}
which is also non-affine on \(x<m\) and strictly positive everywhere. Hinge has
\begin{equation*}
w_{\mathrm h,m}(x)=\1_{\{x<m\}},
\end{equation*}
so it is silent on \(x\ge m\), but it is constant on the active side and has zero active-side slope. Square has
\begin{equation*}
w_{\mathrm{sq},m}(x)=2\cdot(m-x),
\end{equation*}
which is affine with nonzero slope and remains nonzero on the easy side. Squared-hinge has
\begin{equation*}
w_{\mathrm{sqh},m}(x)=2\cdot(m-x)_+,
\end{equation*}
which equals \(2\cdot(m-x)\) on \(x<m\) and \(0\) on \(x\ge m\). It is therefore the only member of the five with both properties.
\end{proof}

\begin{proposition}[Exact square-versus-squared-hinge separation]
\label{prop:app-square-sqh-separation}
Square and squared-hinge satisfy
\begin{equation*}
\widehat{\bm g}_B^{\mathrm{sq},m}(\theta)
=
\widehat{\bm g}_B^{\mathrm{sqh},m}(\theta)
-\frac{2}{B^2}\sum_{i,j}(m_{ij}-m)_+\cdot(\bm g_j^- - \bm g_i^+).
\end{equation*}
Consequently, if
\begin{equation*}
\|\bm g_i^+\|_2\le L_B(\theta),
\qquad
\|\bm g_j^-\|_2\le L_B(\theta)
\end{equation*}
for all \(i,j\), then
\begin{equation*}
\left\|
\widehat{\bm g}_B^{\mathrm{sq},m}(\theta)
-
\widehat{\bm g}_B^{\mathrm{sqh},m}(\theta)
\right\|_2
\le
4\cdot L_B(\theta)\cdot
\frac{1}{B^2}\sum_{i,j}(m_{ij}-m)_+.
\end{equation*}
\end{proposition}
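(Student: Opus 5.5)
The plan is a pointwise comparison of the two weights, followed by the triangle inequality. From Proposition~\ref{prop:app-canonical-weights}, $w_{\mathrm{sq},m}(x)=2(m-x)$ and $w_{\mathrm{sqh},m}(x)=2(m-x)_+$. The elementary identity $u=u_+-(-u)_+$, applied with $u=m-x$, gives
\[
w_{\mathrm{sq},m}(x)=w_{\mathrm{sqh},m}(x)-2\,(x-m)_+
\]
for every real $x$. This holds on both sides of the margin: for $x<m$ the correction term vanishes, and for $x\ge m$ the squared-hinge weight vanishes and $-2(x-m)=2(m-x)$.

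Next, substitute $x=m_{ij}$ into the pairwise gradient formula Eq.~\eqref{eq:app-pairwise-gradient} for both losses. Linearity of the double sum in the weights then yields
\[
\widehat{\bm g}_B^{\mathrm{sq},m}(\theta)-\widehat{\bm g}_B^{\mathrm{sqh},m}(\theta)
=-\frac{2}{B^2}\sum_{i,j}(m_{ij}-m)_+\,(\bm g_j^- - \bm g_i^+).
\]
This is exactly the first claimed identity. No probabilistic input is needed, since it is an algebraic statement about a fixed batch.

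For the norm bound, take norms of the difference and apply the triangle inequality termwise, using $(m_{ij}-m)_+\ge0$. The gradient bounds give $\|\bm g_j^- - \bm g_i^+\|_2\le\|\bm g_j^-\|_2+\|\bm g_i^+\|_2\le 2L_B(\theta)$. Multiplying by the factor $2/B^2$ then gives the constant $4L_B(\theta)$ times the average easy-side overshoot $\frac{1}{B^2}\sum_{i,j}(m_{ij}-m)_+$.

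There is no real obstacle here. The only point needing care is the sign bookkeeping in the weight identity, specifically checking that the excess term carries a minus sign with $(m_{ij}-m)_+$ rather than $(m-m_{ij})_+$. The boundary $x=m$ causes no ambiguity, because both weights and the correction term vanish there.
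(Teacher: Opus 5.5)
Your proposal is correct and follows the paper's proof essentially step for step. You use the pointwise identity $w_{\mathrm{sq},m}(x)=w_{\mathrm{sqh},m}(x)-2(x-m)_+$, which comes from $m-x=(m-x)_+-(x-m)_+$, substitute it into the pairwise gradient by linearity, and then apply the triangle inequality with $\|\bm g_j^- - \bm g_i^+\|_2\le 2L_B(\theta)$.
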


\begin{proof}
For every scalar \(x\),
\begin{equation*}
m-x=(m-x)_+-(x-m)_+.
\end{equation*}
Multiplying by \(2\) gives
\begin{equation*}
w_{\mathrm{sq},m}(x)
=
2\cdot(m-x)
=
2\cdot(m-x)_+-2\cdot(x-m)_+
=
w_{\mathrm{sqh},m}(x)-2\cdot(x-m)_+.
\end{equation*}
Substituting this identity into Eq.~\eqref{eq:app-pairwise-gradient} proves the exact separation formula.

For the norm bound, apply the triangle inequality:
\begin{equation*}
\left\|
\widehat{\bm g}_B^{\mathrm{sq},m}(\theta)
-
\widehat{\bm g}_B^{\mathrm{sqh},m}(\theta)
\right\|_2
\le
\frac{2}{B^2}\sum_{i,j}(m_{ij}-m)_+\cdot
\|\bm g_j^- - \bm g_i^+\|_2.
\end{equation*}
The assumed bounds imply
\begin{equation*}
\|\bm g_j^- - \bm g_i^+\|_2
\le
\|\bm g_j^-\|_2+\|\bm g_i^+\|_2
\le
2\cdot L_B(\theta),
\end{equation*}
which gives the displayed inequality.
\end{proof}

\subsection{Hard-Tail Alignment}
\label{app:hard-tail-surrogate-geometry}

\begin{proposition}[Hard-pair reweighting increments]
\label{prop:app-hard-tail-increments}
Fix \(t>0\). Then
\begin{equation*}
\lim_{x\to-\infty}
\bigl(w_{\log}(x-t)-w_{\log}(x)\bigr)
=0,
\end{equation*}
\begin{equation*}
w_{\mathrm{sqh},m}(x-t)-w_{\mathrm{sqh},m}(x)=2\cdot t
\qquad
\text{whenever }x<m-t,
\end{equation*}
and
\begin{equation*}
w_{\exp}(x-t)-w_{\exp}(x)
=
(e^t-1)\cdot e^{-x}
\to\infty
\qquad
\text{as }x\to-\infty.
\end{equation*}
\end{proposition}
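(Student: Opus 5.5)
The plan is to substitute the explicit weights from Proposition~\ref{prop:app-canonical-weights} directly, treating the three surrogates separately. Each claim then reduces to an elementary one-variable computation. No batch or distributional structure is needed, since the statement concerns only the scalar weight profiles.

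For logistic, I will write
\[
w_{\log}(x-t)-w_{\log}(x)=\frac{1}{1+e^{x-t}}-\frac{1}{1+e^{x}}
=\frac{e^{x}-e^{x-t}}{(1+e^{x-t})(1+e^{x})}.
\]
The numerator equals \(e^{x}(1-e^{-t})\), which tends to \(0\) as \(x\to-\infty\). The denominator is at least \(1\). Hence the difference is bounded in absolute value by \((1-e^{-t})e^{x}\to0\). Equivalently, both weights converge to \(1\), so their difference vanishes.

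For squared-hinge, the key observation is the region condition. If \(x<m-t\), then both \(x<m\) and \(x-t<m\), so both arguments lie in the active region. There \(w_{\mathrm{sqh},m}(y)=2(m-y)\), and the difference is
\[
2(m-x+t)-2(m-x)=2t.
\]
For exponential, I will factor directly:
\[
e^{-(x-t)}-e^{-x}=(e^{t}-1)e^{-x}.
\]
Since \(t>0\), the factor \(e^{t}-1\) is a positive constant. Because \(e^{-x}\to\infty\) as \(x\to-\infty\), the product diverges.

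None of these steps is a real obstacle. The only point requiring care is the squared-hinge case, where one must check that the shifted argument \(x-t\) also stays in the active region. The hypothesis \(x<m-t\) guarantees this, and indeed it is weaker than what is needed, because \(x<m\) alone already implies \(x-t<m\). In the logistic case, I will state explicitly that the bound is uniform in sign, so that the limit holds without any appeal to monotonicity.
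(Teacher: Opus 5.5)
Your proposal is correct and follows the paper's proof essentially step by step: substitute the canonical weights, check that both $x$ and $x-t$ lie in the active region for squared-hinge, and factor out $(e^{t}-1)e^{-x}$ for exponential. Your explicit logistic bound $(1-e^{-t})e^{x}$ is a slightly more quantitative version of the paper's remark that both logistic weights tend to $1$. One wording slip: the hypothesis $x<m-t$ is \emph{stronger} than necessary, not weaker, since, as you correctly observe, $x<m$ alone already gives $x-t<m$ and hence the increment $2t$.
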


\begin{proof}
For logistic, \(w_{\log}(x)=1/(1+e^x)\), and both \(w_{\log}(x)\) and \(w_{\log}(x-t)\) converge to \(1\) as \(x\to-\infty\). This proves the first limit.

For squared-hinge, if \(x<m-t\), then both \(x\) and \(x-t\) are active. Hence
\begin{equation*}
w_{\mathrm{sqh},m}(x-t)-w_{\mathrm{sqh},m}(x)
=
2\cdot(m-x+t)-2\cdot(m-x)
=
2\cdot t.
\end{equation*}
For exponential,
\begin{equation*}
w_{\exp}(x-t)-w_{\exp}(x)
=
e^{-(x-t)}-e^{-x}
=
(e^t-1)\cdot e^{-x},
\end{equation*}
which diverges because \(t>0\) and \(e^{-x}\to\infty\).
\end{proof}

\begin{proposition}[Loss-gradient coupling in the hard tail]
\label{prop:app-loss-gradient-coupling}
The hard-tail ratios satisfy
\begin{equation*}
\frac{w_{\log}(x)^2}{\ell_{\log}(x)}\to0
\qquad
\text{as }x\to-\infty,
\end{equation*}
\begin{equation*}
\frac{w_{\mathrm{sqh},m}(x)^2}{\ell_{\mathrm{sqh},m}(x)}=4
\qquad
\text{for every }x<m,
\end{equation*}
and
\begin{equation*}
\frac{w_{\exp}(x)^2}{\ell_{\exp}(x)}
=e^{-x}\to\infty
\qquad
\text{as }x\to-\infty.
\end{equation*}
\end{proposition}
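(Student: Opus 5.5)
The proof reduces each of the three claims to the explicit weights and losses in Proposition~\ref{prop:app-canonical-weights}, followed by an elementary limit for each. Set $u=-x$, so that $x\to-\infty$ means $u\to+\infty$. The three cases are independent; I would treat squared-hinge first because it is an exact identity, then exponential, then logistic, which is the only case needing an asymptotic argument.

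\emph{Squared-hinge.} For $x<m$ we have $w_{\mathrm{sqh},m}(x)=2(m-x)$ and $\ell_{\mathrm{sqh},m}(x)=(m-x)^2$, and $m-x>0$. The ratio is therefore $4(m-x)^2/(m-x)^2=4$ exactly. The restriction to $x<m$ is what keeps the denominator nonzero.

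\emph{Exponential.} Here $w_{\exp}(x)^2=e^{-2x}$ and $\ell_{\exp}(x)=e^{-x}$, so the ratio equals $e^{-x}$ identically. This tends to $+\infty$ as $x\to-\infty$.

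\emph{Logistic.} The numerator $w_{\log}(x)^2=(1+e^{x})^{-2}$ tends to $1$ and is bounded by $1$ for all $x$. For the denominator, write $\ell_{\log}(x)=\log(1+e^{u})=u+\log(1+e^{-u})\ge u$. The ratio is then at most $1/u=1/(-x)$, which tends to $0$; this explicit bound avoids citing L'Hôpital.

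The only step needing care is the logistic denominator, and it is settled by the lower bound $\log(1+e^{u})\ge u$. Everything else is a direct substitution.
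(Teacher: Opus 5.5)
Your proof is correct and follows the paper's argument. Squared-hinge and exponential are handled by direct substitution of the canonical weights, and the logistic case uses a numerator bounded by $1$ over a denominator growing like $-x$. Your explicit bound $\log(1+e^{u})\ge u$ is a slightly more quantitative form of the paper's asymptotic $\ell_{\log}(x)\sim -x$.
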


\begin{proof}
For logistic, \(w_{\log}(x)\to1\) as \(x\to-\infty\), while
\begin{equation*}
\ell_{\log}(x)=\log(1+e^{-x})\sim -x\to\infty.
\end{equation*}
Thus \(w_{\log}(x)^2/\ell_{\log}(x)\to0\). For squared-hinge and \(x<m\),
\begin{equation*}
w_{\mathrm{sqh},m}(x)=2\cdot(m-x),
\qquad
\ell_{\mathrm{sqh},m}(x)=(m-x)^2,
\end{equation*}
so the ratio is \(4\). For exponential,
\begin{equation*}
\frac{w_{\exp}(x)^2}{\ell_{\exp}(x)}
=
\frac{e^{-2x}}{e^{-x}}
=
e^{-x},
\end{equation*}
which diverges as \(x\to-\infty\).
\end{proof}

\subsection{Projected Stochastic Separation}
\label{app:projected-stochastic-surrogate-separation}

\begin{proposition}[Variance separation example: hinge versus squared-hinge]
\label{prop:app-hinge-sqh-variance-separation}
Fix \(\theta\) and a unit vector \(\bm v\). Assume all sampled pairs satisfy \(m_{ij}<m\) almost surely, and there exist constants \(c_+\) and \(c_-\) such that
\begin{equation*}
\langle\nabla_\theta f_\theta(X^+),\bm v\rangle=c_+
\quad\text{a.s.},
\qquad
\langle\nabla_\theta f_\theta(X^-),\bm v\rangle=c_-
\quad\text{a.s.}
\end{equation*}
Then
\begin{equation*}
\Var\!\left(
\langle\widehat{\bm g}_B^{\mathrm h,m}(\theta),\bm v\rangle
\right)
=0,
\end{equation*}
whereas
\begin{equation*}
\Var\!\left(
\langle\widehat{\bm g}_B^{\mathrm{sqh},m}(\theta),\bm v\rangle
\right)
=
\frac{
4\cdot(c_- - c_+)^2
}{B}
\cdot
\left\{
\Var(f_\theta(X^+))+\Var(f_\theta(X^-))
\right\}.
\end{equation*}
\end{proposition}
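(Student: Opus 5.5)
The plan is to compute the projected mini-batch gradient for both losses in closed form, using only the two structural assumptions: every pair is active, and the projected score gradients are constant within each class. Once the projection is written as an explicit function of the batch scores, the variance follows by elementary independence. Theorem~\ref{thm:app-hoeffding} serves only as a cross-check.

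For hinge, I would invoke Proposition~\ref{prop:app-hinge-decoupling}. Since every sampled pair satisfies \(m_{ij}<m\) almost surely, we have \(w_{\mathrm h,m}(m_{ij})=1\), so \(\widehat{\bm g}_B^{\mathrm h,m}(\theta)=\bar{\bm g}_- - \bar{\bm g}_+\). Taking the inner product with \(\bm v\) and using \(\langle \bm g_i^+,\bm v\rangle=c_+\) and \(\langle \bm g_j^-,\bm v\rangle=c_-\) almost surely gives \(\langle\widehat{\bm g}_B^{\mathrm h,m}(\theta),\bm v\rangle=c_- - c_+\) almost surely. This is a deterministic constant, so its variance is zero.

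For squared-hinge, the active weight is affine, \(w_{\mathrm{sqh},m}(x)=2(m-x)\). I would therefore use Corollary~\ref{cor:app-sqh-active-correction}, or equivalently project the pairwise sum directly. Each pair term projects to \(2(m-m_{ij})(c_- - c_+)\). Averaging over the \(B^2\) pairs, and using that \(m_{ij}\) is additive so that the average of \(m_{ij}\) equals \(\bar s_+-\bar s_-\), yields
\[
\langle\widehat{\bm g}_B^{\mathrm{sqh},m}(\theta),\bm v\rangle
=2\,(c_- - c_+)\,\bigl(m-\bar s_+ + \bar s_-\bigr).
\]
Equivalently, the projections of the terms \(\bm c_\pm\) in the corollary vanish, because the centered scores sum to zero when multiplied by a constant projected gradient. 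The positive and negative batches are independent and i.i.d. within each class, so \(\Var(\bar s_+)=\Var(f_\theta(X^+))/B\) and \(\Var(\bar s_-)=\Var(f_\theta(X^-))/B\), and these two variances add. Multiplying by the factor \(4(c_- - c_+)^2\) gives the stated formula.

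As a consistency check against Proposition~\ref{prop:main-hoeffding-collapse}, note that the kernel \(\psi(x^+,x^-)=2(c_- - c_+)(m-f_\theta(x^+)+f_\theta(x^-))\) is additive in its two arguments. Hence \(\psi^{(1,1)}\equiv 0\), and the whole variance sits in the \(B^{-1}\) first-order terms, as claimed in the main text.

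The only delicate point is integrability. The formula requires \(\Var(f_\theta(X^\pm))<\infty\), which I would state as an implicit standing assumption, since it is also needed for the square-integrability hypothesis of Theorem~\ref{thm:app-hoeffding}. Otherwise the argument is routine; the main thing to get right is the sign convention \(w_\ell=-\ell'\) with the factor \((\bm g_j^- - \bm g_i^+)\). The sign does not matter in the end because it enters squared.
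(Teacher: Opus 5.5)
Your proposal is correct and follows the paper's proof essentially step for step. For hinge, the active-region decoupling gives the deterministic projection $c_- - c_+$. For squared-hinge, Corollary~\ref{cor:app-sqh-active-correction} applies, the projection of $\bm c_B$ vanishes, and the variance of $m-\bar s_+ + \bar s_-$ splits by class independence; your additive-kernel cross-check ($\psi^{(1,1)}\equiv 0$) and the explicit finite-variance caveat are harmless additions the paper does not make.
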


\begin{proof}
By Proposition~\ref{prop:app-hinge-decoupling},
\begin{equation*}
\widehat{\bm g}_B^{\mathrm h,m}(\theta)=\bar{\bm g}_- - \bar{\bm g}_+.
\end{equation*}
Projecting onto \(\bm v\) gives
\begin{equation*}
\langle\widehat{\bm g}_B^{\mathrm h,m}(\theta),\bm v\rangle
=
c_- - c_+,
\end{equation*}
which is deterministic, so the hinge variance is zero.

For squared-hinge, Corollary~\ref{cor:app-sqh-active-correction} gives
\begin{equation*}
\widehat{\bm g}_B^{\mathrm{sqh},m}(\theta)
=
2\cdot(m-\bar s_+ + \bar s_-)\cdot\bm d_B
+2\cdot\bm c_B.
\end{equation*}
The constant projected-gradient assumptions imply
\begin{equation*}
\langle\bm d_B,\bm v\rangle=c_- - c_+.
\end{equation*}
They also imply
\begin{equation*}
\langle\bm c_B,\bm v\rangle
=
\frac{1}{B}\sum_i
\bigl(f_\theta(X_i^+)-\bar s_+\bigr)\cdot c_+
+
\frac{1}{B}\sum_j
\bigl(f_\theta(X_j^-)-\bar s_-\bigr)\cdot c_-
=0,
\end{equation*}
because both centered sums vanish. Hence
\begin{equation*}
\langle\widehat{\bm g}_B^{\mathrm{sqh},m}(\theta),\bm v\rangle
=
2\cdot(c_- - c_+)\cdot(m-\bar s_+ + \bar s_-).
\end{equation*}
The constant \(m\) does not affect variance, and the positive and negative batches are independent. Therefore
\begin{equation*}
\Var(m-\bar s_+ + \bar s_-)
=
\Var(\bar s_+)+\Var(\bar s_-)
=
\frac{\Var(f_\theta(X^+))+\Var(f_\theta(X^-))}{B}.
\end{equation*}
Multiplying by \(4\cdot(c_- - c_+)^2\) proves the displayed formula.
\end{proof}

\section{Changing Where Pairwise AUC Optimization Begins}
\label{app:starting-region}

This section gives the formal statements behind Section~\ref{sec:warmup-pretraining}. The results have a common structure. CE warm-up can change the starting point of the AUC phase by reducing severe positive-negative misrankings. Parameter-efficient fine-tuning can reduce the need for a separate warm-up stage when the trainable subspace starts in, and remains inside, a favorable neighborhood where severe misranking is controlled and first-order projected stochastic motion persists over the finite training horizon.

\subsection{CE Warm-Up and Severe-Misranking Mass}
\label{app:ce-warmup-start}

\begin{proposition}[Pointwise CE risk controls severe-misranking mass]
\label{prop:app-ce-risk-severe-mass}
Define the pointwise logistic risk
\begin{equation*}
R_{\mathrm{CE}}(\theta)
=
\E[\log(1+e^{-f_\theta(X^+)})]
+
\E[\log(1+e^{f_\theta(X^-)})].
\end{equation*}
Then, for every \(M>0\),
\begin{equation*}
\pi_M(\theta)
\le
\frac{R_{\mathrm{CE}}(\theta)}{\log(1+e^{M/2})}.
\end{equation*}
\end{proposition}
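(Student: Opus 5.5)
The plan is a union bound over the severe event followed by Markov's inequality, applied separately to the two pointwise terms of \(R_{\mathrm{CE}}\). Fix \(M>0\) and let \(X^+\sim P^+\), \(X^-\sim P^-\) be independent. On the severe event \(\{m_\theta(X^+,X^-)\le -M\}\) we have \(f_\theta(X^+)\le f_\theta(X^-)-M\), so at least one of the two scores is badly placed:
\[
\{m_\theta(X^+,X^-)\le -M\}
\subseteq
\{f_\theta(X^+)\le -M/2\}\cup\{f_\theta(X^-)\ge M/2\}.
\]
Indeed, if \(f_\theta(X^+)>-M/2\) and \(f_\theta(X^-)<M/2\), then \(m_\theta>-M\). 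Hence
\[
\pi_M(\theta)\le \Pbb(f_\theta(X^+)\le -M/2)+\Pbb(f_\theta(X^-)\ge M/2).
\]

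Next we control each probability by the corresponding CE term. The map \(s\mapsto\log(1+e^{-s})\) is nonnegative and decreasing, so on \(\{f_\theta(X^+)\le -M/2\}\) it is at least \(\log(1+e^{M/2})\). Markov's inequality then gives
\[
\Pbb(f_\theta(X^+)\le -M/2)\le \E[\log(1+e^{-f_\theta(X^+)})]/\log(1+e^{M/2}).
\]
Symmetrically, \(s\mapsto\log(1+e^{s})\) is nonnegative and increasing, so
\[
\Pbb(f_\theta(X^-)\ge M/2)\le \E[\log(1+e^{f_\theta(X^-)})]/\log(1+e^{M/2}).
\]
Adding the two bounds yields \(\pi_M(\theta)\le R_{\mathrm{CE}}(\theta)/\log(1+e^{M/2})\).

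The consequence for a warm-up endpoint stated in Proposition~\ref{prop:main-ce-risk-severe-mass} follows by substituting the hypothesis \(R_{\mathrm{CE}}(\theta_{\rm wu})\le\delta\log(1+e^{M/2})\). Independence of the two classes is not even needed, since we only use the union bound and the marginal laws.

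The only real choice is the symmetric split at \(\pm M/2\), and it is the natural one because it makes both thresholds give the same denominator. No step should be a serious obstacle. If \(R_{\mathrm{CE}}\) is infinite, the bound holds trivially.
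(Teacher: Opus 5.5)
Your proposal is correct and follows essentially the same route as the paper's proof: the inclusion of the severe event in \(\{f_\theta(X^+)\le -M/2\}\cup\{f_\theta(X^-)\ge M/2\}\), a union bound, and Markov's inequality applied to each nonnegative CE term with the common threshold \(\log(1+e^{M/2})\). Your side remarks are also accurate: only the marginal laws are used, and the bound is trivial when \(R_{\mathrm{CE}}(\theta)=\infty\).
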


\begin{proof}
The severe-misranking event is
\begin{equation*}
A
=
\{m_\theta(X^+,X^-)\le -M\}
=
\{f_\theta(X^-)-f_\theta(X^+)\ge M\}.
\end{equation*}
Let
\begin{equation*}
B_+=\{f_\theta(X^+)\le -M/2\},
\qquad
B_-=\{f_\theta(X^-)\ge M/2\}.
\end{equation*}
If \(A\) occurs and neither \(B_+\) nor \(B_-\) occurs, then
\begin{equation*}
f_\theta(X^+)>-M/2,
\qquad
f_\theta(X^-)<M/2,
\end{equation*}
so \(f_\theta(X^-)-f_\theta(X^+)<M\), contradicting \(A\). Therefore \(A\subseteq B_+\cup B_-\), and
\begin{equation*}
\pi_M(\theta)\le \Pbb(B_+)+\Pbb(B_-).
\end{equation*}
On \(B_+\),
\begin{equation*}
\log(1+e^{-f_\theta(X^+)})\ge \log(1+e^{M/2}),
\end{equation*}
and Markov's inequality gives
\begin{equation*}
\Pbb(B_+)
\le
\frac{\E[\log(1+e^{-f_\theta(X^+)})]}{\log(1+e^{M/2})}.
\end{equation*}
The same argument gives
\begin{equation*}
\Pbb(B_-)
\le
\frac{\E[\log(1+e^{f_\theta(X^-)})]}{\log(1+e^{M/2})}.
\end{equation*}
Adding the two inequalities proves the claim.
\end{proof}

\begin{corollary}[Small CE risk implies low severe-misranking mass]
\label{cor:app-ce-small-risk-low-severe-mass}
If a warm-up endpoint \(\theta_{\rm wu}\) satisfies
\begin{equation*}
R_{\mathrm{CE}}(\theta_{\rm wu})
\le
\delta\cdot\log(1+e^{M/2}),
\end{equation*}
then
\begin{equation*}
\pi_M(\theta_{\rm wu})\le \delta.
\end{equation*}
\end{corollary}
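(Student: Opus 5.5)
This follows directly from Proposition~\ref{prop:app-ce-risk-severe-mass}, which gives, for every \(\theta\) and every \(M>0\),
\begin{equation*}
\pi_M(\theta)\le \frac{R_{\mathrm{CE}}(\theta)}{\log(1+e^{M/2})}.
\end{equation*}
I will specialize this bound to \(\theta=\theta_{\rm wu}\) and then substitute the hypothesis on the warm-up endpoint. No new probabilistic argument is needed: the event inclusion \(A\subseteq B_+\cup B_-\) and the Markov-inequality step are already handled in that proposition.

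The key steps are as follows. First, fix \(M>0\) and note that the denominator \(\log(1+e^{M/2})\) is strictly positive, since \(e^{M/2}>0\) gives \(1+e^{M/2}>1\). Dividing by it is therefore legitimate and preserves inequalities. Second, apply Proposition~\ref{prop:app-ce-risk-severe-mass} at \(\theta_{\rm wu}\). Third, use the assumed bound \(R_{\mathrm{CE}}(\theta_{\rm wu})\le\delta\cdot\log(1+e^{M/2})\) in the numerator:
\begin{equation*}
\pi_M(\theta_{\rm wu})\le\frac{\delta\cdot\log(1+e^{M/2})}{\log(1+e^{M/2})}=\delta.
\end{equation*}

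There is no real obstacle here. The only point needing care is the positivity of the normalizer, which holds for every \(M>0\); it would even hold at \(M=0\), where the normalizer equals \(\log 2\). It may also be worth remarking that \(R_{\mathrm{CE}}\) is finite whenever the hypothesis holds, so the bound is meaningful. Finally, the conclusion is informative only for \(\delta<1\), because \(\pi_M\le 1\) holds trivially.
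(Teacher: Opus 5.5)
Your proposal is correct and is the same argument as the paper's proof: apply Proposition~\ref{prop:app-ce-risk-severe-mass} at \(\theta_{\rm wu}\), then substitute the assumed bound \(R_{\mathrm{CE}}(\theta_{\rm wu})\le\delta\cdot\log(1+e^{M/2})\). Your side remarks about the normalizer being positive and about the bound being informative only for \(\delta<1\) are accurate but not needed.
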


\begin{proof}
Apply Proposition~\ref{prop:app-ce-risk-severe-mass} at \(\theta_{\rm wu}\) and substitute the assumed upper bound on \(R_{\mathrm{CE}}(\theta_{\rm wu})\).
\end{proof}

\begin{proposition}[Finite-step CE warm-up under local PL and smoothness]
\label{prop:app-ce-finite-step-warmup}
Assume \(L_{\mathrm{CE}}>0\), \(\mu_{\mathrm{CE}}>0\), \(R_{\mathrm{CE}}\) has \(L_{\mathrm{CE}}\)-Lipschitz gradient on a region \(\mathcal U\), and the Polyak-Lojasiewicz condition
\begin{equation*}
\frac{1}{2}\cdot\|\nabla R_{\mathrm{CE}}(\theta)\|^2
\ge
\mu_{\mathrm{CE}}\cdot
\bigl(R_{\mathrm{CE}}(\theta)-R_{\mathrm{CE}}^\star\bigr)
\qquad
(\theta\in\mathcal U),
\end{equation*}
where \(R_{\mathrm{CE}}^\star=\inf_{\theta\in\mathcal U}R_{\mathrm{CE}}(\theta)\). Consider the fixed-step CE dynamics
\begin{equation*}
\theta_{t+1}
=
\theta_t-\alpha\cdot\nabla R_{\mathrm{CE}}(\theta_t),
\qquad
0<\alpha\le \min\{1/L_{\mathrm{CE}},1/\mu_{\mathrm{CE}}\},
\end{equation*}
and suppose \(\theta_0,\ldots,\theta_T\in\mathcal U\). Then
\begin{equation*}
R_{\mathrm{CE}}(\theta_t)-R_{\mathrm{CE}}^\star
\le
(1-\alpha\cdot\mu_{\mathrm{CE}})^t
\cdot
\bigl(R_{\mathrm{CE}}(\theta_0)-R_{\mathrm{CE}}^\star\bigr)
\qquad
(0\le t\le T).
\end{equation*}
Consequently, if
\begin{equation*}
R_{\mathrm{CE}}^\star
+
(1-\alpha\cdot\mu_{\mathrm{CE}})^T
\cdot
\bigl(R_{\mathrm{CE}}(\theta_0)-R_{\mathrm{CE}}^\star\bigr)
\le
\delta\cdot\log(1+e^{M/2}),
\end{equation*}
then \(\pi_M(\theta_T)\le\delta\).
\end{proposition}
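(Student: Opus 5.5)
The plan is the standard descent-lemma argument for smooth functions under a Polyak--Lojasiewicz inequality. We then convert the resulting CE bound into a severe-misranking bound via Corollary~\ref{cor:app-ce-small-risk-low-severe-mass}.

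\textbf{Step 1 (one-step descent).} Fix $t<T$. Since $\theta_t,\theta_{t+1}\in\mathcal U$, the $L_{\mathrm{CE}}$-Lipschitz gradient gives the quadratic upper bound
\[
R_{\mathrm{CE}}(\theta_{t+1})\le R_{\mathrm{CE}}(\theta_t)-\alpha\|\nabla R_{\mathrm{CE}}(\theta_t)\|^2+\tfrac{L_{\mathrm{CE}}\alpha^2}{2}\|\nabla R_{\mathrm{CE}}(\theta_t)\|^2 .
\]
Using $\alpha\le 1/L_{\mathrm{CE}}$, the right-hand side is at most $R_{\mathrm{CE}}(\theta_t)-\tfrac{\alpha}{2}\|\nabla R_{\mathrm{CE}}(\theta_t)\|^2$.

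\textbf{Step 2 (contraction).} Apply the PL inequality at $\theta_t\in\mathcal U$, namely $\tfrac12\|\nabla R_{\mathrm{CE}}(\theta_t)\|^2\ge\mu_{\mathrm{CE}}(R_{\mathrm{CE}}(\theta_t)-R^\star_{\mathrm{CE}})$. Subtracting $R^\star_{\mathrm{CE}}$ from the Step 1 bound then yields
\[
R_{\mathrm{CE}}(\theta_{t+1})-R^\star_{\mathrm{CE}}\le(1-\alpha\mu_{\mathrm{CE}})\bigl(R_{\mathrm{CE}}(\theta_t)-R^\star_{\mathrm{CE}}\bigr).
\]
The factor $1-\alpha\mu_{\mathrm{CE}}$ lies in $[0,1)$ because $\alpha\le1/\mu_{\mathrm{CE}}$. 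The gap $R_{\mathrm{CE}}(\theta_t)-R^\star_{\mathrm{CE}}$ is nonnegative because $R^\star_{\mathrm{CE}}$ is the infimum over $\mathcal U$ and $\theta_t\in\mathcal U$. Together these guarantee the multiplication preserves the inequality direction. Induction on $t$ from $0$ to $T$ gives the displayed geometric bound.

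\textbf{Step 3 (transfer to severe mass).} At $t=T$, add $R^\star_{\mathrm{CE}}$ to both sides of the geometric bound. Combined with the hypothesis, this gives $R_{\mathrm{CE}}(\theta_T)\le\delta\log(1+e^{M/2})$. Corollary~\ref{cor:app-ce-small-risk-low-severe-mass} (equivalently Proposition~\ref{prop:app-ce-risk-severe-mass}) then yields $\pi_M(\theta_T)\le\delta$.

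The only delicate point is that the smoothness and PL conditions hold merely on $\mathcal U$. The descent lemma requires the Lipschitz-gradient bound along the segment $[\theta_t,\theta_{t+1}]$, not just at its endpoints. I would handle this by reading the Lipschitz hypothesis on $\mathcal U$ as covering these segments, for instance by taking $\mathcal U$ convex or adopting the localization convention stated in Appendix~\ref{app:pairwise-sgd-dynamics}. The rest is routine.
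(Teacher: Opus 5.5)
Your proposal is correct and follows the paper's proof step for step: the descent lemma with $\alpha\le 1/L_{\mathrm{CE}}$ gives the $\tfrac{\alpha}{2}\|\nabla R_{\mathrm{CE}}(\theta_t)\|^2$ decrease, the PL inequality turns this into the $(1-\alpha\mu_{\mathrm{CE}})$ contraction that is iterated over $t\le T$, and Corollary~\ref{cor:app-ce-small-risk-low-severe-mass} converts the resulting CE bound into $\pi_M(\theta_T)\le\delta$. Your point that the descent lemma needs the Lipschitz-gradient bound along each segment $[\theta_t,\theta_{t+1}]$, for example via convexity of $\mathcal U$, is a fair refinement; the paper's proof skips it with ``smoothness gives.''
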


\begin{proof}
For \(t<T\), smoothness gives
\begin{equation*}
R_{\mathrm{CE}}(\theta_{t+1})
\le
R_{\mathrm{CE}}(\theta_t)
+
\langle\nabla R_{\mathrm{CE}}(\theta_t),\theta_{t+1}-\theta_t\rangle
+
\frac{L_{\mathrm{CE}}}{2}\cdot\|\theta_{t+1}-\theta_t\|^2.
\end{equation*}
Substituting the CE gradient step,
\begin{equation*}
\theta_{t+1}-\theta_t=-\alpha\cdot\nabla R_{\mathrm{CE}}(\theta_t),
\end{equation*}
gives
\begin{equation*}
R_{\mathrm{CE}}(\theta_{t+1})
\le
R_{\mathrm{CE}}(\theta_t)
-
\left(\alpha-\frac{L_{\mathrm{CE}}\cdot\alpha^2}{2}\right)
\cdot
\|\nabla R_{\mathrm{CE}}(\theta_t)\|^2.
\end{equation*}
Because \(\alpha\le1/L_{\mathrm{CE}}\),
\begin{equation*}
\alpha-\frac{L_{\mathrm{CE}}\cdot\alpha^2}{2}\ge \frac{\alpha}{2}.
\end{equation*}
The PL inequality therefore gives
\begin{equation*}
R_{\mathrm{CE}}(\theta_{t+1})-R_{\mathrm{CE}}^\star
\le
(1-\alpha\cdot\mu_{\mathrm{CE}})
\cdot
\bigl(R_{\mathrm{CE}}(\theta_t)-R_{\mathrm{CE}}^\star\bigr).
\end{equation*}
Iterating the one-step contraction proves the displayed risk bound. The final severe-misranking bound follows by evaluating the risk bound at \(T\) and applying Corollary~\ref{cor:app-ce-small-risk-low-severe-mass}.
\end{proof}

\begin{proposition}[Warm-up transfer to a low-error endpoint]
\label{prop:app-warmup-transfer-low-error}
Fix \(M'>0\) and \(r\ge0\).
Assume pairwise margins are locally \(L_{\rm sc}\)-Lipschitz on a neighborhood containing \(\bar\theta\) and \(\theta_{\rm wu}\):
\begin{equation*}
|m_\theta(x^+,x^-)-m_{\theta'}(x^+,x^-)|
\le
L_{\rm sc}\cdot\|\theta-\theta'\|
\end{equation*}
for all relevant \(\theta,\theta'\) and all pairs \((x^+,x^-)\). If
\begin{equation*}
\|\theta_{\rm wu}-\bar\theta\|\le r
\qquad\text{and}\qquad
\pi_{M'}(\bar\theta)\le\delta,
\end{equation*}
then
\begin{equation*}
\pi_{M'+L_{\rm sc}\cdot r}(\theta_{\rm wu})\le\delta.
\end{equation*}
\end{proposition}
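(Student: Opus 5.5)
The plan is an event-inclusion argument in the spirit of Proposition~\ref{prop:app-bad-neighborhood}, run in the opposite direction. The severe-misranking event at the warm-up endpoint, at the enlarged threshold \(M'+L_{\rm sc}\cdot r\), should be contained in the severe-misranking event at the reference point \(\bar\theta\) at threshold \(M'\). Taking probabilities then transfers the bound \(\pi_{M'}(\bar\theta)\le\delta\) directly.

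The first step is to fix a pair \((x^+,x^-)\) with
\begin{equation*}
m_{\theta_{\rm wu}}(x^+,x^-)\le -(M'+L_{\rm sc}\cdot r).
\end{equation*}
The local Lipschitz assumption, applied with \(\theta=\bar\theta\) and \(\theta'=\theta_{\rm wu}\), gives
\begin{equation*}
m_{\bar\theta}(x^+,x^-)\le m_{\theta_{\rm wu}}(x^+,x^-)+L_{\rm sc}\cdot\|\theta_{\rm wu}-\bar\theta\|.
\end{equation*}
Combining this with \(\|\theta_{\rm wu}-\bar\theta\|\le r\) yields
\begin{equation*}
m_{\bar\theta}(x^+,x^-)\le -(M'+L_{\rm sc}\cdot r)+L_{\rm sc}\cdot r=-M'.
\end{equation*}

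The second step is to read off the resulting pointwise inclusion:
\begin{equation*}
\{m_{\theta_{\rm wu}}(X^+,X^-)\le -(M'+L_{\rm sc}\cdot r)\}\subseteq\{m_{\bar\theta}(X^+,X^-)\le -M'\}.
\end{equation*}
Monotonicity of probability then gives \(\pi_{M'+L_{\rm sc}\cdot r}(\theta_{\rm wu})\le\pi_{M'}(\bar\theta)\le\delta\).

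There is no substantive obstacle. The only point needing care is that the Lipschitz bound holds for every relevant pair, so the inclusion holds surely and not merely with high probability. One should also check that both \(\bar\theta\) and \(\theta_{\rm wu}\) lie in the neighborhood where the margin bound is assumed, which the statement provides. Finally, the degenerate case \(r=0\) requires no separate treatment, since the same inequalities apply with the threshold unchanged.
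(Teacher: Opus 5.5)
Your proof is correct and is essentially the paper's own argument. The paper shows the same event inclusion \(\{m_{\theta_{\rm wu}}(X^+,X^-)\le -(M'+L_{\rm sc}\cdot r)\}\subseteq\{m_{\bar\theta}(X^+,X^-)\le -M'\}\) via the Lipschitz bound and \(\|\theta_{\rm wu}-\bar\theta\|\le r\), and then takes probabilities.
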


\begin{proof}
On the event
\begin{equation*}
\{m_{\theta_{\rm wu}}(X^+,X^-)\le -(M'+L_{\rm sc}\cdot r)\},
\end{equation*}
the Lipschitz condition and \(\|\theta_{\rm wu}-\bar\theta\|\le r\) imply
\begin{equation*}
m_{\bar\theta}(X^+,X^-)
\le
m_{\theta_{\rm wu}}(X^+,X^-)
+
L_{\rm sc}\cdot\|\theta_{\rm wu}-\bar\theta\|
\le
-M'.
\end{equation*}
Thus
\begin{equation*}
\{m_{\theta_{\rm wu}}(X^+,X^-)\le -(M'+L_{\rm sc}\cdot r)\}
\subseteq
\{m_{\bar\theta}(X^+,X^-)\le -M'\}.
\end{equation*}
Taking probabilities proves the claim.
\end{proof}

\begin{proposition}[Post-warm-up low severe-misranking neighborhood]
\label{prop:app-post-warmup-neighborhood}
Fix \(M>0\), \(r\ge0\), and \(L_{\rm sc}\ge0\) such that \(M-L_{\rm sc}\cdot r>0\).
Assume that the pairwise margin map is locally \(L_{\rm sc}\)-Lipschitz around a warm-up endpoint \(\theta_{\rm wu}\):
\begin{equation*}
|m_\theta(x^+,x^-)-m_{\theta_{\rm wu}}(x^+,x^-)|
\le
L_{\rm sc}\cdot\|\theta-\theta_{\rm wu}\|
\end{equation*}
for all \(\theta\) with \(\|\theta-\theta_{\rm wu}\|\le r\) and all sample pairs. If
\begin{equation*}
\pi_{M-L_{\rm sc}\cdot r}(\theta_{\rm wu})\le\delta,
\end{equation*}
then every \(\theta\) satisfying \(\|\theta-\theta_{\rm wu}\|\le r\) also satisfies
\begin{equation*}
\pi_M(\theta)\le\delta.
\end{equation*}
\end{proposition}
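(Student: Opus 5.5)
The plan is an event-inclusion argument of the same kind as Propositions~\ref{prop:app-bad-neighborhood} and~\ref{prop:app-warmup-transfer-low-error}. Here the perturbation runs from the warm-up endpoint outward, and the threshold shrinks from \(M\) to \(M-L_{\rm sc}\cdot r\). Fix any \(\theta\) with \(\|\theta-\theta_{\rm wu}\|\le r\). The goal is to show that the severe event at \(\theta\) is contained in a slightly weaker severe event at \(\theta_{\rm wu}\), and then to take probabilities.

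The first step is the pointwise bound. On the event \(\{m_\theta(X^+,X^-)\le -M\}\), the local Lipschitz assumption gives
\begin{equation*}
m_{\theta_{\rm wu}}(X^+,X^-)
\le
m_\theta(X^+,X^-)+L_{\rm sc}\cdot\|\theta-\theta_{\rm wu}\|
\le
-M+L_{\rm sc}\cdot r
=
-(M-L_{\rm sc}\cdot r).
\end{equation*}
This yields the inclusion \(\{m_\theta\le -M\}\subseteq\{m_{\theta_{\rm wu}}\le -(M-L_{\rm sc}\cdot r)\}\).

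The second step is to take probabilities under the product law of \((X^+,X^-)\). This gives \(\pi_M(\theta)\le \pi_{M-L_{\rm sc}\cdot r}(\theta_{\rm wu})\le\delta\). The hypothesis \(M-L_{\rm sc}\cdot r>0\) is what makes the reduced threshold a legitimate severe-misranking level, so the assumed bound on \(\pi_{M-L_{\rm sc}r}(\theta_{\rm wu})\) is meaningful. Since \(\theta\) was arbitrary in the ball, the conclusion holds uniformly.

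I expect no substantive obstacle. The only care needed is to check that the Lipschitz bound holds for every sample pair, not just almost surely on some subset, so that the inclusion holds pathwise. The statement assumes this directly. A secondary point is orientation: one must bound \(m_{\theta_{\rm wu}}\) above by \(m_\theta\) plus the Lipschitz slack, not the reverse, because that direction is what transfers the low mass at the endpoint to the whole ball.
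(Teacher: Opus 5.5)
Your proposal is correct and follows essentially the same route as the paper. The Lipschitz bound gives the pathwise inclusion \(\{m_\theta(X^+,X^-)\le -M\}\subseteq\{m_{\theta_{\rm wu}}(X^+,X^-)\le -(M-L_{\rm sc}\cdot r)\}\), and taking probabilities yields \(\pi_M(\theta)\le\delta\) for every \(\theta\) in the ball; the orientation you flag, bounding \(m_{\theta_{\rm wu}}\) above by \(m_\theta\) plus the slack, is exactly the step the paper's proof uses.
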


\begin{proof}
If \(m_\theta(X^+,X^-)\le -M\) and \(\|\theta-\theta_{\rm wu}\|\le r\), then
\begin{equation*}
m_{\theta_{\rm wu}}(X^+,X^-)
\le
m_\theta(X^+,X^-)+L_{\rm sc}\cdot r
\le
-(M-L_{\rm sc}\cdot r).
\end{equation*}
Therefore
\begin{equation*}
\{m_\theta(X^+,X^-)\le -M\}
\subseteq
\{m_{\theta_{\rm wu}}(X^+,X^-)\le -(M-L_{\rm sc}\cdot r)\}.
\end{equation*}
Taking probabilities gives \(\pi_M(\theta)\le\delta\).
\end{proof}

\begin{corollary}[Finite-horizon post-warm-up neighborhood check]
\label{cor:app-post-warmup-finite-horizon}
Under the assumptions of Proposition~\ref{prop:app-post-warmup-neighborhood}, let
\begin{equation*}
\widehat\pi_{M-L_{\rm sc}\cdot r,K}(\theta_{\rm wu})
\end{equation*}
be an independent finite-pair estimator as in Proposition~\ref{prop:app-finite-pair-high-error-check}. Consider a second-stage trajectory
\begin{equation*}
\theta_0,\theta_1,\ldots,\theta_T,
\qquad
\theta_0=\theta_{\rm wu},
\end{equation*}
such that \(\|\theta_t-\theta_{\rm wu}\|\le r\) for all \(0\le t\le T\). Then, for every \(\varepsilon>0\), with probability at least
\begin{equation*}
1-\frac{1}{4\cdot K\cdot\varepsilon^2},
\end{equation*}
one has the implication
\begin{equation*}
\widehat\pi_{M-L_{\rm sc}\cdot r,K}(\theta_{\rm wu})+\varepsilon\le\delta
\quad\Longrightarrow\quad
\pi_M(\theta_t)\le\delta
\quad\text{for all }0\le t\le T.
\end{equation*}
\end{corollary}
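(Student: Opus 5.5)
The proof is a direct combination of Proposition~\ref{prop:app-post-warmup-neighborhood} with the one-sided finite-pair bound of Proposition~\ref{prop:app-finite-pair-high-error-check}. The strategy is to isolate a single high-probability event, defined only through the estimator at \(\theta_{\rm wu}\), and then show that on this event the stated implication holds deterministically for the whole trajectory.

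\textbf{Step 1: the good event.} Set \(M'=M-L_{\rm sc}\cdot r>0\) and define
\[
E=\bigl\{\pi_{M'}(\theta_{\rm wu})\le\widehat\pi_{M',K}(\theta_{\rm wu})+\varepsilon\bigr\}.
\]
The upper one-sided statement of Proposition~\ref{prop:app-finite-pair-high-error-check}, applied at the fixed point \(\theta_{\rm wu}\) with threshold \(M'\), gives \(\Pbb(E)\ge 1-1/(4K\varepsilon^2)\). Here it matters that the estimator uses pairs independent of \(\theta_{\rm wu}\). I will condition on \(\theta_{\rm wu}\) and treat it as fixed, which is legitimate because the pairs are independent of it.

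\textbf{Step 2: the deterministic implication on \(E\).} Suppose \(\widehat\pi_{M',K}(\theta_{\rm wu})+\varepsilon\le\delta\). On \(E\) this gives \(\pi_{M'}(\theta_{\rm wu})\le\delta\), which is exactly the hypothesis of Proposition~\ref{prop:app-post-warmup-neighborhood}. That proposition then yields \(\pi_M(\theta)\le\delta\) for every \(\theta\) in the closed ball of radius \(r\) around \(\theta_{\rm wu}\). By the containment assumption \(\|\theta_t-\theta_{\rm wu}\|\le r\), every iterate lies in that ball, so \(\pi_M(\theta_t)\le\delta\) for all \(0\le t\le T\).

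\textbf{Main obstacle.} The only delicate point is that the trajectory may be random and may even depend on the estimator. The argument avoids this because the neighborhood conclusion is uniform over the ball, so no union bound over \(t\) is needed. The single event \(E\) is enough, regardless of how the trajectory is generated, provided containment holds on the same sample path.
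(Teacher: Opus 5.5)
Your proposal is correct and follows the paper's proof essentially verbatim. You isolate the single event from the upper one-sided bound of Proposition~\ref{prop:app-finite-pair-high-error-check} at threshold \(M-L_{\rm sc}\cdot r\), use the premise to get \(\pi_{M-L_{\rm sc}\cdot r}(\theta_{\rm wu})\le\delta\), and then apply Proposition~\ref{prop:app-post-warmup-neighborhood} uniformly over the radius-\(r\) ball that contains every iterate. Your remark that this uniformity removes any need for a union bound over \(t\), even if the trajectory depends on the estimator, is a correct clarification that the paper leaves implicit.
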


\begin{proof}
By Proposition~\ref{prop:app-finite-pair-high-error-check}, with probability at least \(1-1/(4\cdot K\cdot\varepsilon^2)\),
\begin{equation*}
\pi_{M-L_{\rm sc}\cdot r}(\theta_{\rm wu})
\le
\widehat\pi_{M-L_{\rm sc}\cdot r,K}(\theta_{\rm wu})+\varepsilon.
\end{equation*}
On this event, the premise of the implication gives
\begin{equation*}
\pi_{M-L_{\rm sc}\cdot r}(\theta_{\rm wu})\le\delta.
\end{equation*}
Since every \(\theta_t\) lies within radius \(r\) of \(\theta_{\rm wu}\), Proposition~\ref{prop:app-post-warmup-neighborhood} yields \(\pi_M(\theta_t)\le\delta\) for all \(0\le t\le T\).
\end{proof}

\subsection{Parameter-Efficient Fine-Tuning in a Stable Trainable Subspace}
\label{app:peft-stable-subspace}

We now consider a pretrained model with a frozen base and a low-dimensional trainable subspace, such as a head, adapter, or LoRA module. Let \(\bm\xi\) denote the trainable coordinates and absorb the frozen base into the score map \(f_{\bm\xi}\). The results below state sufficient conditions under which direct pairwise AUC-surrogate training starts in a favorable neighborhood and remains there for the finite horizon of interest.

\begin{proposition}[Trainable-subspace transfer of pairwise identities]
\label{prop:app-trainable-subspace-transfer}
Let the trainable parameter be \(\bm\xi\), and let the frozen pretrained base be absorbed into \(f_{\bm\xi}\). If \(f_{\bm\xi}\) and \(\nabla_{\bm\xi}f_{\bm\xi}\) are locally well defined, then the pairwise gradient identity, Hoeffding variance split, and surrogate-weight identities from Appendices~\ref{app:pairwise-bad-region-dynamics} and~\ref{app:surrogate-weight-calculations} remain valid after replacing \(\theta\) by \(\bm\xi\).
\end{proposition}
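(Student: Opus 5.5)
The plan is to argue that every earlier identity is a statement about a generic parameter vector. Each one uses only two ingredients: the score map and its gradient with respect to that vector, and the sampling model (i.i.d. positives, i.i.d. negatives, independent classes). The first step is to fix a point \(\bm\xi\) in the trainable coordinate space \(\R^{q}\) where \(f_{\bm\xi}\) and \(\nabla_{\bm\xi}f_{\bm\xi}\) are well defined. We then define \(\bm g_i^+=\nabla_{\bm\xi}f_{\bm\xi}(x_i^+)\), \(\bm g_j^-=\nabla_{\bm\xi}f_{\bm\xi}(x_j^-)\) and \(m_{ij}=f_{\bm\xi}(x_i^+)-f_{\bm\xi}(x_j^-)\). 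The frozen base enters only through the fixed function \(x\mapsto f_{\bm\xi}(x)\), so it is simply part of the data-to-score map. The chain rule then gives the per-pair gradient \(\nabla_{\bm\xi}\ell(m_{\bm\xi})=w_\ell(m_{\bm\xi})(\bm g^--\bm g^+)\). This yields Eq.~\eqref{eq:app-pairwise-gradient-general} in \(\bm\xi\). Under the same integrability and interchange hypothesis, we reproduce the unbiasedness argument of Proposition~\ref{prop:app-pairwise-batch-unbiased} verbatim.

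Next, for any fixed unit \(\bm v\in\R^{q}\), we define \(\psi_{\bm\xi,\bm v}\) as in Proposition~\ref{prop:app-ustat}. The projected mini-batch gradient is then again a two-sample U-statistic with kernel \(\psi_{\bm\xi,\bm v}\). Theorem~\ref{thm:app-hoeffding} and Corollary~\ref{cor:app-unequal-batch-hoeffding} need only square-integrability of this kernel and the sampling independence structure. Neither condition refers to the dimension or to the meaning of the parameter, so the decomposition and the variance split in Eq.~\eqref{eq:app-var-split} transfer unchanged. The same holds for Corollary~\ref{cor:app-first-second-order}, Propositions~\ref{prop:app-severe-compression-small-projections} and~\ref{prop:app-degenerate-near-stationary-bad-region}, and the severe-direction construction \eqref{eq:app-steepest-severe-direction}, where Cauchy--Schwarz is now applied in \(\R^{q}\).

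For the surrogate-weight identities of Appendix~\ref{app:surrogate-weight-calculations}, we note that Proposition~\ref{prop:app-canonical-weights} is purely scalar in the margin. Proposition~\ref{prop:app-hinge-decoupling}, Lemma~\ref{lem:app-affine-active-decomposition}, Corollary~\ref{cor:app-sqh-active-correction}, and Propositions~\ref{prop:app-logistic-decoupling}, \ref{prop:app-square-sqh-separation} and~\ref{prop:app-hinge-sqh-variance-separation} are finite algebraic manipulations of the vectors \(\bm g_i^\pm\) and the scores. We check that each proof uses nothing beyond \(m_{ij}=f(x_i^+)-f(x_j^-)\) and linearity. The definitions of \(\bar{\bm g}_\pm\), \(\bm c_\pm\) and \(\bm d_B\) are then simply reinterpreted with \(\nabla_{\bm\xi}\).

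The main obstacle is not computational. It is making precise which regularity hypotheses must be restated in the subspace. These are the interchange of \(\nabla_{\bm\xi}\) and \(\E\), the finite second moment of \(\psi_{\bm\xi,\bm v}\), and, for hinge, the boundary subgradient convention. The point to handle carefully is that differentiability of \(f_{\bm\xi}\) in \(\bm\xi\) must be required as a hypothesis, rather than inherited from differentiability in the full parameter \(\theta\). This is relevant for LoRA, where \(\bm\xi\) enters through a product \(BA\). To deal with it, I would state explicitly that each transferred result holds under its original hypotheses read with \(\bm\xi\) in place of \(\theta\). Since the proofs never use any property of \(\theta\) other than being the differentiation variable, the transfer then becomes a formal substitution.
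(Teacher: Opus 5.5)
Your proposal is correct and follows the paper's proof. The paper likewise observes that every identity in those appendices uses only the score map, its gradient in the trainable variable, the margin \(m_{\bm\xi}\), and algebraic operations (pair averaging, conditioning, variance decomposition, \(w_\ell=-\ell'\)), so the substitution \(\theta\mapsto\bm\xi\) is formal once the frozen base is absorbed into \(f_{\bm\xi}\). Your result-by-result check and your restatement of the interchange and square-integrability hypotheses in \(\bm\xi\) are just a more careful version of that argument. One correction: for LoRA, differentiability in \(\bm\xi\) is inherited from differentiability in the full weights by the chain rule, because the low-rank product is a polynomial map. Requiring it separately is therefore harmless but not necessary.
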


\begin{proof}
Each identity used in those appendices depends only on the score map, its gradient with respect to the trainable variable, the positive margin
\begin{equation*}
m_{\bm\xi}(x^+,x^-)=f_{\bm\xi}(x^+)-f_{\bm\xi}(x^-),
\end{equation*}
and algebraic operations such as pair averaging, conditioning, variance decomposition, and applying \(w_\ell(x)=-\ell'(x)\). If the pretrained base is frozen, the same objects exist with \(\bm\xi\) as the trainable variable:
\begin{equation*}
f_\theta\mapsto f_{\bm\xi},
\qquad
\nabla_\theta f_\theta\mapsto \nabla_{\bm\xi}f_{\bm\xi}.
\end{equation*}
No derivation requires the frozen coordinates to be trainable. Therefore the same identities hold verbatim in the trainable subspace.
\end{proof}

\begin{proposition}[Score perturbation in a trainable subspace]
\label{prop:app-peft-score-perturbation}
Fix a trainable-subspace initialization \(\bm\xi_0\) and radius \(\rho>0\). Suppose
\begin{equation*}
\sup_{\bm\zeta\in B(\bm\xi_0,\rho)}\sup_x
\|\nabla_{\bm\xi} f_{\bm\zeta}(x)\|
\le
L_f.
\end{equation*}
Then, for any \(\bm\xi,\bm\xi'\in B(\bm\xi_0,\rho)\) and every input \(x\),
\begin{equation*}
|f_{\bm\xi}(x)-f_{\bm\xi'}(x)|
\le
L_f\cdot\|\bm\xi-\bm\xi'\|,
\end{equation*}
and for every pair,
\begin{equation*}
|m_{\bm\xi}(x^+,x^-)-m_{\bm\xi'}(x^+,x^-)|
\le
2\cdot L_f\cdot\|\bm\xi-\bm\xi'\|.
\end{equation*}
\end{proposition}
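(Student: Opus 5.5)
The plan is to reduce both claims to the mean value theorem along a line segment inside the ball. Since the ball \(B(\bm\xi_0,\rho)\) is convex, for any \(\bm\xi,\bm\xi'\in B(\bm\xi_0,\rho)\) the whole segment \(\bm\zeta(s)=\bm\xi'+s\cdot(\bm\xi-\bm\xi')\), \(s\in[0,1]\), stays in the ball. This is the only geometric fact needed: it guarantees that the uniform gradient bound \(L_f\) is available at every point of the segment.

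First, fix an input \(x\) and define the scalar function \(h(s)=f_{\bm\zeta(s)}(x)\). I take as given that \(f_{\bm\xi}(x)\) is differentiable in \(\bm\xi\) on the ball, which is implicit in the finiteness of the supremum of \(\|\nabla_{\bm\xi}f_{\bm\zeta}(x)\|\). By the chain rule,
\[
h'(s)=\langle\nabla_{\bm\xi}f_{\bm\zeta(s)}(x),\bm\xi-\bm\xi'\rangle .
\]
The fundamental theorem of calculus then gives
\[
f_{\bm\xi}(x)-f_{\bm\xi'}(x)=\int_0^1 h'(s)\,ds .
\]
If only differentiability is available, without continuity of the gradient, I would instead apply the scalar mean value theorem to \(h\) on \([0,1]\). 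Bounding the integrand by Cauchy--Schwarz and the hypothesis gives
\[
|h'(s)|\le L_f\cdot\|\bm\xi-\bm\xi'\| ,
\]
and the first inequality of the statement follows.

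Second, write the margin difference as
\[
m_{\bm\xi}(x^+,x^-)-m_{\bm\xi'}(x^+,x^-)=\bigl(f_{\bm\xi}(x^+)-f_{\bm\xi'}(x^+)\bigr)-\bigl(f_{\bm\xi}(x^-)-f_{\bm\xi'}(x^-)\bigr).
\]
Applying the triangle inequality and the first bound to each bracket yields the factor \(2\cdot L_f\). There is no real obstacle. The only point to be careful about is the convexity and segment-containment remark, since it is what justifies using the local supremum, together with the minimal regularity needed for the mean value step.
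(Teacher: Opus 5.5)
Your proposal is correct and follows essentially the same route as the paper: restrict to the segment between \(\bm\xi'\) and \(\bm\xi\) inside the ball, write the score difference as an integral of the directional derivative, bound it by Cauchy--Schwarz and the uniform gradient bound, and then get the margin bound from the triangle inequality. Your mean-value-theorem fallback is a small added refinement. It removes the need for continuity of the gradient, which the paper's fundamental-theorem-of-calculus step uses implicitly.
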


\begin{proof}
For a fixed \(x\), define
\begin{equation*}
\phi(t)=f_{\bm\xi'+t\cdot(\bm\xi-\bm\xi')}(x),
\qquad
t\in[0,1].
\end{equation*}
The line segment between \(\bm\xi'\) and \(\bm\xi\) lies in \(B(\bm\xi_0,\rho)\). By the fundamental theorem of calculus and Cauchy's inequality,
\begin{equation*}
\begin{aligned}
|f_{\bm\xi}(x)-f_{\bm\xi'}(x)|
&=
\left|
\int_0^1
\left\langle
\nabla_{\bm\xi}f_{\bm\xi'+t\cdot(\bm\xi-\bm\xi')}(x),
\bm\xi-\bm\xi'
\right\rangle
dt
\right| \\
&\le
\int_0^1 L_f\cdot\|\bm\xi-\bm\xi'\|\,dt
=
L_f\cdot\|\bm\xi-\bm\xi'\|.
\end{aligned}
\end{equation*}
The margin bound follows from
\begin{equation*}
m_{\bm\xi}(x^+,x^-)-m_{\bm\xi'}(x^+,x^-)
=
\bigl(f_{\bm\xi}(x^+)-f_{\bm\xi'}(x^+)\bigr)
-
\bigl(f_{\bm\xi}(x^-)-f_{\bm\xi'}(x^-)\bigr)
\end{equation*}
and the triangle inequality.
\end{proof}

\begin{proposition}[Uniform severe-misranking control in a stable trainable subspace]
\label{prop:app-peft-uniform-severe-control}
Under the assumptions of Proposition~\ref{prop:app-peft-score-perturbation}, suppose \(M>4\cdot L_f\cdot\rho\) and a reference point \(\bar{\bm\xi}\in B(\bm\xi_0,\rho)\) satisfies
\begin{equation*}
\pi_{M-4\cdot L_f\cdot\rho}(\bar{\bm\xi})\le \delta_M.
\end{equation*}
Then every \(\bm\xi\in B(\bm\xi_0,\rho)\) satisfies
\begin{equation*}
\pi_M(\bm\xi)\le\delta_M.
\end{equation*}
\end{proposition}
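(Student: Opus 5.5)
The plan is to reduce the statement to an event-inclusion argument, in the same way as Propositions~\ref{prop:app-bad-neighborhood} and~\ref{prop:app-post-warmup-neighborhood}, with the Lipschitz margin control supplied by Proposition~\ref{prop:app-peft-score-perturbation}. Fix an arbitrary \(\bm\xi\in B(\bm\xi_0,\rho)\). Both \(\bm\xi\) and the reference point \(\bar{\bm\xi}\) lie in the ball, and the ball is convex, so that proposition applies to the pair \((\bm\xi,\bar{\bm\xi})\). It gives, for every positive-negative pair,
\begin{equation*}
|m_{\bm\xi}(x^+,x^-)-m_{\bar{\bm\xi}}(x^+,x^-)|\le 2\cdot L_f\cdot\|\bm\xi-\bar{\bm\xi}\|.
\end{equation*}
By the triangle inequality, \(\|\bm\xi-\bar{\bm\xi}\|\le\|\bm\xi-\bm\xi_0\|+\|\bm\xi_0-\bar{\bm\xi}\|\le 2\rho\). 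The margin perturbation is therefore uniformly at most \(4\cdot L_f\cdot\rho\).

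Next I would show the inclusion
\begin{equation*}
\{m_{\bm\xi}(X^+,X^-)\le -M\}\subseteq\{m_{\bar{\bm\xi}}(X^+,X^-)\le -(M-4\cdot L_f\cdot\rho)\}.
\end{equation*}
Suppose \(m_{\bm\xi}\le -M\). Then \(m_{\bar{\bm\xi}}\le m_{\bm\xi}+4L_f\rho\le -(M-4L_f\rho)\). The hypothesis \(M>4L_f\rho\) ensures that the threshold \(M-4L_f\rho\) is positive, so \(\pi_{M-4L_f\rho}\) is a legitimate severe-misranking mass in the sense used throughout. Taking probabilities under the product law of \((X^+,X^-)\) gives
\begin{equation*}
\pi_M(\bm\xi)\le\pi_{M-4L_f\rho}(\bar{\bm\xi})\le\delta_M.
\end{equation*}
Since \(\bm\xi\) was arbitrary, the bound holds uniformly over the ball.

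There is essentially no obstacle here. The only point needing care is the radius bookkeeping. The factor \(4\) comes from two sources: the factor \(2\) from the two scores in a margin, and the diameter bound \(2\rho\), which is needed because \(\bar{\bm\xi}\) is not required to be the center \(\bm\xi_0\). One must also confirm that the gradient bound in Proposition~\ref{prop:app-peft-score-perturbation} covers the whole segment between \(\bm\xi\) and \(\bar{\bm\xi}\), and this follows from the convexity of the ball.
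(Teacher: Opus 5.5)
Your proposal is correct and matches the paper's proof essentially step for step. The paper also fixes \(\bm\xi\) in the ball, applies Proposition~\ref{prop:app-peft-score-perturbation} with \(\|\bm\xi-\bar{\bm\xi}\|\le 2\rho\) to obtain the inclusion \(\{m_{\bm\xi}\le -M\}\subseteq\{m_{\bar{\bm\xi}}\le -(M-4L_f\rho)\}\), and then takes probabilities; your convexity check is already covered by that proposition, which is stated for any two points of the ball.
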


\begin{proof}
Fix \(\bm\xi\in B(\bm\xi_0,\rho)\). On the event \(m_{\bm\xi}(X^+,X^-)\le -M\), Proposition~\ref{prop:app-peft-score-perturbation} gives
\begin{equation*}
m_{\bar{\bm\xi}}(X^+,X^-)
\le
-M+2\cdot L_f\cdot\|\bm\xi-\bar{\bm\xi}\|.
\end{equation*}
Since both \(\bm\xi\) and \(\bar{\bm\xi}\) lie in \(B(\bm\xi_0,\rho)\),
\begin{equation*}
\|\bm\xi-\bar{\bm\xi}\|
\le
\|\bm\xi-\bm\xi_0\|+\|\bar{\bm\xi}-\bm\xi_0\|
\le
2\cdot\rho.
\end{equation*}
Therefore
\begin{equation*}
m_{\bar{\bm\xi}}(X^+,X^-)
\le
-(M-4\cdot L_f\cdot\rho).
\end{equation*}
Thus
\begin{equation*}
\{m_{\bm\xi}(X^+,X^-)\le -M\}
\subseteq
\{m_{\bar{\bm\xi}}(X^+,X^-)\le -(M-4\cdot L_f\cdot\rho)\}.
\end{equation*}
Taking probabilities proves the claim.
\end{proof}

\begin{proposition}[Persistence of first-order projected variance]
\label{prop:app-peft-first-order-persistence}
Fix a unit vector \(\bm v\). Suppose that, on a neighborhood \(\mathcal U\) of a trainable-subspace initialization, the maps
\begin{equation*}
\bm\xi\mapsto \psi_{\bm\xi,\bm v}^{(1,0)}(X^+),
\qquad
\bm\xi\mapsto \psi_{\bm\xi,\bm v}^{(0,1)}(X^-)
\end{equation*}
are continuous in \(L_2\). If, for some reference point \(\bar{\bm\xi}\in\mathcal U\),
\begin{equation*}
\Var(\psi_{\bar{\bm\xi},\bm v}^{(1,0)}(X^+))
+
\Var(\psi_{\bar{\bm\xi},\bm v}^{(0,1)}(X^-))
>0,
\end{equation*}
then there is a smaller neighborhood \(\mathcal U'\subseteq\mathcal U\) and a constant \(\sigma_{1,\bm v}^2>0\) such that
\begin{equation*}
\Var(\psi_{\bm\xi,\bm v}^{(1,0)}(X^+))
+
\Var(\psi_{\bm\xi,\bm v}^{(0,1)}(X^-))
\ge
\sigma_{1,\bm v}^2
\qquad
(\bm\xi\in\mathcal U').
\end{equation*}
\end{proposition}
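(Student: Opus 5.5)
The plan is to show that the order-$B^{-1}$ coefficient $V_1(\bm\xi)=\Var(\psi_{\bm\xi,\bm v}^{(1,0)}(X^+))+\Var(\psi_{\bm\xi,\bm v}^{(0,1)}(X^-))$ is a continuous function of $\bm\xi$ on $\mathcal U$, and then take a preimage of an open half-line. The key identity is that each first-order projection is centered: $\E[\psi_{\bm\xi,\bm v}^{(1,0)}(X^+)]=0$ and $\E[\psi_{\bm\xi,\bm v}^{(0,1)}(X^-)]=0$, as shown in the proof of Theorem~\ref{thm:app-hoeffding}. Proposition~\ref{prop:app-trainable-subspace-transfer} lets us apply that theorem with $\theta$ replaced by $\bm\xi$. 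As a result,
\begin{equation*}
V_1(\bm\xi)=\bigl\|\psi_{\bm\xi,\bm v}^{(1,0)}\bigr\|_{L_2(P^+)}^2+\bigl\|\psi_{\bm\xi,\bm v}^{(0,1)}\bigr\|_{L_2(P^-)}^2 .
\end{equation*}
So each variance is simply a squared $L_2$ norm.

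The first step is continuity. The map $u\mapsto\|u\|_{L_2}$ is continuous, indeed $1$-Lipschitz by the reverse triangle inequality. Composing it with the assumed $L_2$-continuous maps $\bm\xi\mapsto\psi_{\bm\xi,\bm v}^{(1,0)}(X^+)$ and $\bm\xi\mapsto\psi_{\bm\xi,\bm v}^{(0,1)}(X^-)$, then squaring and adding, shows that $V_1$ is continuous on $\mathcal U$. If one prefers not to rely on centering, one can write $\Var(Y)=\E[Y^2]-(\E Y)^2$. The mean $\E Y$ is also $L_2$-continuous, since $|\E Y-\E Y'|\le\|Y-Y'\|_{L_2}$, so the same conclusion follows.

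The second step sets $\sigma_{1,\bm v}^2=V_1(\bar{\bm\xi})/2>0$. By continuity at $\bar{\bm\xi}$, there is an $r>0$ such that $|V_1(\bm\xi)-V_1(\bar{\bm\xi})|<V_1(\bar{\bm\xi})/2$ for every $\bm\xi\in\mathcal U$ with $\|\bm\xi-\bar{\bm\xi}\|<r$. Taking $\mathcal U'=\mathcal U\cap B(\bar{\bm\xi},r)$ gives $V_1\ge\sigma_{1,\bm v}^2$ on $\mathcal U'$. This set is a neighborhood of $\bar{\bm\xi}$ and is contained in $\mathcal U$.

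There is no real obstacle here. The only point needing care is interpreting the hypothesis: ``continuous in $L_2$'' is taken to mean continuity of $\bm\xi\mapsto\psi^{(1,0)}_{\bm\xi,\bm v}$ as an element of $L_2(P^+)$, and similarly for the negative projection. This requires the square-integrability assumption of Theorem~\ref{thm:app-hoeffding}, so that the projections are in $L_2$ at all. One caveat is that $\mathcal U'$ is a neighborhood of the reference point $\bar{\bm\xi}$, not necessarily of the initialization. If the initialization itself is required to lie in $\mathcal U'$, we simply take $\bar{\bm\xi}$ equal to the initialization.
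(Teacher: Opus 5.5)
Your proof is correct and follows essentially the same route as the paper. The paper also establishes continuity of the order-$B^{-1}$ coefficient from the $L_2$-continuity of each projection, via continuity of its mean and second moment; this matches your fallback argument, and your centering shortcut is an equally valid variant. It then picks a neighborhood on which $|V(\bm\xi)-V(\bar{\bm\xi})|\le V(\bar{\bm\xi})/2$ and sets $\sigma_{1,\bm v}^2=V(\bar{\bm\xi})/2$, exactly as you do.
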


\begin{proof}
Let
\begin{equation*}
V(\bm\xi)
=
\Var(\psi_{\bm\xi,\bm v}^{(1,0)}(X^+))
+
\Var(\psi_{\bm\xi,\bm v}^{(0,1)}(X^-)).
\end{equation*}
\(L_2\)-continuity of each projection implies continuity of its second moment and its mean, hence continuity of its variance. Therefore \(V\) is continuous. Since \(V(\bar{\bm\xi})>0\), there exists a neighborhood \(\mathcal U'\subseteq\mathcal U\) such that
\begin{equation*}
|V(\bm\xi)-V(\bar{\bm\xi})|
\le
\frac{V(\bar{\bm\xi})}{2}
\qquad
(\bm\xi\in\mathcal U').
\end{equation*}
Setting \(\sigma_{1,\bm v}^2=V(\bar{\bm\xi})/2\) proves the result.
\end{proof}

\begin{proposition}[Finite-horizon confinement from a bounded update budget]
\label{prop:app-peft-confinement}
Consider discrete dynamics
\begin{equation*}
\bm\xi_{t+1}
=
\bm\xi_t-\eta_t\cdot\widehat{\bm g}_t,
\qquad
t=0,\ldots,T-1.
\end{equation*}
Assume there are \(\rho>0\) and \(g_\ast<\infty\) such that \(B(\bm\xi_0,\rho)\subseteq\mathcal U\),
\begin{equation*}
\|\widehat{\bm g}_t\|\le g_\ast
\qquad
\text{whenever }\bm\xi_t\in\mathcal U,
\end{equation*}
and
\begin{equation*}
\sum_{t=0}^{T-1}\eta_t\cdot g_\ast\le\rho.
\end{equation*}
Then \(\bm\xi_t\in B(\bm\xi_0,\rho)\subseteq\mathcal U\) for every \(0\le t\le T\).
\end{proposition}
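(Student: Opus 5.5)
The argument is a finite induction on \(t\) combined with the triangle inequality. The claim to carry forward is \(\|\bm\xi_t-\bm\xi_0\|\le\sum_{s=0}^{t-1}\eta_s\cdot g_\ast\) for every \(0\le t\le T\). Because the right-hand side is at most \(\rho\) by the budget assumption, this gives \(\bm\xi_t\in B(\bm\xi_0,\rho)\subseteq\mathcal U\) at every step.

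\textbf{Base case.} At \(t=0\) the sum is empty and \(\|\bm\xi_0-\bm\xi_0\|=0\), so \(\bm\xi_0\) lies in the ball.

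\textbf{Inductive step.} Suppose the bound holds for some \(t<T\).
\begin{itemize}
\item Since \(\sum_{s=0}^{t-1}\eta_s g_\ast\le\sum_{s=0}^{T-1}\eta_s g_\ast\le\rho\), the iterate \(\bm\xi_t\) is in \(B(\bm\xi_0,\rho)\subseteq\mathcal U\).
\item The gradient bound is assumed only when \(\bm\xi_t\in\mathcal U\). It therefore applies at step \(t\) and gives \(\|\widehat{\bm g}_t\|\le g_\ast\).
\item Using the update rule and the triangle inequality,
\[
\|\bm\xi_{t+1}-\bm\xi_0\|\le\|\bm\xi_t-\bm\xi_0\|+\eta_t\|\widehat{\bm g}_t\|\le\sum_{s=0}^{t}\eta_s g_\ast .
\]
\end{itemize}
This is the claim at \(t+1\). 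Step sizes are implicitly nonnegative, so each partial sum is bounded by the full budget.

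The only subtlety is the ordering in the inductive step. Membership \(\bm\xi_t\in\mathcal U\) must be established before the gradient bound at step \(t\) is invoked, since that bound is conditional on being in \(\mathcal U\). Carrying the partial-sum bound, rather than ball membership alone, as the inductive hypothesis makes this ordering automatic. If the bound is taken pathwise for stochastic \(\widehat{\bm g}_t\), the conclusion holds almost surely.
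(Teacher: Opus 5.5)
Your proof is correct and is essentially the paper's argument. The paper also inducts on membership in $B(\bm\xi_0,\rho)$, uses that membership to activate the gradient bound at every earlier step, and bounds $\|\bm\xi_{t+1}-\bm\xi_0\|$ by applying the triangle inequality to the telescoped sum $\sum_{s\le t}\eta_s\widehat{\bm g}_s$; you instead carry the partial-sum bound forward one step at a time, and like the paper you only obtain $\|\bm\xi_t-\bm\xi_0\|\le\rho$, so both arguments implicitly read $B(\bm\xi_0,\rho)$ as the closed ball and use $\eta_t\ge 0$.
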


\begin{proof}
We prove by induction that \(\bm\xi_t\in B(\bm\xi_0,\rho)\). The claim is trivial at \(t=0\). If it holds up to time \(t\), then \(\bm\xi_s\in\mathcal U\) for \(0\le s\le t\), and hence
\begin{equation*}
\bm\xi_{t+1}-\bm\xi_0
=
-\sum_{s=0}^{t}\eta_s\cdot\widehat{\bm g}_s.
\end{equation*}
Taking norms gives
\begin{equation*}
\|\bm\xi_{t+1}-\bm\xi_0\|
\le
\sum_{s=0}^{t}\eta_s\cdot\|\widehat{\bm g}_s\|
\le
\sum_{s=0}^{T-1}\eta_s\cdot g_\ast
\le
\rho.
\end{equation*}
Thus \(\bm\xi_{t+1}\in B(\bm\xi_0,\rho)\), closing the induction.
\end{proof}

\begin{proposition}[Finite-horizon reduced need for warm-up]
\label{prop:app-peft-finite-horizon-reduced-warmup}
Suppose a neighborhood \(\mathcal U\) of the trainable-subspace initialization satisfies
\begin{equation*}
\sup_{\bm\xi\in\mathcal U}\pi_M(\bm\xi)\le\delta_M
\end{equation*}
and, for some unit vector \(\bm v\),
\begin{equation*}
\Var(\psi_{\bm\xi,\bm v}^{(1,0)}(X^+))
+
\Var(\psi_{\bm\xi,\bm v}^{(0,1)}(X^-))
\ge
\sigma_{1,\bm v}^2>0
\qquad
(\bm\xi\in\mathcal U).
\end{equation*}
If a finite-horizon trajectory remains in \(\mathcal U\), then every iterate on that horizon satisfies
\begin{equation*}
\pi_M(\bm\xi_t)\le\delta_M,
\qquad
\Var(\langle \widehat{\bm g}_B^\ell(\bm\xi_t),\bm v\rangle)
\ge
\frac{\sigma_{1,\bm v}^2}{B}.
\end{equation*}
\end{proposition}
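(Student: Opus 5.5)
The plan is a pointwise argument along the trajectory. No dynamical estimate is needed, because the hypothesis already places every iterate \(\bm\xi_t\) (for \(0\le t\le T\)) inside \(\mathcal U\). Both conclusions are properties that hold uniformly on \(\mathcal U\), so it suffices to check them at each iterate.

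For the severe-misranking bound I would argue directly. Since \(\bm\xi_t\in\mathcal U\), we have \(\pi_M(\bm\xi_t)\le\sup_{\bm\xi\in\mathcal U}\pi_M(\bm\xi)\le\delta_M\).

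For the variance bound I would proceed in three steps.

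\begin{enumerate}[leftmargin=5mm]
  \item Fix \(t\) and condition on \(\bm\xi_t\). Step \(t\) uses a fresh positive--negative mini-batch, as in Proposition~\ref{prop:app-pairwise-one-step-moments}. Conditionally on \(\bm\xi_t=\bm\xi\), the projected update \(\langle\widehat{\bm g}_B^\ell(\bm\xi),\bm v\rangle\) is therefore the two-sample U-statistic of Proposition~\ref{prop:app-ustat} at the fixed point \(\bm\xi\) and fixed direction \(\bm v\).
  \item By Proposition~\ref{prop:app-trainable-subspace-transfer}, the Hoeffding variance split of Theorem~\ref{thm:app-hoeffding} holds verbatim in the trainable coordinates. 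It gives
  \[
  \Var(\langle\widehat{\bm g}_B^\ell(\bm\xi),\bm v\rangle)=\frac{\Var(\psi^{(1,0)}_{\bm\xi,\bm v}(X^+))+\Var(\psi^{(0,1)}_{\bm\xi,\bm v}(X^-))}{B}+\frac{\Var(\psi^{(1,1)}_{\bm\xi,\bm v}(X^+,X^-))}{B^2}.
  \]
  \item Drop the nonnegative residual term and apply the lower bound \(\sigma_{1,\bm v}^2\), which is valid at \(\bm\xi=\bm\xi_t\in\mathcal U\). This yields the claimed bound \(\sigma_{1,\bm v}^2/B\).
\end{enumerate}

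This argument needs two implicit regularity conditions: \(\E[\psi_{\bm\xi,\bm v}^2]<\infty\) on \(\mathcal U\) and local differentiability of \(f_{\bm\xi}\). I would state these explicitly as part of the standing assumptions inherited from Theorem~\ref{thm:app-hoeffding}.

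The only delicate point is how to read the variance. \(\bm\xi_t\) is random, and the unconditional variance of the projected update could differ from the conditional one. I would interpret \(\Var(\langle\widehat{\bm g}_B^\ell(\bm\xi_t),\bm v\rangle)\) as the variance given \(\bm\xi_t\), i.e. over the fresh batch, which matches the conditional-moment convention of Proposition~\ref{prop:app-pairwise-one-step-moments}.

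If an unconditional statement is wanted instead, the law of total variance gives it. The unconditional variance is at least the expectation of the conditional variance, and that expectation is at least \(\sigma_{1,\bm v}^2/B\) on the event that the trajectory stays in \(\mathcal U\).

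If confinement is not assumed directly, it can be supplied by Proposition~\ref{prop:app-peft-confinement}.
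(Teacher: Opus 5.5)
Your proof is correct and follows the paper's argument. The severe-misranking bound is read off from the uniform hypothesis, and the variance bound comes from transferring the Hoeffding split to the trainable coordinate via Proposition~\ref{prop:app-trainable-subspace-transfer}, dropping the nonnegative residual term, and applying the uniform first-order lower bound. Your extra remarks are sound clarifications that the paper leaves implicit: reading the variance conditionally on \(\bm\xi_t\), using the law of total variance for the unconditional version, and noting the inherited moment assumptions.
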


\begin{proof}
The severe-misranking bound follows directly from the uniform hypothesis. For the variance bound, Proposition~\ref{prop:app-trainable-subspace-transfer} transfers the variance split in Eq.~\eqref{eq:app-var-split} to the trainable coordinate \(\bm\xi_t\):
\begin{equation*}
\Var(\langle \widehat{\bm g}_B^\ell(\bm\xi_t),\bm v\rangle)
=
\frac{
\Var(\psi_{\bm\xi_t,\bm v}^{(1,0)}(X^+))
+
\Var(\psi_{\bm\xi_t,\bm v}^{(0,1)}(X^-))
}{B}
+
\frac{
\Var(\psi_{\bm\xi_t,\bm v}^{(1,1)}(X^+,X^-))
}{B^2}.
\end{equation*}
The residual variance is nonnegative, and the first-order term is bounded below by \(\sigma_{1,\bm v}^2\) throughout \(\mathcal U\). The displayed lower bound follows.
\end{proof}

\begin{proposition}[Three checks for finite-horizon direct pairwise AUC-surrogate training from a pretrained start]
\label{prop:app-peft-three-checks}
A finite-horizon conclusion of the form in Proposition~\ref{prop:app-peft-finite-horizon-reduced-warmup} follows from three checks:
\begin{enumerate}[leftmargin=5mm]
\item \textbf{Favorable initialization.} Verify a neighborhood \(\mathcal U\) of the pretrained trainable-subspace initialization where severe-misranking mass is uniformly controlled and the first-order projected variance has a uniform positive lower bound.
\item \textbf{Finite-horizon confinement.} Verify that the direct pairwise AUC-surrogate trajectory remains in \(\mathcal U\), for example by Proposition~\ref{prop:app-peft-confinement}.
\item \textbf{Pointwise transfer.} Once each iterate lies in \(\mathcal U\), the severe-misranking and first-order variance bounds transfer to every iterate by Proposition~\ref{prop:app-peft-finite-horizon-reduced-warmup}.
\end{enumerate}
\end{proposition}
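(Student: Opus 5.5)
The plan is to chain the three checks into the conclusion of Proposition~\ref{prop:app-peft-finite-horizon-reduced-warmup}. Each check discharges exactly one hypothesis of that proposition, so the work consists of matching the hypotheses to the checks in order.

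\textbf{Step 1: fix the neighborhood.} Check 1 supplies the neighborhood \(\mathcal U\). I will make it concrete as follows.
\begin{itemize}[leftmargin=5mm]
\item Take \(\mathcal U=B(\bm\xi_0,\rho)\), with \(\rho\) small enough for the gradient envelope \(L_f\) of Proposition~\ref{prop:app-peft-score-perturbation} to hold.
\item Uniform control \(\sup_{\mathcal U}\pi_M\le\delta_M\) then follows from a single reference evaluation \(\pi_{M-4L_f\rho}(\bar{\bm\xi})\le\delta_M\), via Proposition~\ref{prop:app-peft-uniform-severe-control}.
\item The uniform first-order variance floor \(\sigma_{1,\bm v}^2>0\) follows from positivity at one point, via the \(L_2\)-continuity argument of Proposition~\ref{prop:app-peft-first-order-persistence}.
\item Positivity at that point requires shrinking \(\rho\) so that the ball lies inside the continuity neighborhood \(\mathcal U'\).
\end{itemize}
With these choices, the first two hypotheses of Proposition~\ref{prop:app-peft-finite-horizon-reduced-warmup} hold on \(\mathcal U\).

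\textbf{Step 2: confinement.} Check 2 is verified by Proposition~\ref{prop:app-peft-confinement}. Assuming a bounded stochastic-gradient norm \(g_\ast\) on \(\mathcal U\) and a step budget \(\sum_t\eta_t g_\ast\le\rho\), induction gives \(\bm\xi_t\in B(\bm\xi_0,\rho)\subseteq\mathcal U\) for all \(t\le T\). This is precisely the trajectory-containment hypothesis.

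\textbf{Step 3: pointwise transfer.} Check 3 is the application of Proposition~\ref{prop:app-peft-finite-horizon-reduced-warmup} to each iterate.
\begin{itemize}[leftmargin=5mm]
\item The Hoeffding split is valid in trainable coordinates by Proposition~\ref{prop:app-trainable-subspace-transfer}.
\item Dropping the nonnegative residual term gives \(\Var(\langle\widehat{\bm g}_B^\ell(\bm\xi_t),\bm v\rangle)\ge\sigma_{1,\bm v}^2/B\).
\item The bound \(\pi_M(\bm\xi_t)\le\delta_M\) is immediate from the uniform bound of Step 1.
\end{itemize}

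\textbf{Main obstacle: compatibility of the radii.} Three conditions must hold for one common \(\rho\), and I expect reconciling them to be the main difficulty.
\begin{itemize}[leftmargin=5mm]
\item The severe-misranking transfer needs \(M>4L_f\rho\).
\item The variance-persistence neighborhood from Proposition~\ref{prop:app-peft-first-order-persistence} has an unspecified size, and the ball must fit inside it.
\item Confinement needs the step budget to fit within \(\rho\).
\end{itemize}
My approach is to take \(\rho\) as the minimum of the persistence radius and \(M/(4L_f)\), with the latter made slightly strict. The horizon \(T\), equivalently the learning-rate budget, is then chosen afterward so that \(\sum_t\eta_t\le\rho/g_\ast\).

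A further subtlety is that \(g_\ast\) bounds a random quantity. I would either assume it holds almost surely on \(\mathcal U\), which is reasonable under bounded \(\nabla f\) and bounded weights \(w_\ell\) on bounded margins, or state the conclusion on the event that the bound holds.
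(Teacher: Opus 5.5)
Your proposal is correct and follows the paper's proof. The paper likewise observes that check 1 supplies the two uniform hypotheses of Proposition~\ref{prop:app-peft-finite-horizon-reduced-warmup}, check 2 supplies trajectory containment, and substituting each iterate gives the conclusion. Your concrete construction of \(\mathcal U\) via Propositions~\ref{prop:app-peft-uniform-severe-control} and~\ref{prop:app-peft-first-order-persistence} is an optional extra, since the statement takes check 1 as verified. The radius compatibility you flag is not a real obstacle: shrinking \(\rho\) only raises the threshold \(M-4L_f\rho\) (and \(\pi_M\) is nonincreasing in \(M\)) and keeps the ball inside the persistence neighborhood, with the step budget chosen afterward.
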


\begin{proof}
The first check supplies the two uniform hypotheses of Proposition~\ref{prop:app-peft-finite-horizon-reduced-warmup}. The second check supplies its trajectory containment assumption. Substituting each iterate into the uniform neighborhood bounds gives the pointwise severe-misranking control and first-order variance lower bound on the whole finite horizon.
\end{proof}

\subsection{Local Pairwise AUC-Surrogate Descent in a Favorable Subspace}
\label{app:peft-favorable-subspace}

\begin{proposition}[Local pairwise AUC-surrogate descent in a favorable trainable subspace]
\label{prop:app-peft-local-auc-descent}
Let
\begin{equation*}
R_\ell(\bm\xi)=\E[\ell(m_{\bm\xi}(X^+,X^-))]
\end{equation*}
be the pairwise AUC surrogate risk in the trainable subspace. Suppose \(L_\ell>0\) and \(R_\ell\) has \(L_\ell\)-Lipschitz gradient on a neighborhood \(\mathcal U\). Consider dynamics
\begin{equation*}
\bm\xi_{t+1}
=
\bm\xi_t-\eta_t\cdot\widehat{\bm g}_t,
\qquad
t=0,\ldots,T-1,
\end{equation*}
adapted to a filtration \((\mathcal F_t)_{t\ge0}\), with \(\bm\xi_0,\ldots,\bm\xi_T\in\mathcal U\) almost surely. Assume
\begin{equation*}
\E[\widehat{\bm g}_t\mid\mathcal F_t]
=
\nabla R_\ell(\bm\xi_t),
\qquad
\E[\|\widehat{\bm g}_t-\nabla R_\ell(\bm\xi_t)\|^2\mid\mathcal F_t]
\le
\sigma^2,
\end{equation*}
and \(0<\eta_t\le1/L_\ell\). Then
\begin{equation*}
\E[R_\ell(\bm\xi_{t+1})\mid\mathcal F_t]
\le
R_\ell(\bm\xi_t)
-
\frac{\eta_t}{2}\cdot\|\nabla R_\ell(\bm\xi_t)\|^2
+
\frac{L_\ell\cdot\eta_t^2}{2}\cdot\sigma^2.
\end{equation*}
Consequently, if \(R_{\ell,\mathcal U}^\star=\inf_{\bm\xi\in\mathcal U}R_\ell(\bm\xi)\), then
\begin{equation*}
\sum_{t=0}^{T-1}
\frac{\eta_t}{2}\cdot
\E[\|\nabla R_\ell(\bm\xi_t)\|^2]
\le
R_\ell(\bm\xi_0)-R_{\ell,\mathcal U}^\star
+
\frac{L_\ell\cdot\sigma^2}{2}
\sum_{t=0}^{T-1}\eta_t^2.
\end{equation*}
\end{proposition}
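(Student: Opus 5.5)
The plan is the standard descent-lemma argument for smooth stochastic optimization, carried out conditionally on \(\mathcal F_t\) and then summed and telescoped.

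\textbf{Step 1: one-step descent bound.} Since \(\bm\xi_t,\bm\xi_{t+1}\in\mathcal U\) and \(\nabla R_\ell\) is \(L_\ell\)-Lipschitz there, the descent lemma gives
\begin{equation*}
R_\ell(\bm\xi_{t+1})
\le
R_\ell(\bm\xi_t)
-\eta_t\langle\nabla R_\ell(\bm\xi_t),\widehat{\bm g}_t\rangle
+\frac{L_\ell\eta_t^2}{2}\|\widehat{\bm g}_t\|^2 .
\end{equation*}
The descent lemma requires the segment between the two points to lie in the region where the Lipschitz bound holds. I would assume \(\mathcal U\) is convex (for instance a ball, as in Proposition~\ref{prop:app-peft-confinement}), or else apply the localization convention of Appendix~\ref{app:pairwise-sgd-dynamics} and work with a regularized objective agreeing with \(R_\ell\) on \(\mathcal U\). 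This segment requirement is the only genuine subtlety, and I expect it to be the main obstacle; the rest is routine.

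\textbf{Step 2: condition on \(\mathcal F_t\).} Because \(\bm\xi_t\) is \(\mathcal F_t\)-measurable, taking conditional expectations and using unbiasedness turns the linear term into \(-\eta_t\|\nabla R_\ell(\bm\xi_t)\|^2\). For the quadratic term, the bias--variance identity gives
\begin{equation*}
\E[\|\widehat{\bm g}_t\|^2\mid\mathcal F_t]
=
\|\nabla R_\ell(\bm\xi_t)\|^2
+\E[\|\widehat{\bm g}_t-\nabla R_\ell(\bm\xi_t)\|^2\mid\mathcal F_t]
\le
\|\nabla R_\ell(\bm\xi_t)\|^2+\sigma^2 .
\end{equation*}
The resulting coefficient on \(\|\nabla R_\ell(\bm\xi_t)\|^2\) is \(-\eta_t+L_\ell\eta_t^2/2\). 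Since \(\eta_t\le 1/L_\ell\), this is at most \(-\eta_t/2\), which yields the displayed one-step inequality.

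\textbf{Step 3: sum and telescope.} I would take total expectations of the one-step inequality by the tower property, sum over \(t=0,\ldots,T-1\), and telescope to obtain
\begin{equation*}
\sum_{t=0}^{T-1}\frac{\eta_t}{2}\E[\|\nabla R_\ell(\bm\xi_t)\|^2]
\le
R_\ell(\bm\xi_0)-\E[R_\ell(\bm\xi_T)]
+\frac{L_\ell\sigma^2}{2}\sum_{t=0}^{T-1}\eta_t^2 .
\end{equation*}
Since \(\bm\xi_T\in\mathcal U\) almost surely, \(\E[R_\ell(\bm\xi_T)]\ge R^\star_{\ell,\mathcal U}\), and substituting gives the stated bound. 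Two integrability points need checking. First, \(R_\ell(\bm\xi_0)\) is deterministic or integrable. Second, \(\E[\|\nabla R_\ell(\bm\xi_t)\|^2]\) is finite; if it were not, I would use the conditional inequality in Step 2 to argue finiteness inductively, or simply allow \(+\infty\) on the left, in which case the inequality holds trivially.
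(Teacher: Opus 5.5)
Your proposal is correct and follows the paper's proof essentially step for step: the descent lemma, conditional unbiasedness together with $\E[\|\widehat{\bm g}_t\|^2\mid\mathcal F_t]\le\|\nabla R_\ell(\bm\xi_t)\|^2+\sigma^2$, the bound $1-L_\ell\eta_t/2\ge 1/2$, and then telescoping with $\E[R_\ell(\bm\xi_T)]\ge R^\star_{\ell,\mathcal U}$. Your remark that the descent lemma needs the segment from $\bm\xi_t$ to $\bm\xi_{t+1}$ to lie in $\mathcal U$ is a point the paper passes over silently; assuming $\mathcal U$ convex is the clean fix, because a regularized global extension need not keep the same constant $L_\ell$ when $\mathcal U$ is nonconvex.
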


\begin{proof}
Smoothness gives
\begin{equation*}
R_\ell(\bm\xi_{t+1})
\le
R_\ell(\bm\xi_t)
+
\langle\nabla R_\ell(\bm\xi_t),\bm\xi_{t+1}-\bm\xi_t\rangle
+
\frac{L_\ell}{2}\cdot\|\bm\xi_{t+1}-\bm\xi_t\|^2.
\end{equation*}
Substitute \(\bm\xi_{t+1}-\bm\xi_t=-\eta_t\cdot\widehat{\bm g}_t\) and condition on \(\mathcal F_t\):
\begin{equation*}
\begin{aligned}
\E[R_\ell(\bm\xi_{t+1})\mid\mathcal F_t]
&\le
R_\ell(\bm\xi_t)
-
\eta_t\cdot
\left\langle
\nabla R_\ell(\bm\xi_t),
\E[\widehat{\bm g}_t\mid\mathcal F_t]
\right\rangle \\
&\quad
+
\frac{L_\ell\cdot\eta_t^2}{2}
\cdot
\E[\|\widehat{\bm g}_t\|^2\mid\mathcal F_t].
\end{aligned}
\end{equation*}
The conditional unbiasedness assumption gives the inner product \(\|\nabla R_\ell(\bm\xi_t)\|^2\). Also,
\begin{equation*}
\E[\|\widehat{\bm g}_t\|^2\mid\mathcal F_t]
\le
\|\nabla R_\ell(\bm\xi_t)\|^2+\sigma^2,
\end{equation*}
because the cross term in
\(\widehat{\bm g}_t=(\widehat{\bm g}_t-\nabla R_\ell(\bm\xi_t))+\nabla R_\ell(\bm\xi_t)\)
has conditional expectation zero. Thus
\begin{equation*}
\E[R_\ell(\bm\xi_{t+1})\mid\mathcal F_t]
\le
R_\ell(\bm\xi_t)
-
\eta_t\cdot
\left(1-\frac{L_\ell\cdot\eta_t}{2}\right)
\cdot
\|\nabla R_\ell(\bm\xi_t)\|^2
+
\frac{L_\ell\cdot\eta_t^2}{2}\cdot\sigma^2.
\end{equation*}
Since \(\eta_t\le1/L_\ell\), \(1-L_\ell\cdot\eta_t/2\ge1/2\), proving the one-step bound. Taking expectations, summing over \(t\), and using \(\E[R_\ell(\bm\xi_T)]\ge R_{\ell,\mathcal U}^\star\) proves the finite-horizon inequality.
\end{proof}

\begin{corollary}[Local PL contraction in a favorable trainable subspace]
\label{cor:app-peft-local-pl-contraction}
In the setting of Proposition~\ref{prop:app-peft-local-auc-descent}, assume additionally that \(\mu>0\) and \(R_\ell\) satisfies the local PL inequality on \(\mathcal U\):
\begin{equation*}
\frac{1}{2}\cdot\|\nabla R_\ell(\bm\xi)\|^2
\ge
\mu\cdot
\bigl(R_\ell(\bm\xi)-R_{\ell,\mathcal U}^\star\bigr)
\qquad
(\bm\xi\in\mathcal U).
\end{equation*}
For a constant step size \(0<\eta\le\min\{1/L_\ell,1/\mu\}\), if \(\bm\xi_0,\ldots,\bm\xi_T\in\mathcal U\) almost surely, then
\begin{equation*}
\E[R_\ell(\bm\xi_t)-R_{\ell,\mathcal U}^\star]
\le
(1-\eta\cdot\mu)^t
\cdot
\bigl(R_\ell(\bm\xi_0)-R_{\ell,\mathcal U}^\star\bigr)
+
\frac{L_\ell\cdot\eta\cdot\sigma^2}{2\cdot\mu}
\cdot
\bigl(1-(1-\eta\cdot\mu)^t\bigr)
\end{equation*}
for every \(0\le t\le T\).
\end{corollary}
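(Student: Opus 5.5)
The plan is to derive this from the one-step bound of Proposition~\ref{prop:app-peft-local-auc-descent}, which already supplies
\begin{equation*}
\E[R_\ell(\bm\xi_{t+1})\mid\mathcal F_t]
\le
R_\ell(\bm\xi_t)
-
\frac{\eta}{2}\cdot\|\nabla R_\ell(\bm\xi_t)\|^2
+
\frac{L_\ell\cdot\eta^2}{2}\cdot\sigma^2 .
\end{equation*}
The only new ingredient is the local PL inequality. Since \(\bm\xi_t\in\mathcal U\) almost surely, we may apply it pointwise at \(\bm\xi_t\): \(\|\nabla R_\ell(\bm\xi_t)\|^2\ge 2\mu\cdot(R_\ell(\bm\xi_t)-R_{\ell,\mathcal U}^\star)\). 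Substitute this and subtract \(R_{\ell,\mathcal U}^\star\) from both sides. Writing \(D_t=R_\ell(\bm\xi_t)-R_{\ell,\mathcal U}^\star\), this gives the conditional recursion
\begin{equation*}
\E[D_{t+1}\mid\mathcal F_t]\le(1-\eta\cdot\mu)\cdot D_t+\frac{L_\ell\cdot\eta^2\cdot\sigma^2}{2}.
\end{equation*}

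Next, take full expectations. The tower property yields the deterministic recursion \(e_{t+1}\le(1-\eta\mu)e_t+c\), where \(e_t=\E[D_t]\) and \(c=L_\ell\eta^2\sigma^2/2\). The condition \(\eta\le1/\mu\) ensures \(1-\eta\mu\in[0,1)\). This nonnegativity is what lets us multiply the inequality through without flipping it, so the recursion can be unrolled by induction on \(t\le T\). Unrolling gives
\begin{equation*}
e_t\le(1-\eta\mu)^t e_0+c\sum_{s=0}^{t-1}(1-\eta\mu)^s .
\end{equation*}
The geometric sum equals \((1-(1-\eta\mu)^t)/(\eta\mu)\), so the second term becomes \(\frac{L_\ell\eta\sigma^2}{2\mu}(1-(1-\eta\mu)^t)\). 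Finally, \(e_0=R_\ell(\bm\xi_0)-R_{\ell,\mathcal U}^\star\) because \(\bm\xi_0\) is deterministic, which gives the stated bound.

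The main obstacle is bookkeeping rather than analysis. First, \(D_t\ge0\) holds only because the iterates stay in \(\mathcal U\), where \(R_{\ell,\mathcal U}^\star\) is the infimum. The containment hypothesis in the statement supplies exactly this, and it also justifies applying PL and smoothness at every step. Second, integrability of \(R_\ell(\bm\xi_t)\) should be noted. It follows inductively from the one-step bound, since each \(\E[D_{t+1}]\) is controlled by the finite quantity \(\E[D_t]\) plus a constant.
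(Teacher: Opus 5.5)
Your proposal is correct and follows the paper's proof step for step. You substitute the PL inequality into the one-step bound of Proposition~\ref{prop:app-peft-local-auc-descent}, subtract $R_{\ell,\mathcal U}^\star$, take expectations to get $a_{t+1}\le(1-\eta\mu)a_t+L_\ell\eta^2\sigma^2/2$, and unroll it with the geometric sum; your remarks on $1-\eta\mu\ge 0$, nonnegativity of $D_t$, and integrability are valid bookkeeping that the paper leaves implicit.
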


\begin{proof}
With \(\eta_t=\eta\), Proposition~\ref{prop:app-peft-local-auc-descent} gives
\begin{equation*}
\E[R_\ell(\bm\xi_{t+1})\mid\mathcal F_t]
\le
R_\ell(\bm\xi_t)
-
\frac{\eta}{2}\cdot\|\nabla R_\ell(\bm\xi_t)\|^2
+
\frac{L_\ell\cdot\eta^2}{2}\cdot\sigma^2.
\end{equation*}
The PL inequality gives
\begin{equation*}
\frac{1}{2}\cdot\|\nabla R_\ell(\bm\xi_t)\|^2
\ge
\mu\cdot
\bigl(R_\ell(\bm\xi_t)-R_{\ell,\mathcal U}^\star\bigr).
\end{equation*}
After subtracting \(R_{\ell,\mathcal U}^\star\), taking expectations, and iterating the resulting scalar inequality,
\begin{equation*}
a_{t+1}
\le
(1-\eta\cdot\mu)\cdot a_t
+
\frac{L_\ell\cdot\eta^2}{2}\cdot\sigma^2,
\qquad
a_t=\E[R_\ell(\bm\xi_t)-R_{\ell,\mathcal U}^\star],
\end{equation*}
we obtain
\begin{equation*}
a_t
\le
(1-\eta\cdot\mu)^t a_0
+
\frac{L_\ell\cdot\eta^2}{2}\cdot\sigma^2
\sum_{s=0}^{t-1}(1-\eta\cdot\mu)^s.
\end{equation*}
The geometric sum equals
\begin{equation*}
\frac{1-(1-\eta\cdot\mu)^t}{\eta\cdot\mu},
\end{equation*}
which proves the claim.
\end{proof}

\begin{proposition}[Direct pairwise AUC-surrogate training in a favorable trainable subspace]
\label{prop:app-peft-good-subspace-direct-auc}
Suppose a trainable-subspace neighborhood \(\mathcal U\) satisfies
\begin{equation*}
\sup_{\bm\xi\in\mathcal U}\pi_M(\bm\xi)\le\delta_M
\end{equation*}
and, for some unit vector \(\bm v\),
\begin{equation*}
\Var(\psi_{\bm\xi,\bm v}^{(1,0)}(X^+))
+
\Var(\psi_{\bm\xi,\bm v}^{(0,1)}(X^-))
\ge
\sigma_{1,\bm v}^2>0
\qquad
(\bm\xi\in\mathcal U).
\end{equation*}
Assume the direct pairwise AUC-surrogate dynamics in the trainable subspace are conditionally unbiased, satisfy the moment bound of Proposition~\ref{prop:app-peft-local-auc-descent}, and stay in \(\mathcal U\) on the finite horizon. Then, on that horizon,
\begin{equation*}
\pi_M(\bm\xi_t)\le\delta_M,
\qquad
\Var(\langle \widehat{\bm g}_B^\ell(\bm\xi_t),\bm v\rangle)
\ge
\frac{\sigma_{1,\bm v}^2}{B},
\end{equation*}
and the finite-horizon descent bound in Proposition~\ref{prop:app-peft-local-auc-descent} holds. If the local PL condition also holds, the contraction bound in Corollary~\ref{cor:app-peft-local-pl-contraction} holds as well.
\end{proposition}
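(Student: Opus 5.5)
The statement is a packaging result: each conclusion follows from an earlier proposition once we check that its hypotheses are among those assumed here. I would therefore split the proof into three parts, matching the three conclusions. No new estimate is needed.

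\textbf{Severe-misranking and variance bounds.} The two pointwise bounds, \(\pi_M(\bm\xi_t)\le\delta_M\) and \(\Var(\langle \widehat{\bm g}_B^\ell(\bm\xi_t),\bm v\rangle)\ge\sigma_{1,\bm v}^2/B\), come directly from Proposition~\ref{prop:app-peft-finite-horizon-reduced-warmup}. Its three hypotheses are exactly the assumed uniform severe-mass bound on \(\mathcal U\), the uniform first-order variance lower bound on \(\mathcal U\), and containment of the trajectory in \(\mathcal U\) on the horizon.

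Internally, the variance bound rests on Proposition~\ref{prop:app-trainable-subspace-transfer}, which transfers the Hoeffding split of Theorem~\ref{thm:app-hoeffding} to the trainable coordinate. The residual \(B^{-2}\) term is nonnegative, so only the \(B^{-1}\) term is needed for the lower bound. I would note one point about the step \(t\): the Hoeffding split is a statement about a fresh mini-batch at a fixed parameter. So the variance at step \(t\) should be read conditionally on \(\bm\xi_t\), which is consistent with the Markov setup of Proposition~\ref{prop:app-pairwise-one-step-moments}.

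\textbf{Finite-horizon descent bound.} This follows from Proposition~\ref{prop:app-peft-local-auc-descent}. Its requirements are:
\begin{itemize}
\item \(L_\ell\)-smoothness of \(R_\ell\) on \(\mathcal U\), together with step sizes \(\eta_t\le1/L_\ell\);
\item conditional unbiasedness of \(\widehat{\bm g}_t\);
\item a conditional second-moment bound \(\sigma^2\);
\item almost-sure containment \(\bm\xi_0,\ldots,\bm\xi_T\in\mathcal U\).
\end{itemize}
Unbiasedness holds by Proposition~\ref{prop:app-pairwise-batch-unbiased} applied conditionally with fresh batches, transferred to \(\bm\xi\) via Proposition~\ref{prop:app-trainable-subspace-transfer}. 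The moment bound and smoothness are assumed by reference to Proposition~\ref{prop:app-peft-local-auc-descent}, and containment is assumed. The natural filtration is the one generated by past iterates and batches.

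\textbf{PL contraction.} If the local PL condition also holds, Corollary~\ref{cor:app-peft-local-pl-contraction} applies verbatim, with a constant step \(\eta\le\min\{1/L_\ell,1/\mu\}\).

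\textbf{Main obstacle.} The only delicate point is bookkeeping about the meaning of "stay in \(\mathcal U\)". The descent bounds need almost-sure containment, while the pointwise bounds need it only along the realized path. I would state the containment as almost sure, noting that it can be guaranteed by Proposition~\ref{prop:app-peft-confinement}. This makes both uses valid simultaneously and gives the three parts together.
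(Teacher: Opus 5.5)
Your proposal is correct and matches the paper's proof: the severe-mass and variance bounds come from Proposition~\ref{prop:app-peft-finite-horizon-reduced-warmup}, the descent bound from Proposition~\ref{prop:app-peft-local-auc-descent}, and the contraction from Corollary~\ref{cor:app-peft-local-pl-contraction}. Your extra remarks are sensible refinements that the paper leaves implicit: reading the step-$t$ variance conditionally on $\bm\xi_t$, taking containment almost surely so both uses are valid, and importing the smoothness and step-size hypotheses from Proposition~\ref{prop:app-peft-local-auc-descent}.
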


\begin{proof}
The severe-misranking and first-order variance statements follow from Proposition~\ref{prop:app-peft-finite-horizon-reduced-warmup}. The conditional unbiasedness and moment assumptions place the same trajectory in the setting of Proposition~\ref{prop:app-peft-local-auc-descent}, so the descent bound holds. If the local PL condition is added, Corollary~\ref{cor:app-peft-local-pl-contraction} applies.
\end{proof}

\section{Model-Class Realizations of Projected Bad Components}
\label{app:transformer-degenerate-region}

This section supports the model-class side of Section~\ref{sec:bad-regions}. It constructs Transformer-style scorers in which the projected bad-component conditions from Appendix~\ref{app:pairwise-bad-region-dynamics} hold inside the model parameterization, beyond the abstract linear example. The attention map is genuinely data-dependent: the query, key, value, and output maps are nonzero, and the classification token assigns different attention weights to the data token for the two possible inputs. The results are existence statements. They produce neighborhoods in which a specified trainable direction has weak drift, collapsed first-order stochastic variation, and nonzero residual pair-interaction variance. Degeneracy is established for the constructed projected direction.

\subsection{Data-Dependent One-Layer Construction}
\label{app:transformer-one-layer}

\begin{proposition}[Data-dependent one-layer Transformer region with high error and first-order variance collapse]
\label{prop:one-layer-transformer-degenerate-region}
Fix \(M>0\), a local radius \(r>0\), a deterministic-drift tolerance \(\delta>0\), and any \(\lambda>0\). There is a one-head, one-layer residual self-attention scorer with nonzero query, key, value, and output maps, a single trainable scalar \(\theta\), and a scale \(\gamma>0\) such that the classification-token attention weight depends on the input and, on the nonempty interval
\begin{equation*}
\mathcal U_{\gamma,M,r}
:=
\left[\frac{M}{\gamma},\frac{M}{\gamma}+r\right],
\end{equation*}
the following properties hold for the pairwise square loss \(\ell(x)=x^2\).
For this loss, let
\begin{equation*}
G_\theta(A,C)
:=
\frac{d}{d\theta}\ell(m_\theta(A,C))
\end{equation*}
denote the scalar gradient kernel in the trainable direction.
\begin{enumerate}[leftmargin=5mm]
  \item The attention weight placed by the classification token on the data token is
  \begin{equation*}
  \alpha_z=\frac{\exp(\lambda\cdot z)}{1+\exp(\lambda\cdot z)},
  \qquad z\in\{-1,1\},
  \end{equation*}
  so \(\alpha_{+1}\ne \alpha_{-1}\).
  \item The severe-misranking mass at level \(M\) is
  \begin{equation*}
  \pi_M(\theta)
  :=
  \Pbb(m_\theta(X^+,X^-)\le -M)
  =
  \frac{1}{4},
  \qquad
  \theta\in\mathcal U_{\gamma,M,r}.
  \end{equation*}
  \item The expected scalar gradient in the trainable direction satisfies
  \begin{equation*}
  \sup_{\theta\in\mathcal U_{\gamma,M,r}}
  \left|\E[G_\theta(X^+,X^-)]\right|
  \le \delta
  \end{equation*}
  whenever \(\gamma>0\) is chosen sufficiently small.
  \item Both first-order Hoeffding projections of the scalar pairwise-gradient kernel vanish for every \(\theta\):
  \begin{equation*}
  \E[G_\theta(A,C)\mid A]-\E[G_\theta(A,C)]
  =
  0,
  \qquad
  \E[G_\theta(A,C)\mid C]-\E[G_\theta(A,C)]
  =
  0.
  \end{equation*}
  Consequently, for a mini-batch with \(B\) positives and \(B\) negatives,
  \begin{equation*}
  \Var\!\left(
  \frac{1}{B^2}
  \sum_{i=1}^B\sum_{j=1}^B
  G_\theta(A_i,C_j)
  \right)
  =
  \frac{\gamma^4\cdot \theta^2}{B^2}.
  \end{equation*}
\end{enumerate}
\end{proposition}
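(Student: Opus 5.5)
The plan is to reduce the Transformer scorer to the scalar Rademacher example of Proposition~\ref{prop:app-explicit-bad-regions}. The key observation is that an input space with only two values \(z\in\{-1,1\}\) makes \emph{every} function of \(z\) affine in \(z\). So a genuinely data-dependent, nonlinear softmax attention still yields a score of the form \(f_\theta(z)=\theta\cdot h(z)\) with \(h(z)=a+b\cdot z\). Positives and negatives will be independent Rademacher variables \(A\) and \(C\), as in Proposition~\ref{prop:app-explicit-bad-regions}.

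\textbf{Step 1 (construction).} Use two tokens: a classification token \(e_0\) and a data token carrying \(z\), for example \(x_z=e_0'+z\cdot e_1\) in \(\R^3\) with orthonormal coordinates.
\begin{itemize}[leftmargin=5mm]
\item Choose \(W_Q\) and \(W_K\) nonzero so that the classification-token query has logit \(0\) against itself and logit \(\lambda z\) against the data token. The softmax weight on the data token is then \(\alpha_z=e^{\lambda z}/(1+e^{\lambda z})\), which gives item~1, and \(\alpha_{+1}\ne\alpha_{-1}\) because \(\lambda>0\).
\item Choose a nonzero value map whose data-token value depends on \(z\) (for instance it reads off the \(e_1\) coordinate).
\item Let the trainable scalar \(\theta\) multiply a fixed nonzero output/readout vector applied to the residual classification-token state.
\end{itemize}
The resulting score is \(f_\theta(z)=\theta\cdot h(z)\) for some function \(h\) on \(\{-1,1\}\). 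Write \(h(z)=a+b\cdot z\) with \(b=(h(1)-h(-1))/2\). I would verify that \(h(1)\ne h(-1)\), which holds because the attended value changes sign with \(z\) and \(\alpha_z>0\). Finally, rescale the readout so that \(b=\gamma/2\). The only care needed in this step is checking that all four maps stay nonzero while \(b\ne0\); this bookkeeping is the main (mild) obstacle.

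\textbf{Step 2 (reduction).} The constant \(a\) cancels in the margin, so
\[
m_\theta(A,C)=\tfrac{\gamma}{2}\cdot\theta\cdot(A-C).
\]
For \(\ell(x)=x^2\) the scalar gradient kernel is
\[
G_\theta=2m\cdot\partial_\theta m=\tfrac{\gamma^2}{2}\cdot\theta\cdot(A-C)^2=\gamma^2\theta-\gamma^2\theta\cdot AC,
\]
using \((A-C)^2=2-2AC\). Each item then follows.
\begin{itemize}[leftmargin=5mm]
\item \emph{Item 2.} We have \(m\le -M\) only when \(A=-1\) and \(C=1\), where \(m=-\gamma\theta\). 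On \(\mathcal U_{\gamma,M,r}\) we have \(\theta\ge M/\gamma\), hence \(m=-\gamma\theta\le -M\). The event \(\{A=-1,C=1\}\) has probability \(1/4\).
\item \emph{Item 3.} \(\E[G_\theta]=\gamma^2\theta\le\gamma^2(M/\gamma+r)=\gamma M+\gamma^2 r\). This is at most \(\delta\) once \(\gamma\) is small enough.
\item \emph{Item 4.} Conditioning on \(A\) kills the \(AC\) term because \(\E[C]=0\), and symmetrically when conditioning on \(C\). So both first-order projections vanish, exactly as in Proposition~\ref{prop:app-explicit-bad-regions}. The canonical term \(-\gamma^2\theta AC\) has variance \(\gamma^4\theta^2\). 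Theorem~\ref{thm:app-hoeffding} (Eq.~\eqref{eq:app-var-split}) then gives the variance \(\gamma^4\theta^2/B^2\).
\end{itemize}
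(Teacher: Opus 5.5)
Your proposal is correct and is essentially the paper's proof. The paper uses the same two-token construction: the classification-token query scores the data-token key $\lambda z$ and its own key $0$, the value map reads off $z$ (so the classification token's value is $0$), the feed-forward map is zero, and the scalar head is $\bm a_\theta=(0,\gamma\theta,0)^\top$, giving $f_\theta(s_z)=\gamma\theta\alpha_z z$. The only difference is cosmetic: instead of your affine-in-$z$ rescaling, the paper gets the normalization directly from $\alpha_{+1}+\alpha_{-1}=1$, i.e.\ $h_{+1}-h_{-1}=1$, which yields the same kernel $G_\theta=\gamma^2\theta(1-AC)$ and the same checks of items 2--4.
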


\begin{proof}
Let \(A\) and \(C\) be independent Rademacher random variables. A positive example is the two-token sequence \(X^+=s_A\), and a negative example is the two-token sequence \(X^-=s_C\). For \(z\in\{-1,1\}\), the sequence \(s_z\) consists of a classification token and a data token with embeddings
\begin{equation*}
\bm e_{\mathrm{cls}}=(1,0,0),
\qquad
\bm e_z=(0,z,0)
\qquad\text{in }\R^3.
\end{equation*}
There are no positional embeddings. Consider a single-head residual self-attention block without layer normalization. For a two-token hidden-state matrix \(\bm H\in\R^{2\times 3}\), the attention output is
\begin{equation*}
\mathrm{Attn}(\bm H)
=
\mathrm{softmax}\!\left(
(\bm H\cdot\bm W_Q)\cdot(\bm H\cdot\bm W_K)^\top
\right)
\cdot(\bm H\cdot\bm W_V)\cdot \bm W_O.
\end{equation*}
Choose
\begin{equation*}
\bm W_Q=
\begin{pmatrix}
1\\0\\0
\end{pmatrix},
\qquad
\bm W_K=
\begin{pmatrix}
0\\ \lambda\\0
\end{pmatrix},
\qquad
\bm W_V=
\begin{pmatrix}
0\\1\\0
\end{pmatrix},
\qquad
\bm W_O=
\begin{pmatrix}
0 & 1 & 0
\end{pmatrix}.
\end{equation*}
All four maps are nonzero. For the classification token, the query is \(1\), the classification-token key is \(0\), and the data-token key is \(\lambda\cdot z\). Hence the classification-token attention weight on the data token is
\begin{equation*}
\alpha_z
=
\frac{\exp(\lambda\cdot z)}
{1+\exp(\lambda\cdot z)}.
\end{equation*}
Since \(\lambda>0\), \(\alpha_{+1}\ne\alpha_{-1}\). The value at the classification token is \(0\), while the value at the data token is \(z\):
\begin{equation*}
\bm e_{\mathrm{cls}}\cdot\bm W_V=0,
\qquad
\bm e_z\cdot\bm W_V=z.
\end{equation*}
Thus the attention output added to the classification token is
\begin{equation*}
(0,\alpha_z\cdot z,0).
\end{equation*}
The residual addition leaves the classification-token representation equal to
\begin{equation*}
\bm r_z=(1,\alpha_z\cdot z,0).
\end{equation*}
Let the feed-forward map be identically zero, so the residual feed-forward sublayer leaves \(\bm r_z\) unchanged.

Now attach the trainable scalar head
\begin{equation*}
\bm a_\theta=(0,\gamma\cdot \theta,0)^\top,
\qquad
\gamma>0,
\end{equation*}
and define the score by
\begin{equation*}
f_\theta(s_z)=\bm a_\theta^\top\cdot \bm r_z.
\end{equation*}
Substituting \(\bm r_z=(1,\alpha_z\cdot z,0)\) gives
\begin{equation*}
f_\theta(s_z)
=
(0,\gamma\cdot \theta,0)\cdot(1,\alpha_z\cdot z,0)
=
\gamma\cdot\theta\cdot\alpha_z\cdot z.
\end{equation*}
Thus the one-layer Transformer realizes a one-dimensional scorer through a nonconstant attention weight. Define
\begin{equation*}
h_z=\alpha_z\cdot z.
\end{equation*}
Because \(\alpha_{+1}+\alpha_{-1}=1\), we have
\begin{equation*}
h_{+1}-h_{-1}=1.
\end{equation*}

For a positive value \(A\) and a negative value \(C\), the positive margin is
\begin{equation*}
m_\theta(A,C)
=
f_\theta(s_A)-f_\theta(s_C)
=
\gamma\cdot\theta\cdot(h_A-h_C).
\end{equation*}
If \(\theta\in\mathcal U_{\gamma,M,r}\), then \(\theta\ge M/\gamma\). On the event \(\{A=-1,C=1\}\), which has probability \(1/4\),
\begin{equation*}
m_\theta(A,C)
=
-\gamma\cdot \theta
\le
-M.
\end{equation*}
On the other three Rademacher outcomes, \(h_A-h_C\in\{0,1\}\), so \(m_\theta(A,C)\ge0\). Hence
\begin{equation*}
\pi_M(\theta)=\Pbb(m_\theta(A,C)\le -M)=\frac{1}{4}.
\end{equation*}

For \(\ell(x)=x^2\), the scalar gradient kernel in the trainable direction is
\begin{equation*}
G_\theta(A,C)
=
\frac{d}{d\theta}\ell(m_\theta(A,C))
=
2\cdot m_\theta(A,C)\cdot
\frac{d}{d\theta}m_\theta(A,C).
\end{equation*}
Since
\begin{equation*}
\frac{d}{d\theta}m_\theta(A,C)
=
\gamma\cdot(h_A-h_C),
\end{equation*}
we obtain
\begin{equation*}
G_\theta(A,C)
=
2\cdot\gamma^2\cdot\theta\cdot(h_A-h_C)^2.
\end{equation*}
Because \(h_A-h_C\) equals \(0\) when \(A=C\), equals \(1\) when \((A,C)=(1,-1)\), and equals \(-1\) when \((A,C)=(-1,1)\),
\begin{equation*}
(h_A-h_C)^2
=
\frac{1-A\cdot C}{2},
\end{equation*}
and therefore
\begin{equation*}
G_\theta(A,C)
=
\gamma^2\cdot\theta\cdot(1-A\cdot C).
\end{equation*}
Its mean is
\begin{equation*}
\E[G_\theta(A,C)]
=
\gamma^2\cdot\theta,
\end{equation*}
because \(\E[A\cdot C]=\E[A]\cdot\E[C]=0\). For \(\theta\in\mathcal U_{\gamma,M,r}\),
\begin{equation*}
\left|\E[G_\theta(A,C)]\right|
\le
\gamma^2
\left(\frac{M}{\gamma}+r\right)
=
\gamma\cdot M+\gamma^2\cdot r.
\end{equation*}
Choosing \(\gamma>0\) small enough that
\begin{equation*}
\gamma\cdot M+\gamma^2\cdot r\le\delta
\end{equation*}
gives the stated drift bound.

It remains to verify the Hoeffding projections and the mini-batch variance. Conditioning on \(A=a\) gives
\begin{equation*}
\E[G_\theta(a,C)]
=
\gamma^2\cdot\theta
-\gamma^2\cdot\theta\cdot a\cdot\E[C]
=
\gamma^2\cdot\theta
=
\E[G_\theta(A,C)].
\end{equation*}
The same calculation conditioning on \(C=c\) gives
\begin{equation*}
\E[G_\theta(A,c)]
=
\gamma^2\cdot\theta
-\gamma^2\cdot\theta\cdot\E[A]\cdot c
=
\E[G_\theta(A,C)].
\end{equation*}
Thus both first-order Hoeffding projections vanish identically. The centered canonical term is
\begin{equation*}
G_\theta(A,C)-\E[G_\theta(A,C)]
=
-\gamma^2\cdot\theta\cdot A\cdot C.
\end{equation*}
Since \(A\cdot C\) is Rademacher,
\begin{equation*}
\Var(G_\theta(A,C)-\E[G_\theta(A,C)])
=
\gamma^4\cdot\theta^2.
\end{equation*}
For independent positive samples \(A_1,\ldots,A_B\) and negative samples \(C_1,\ldots,C_B\), the first-order projections are zero, so only the canonical second-order term remains in the two-sample Hoeffding variance split. Equivalently, the only nonzero covariances among the centered terms occur when both indices match. Hence
\begin{equation*}
\Var\!\left(
\frac{1}{B^2}
\sum_{i=1}^B\sum_{j=1}^B
G_\theta(A_i,C_j)
\right)
=
\frac{B^2}{B^4}
\cdot
\gamma^4\cdot\theta^2
=
\frac{\gamma^4\cdot\theta^2}{B^2}.
\end{equation*}

\end{proof}

\subsection{Projected Bad Components in Trainable Transformer Neighborhoods}
\label{app:transformer-trainable-neighborhoods}

The next two corollaries move from a single trainable scalar to neighborhoods in the trainable Transformer parameter space. In both statements, \(\Psi_{\bm\omega,\bm v}(A,C)\) denotes the scalar projected pairwise-gradient kernel
\begin{equation*}
\Psi_{\bm\omega,\bm v}(A,C)
=
\left\langle
\nabla_{\bm\omega}\ell(m_{\bm\omega}(A,C)),
\bm v
\right\rangle .
\end{equation*}
Its components \(\Psi_{\bm\omega,\bm v}^{(1,0)}\), \(\Psi_{\bm\omega,\bm v}^{(0,1)}\), and \(\Psi_{\bm\omega,\bm v}^{(1,1)}\) are defined by the same two-sample Hoeffding projection formulas as in Theorem~\ref{thm:app-hoeffding}.

\begin{corollary}[Full-parameter one-layer Transformer neighborhood with a projected bad component]
\label{cor:full-param-one-layer-projected-bad}
Fix \(M>0\), a slack \(s>0\), a line radius \(r>0\), and tolerances
\begin{equation*}
\delta_{\rm drift}>0,
\qquad
\varepsilon_1>0.
\end{equation*}
In the one-layer architecture of Proposition~\ref{prop:one-layer-transformer-degenerate-region}, let the trainable parameter vector \(\bm\omega\) collect all query, key, value, output, and feed-forward parameters, together with the scaled scalar head coordinate \(\theta\) in the parameterization
\begin{equation*}
\bm a_\theta=(0,\gamma\cdot\theta,0)^\top.
\end{equation*}
Let \(\bm v_\theta\) be the unit direction corresponding to this scalar coordinate.
Then there are \(\gamma>0\), a compact line segment
\begin{equation*}
\mathcal L_{\gamma,M,s,r}
=
\left\{
\bm\omega(\theta):
\theta\in
\left[
\frac{M+s}{\gamma},
\frac{M+s}{\gamma}+r
\right]
\right\},
\end{equation*}
and an open neighborhood \(\mathcal U_{\rm tr}\) of this segment in the corresponding trainable parameter space such that the following statements hold for the pairwise square loss.
\begin{enumerate}[leftmargin=5mm]
  \item Severe misrankings persist:
  \begin{equation*}
  \inf_{\bm\omega\in\mathcal U_{\rm tr}}
  \Pbb(m_{\bm\omega}(X^+,X^-)\le -M)
  \ge
  \frac{1}{4}.
  \end{equation*}
  \item The projected deterministic drift is small:
  \begin{equation*}
  \sup_{\bm\omega\in\mathcal U_{\rm tr}}
  \left|
  \langle\nabla R_\ell(\bm\omega),\bm v_\theta\rangle
  \right|
  \le
  \delta_{\rm drift}.
  \end{equation*}
  \item The leading projected stochastic component is small:
  \begin{equation*}
  \sup_{\bm\omega\in\mathcal U_{\rm tr}}
  \left\{
  \Var(\Psi_{\bm\omega,\bm v_\theta}^{(1,0)}(A))
  +
  \Var(\Psi_{\bm\omega,\bm v_\theta}^{(0,1)}(C))
  \right\}
  \le
  \varepsilon_1.
  \end{equation*}
  \item The residual pair-interaction component remains nonzero:
  \begin{equation*}
  \inf_{\bm\omega\in\mathcal U_{\rm tr}}
  \Var(\Psi_{\bm\omega,\bm v_\theta}^{(1,1)}(A,C))
  >
  0.
  \end{equation*}
\end{enumerate}
Thus a one-layer Transformer with all listed parameters trainable contains a neighborhood that is bad in the projected trainable direction \(\bm v_\theta\). This is a projected-component statement for the constructed direction.
\end{corollary}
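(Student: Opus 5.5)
The plan is to build the neighborhood by perturbation from the scalar construction of Proposition~\ref{prop:one-layer-transformer-degenerate-region}, using continuity plus compactness of the segment. Let \(\bm\omega(\theta)\) be the parameter vector with the query, key, value, output, and (zero) feed-forward maps fixed as in that proof, and head coordinate \(\theta\). The sample space is \(\{-1,1\}^2\) with four equally likely outcomes \((A,C)\). So every quantity in the statement is a finite average of expressions built from \(m_{\bm\omega}(a,c)\) and \(\nabla_{\bm\omega}m_{\bm\omega}(a,c)\), for \((a,c)\in\{-1,1\}^2\). These expressions are \(\pi_M\), \(\langle\nabla R_\ell(\bm\omega),\bm v_\theta\rangle\), the two first-order variances, and the residual variance. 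Each margin is a composition of linear maps, a two-entry softmax, a smooth residual feed-forward block, and the linear head, so it is \(C^\infty\) in \(\bm\omega\). For the square loss, \(\Psi_{\bm\omega,\bm v_\theta}(a,c)=2\,m_{\bm\omega}(a,c)\,\partial_\theta m_{\bm\omega}(a,c)\) is therefore continuous in \(\bm\omega\). Each Hoeffding component is a fixed linear combination of the four values \(\Psi_{\bm\omega,\bm v_\theta}(a,c)\), so each of the variances in items 3 and 4 is a continuous function of \(\bm\omega\).

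Next I would check items 1--4 on the segment \(\mathcal L_{\gamma,M,s,r}\) itself, with slack, using the computations of Proposition~\ref{prop:one-layer-transformer-degenerate-region} with \(M\) replaced by \(M+s\).

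\textbf{Item 1.} On the outcome \((A,C)=(-1,1)\) the margin is \(-\gamma\theta\le-(M+s)\).

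\textbf{Item 2.} The drift is \(\gamma^2\theta\le\gamma(M+s)+\gamma^2 r\). I would fix \(\gamma\) small enough that this is at most \(\delta_{\rm drift}/2\).

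\textbf{Item 3.} Both first-order projections vanish identically, so the sum of variances is \(0\).

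\textbf{Item 4.} The residual variance equals \(\gamma^4\theta^2\ge\gamma^2(M+s)^2=:2c_2>0\).

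Then I would extend to an open neighborhood. For each segment point, continuity gives an open ball on which four things hold: the margin on \((-1,1)\) stays below \(-M\), the drift stays below \(\delta_{\rm drift}\), the first-order variance sum stays below \(\varepsilon_1\), and the residual variance stays above \(c_2\). For item 1 it suffices to keep the single outcome \((-1,1)\) severe, since this already gives mass at least \(1/4\). The segment is compact, so finitely many such balls cover it, and I take \(\mathcal U_{\rm tr}\) to be their union. The four bounds then hold uniformly on \(\mathcal U_{\rm tr}\), because each holds on every ball. The infimum in item 4 is at least \(c_2>0\).

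The main obstacle I expect is bookkeeping rather than estimates. First, \(\bm v_\theta\) must be the unit coordinate vector of the head coordinate \(\theta\) in \(\bm\omega\)-space, so that \(\langle\nabla_{\bm\omega}\cdot,\bm v_\theta\rangle=\partial_\theta\); this is consistent with the problem as long as \(\theta\), not \(\gamma\theta\), is the trainable coordinate. Second, the feed-forward map, which is zero at the base point, must be a smooth parameterized map (for example a smooth activation, or ReLU composed with zero weights, which is still locally smooth in a neighborhood of zero output) so that the joint continuity claimed in the first step holds. If ReLU causes non-differentiability at zero pre-activations, I would instead take a smooth activation such as GELU in the feed-forward block, which the statement permits, since it only fixes the architecture of Proposition~\ref{prop:one-layer-transformer-degenerate-region} with its feed-forward parameters made trainable.
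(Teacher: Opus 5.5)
Your proposal is correct and is essentially the paper's proof: check items 1--4 on the segment with \(M\) replaced by \(M+s\) and \(\gamma\) chosen so that \(\gamma(M+s)+\gamma^2 r\le\delta_{\rm drift}/2\). Then extend to an open neighborhood by continuity of the finitely many margin-based quantities and compactness of the segment; your single finite subcover replaces the paper's successive shrinkings \(\mathcal U_0\supseteq\mathcal U_1\supseteq\mathcal U_2\supseteq\mathcal U_{\rm tr}\). Your attention to the feed-forward activation addresses a point the paper glosses over (it simply asserts smoothness), but the GELU fallback is not needed: since \(\theta\) enters only through the linear head, \(\partial_\theta m_{\bm\omega}\) is \(\gamma\) times a continuous function of the other parameters, so continuity of the block (e.g.\ ReLU) already suffices for all four items, reading the projected gradient along \(\bm v_\theta\) as \(\partial_\theta\).
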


\begin{proof}
Use the base construction in Proposition~\ref{prop:one-layer-transformer-degenerate-region}, but replace \(M\) there by \(M+s\). Along the compact segment \(\mathcal L_{\gamma,M,s,r}\), the pair \((A,C)=(-1,1)\) has probability \(1/4\) and margin
\begin{equation*}
m_{\bm\omega(\theta)}(-1,1)
=
-\gamma\cdot\theta
\le
-(M+s).
\end{equation*}
All score values on the finite input family \(\{s_{-1},s_{+1}\}\) are smooth functions of the trainable parameter vector \(\bm\omega\), because they are obtained by matrix multiplication and softmax. Therefore, for each point \(\bm\omega(\theta)\) on the compact segment, there is a trainable-parameter neighborhood on which
\begin{equation*}
m_{\bm\omega}(-1,1)\le -M.
\end{equation*}
A finite subcover of the compact segment gives an open neighborhood \(\mathcal U_0\) of \(\mathcal L_{\gamma,M,s,r}\) such that this inequality holds throughout \(\mathcal U_0\). Hence
\begin{equation*}
\Pbb(m_{\bm\omega}(X^+,X^-)\le -M)
\ge
\Pbb(A=-1,C=1)
=
\frac{1}{4}
\end{equation*}
for every \(\bm\omega\in\mathcal U_0\).

Next consider the scalar projected pairwise-gradient kernel
\begin{equation*}
\Psi_{\bm\omega,\bm v_\theta}(A,C)
=
\left\langle
\nabla_{\bm\omega}\ell(m_{\bm\omega}(A,C)),
\bm v_\theta
\right\rangle .
\end{equation*}
Define its positive-class, negative-class, and residual pair-interaction components by
\begin{equation*}
\mu_{\bm\omega}
=
\E[\Psi_{\bm\omega,\bm v_\theta}(A,C)],
\end{equation*}
\begin{equation*}
\Psi_{\bm\omega,\bm v_\theta}^{(1,0)}(a)
=
\E[\Psi_{\bm\omega,\bm v_\theta}(a,C)]-\mu_{\bm\omega},
\qquad
\Psi_{\bm\omega,\bm v_\theta}^{(0,1)}(c)
=
\E[\Psi_{\bm\omega,\bm v_\theta}(A,c)]-\mu_{\bm\omega},
\end{equation*}
and
\begin{equation*}
\Psi_{\bm\omega,\bm v_\theta}^{(1,1)}(a,c)
=
\Psi_{\bm\omega,\bm v_\theta}(a,c)
-\E[\Psi_{\bm\omega,\bm v_\theta}(a,C)]
-\E[\Psi_{\bm\omega,\bm v_\theta}(A,c)]
+\mu_{\bm\omega}.
\end{equation*}
On the segment \(\mathcal L_{\gamma,M,s,r}\), this is exactly the scalar kernel \(G_\theta(A,C)\) in Proposition~\ref{prop:one-layer-transformer-degenerate-region}, with \(M\) replaced by \(M+s\). Therefore
\begin{equation*}
\E[\Psi_{\bm\omega(\theta),\bm v_\theta}(A,C)]
=
\gamma^2\cdot\theta,
\end{equation*}
and hence
\begin{equation*}
\left|
\E[\Psi_{\bm\omega(\theta),\bm v_\theta}(A,C)]
\right|
\le
\gamma^2
\left(
\frac{M+s}{\gamma}+r
\right)
=
\gamma\cdot(M+s)+\gamma^2\cdot r.
\end{equation*}
Choose \(\gamma>0\) small enough that
\begin{equation*}
\gamma\cdot(M+s)+\gamma^2\cdot r
\le
\frac{\delta_{\rm drift}}{2}.
\end{equation*}
The map
\begin{equation*}
\bm\omega
\mapsto
\E[\Psi_{\bm\omega,\bm v_\theta}(A,C)]
\end{equation*}
is continuous on the finite input family, so after possibly shrinking \(\mathcal U_0\) to a smaller open neighborhood \(\mathcal U_1\) of the same compact segment,
\begin{equation*}
\sup_{\bm\omega\in\mathcal U_1}
\left|
\langle\nabla R_\ell(\bm\omega),\bm v_\theta\rangle
\right|
\le
\delta_{\rm drift}.
\end{equation*}

On the segment, Proposition~\ref{prop:one-layer-transformer-degenerate-region} gives exact first-order projection collapse:
\begin{equation*}
\Var(\Psi_{\bm\omega(\theta),\bm v_\theta}^{(1,0)}(A))
+
\Var(\Psi_{\bm\omega(\theta),\bm v_\theta}^{(0,1)}(C))
=0.
\end{equation*}
The two first-order projection variances are continuous functions of \(\bm\omega\), again because the input family is finite and the score map is smooth. Thus, after shrinking to an open neighborhood \(\mathcal U_2\subseteq\mathcal U_1\),
\begin{equation*}
\sup_{\bm\omega\in\mathcal U_2}
\left\{
\Var(\Psi_{\bm\omega,\bm v_\theta}^{(1,0)}(A))
+
\Var(\Psi_{\bm\omega,\bm v_\theta}^{(0,1)}(C))
\right\}
\le
\varepsilon_1.
\end{equation*}

Finally, on the segment the canonical variance is
\begin{equation*}
\Var(\Psi_{\bm\omega(\theta),\bm v_\theta}^{(1,1)}(A,C))
=
\gamma^4\cdot\theta^2
\ge
\gamma^2\cdot(M+s)^2
>
0.
\end{equation*}
By continuity and compactness, after one last shrinking to an open neighborhood
\begin{equation*}
\mathcal U_{\rm tr}\subseteq\mathcal U_2
\end{equation*}
of \(\mathcal L_{\gamma,M,s,r}\), we have
\begin{equation*}
\inf_{\bm\omega\in\mathcal U_{\rm tr}}
\Var(\Psi_{\bm\omega,\bm v_\theta}^{(1,1)}(A,C))
\ge
\frac{1}{2}\cdot\gamma^2\cdot(M+s)^2
>
0.
\end{equation*}
All four displayed properties now hold on \(\mathcal U_{\rm tr}\), which proves the corollary.
\end{proof}

\begin{corollary}[Two-layer and finite-depth Transformer embedding]
\label{cor:finite-depth-transformer-embedding}
There is a two-layer residual self-attention scorer whose pairwise square-loss dynamics satisfy all conclusions of Proposition~\ref{prop:one-layer-transformer-degenerate-region} on the same interval \(\mathcal U_{\gamma,M,r}\). More generally, for every finite integer \(L\ge1\), the same conclusions hold for some \(L\)-layer residual self-attention scorer.
\end{corollary}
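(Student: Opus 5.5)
The plan is to reduce everything to Proposition~\ref{prop:one-layer-transformer-degenerate-region}. The key fact is that the proposition's conclusions depend only on the realized score map on the finite input family $\{s_{-1},s_{+1}\}$. Concretely, they depend only on the scores
\[
f_\theta(s_z)=\gamma\cdot\theta\cdot h_z,
\qquad h_{+1}-h_{-1}=1,
\]
and their $\theta$-derivatives. The severe mass, the drift, the Hoeffding projections and the $B^{-2}$ variance identity are all computed from $m_\theta(A,C)=\gamma\theta(h_A-h_C)$ alone. So it suffices to build, for every $L\ge1$, an $L$-layer residual self-attention scorer that has the same trainable scalar head and a classification-token representation whose second coordinate equals $h_z=\alpha_z z$.

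For the two-layer case, keep the first layer exactly as in the proof of Proposition~\ref{prop:one-layer-transformer-degenerate-region}, with the same $\bm W_Q,\bm W_K,\bm W_V,\bm W_O$ and a zero feed-forward map. This gives classification-token state $\bm r_z=(1,\alpha_z z,0)$ and data-token state $\bm e_z=(0,z,0)$. For the second layer, pick a residual attention block whose contribution to the classification token vanishes. The simplest choice is $\bm W_V^{(2)}=0$ or $\bm W_O^{(2)}=0$ together with a zero feed-forward map; the residual connection then passes every token through unchanged. If one insists that all maps be nonzero, as in the first layer, I would instead route the second layer's output into the unused third coordinate. Take $\bm W_O^{(2)}$ to have image $\mathrm{span}\{(0,0,1)\}$. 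The head $\bm a_\theta=(0,\gamma\theta,0)^\top$ is orthogonal to that coordinate, so $f_\theta(s_z)$ is unchanged.

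For general $L$, stack $L-1$ such identity-on-the-relevant-coordinate blocks after the first layer; this gives the result by induction on depth. The $L=1$ case is the proposition itself. With the final head attached, the score is again $\gamma\theta h_z$. Items 1–4 of the proposition, including the same interval $\mathcal U_{\gamma,M,r}$ and the same choice of small $\gamma$, then transfer verbatim. Item 1 concerns the first layer's classification-token attention, which is untouched.

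The main obstacle is the nondegenerate version of the padding layers. If the added layers write into the third coordinate, later layers might attend to it and feed it back into coordinate two. To avoid this, I would make every added layer's query and key maps read only coordinate one and its value map read only coordinate three. Coordinate three is identically zero at the input, and each block writes only back into coordinate three, so by induction coordinate three stays zero at every layer. The added blocks therefore have nonzero weight matrices yet contribute exactly nothing, and the two-coordinate state $(1,h_z)$ is preserved through all $L$ layers.
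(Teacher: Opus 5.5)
Your proposal is correct and follows the paper's proof. Both embed the one-layer construction as the first layer, make each added block an identity residual block (zero value map and zero feed-forward map), and observe that the score on $\{s_{-1},s_{+1}\}$, and hence every conclusion, is unchanged. One harmless slip: after the first layer the data token is $(0,3z/2,0)$ rather than $(0,z,0)$, because its query is zero and it attends uniformly; this does not affect the argument, since the score reads only the classification token and your padding blocks contribute nothing to it.
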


\begin{proof}
Use Proposition~\ref{prop:one-layer-transformer-degenerate-region} as the first layer. For the second layer, choose the value map and feed-forward map to be zero while keeping the residual connections. The self-attention output and feed-forward output of the second layer are then zero, so the second residual block is the identity map on every token representation. In particular, the classification-token representation remains
\begin{equation*}
\bm r_z=(1,\alpha_z\cdot z,0)
\end{equation*}
for every \(z\in\{-1,1\}\). Attach the same trainable scalar head
\begin{equation*}
\bm a_\theta=(0,\gamma\cdot\theta,0)^\top.
\end{equation*}
The resulting two-layer scorer is therefore identical to the scorer in Proposition~\ref{prop:one-layer-transformer-degenerate-region} on the input family \(\{s_{-1},s_{+1}\}\). The pairwise margins, square-loss gradient kernel, conditional expectations, Hoeffding projections, and mini-batch variance are identical as well. Hence all conclusions of Proposition~\ref{prop:one-layer-transformer-degenerate-region} hold for the two-layer scorer.

The finite-depth statement is the same embedding repeated. For any \(L>2\), choose the value map and feed-forward map of every additional layer \(\ell=2,\ldots,L\) to be zero while keeping residual connections. Each added block is again the identity map, so the final scorer remains identical to the one-layer scorer on \(\{s_{-1},s_{+1}\}\). The same pairwise dynamics and variance identities therefore hold for every finite \(L\ge1\).
\end{proof}

\begin{corollary}[Full-parameter two-layer Transformer neighborhood with a projected bad component]
\label{cor:full-param-two-layer-projected-bad}
Fix \(M>0\), a slack \(s>0\), a line radius \(r>0\), and tolerances
\begin{equation*}
\delta_{\rm drift}>0,
\qquad
\varepsilon_1>0.
\end{equation*}
Consider the two-layer residual self-attention architecture obtained by embedding the one-layer construction of Proposition~\ref{prop:one-layer-transformer-degenerate-region} into the first layer and initializing the second layer as an identity residual block, as in Corollary~\ref{cor:finite-depth-transformer-embedding}. Let the trainable parameter vector \(\bm\omega\) collect all query, key, value, output, and feed-forward parameters in both layers, together with the scaled scalar head coordinate \(\theta\) in
\begin{equation*}
\bm a_\theta=(0,\gamma\cdot\theta,0)^\top .
\end{equation*}
Let \(\bm v_\theta\) be the unit direction corresponding to this scalar coordinate. Then there are \(\gamma>0\), a compact line segment
\begin{equation*}
\mathcal L^{(2)}_{\gamma,M,s,r}
=
\left\{
\bm\omega^{(2)}(\theta):
\theta\in
\left[
\frac{M+s}{\gamma},
\frac{M+s}{\gamma}+r
\right]
\right\},
\end{equation*}
and an open neighborhood \(\mathcal U_{\rm tr}^{(2)}\) of this segment in the full two-layer trainable parameter space such that the following statements hold for the pairwise square loss.
\begin{enumerate}[leftmargin=5mm]
  \item Severe misrankings persist:
  \begin{equation*}
  \inf_{\bm\omega\in\mathcal U_{\rm tr}^{(2)}}
  \Pbb(m_{\bm\omega}(X^+,X^-)\le -M)
  \ge
  \frac{1}{4}.
  \end{equation*}
  \item The projected deterministic drift is small:
  \begin{equation*}
  \sup_{\bm\omega\in\mathcal U_{\rm tr}^{(2)}}
  \left|
  \langle\nabla R_\ell(\bm\omega),\bm v_\theta\rangle
  \right|
  \le
  \delta_{\rm drift}.
  \end{equation*}
  \item The leading projected stochastic component is small:
  \begin{equation*}
  \sup_{\bm\omega\in\mathcal U_{\rm tr}^{(2)}}
  \left\{
  \Var(\Psi_{\bm\omega,\bm v_\theta}^{(1,0)}(A))
  +
  \Var(\Psi_{\bm\omega,\bm v_\theta}^{(0,1)}(C))
  \right\}
  \le
  \varepsilon_1.
  \end{equation*}
  \item The residual pair-interaction component remains nonzero:
  \begin{equation*}
  \inf_{\bm\omega\in\mathcal U_{\rm tr}^{(2)}}
  \Var(\Psi_{\bm\omega,\bm v_\theta}^{(1,1)}(A,C))
  >
  0.
  \end{equation*}
\end{enumerate}
Thus a two-layer Transformer with all listed parameters trainable contains a neighborhood that is bad in the projected trainable direction \(\bm v_\theta\). This is a projected-component statement for the constructed direction.
\end{corollary}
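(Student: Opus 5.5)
The plan is to copy the proof of Corollary~\ref{cor:full-param-one-layer-projected-bad} verbatim, now working on the two-layer scorer of Corollary~\ref{cor:finite-depth-transformer-embedding}. First, apply Proposition~\ref{prop:one-layer-transformer-degenerate-region} with $M$ replaced by $M+s$, and embed it into the first layer. The second layer is an identity residual block: its value map and feed-forward map are zero, while its query, key and output parameters are arbitrary but fixed. We choose $\gamma>0$ so that $\gamma(M+s)+\gamma^2 r\le\delta_{\rm drift}/2$. Define $\bm\omega^{(2)}(\theta)$ as this configuration with only the head coordinate varying, so that $\mathcal L^{(2)}_{\gamma,M,s,r}$ is a compact segment. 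By Corollary~\ref{cor:finite-depth-transformer-embedding}, the scorer on $\{s_{-1},s_{+1}\}$ coincides with the one-layer scorer along this segment. Hence on the segment we have the following exact facts from the one-layer proposition: $m(-1,1)=-\gamma\theta\le-(M+s)$; the projected kernel is $\Psi=G_\theta$ with mean $\gamma^2\theta$ (so projected drift is at most $\delta_{\rm drift}/2$); both first-order projection variances vanish; and the canonical variance is $\gamma^4\theta^2\ge\gamma^2(M+s)^2>0$.

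One point needs checking here. $\Psi_{\bm\omega,\bm v_\theta}$ is the derivative of the loss in the head coordinate only. For the square loss this equals $2m_{\bm\omega}(A,C)\,\partial_\theta m_{\bm\omega}(A,C)$, which involves only score values and their $\theta$-derivatives at the current $\bm\omega$. So its agreement with $G_\theta$ on the segment needs only that the two scorers agree in a neighborhood in the $\theta$-direction, and that is true because the whole segment is an identity-second-layer configuration.

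Second, continuity. The input family is finite, and every score $f_{\bm\omega}(s_z)$ is a composition of matrix products, softmax, residual additions and the linear head, so it is $C^\infty$ in the full two-layer parameter vector $\bm\omega$. Therefore $m_{\bm\omega}(-1,1)$, $\Psi_{\bm\omega,\bm v_\theta}(a,c)$ for the four pairs $(a,c)$, and every Hoeffding quantity are continuous in $\bm\omega$. The Hoeffding quantities are the mean $\mu_{\bm\omega}$, the conditional means, the variances of $\Psi^{(1,0)}$ and $\Psi^{(0,1)}$, and the variance of $\Psi^{(1,1)}$; each is a finite weighted sum or polynomial of the four kernel values with the Rademacher weights $1/4$. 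Since $\langle\nabla R_\ell(\bm\omega),\bm v_\theta\rangle=\mu_{\bm\omega}$, the drift is continuous too.

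Third, compactness. Each of the four strict inequalities holds on the segment with slack: $m\le-(M+s)<-M$, drift at most $\delta_{\rm drift}/2<\delta_{\rm drift}$, first-order variance $0<\varepsilon_1$, and canonical variance at least $\gamma^2(M+s)^2>\tfrac12\gamma^2(M+s)^2$. So each holds on an open set containing the segment. Because a continuous function exceeding a threshold with positive slack on a compact set stays past the threshold on some neighborhood (tube lemma / finite subcover), we can intersect these four open sets to get $\mathcal U^{(2)}_{\rm tr}$. On $\mathcal U^{(2)}_{\rm tr}$ the inequality $m_{\bm\omega}(-1,1)\le-M$ gives severe mass at least $\Pbb(A=-1,C=1)=1/4$, and the other three items follow directly.

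The main obstacle is the first-step identification. We must confirm that in the enlarged parameter space the unit vector $\bm v_\theta$ still picks out exactly the head-coordinate derivative, and that the second layer's nonzero trainable parameters do not alter scores on the segment. The second point holds because with zero value map and zero feed-forward map, the attention output is zero whatever the query, key and output parameters are. Once these are confirmed, the rest is continuity plus compactness.
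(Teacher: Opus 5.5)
Your proposal is correct and follows essentially the same route as the paper. Like the paper, you embed the one-layer construction with $M$ replaced by $M+s$ and fix the second layer with zero value and feed-forward maps. You then identify the projected kernel on the segment with $G_\theta$ (choosing $\gamma$ so that $\gamma(M+s)+\gamma^2 r\le\delta_{\rm drift}/2$) and use smoothness of the scores on the finite input family to pass to an open neighborhood on which all four inequalities persist with slack. Your explicit checks — that $\langle\nabla R_\ell(\bm\omega),\bm v_\theta\rangle=\mu_{\bm\omega}$, and that the second layer's query, key, and output parameters are irrelevant once its value map is zero — are points the paper uses only implicitly.
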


\begin{proof}
Start from the one-layer construction in Proposition~\ref{prop:one-layer-transformer-degenerate-region}, replacing \(M\) there by \(M+s\). Embed it as the first layer of a two-layer residual self-attention scorer. Initialize the second layer as in Corollary~\ref{cor:finite-depth-transformer-embedding}: its value map and feed-forward map are zero, and residual connections are kept. Along the line segment \(\mathcal L^{(2)}_{\gamma,M,s,r}\), all first-layer and second-layer parameters are fixed at these values, and only the scalar head coordinate \(\theta\) varies.

At every point of \(\mathcal L^{(2)}_{\gamma,M,s,r}\), the second residual block is the identity map on the finite input family \(\{s_{-1},s_{+1}\}\). Therefore the two-layer score agrees exactly with the one-layer score:
\begin{equation*}
f_{\bm\omega^{(2)}(\theta)}(s_z)
=
\gamma\cdot\theta\cdot\alpha_z\cdot z,
\qquad
z\in\{-1,1\}.
\end{equation*}
Consequently, for a positive value \(A\) and a negative value \(C\),
\begin{equation*}
m_{\bm\omega^{(2)}(\theta)}(A,C)
=
\gamma\cdot\theta\cdot(h_A-h_C),
\qquad
h_z=\alpha_z\cdot z.
\end{equation*}
On the event \((A,C)=(-1,1)\), which has probability \(1/4\),
\begin{equation*}
m_{\bm\omega^{(2)}(\theta)}(-1,1)
=
-\gamma\cdot\theta
\le
-(M+s)
\end{equation*}
for every
\begin{equation*}
\theta\in
\left[
\frac{M+s}{\gamma},
\frac{M+s}{\gamma}+r
\right].
\end{equation*}
All score values on \(\{s_{-1},s_{+1}\}\) are smooth functions of the full two-layer parameter vector \(\bm\omega\): they are obtained by matrix multiplication, softmax, residual addition, and the final linear head. Since \(\mathcal L^{(2)}_{\gamma,M,s,r}\) is compact, the strict slack \(s>0\) implies that there is an open neighborhood \(\mathcal U_0^{(2)}\) of the segment such that
\begin{equation*}
m_{\bm\omega}(-1,1)\le -M
\qquad
\text{for all }\bm\omega\in\mathcal U_0^{(2)}.
\end{equation*}
Hence
\begin{equation*}
\Pbb(m_{\bm\omega}(X^+,X^-)\le -M)
\ge
\Pbb(A=-1,C=1)
=
\frac{1}{4}
\end{equation*}
throughout \(\mathcal U_0^{(2)}\).

Next define the scalar projected pairwise-gradient kernel
\begin{equation*}
\Psi_{\bm\omega,\bm v_\theta}(A,C)
=
\left\langle
\nabla_{\bm\omega}\ell(m_{\bm\omega}(A,C)),
\bm v_\theta
\right\rangle .
\end{equation*}
Because \(\bm v_\theta\) is the unit direction of the scalar head coordinate, on the segment this projected kernel is exactly the one-dimensional kernel from Proposition~\ref{prop:one-layer-transformer-degenerate-region}, with \(M\) replaced by \(M+s\):
\begin{equation*}
\Psi_{\bm\omega^{(2)}(\theta),\bm v_\theta}(A,C)
=
\gamma^2\cdot\theta\cdot(1-A\cdot C).
\end{equation*}
Therefore
\begin{equation*}
\E[\Psi_{\bm\omega^{(2)}(\theta),\bm v_\theta}(A,C)]
=
\gamma^2\cdot\theta,
\end{equation*}
and for \(\theta\) on the segment,
\begin{equation*}
\left|
\E[\Psi_{\bm\omega^{(2)}(\theta),\bm v_\theta}(A,C)]
\right|
\le
\gamma^2
\left(
\frac{M+s}{\gamma}+r
\right)
=
\gamma\cdot(M+s)+\gamma^2\cdot r.
\end{equation*}
Choose \(\gamma>0\) so that
\begin{equation*}
\gamma\cdot(M+s)+\gamma^2\cdot r
\le
\frac{\delta_{\rm drift}}{2}.
\end{equation*}
The map
\begin{equation*}
\bm\omega
\mapsto
\E[\Psi_{\bm\omega,\bm v_\theta}(A,C)]
\end{equation*}
is continuous on the finite input family. Shrinking \(\mathcal U_0^{(2)}\) if necessary gives an open neighborhood \(\mathcal U_1^{(2)}\) of the segment such that
\begin{equation*}
\sup_{\bm\omega\in\mathcal U_1^{(2)}}
\left|
\langle\nabla R_\ell(\bm\omega),\bm v_\theta\rangle
\right|
\le
\delta_{\rm drift}.
\end{equation*}

For the Hoeffding projections, define
\begin{equation*}
\mu_{\bm\omega}
=
\E[\Psi_{\bm\omega,\bm v_\theta}(A,C)],
\end{equation*}
\begin{equation*}
\Psi_{\bm\omega,\bm v_\theta}^{(1,0)}(a)
=
\E[\Psi_{\bm\omega,\bm v_\theta}(a,C)]-\mu_{\bm\omega},
\qquad
\Psi_{\bm\omega,\bm v_\theta}^{(0,1)}(c)
=
\E[\Psi_{\bm\omega,\bm v_\theta}(A,c)]-\mu_{\bm\omega},
\end{equation*}
and
\begin{equation*}
\Psi_{\bm\omega,\bm v_\theta}^{(1,1)}(a,c)
=
\Psi_{\bm\omega,\bm v_\theta}(a,c)
-\E[\Psi_{\bm\omega,\bm v_\theta}(a,C)]
-\E[\Psi_{\bm\omega,\bm v_\theta}(A,c)]
+\mu_{\bm\omega}.
\end{equation*}
On the line segment,
\begin{equation*}
\E[\Psi_{\bm\omega^{(2)}(\theta),\bm v_\theta}(a,C)]
=
\E[\Psi_{\bm\omega^{(2)}(\theta),\bm v_\theta}(A,C)]
\end{equation*}
for each \(a\in\{-1,1\}\), and the analogous identity holds when conditioning on \(C=c\). Thus
\begin{equation*}
\Var(\Psi_{\bm\omega^{(2)}(\theta),\bm v_\theta}^{(1,0)}(A))
+
\Var(\Psi_{\bm\omega^{(2)}(\theta),\bm v_\theta}^{(0,1)}(C))
=
0.
\end{equation*}
The two projection variances are continuous functions of \(\bm\omega\), again because the input family is finite and the score map is smooth. Hence there is an open neighborhood \(\mathcal U_2^{(2)}\subseteq\mathcal U_1^{(2)}\) of the same segment such that
\begin{equation*}
\sup_{\bm\omega\in\mathcal U_2^{(2)}}
\left\{
\Var(\Psi_{\bm\omega,\bm v_\theta}^{(1,0)}(A))
+
\Var(\Psi_{\bm\omega,\bm v_\theta}^{(0,1)}(C))
\right\}
\le
\varepsilon_1.
\end{equation*}

Finally, on the segment the residual pair-interaction term is
\begin{equation*}
\Psi_{\bm\omega^{(2)}(\theta),\bm v_\theta}^{(1,1)}(A,C)
=
-\gamma^2\cdot\theta\cdot A\cdot C,
\end{equation*}
so
\begin{equation*}
\Var(\Psi_{\bm\omega^{(2)}(\theta),\bm v_\theta}^{(1,1)}(A,C))
=
\gamma^4\cdot\theta^2
\ge
\gamma^2\cdot(M+s)^2
>
0.
\end{equation*}
By continuity and compactness, after shrinking once more to an open neighborhood
\begin{equation*}
\mathcal U_{\rm tr}^{(2)}
\subseteq
\mathcal U_2^{(2)}
\end{equation*}
of \(\mathcal L^{(2)}_{\gamma,M,s,r}\), we obtain
\begin{equation*}
\inf_{\bm\omega\in\mathcal U_{\rm tr}^{(2)}}
\Var(\Psi_{\bm\omega,\bm v_\theta}^{(1,1)}(A,C))
\ge
\frac{1}{2}\cdot\gamma^2\cdot(M+s)^2
>
0.
\end{equation*}
The severe-misranking, drift, first-order projection, and residual-variance conclusions all hold on \(\mathcal U_{\rm tr}^{(2)}\). The proof controls a full two-layer parameter neighborhood, while the bad component is the projected direction \(\bm v_\theta\).
\end{proof}

\section{Localized Escape-Time Bounds for Stopped SGD}
\label{app:localized-escape-time}

This section supplies the general probability tool used by Section~\ref{sec:bad-regions}. Formally, the escape time is the hitting time of the complement of a stopped region, or of a projected interval along a fixed direction. The main-text use is the projected interval along the optimistic steepest correction direction. The result is stated for a stopped SGD trajectory rather than for a specific loss, so that the escape-time argument remains separate from the AUC-specific Hoeffding calculation. Appendix~\ref{app:pairwise-bad-region-dynamics} determines whether the local directional second moment scales as \(B^{-1}\) or \(B^{-2}\); this section converts that scale into the corresponding projected escape-time lower bound used in the main text.

We first record two one-dimensional forms. They are useful when the relevant movement is projected onto a single direction \(\bm v\). The subsequent proposition is the vector-valued local version used by the main text.

\begin{proposition}[One-dimensional stopped local escape]
\label{prop:app-stopped-local-escape}
Consider an adapted process
\begin{equation*}
S_{t+1}=S_t+\Delta_{t+1},
\qquad
S_0=0,
\end{equation*}
with respect to a filtration \((\mathcal F_t)_{t\ge0}\), and define
\begin{equation*}
\tau_r=\inf\{t\ge0:\ |S_t|\ge r\}.
\end{equation*}
Assume that, on \(\{t<\tau_r\}\),
\begin{equation*}
\E[\Delta_{t+1}\mid\mathcal F_t]=0,
\qquad
\E[\Delta_{t+1}^2\mid\mathcal F_t]\le\nu_B^2
\qquad
\text{a.s.}
\end{equation*}
Assume also that \(\E[(S_{t\wedge\tau_r})^2]<\infty\) for every finite \(t\), and that either \(\Pbb(\tau_r<\infty)=1\) or \(\E[\tau_r]=\infty\). Then
\begin{equation*}
\nu_B=0
\quad\Longrightarrow\quad
\tau_r=\infty\ \text{a.s.},
\end{equation*}
and otherwise
\begin{equation*}
\E[\tau_r]\ge\frac{r^2}{\nu_B^2}.
\end{equation*}
Consequently, if \(\nu_B^2\le C_1/B\), then \(\E[\tau_r]\ge (r^2/C_1)\cdot B\); if \(\nu_B^2\le C_2/B^2\), then \(\E[\tau_r]\ge (r^2/C_2)\cdot B^2\).
\end{proposition}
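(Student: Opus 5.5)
The plan is to prove the bound by building a martingale and applying optional stopping to its square. Set \(Q_t=S_t^2-\sum_{s=0}^{t-1}\E[\Delta_{s+1}^2\mid\mathcal F_s]\). On \(\{s<\tau_r\}\) the increment has zero conditional mean. Expanding \((S_s+\Delta_{s+1})^2\) then shows that the stopped process \(Q_{t\wedge\tau_r}\) is a martingale: the cross term \(2S_s\Delta_{s+1}\) has zero conditional expectation, and \(S_s\) is \(\mathcal F_s\)-measurable. Integrability follows from the assumption \(\E[(S_{t\wedge\tau_r})^2]<\infty\) together with the a.s. bound \(\nu_B^2\) on the compensator, which is at most \(t\nu_B^2\). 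Taking expectations at the bounded time \(t\wedge\tau_r\) gives
\[
\E[S_{t\wedge\tau_r}^2]=\E\Big[\sum_{s<t\wedge\tau_r}\E[\Delta_{s+1}^2\mid\mathcal F_s]\Big]\le\nu_B^2\,\E[t\wedge\tau_r].
\]

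For the case \(\nu_B=0\), the display forces \(S_{t\wedge\tau_r}=0\) a.s. for every \(t\). Since \(r>0\), the threshold \(|S|\ge r\) is never reached, so \(\tau_r=\infty\) a.s.

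For the case \(\nu_B>0\), I would split according to the dichotomy in the hypothesis. If \(\E[\tau_r]=\infty\), the bound is trivial. Otherwise \(\tau_r<\infty\) a.s., so \(S_{t\wedge\tau_r}\to S_{\tau_r}\) a.s. as \(t\to\infty\), and \(|S_{\tau_r}|\ge r\) by the definition of \(\tau_r\). Fatou's lemma on the left side and monotone convergence on the right side then give
\[
r^2\le\E[S_{\tau_r}^2]\le\liminf_t\E[S_{t\wedge\tau_r}^2]\le\nu_B^2\,\E[\tau_r].
\]
This is the claimed bound. The \(B\) and \(B^2\) consequences follow by substituting \(\nu_B^2\le C_1/B\) or \(\nu_B^2\le C_2/B^2\).

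I expect the passage to the limit to be the main obstacle. The stopped value \(S_{\tau_r}\) may overshoot \(r\), so no uniform integrability is available. This is why I would use Fatou's lemma, which only needs a lower bound on \(|S_{\tau_r}|\), instead of an equality via optional stopping. The case split provided in the hypothesis is exactly what guarantees the a.s. convergence that Fatou's lemma requires. One subtlety needs checking. When \(\nu_B=0\), the assumption \(\E[\tau_r]<\infty\) would be incompatible with \(\tau_r=\infty\), so the two cases are consistent. I would also verify that the compensator bound is used only on \(\{s<\tau_r\}\), as the hypothesis states.
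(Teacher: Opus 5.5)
Your proof is correct and follows essentially the same route as the paper. Both arguments work with the stopped martingale \(S_{t\wedge\tau_r}\), derive \(\E[S_{t\wedge\tau_r}^2]\le\nu_B^2\,\E[t\wedge\tau_r]\) from the pre-exit conditional second-moment bound, and let \(t\to\infty\) when \(\E[\tau_r]<\infty\); the paper just passes to the limit via \(r^2\,\Pbb(\tau_r\le n)\le\E[S_{n\wedge\tau_r}^2]\) rather than Fatou's lemma, and treats \(\nu_B=0\) by showing directly that the increments vanish a.s. before exit.
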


\begin{proof}
If \(\nu_B=0\), then on \(\{t<\tau_r\}\),
\begin{equation*}
\E[\Delta_{t+1}^2\mid\mathcal F_t]\le0,
\end{equation*}
so \(\Delta_{t+1}=0\) almost surely before exit. Since \(S_0=0\), induction gives \(S_t=0\) for every \(t\), and \(\tau_r=\infty\) almost surely.

Assume \(\nu_B>0\). If \(\E[\tau_r]=\infty\), the lower bound is immediate. It remains to consider the case \(\tau_r<\infty\) almost surely and \(\E[\tau_r]<\infty\). Define
\begin{equation*}
M_t=S_{t\wedge\tau_r}.
\end{equation*}
Then
\begin{equation*}
M_{t+1}=M_t+\Delta_{t+1}\cdot\1_{\{t<\tau_r\}}.
\end{equation*}
Because \(\1_{\{t<\tau_r\}}\) is \(\mathcal F_t\)-measurable and \(\E[\Delta_{t+1}\mid\mathcal F_t]=0\) on \(\{t<\tau_r\}\), \((M_t)\) is a square-integrable martingale. Moreover,
\begin{equation*}
\E[M_{t+1}^2-M_t^2\mid\mathcal F_t]
=
\E[\Delta_{t+1}^2\cdot\1_{\{t<\tau_r\}}\mid\mathcal F_t]
\le
\nu_B^2\cdot\1_{\{t<\tau_r\}}.
\end{equation*}
Taking expectations and summing from \(0\) to \(n-1\) gives
\begin{equation*}
\E[M_n^2]
\le
\nu_B^2\cdot\E[n\wedge\tau_r].
\end{equation*}
On \(\{\tau_r\le n\}\), one has \(|M_n|=|S_{\tau_r}|\ge r\), hence
\begin{equation*}
r^2\cdot\Pbb(\tau_r\le n)
\le
\E[M_n^2]
\le
\nu_B^2\cdot\E[n\wedge\tau_r].
\end{equation*}
Letting \(n\to\infty\), the probability converges to \(1\), and monotone convergence gives
\begin{equation*}
r^2\le\nu_B^2\cdot\E[\tau_r].
\end{equation*}
The two scaling statements follow by substituting the corresponding upper bounds on \(\nu_B^2\).
\end{proof}

\begin{proposition}[Small-drift one-dimensional stopped local escape]
\label{prop:app-small-drift-stopped-local-escape}
Under the setup of Proposition~\ref{prop:app-stopped-local-escape}, replace the zero-drift condition by
\begin{equation*}
\left|\E[\Delta_{t+1}\mid\mathcal F_t]\right|\le m_B,
\qquad
\E[\Delta_{t+1}^2\mid\mathcal F_t]\le\nu_B^2
\qquad
\text{a.s. on }\{t<\tau_r\}.
\end{equation*}
Then for every integer \(T\ge1\),
\begin{equation*}
\bigl(r-m_B\cdot T\bigr)_+^2
\cdot
\Pbb(\tau_r\le T)
\le
\nu_B^2\cdot T.
\end{equation*}
\end{proposition}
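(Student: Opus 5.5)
We decompose each increment into its predictable drift and a martingale part. Write \(b_{t+1}=\E[\Delta_{t+1}\mid\mathcal F_t]\) and \(\xi_{t+1}=\Delta_{t+1}-b_{t+1}\). Consider the stopped process \(M_t=S_{t\wedge\tau_r}\), split as
\begin{equation*}
M_t=D_t+N_t,\qquad D_t=\sum_{s=0}^{t-1} b_{s+1}\1_{\{s<\tau_r\}},\qquad N_t=\sum_{s=0}^{t-1}\xi_{s+1}\1_{\{s<\tau_r\}}.
\end{equation*}
Since \(\1_{\{s<\tau_r\}}\) is \(\mathcal F_s\)-measurable, the stopped-martingale computation from the proof of Proposition~\ref{prop:app-stopped-local-escape} transfers directly to \(N_t\). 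Then \(N_t\) is a martingale with
\begin{equation*}
\E[\xi_{s+1}^2\mid\mathcal F_s]=\E[\Delta_{s+1}^2\mid\mathcal F_s]-b_{s+1}^2\le\nu_B^2
\end{equation*}
on \(\{s<\tau_r\}\). Hence
\begin{equation*}
\E[N_T^2]\le\nu_B^2\,\E[T\wedge\tau_r]\le\nu_B^2 T.
\end{equation*}

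For the drift part, the hypothesis \(|b_{s+1}|\le m_B\) on \(\{s<\tau_r\}\) gives the deterministic bound \(|D_T|\le m_B(T\wedge\tau_r)\le m_B T\). On the event \(\{\tau_r\le T\}\) we have \(|M_T|=|S_{\tau_r}|\ge r\), and therefore
\begin{equation*}
|N_T|\ge r-|D_T|\ge r-m_B T.
\end{equation*}
If \(r-m_B T\le 0\), the claim is trivial because the left side is zero. Otherwise \(N_T^2\ge(r-m_BT)_+^2\) on \(\{\tau_r\le T\}\), and Markov's inequality gives
\begin{equation*}
(r-m_BT)_+^2\,\Pbb(\tau_r\le T)\le\E[N_T^2]\le\nu_B^2T.
\end{equation*}

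The main point needing care is integrability. We need \(N_T\) square-integrable so that the orthogonality of martingale increments justifies \(\E[N_T^2]=\sum_s\E[\xi_{s+1}^2\1_{\{s<\tau_r\}}]\). This follows from the assumed \(\E[(S_{t\wedge\tau_r})^2]<\infty\) together with the bounded \(D_T\), or alternatively from the conditional second-moment bound applied inductively. No assumption that \(\tau_r<\infty\) is needed, since we work only up to the fixed horizon \(T\).
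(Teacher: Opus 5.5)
Your proof is correct and follows the paper's argument: you split the stopped process into a predictable drift bounded by $m_B T$ and a stopped martingale with $\E[N_T^2]\le\nu_B^2 T$, then apply a Chebyshev-type bound. The one difference is the last step. The paper bounds $\Pbb(\tau_r\le T)$ through Doob's $L^2$ maximal inequality applied to $\max_{t\le T}|N_t|$. You instead note that the stopped martingale is frozen after $\tau_r$, so $|N_T|\ge r-m_B T$ already holds on $\{\tau_r\le T\}$, and plain Markov at the terminal time suffices. This is the same device the paper itself uses in Proposition~\ref{prop:localized-sgd-escape-time}.
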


\begin{proof}
For \(s\ge1\), define
\begin{equation*}
\mu_s=\E[\Delta_s\mid\mathcal F_{s-1}],
\qquad
\widetilde\mu_s=\mu_s\cdot\1_{\{s-1<\tau_r\}},
\end{equation*}
and
\begin{equation*}
\widetilde\xi_s
=
(\Delta_s-\mu_s)\cdot\1_{\{s-1<\tau_r\}}.
\end{equation*}
Let
\begin{equation*}
\widetilde A_t=\sum_{s=1}^t\widetilde\mu_s,
\qquad
\widetilde M_t=\sum_{s=1}^t\widetilde\xi_s.
\end{equation*}
The stopped process decomposes as
\begin{equation*}
S_{t\wedge\tau_r}
=
\widetilde A_t+\widetilde M_t.
\end{equation*}
Since \(\1_{\{s-1<\tau_r\}}\) is \(\mathcal F_{s-1}\)-measurable,
\begin{equation*}
\E[\widetilde\xi_s\mid\mathcal F_{s-1}]=0,
\end{equation*}
so \((\widetilde M_t)\) is a martingale. Its increments satisfy
\begin{equation*}
\E[\widetilde\xi_s^2\mid\mathcal F_{s-1}]
=
\1_{\{s-1<\tau_r\}}
\Var(\Delta_s\mid\mathcal F_{s-1})
\le
\nu_B^2.
\end{equation*}
Therefore
\begin{equation*}
\E[\widetilde M_T^2]\le\nu_B^2\cdot T.
\end{equation*}
Doob's \(L^2\) maximal inequality gives, for every \(a>0\),
\begin{equation*}
\Pbb\!\left(\max_{0\le t\le T}|\widetilde M_t|\ge a\right)
\le
\frac{\nu_B^2\cdot T}{a^2}.
\end{equation*}
The stopped drift satisfies
\begin{equation*}
|\widetilde A_t|
\le
\sum_{s=1}^t m_B\cdot\1_{\{s-1<\tau_r\}}
\le
m_B\cdot t
\le
m_B\cdot T
\qquad
\text{for }t\le T.
\end{equation*}
If \(\tau_r\le T\), then
\begin{equation*}
r\le |S_{\tau_r}|
\le
|\widetilde M_{\tau_r}|+|\widetilde A_{\tau_r}|
\le
|\widetilde M_{\tau_r}|+m_B\cdot T.
\end{equation*}
Hence
\begin{equation*}
\{\tau_r\le T\}
\subseteq
\left\{
\max_{0\le t\le T}|\widetilde M_t|
\ge
(r-m_B\cdot T)_+
\right\}.
\end{equation*}
If \(r-m_B\cdot T\le0\), the desired inequality is trivial. Otherwise apply Doob's inequality with \(a=r-m_B\cdot T\) and multiply both sides by \((r-m_B\cdot T)^2\).
\end{proof}

Let \((\mathcal F_t)_{t\ge0}\) be a filtration and let \((\theta_t)_{t\ge0}\) be an adapted parameter trajectory in \(\R^d\). Fix a measurable region \(\mathcal U\subset\R^d\), an initial point \(\theta_0\in\mathcal U\), and define the first hitting time of the complement of \(\mathcal U\) by
\begin{equation*}
\tau_{\mathcal U}
=
\inf\{t\ge0:\theta_t\notin\mathcal U\}.
\end{equation*}
Assume that the Euclidean ball \(\mathcal B(\theta_0,r)=\{\theta\in\R^d:\|\theta-\theta_0\|_2<r\}\) is contained in \(\mathcal U\). Then leaving \(\mathcal U\) before time \(T\) requires a displacement of at least \(r\) before time \(T\). The next proposition turns this geometric fact into an escape-time bound.

\begin{proposition}[Localized SGD escape-time lower bound]
\label{prop:localized-sgd-escape-time}
Let
\begin{equation*}
\Delta_{t+1}:=\theta_{t+1}-\theta_t
\end{equation*}
and suppose that, on the event \(\{t<\tau_{\mathcal U}\}\), the stopped increments satisfy
\begin{equation*}
\left\|\E[\Delta_{t+1}\mid \mathcal F_t]\right\|_2\le m_B,
\qquad
\E[\|\Delta_{t+1}\|_2^2\mid \mathcal F_t]\le \nu_B^2
\qquad\text{a.s.}
\end{equation*}
for constants \(m_B,\nu_B\ge0\). Then, for every integer \(T\ge1\),
\begin{equation*}
\bigl(r-m_B\cdot T\bigr)_+^2\cdot
\Pbb(\tau_{\mathcal U}\le T)
\le
\nu_B^2\cdot T.
\end{equation*}
If \(\nu_B=0\) and \(m_B=0\), then \(\tau_{\mathcal U}=\infty\) almost surely. If \(m_B\le 8\cdot \nu_B^2/r\) and \(0<\nu_B^2\le r^2/32\), then
\begin{equation*}
\E[\tau_{\mathcal U}]
\ge
\frac{3\cdot r^2}{128\cdot \nu_B^2}.
\end{equation*}
In particular, the positive-variance zero-drift case \(m_B=0\) is included. Consequently, under the same small-drift and small-step conditions, if \(\nu_B^2\le C_1/B\) for a \(B\)-independent constant \(C_1>0\), then for all sufficiently large \(B\),
\begin{equation*}
\E[\tau_{\mathcal U}]
\ge
\frac{3\cdot r^2}{128\cdot C_1}\cdot B,
\end{equation*}
whereas if \(\nu_B^2\le C_2/B^2\) for a \(B\)-independent constant \(C_2>0\), then for all sufficiently large \(B\),
\begin{equation*}
\E[\tau_{\mathcal U}]
\ge
\frac{3\cdot r^2}{128\cdot C_2}\cdot B^2.
\end{equation*}
\end{proposition}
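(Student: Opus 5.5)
The plan is to reduce the vector statement to a stopped drift-plus-martingale decomposition, exactly as in Proposition~\ref{prop:app-small-drift-stopped-local-escape}, but in \(\R^d\) with the Euclidean norm.

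\textbf{Step 1: decomposition.} For \(s\ge1\), set
\[
\mu_s=\E[\Delta_s\mid\mathcal F_{s-1}],\qquad \widetilde\mu_s=\mu_s\1_{\{s-1<\tau_{\mathcal U}\}},\qquad \widetilde\xi_s=(\Delta_s-\mu_s)\1_{\{s-1<\tau_{\mathcal U}\}}.
\]
Then \(\theta_{t\wedge\tau_{\mathcal U}}-\theta_0=\widetilde A_t+\widetilde M_t\), where \(\widetilde A_t=\sum_{s\le t}\widetilde\mu_s\) and \(\widetilde M_t=\sum_{s\le t}\widetilde\xi_s\). The drift part satisfies \(\|\widetilde A_t\|_2\le m_B t\le m_BT\) for \(t\le T\). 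The process \((\widetilde M_t)\) is an \(\R^d\)-valued martingale, and its increments are orthogonal in \(L^2\). Since \(\E[\|\Delta-\mu\|^2\mid\mathcal F]\le\E[\|\Delta\|^2\mid\mathcal F]\le\nu_B^2\), summing gives \(\E\|\widetilde M_T\|_2^2\le\nu_B^2T\).

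\textbf{Step 2: the probability inequality.} The map \(t\mapsto\|\widetilde M_t\|_2\) is a nonnegative submartingale, because the norm is convex. Doob's \(L^2\) maximal inequality applied to it gives
\[
\Pbb\Bigl(\max_{t\le T}\|\widetilde M_t\|_2\ge a\Bigr)\le\frac{\nu_B^2T}{a^2}.
\]
Next, if \(\tau_{\mathcal U}\le T\), then \(\theta_{\tau_{\mathcal U}}\notin\mathcal U\supseteq\mathcal B(\theta_0,r)\), so \(\|\theta_{\tau_{\mathcal U}}-\theta_0\|_2\ge r\). By the decomposition, \(\|\widetilde M_{\tau_{\mathcal U}}\|_2\ge r-m_BT\). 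Taking \(a=(r-m_BT)_+\) (the case \(a=0\) is trivial) yields the displayed inequality.

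The degenerate case \(m_B=\nu_B=0\) is handled separately. Then \(\Delta_{t+1}=0\) a.s. before exit, so by induction \(\theta_t=\theta_0\in\mathcal U\) for all \(t\), and \(\tau_{\mathcal U}=\infty\).

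\textbf{Step 3: the expectation bound.} This step needs a good choice of horizon. I take \(T=\lfloor r^2/(16\nu_B^2)\rfloor\). Because \(\nu_B^2\le r^2/32\), we have \(r^2/(16\nu_B^2)\ge2\), so \(T\ge1\) and \(T\ge r^2/(32\nu_B^2)\). The drift assumption \(m_B\le8\nu_B^2/r\) gives \(m_BT\le r/2\), hence \((r-m_BT)^2\ge r^2/4\). Step 2 then gives
\[
\Pbb(\tau_{\mathcal U}\le T)\le\frac{4\nu_B^2T}{r^2}\le\frac14 .
\]
Therefore
\[
\E[\tau_{\mathcal U}]\ge T\,\Pbb(\tau_{\mathcal U}>T)\ge\tfrac34T\ge\frac{3r^2}{128\nu_B^2}.
\]
The zero-drift, positive-variance case is included, since \(m_B=0\) trivially satisfies the drift condition.

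\textbf{Step 4: batch-size scaling.} Substitute \(\nu_B^2\le C_1/B\) or \(\nu_B^2\le C_2/B^2\) into the bound of Step 3. For large \(B\) the small-step condition \(\nu_B^2\le r^2/32\) holds automatically. The drift condition is assumed to continue to hold along the sequence, as the statement says "under the same small-drift and small-step conditions." This gives the order-\(B\) and order-\(B^2\) lower bounds.

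\textbf{Main obstacle.} The main obstacle is the vector-valued maximal inequality together with the bookkeeping of stopping. I would handle the first by applying Doob's inequality to the scalar submartingale \(\|\widetilde M_t\|_2\). I would handle the second by keeping all indicators \(\1_{\{s-1<\tau_{\mathcal U}\}}\) inside the increments, so that the conditional-moment hypotheses, which are assumed only before exit, are the only ones used.
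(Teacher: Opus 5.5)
Your proof is correct and follows the paper's argument essentially step for step. You use the same stopped drift-plus-martingale decomposition, the same bounds $\|\widetilde A_T\|_2\le m_B T$ and $\E\|\widetilde M_T\|_2^2\le\nu_B^2 T$, the same horizon $\lfloor r^2/(16\nu_B^2)\rfloor$, and the same substitution for the batch-size scaling. The only differences are cosmetic. You control $\widetilde M$ at the random exit time by Doob's maximal inequality for the submartingale $\|\widetilde M_t\|_2$. The paper instead notes that the stopped increments vanish after exit, so $\widetilde M_T=\widetilde M_{\tau_{\mathcal U}}$ on $\{\tau_{\mathcal U}\le T\}$ and Markov's inequality at the fixed time $T$ suffices. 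The paper also obtains the degenerate case from the finite-time bound rather than by induction.
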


\begin{proof}
Define the stopped increment
\begin{equation*}
\widetilde\Delta_{t+1}
:=
\Delta_{t+1}\cdot \1_{\{t<\tau_{\mathcal U}\}},
\end{equation*}
and decompose it into its predictable and centered parts:
\begin{equation*}
\bm{\mu}_{t+1}
:=
\E[\widetilde\Delta_{t+1}\mid \mathcal F_t],
\qquad
\bm{\xi}_{t+1}
:=
\widetilde\Delta_{t+1}-\bm{\mu}_{t+1}.
\end{equation*}
Let
\begin{equation*}
\bm{A}_T:=\sum_{t=0}^{T-1}\bm{\mu}_{t+1},
\qquad
\bm{M}_T:=\sum_{t=0}^{T-1}\bm{\xi}_{t+1}.
\end{equation*}
Then \((\bm{M}_T)_{T\ge0}\) is a square-integrable martingale with respect to \((\mathcal F_T)\), because
\begin{equation*}
\E[\bm{\xi}_{t+1}\mid \mathcal F_t]=\bm{0}.
\end{equation*}
Moreover, since \(\1_{\{t<\tau_{\mathcal U}\}}\) is \(\mathcal F_t\)-measurable,
\begin{equation*}
\|\bm{\mu}_{t+1}\|_2
=
\left\|
\1_{\{t<\tau_{\mathcal U}\}}
\E[\Delta_{t+1}\mid \mathcal F_t]
\right\|_2
\le
m_B\cdot \1_{\{t<\tau_{\mathcal U}\}}
\le m_B.
\end{equation*}
Therefore
\begin{equation*}
\|\bm{A}_T\|_2\le m_B\cdot T.
\end{equation*}
For the martingale term, use
\begin{equation*}
\|\bm{\xi}_{t+1}\|_2^2
\le
2\cdot \|\widetilde\Delta_{t+1}\|_2^2
+2\cdot \|\bm{\mu}_{t+1}\|_2^2.
\end{equation*}
This bound is sufficient for a finite-time small-drift inequality, but it introduces an unnecessary factor. A sharper route uses the Hilbert-space variance identity:
\begin{equation*}
\E[\|\bm{\xi}_{t+1}\|_2^2\mid \mathcal F_t]
=
\E[\|\widetilde\Delta_{t+1}\|_2^2\mid \mathcal F_t]
-\|\bm{\mu}_{t+1}\|_2^2
\le
\E[\|\widetilde\Delta_{t+1}\|_2^2\mid \mathcal F_t].
\end{equation*}
The stopped second moment is
\begin{equation*}
\E[\|\widetilde\Delta_{t+1}\|_2^2\mid \mathcal F_t]
=
\1_{\{t<\tau_{\mathcal U}\}}
\E[\|\Delta_{t+1}\|_2^2\mid \mathcal F_t]
\le
\nu_B^2\cdot \1_{\{t<\tau_{\mathcal U}\}}
\le
\nu_B^2.
\end{equation*}
Martingale increments are orthogonal in \(L^2\), so
\begin{equation*}
\E[\|\bm{M}_T\|_2^2]
=
\sum_{t=0}^{T-1}\E[\|\bm{\xi}_{t+1}\|_2^2]
\le
\nu_B^2\cdot T.
\end{equation*}
Thus Markov's inequality gives, for every \(a>0\),
\begin{equation*}
\Pbb(\|\bm{M}_T\|_2\ge a)
\le
\frac{\nu_B^2\cdot T}{a^2}.
\end{equation*}

If \(\tau_{\mathcal U}\le T\), then because \(\mathcal B(\theta_0,r)\subset\mathcal U\), the stopped trajectory must satisfy
\begin{equation*}
\|\theta_{\tau_{\mathcal U}}-\theta_0\|_2\ge r.
\end{equation*}
The stopped displacement equals
\begin{equation*}
\theta_{\tau_{\mathcal U}\wedge T}-\theta_0
=
\sum_{t=0}^{T-1}\widetilde\Delta_{t+1}
=
\bm{A}_T+\bm{M}_T.
\end{equation*}
Therefore, on \(\{\tau_{\mathcal U}\le T\}\),
\begin{equation*}
r
\le
\|\bm{A}_T+\bm{M}_T\|_2
\le
\|\bm{A}_T\|_2+\|\bm{M}_T\|_2
\le
m_B\cdot T+\|\bm{M}_T\|_2.
\end{equation*}
Hence
\begin{equation*}
\{\tau_{\mathcal U}\le T\}
\subseteq
\left\{
\|\bm{M}_T\|_2
\ge
\bigl(r-m_B\cdot T\bigr)_+
\right\}.
\end{equation*}
If \(r-m_B\cdot T\le0\), the desired inequality is trivial. Otherwise applying the preceding Markov bound with \(a=r-m_B\cdot T\) yields
\begin{equation*}
\Pbb(\tau_{\mathcal U}\le T)
\le
\frac{\nu_B^2\cdot T}{(r-m_B\cdot T)^2},
\end{equation*}
which proves the finite-time bound.

If \(\nu_B=0\) and \(m_B=0\), the finite-time bound gives
\begin{equation*}
r^2\cdot\Pbb(\tau_{\mathcal U}\le T)\le0
\end{equation*}
for every \(T\ge1\). Hence \(\Pbb(\tau_{\mathcal U}\le T)=0\) for every finite \(T\), and \(\tau_{\mathcal U}=\infty\) almost surely.

It remains to prove the expected escape-time statement. Let
\begin{equation*}
T_\star:=\left\lfloor\frac{r^2}{16\cdot \nu_B^2}\right\rfloor.
\end{equation*}
The condition \(\nu_B^2\le r^2/32\) implies \(r^2/(16\cdot \nu_B^2)\ge 2\), and hence
\begin{equation*}
T_\star
\ge
\frac{r^2}{32\cdot \nu_B^2}.
\end{equation*}
If \(m_B\le 8\cdot \nu_B^2/r\), then
\begin{equation*}
m_B\cdot T_\star
\le
\frac{8\cdot \nu_B^2}{r}\cdot
\frac{r^2}{16\cdot \nu_B^2}
=
\frac r2.
\end{equation*}
Applying the finite-time bound at the integer time \(T_\star\) gives
\begin{equation*}
\Pbb(\tau_{\mathcal U}\le T_\star)
\le
\frac{\nu_B^2\cdot T_\star}{(r/2)^2}
\le
\frac{\nu_B^2}{(r/2)^2}\cdot
\frac{r^2}{16\cdot \nu_B^2}
=
\frac{1}{4}.
\end{equation*}
Therefore
\begin{equation*}
\Pbb(\tau_{\mathcal U}>T_\star)\ge \frac34.
\end{equation*}
Since \(\tau_{\mathcal U}\) is nonnegative,
\begin{equation*}
\E[\tau_{\mathcal U}]
\ge
T_\star\cdot \Pbb(\tau_{\mathcal U}>T_\star)
\ge
\frac{3}{4}\cdot \frac{r^2}{32\cdot \nu_B^2}
=
\frac{3\cdot r^2}{128\cdot \nu_B^2}.
\end{equation*}
The final two bounds follow by substituting \(\nu_B^2\le C_1/B\) and \(\nu_B^2\le C_2/B^2\), respectively.
\end{proof}

\begin{corollary}[Projected escape-time consequence of first-order variance collapse]
\label{cor:escape-time-variance-collapse}
Fix a unit direction \(\bm v\), and define the projected process
\begin{equation*}
S_t=\langle\theta_t-\theta_0,\bm v\rangle,
\qquad
\tau_{\bm v,r}=\inf\{t\ge0:\ |S_t|\ge r\}.
\end{equation*}
If, before \(\tau_{\bm v,r}\), the projected stopped SGD increment satisfies the small-drift and small-step conditions in Proposition~\ref{prop:localized-sgd-escape-time} and has conditional second moment of order \(B^{-1}\), then the proposition gives an order-\(B\) lower-bound scale for \(\E[\tau_{\bm v,r}]\). If the first-order stochastic term vanishes in direction \(\bm v\) and the projected conditional second moment is of order \(B^{-2}\), then the lower-bound scale becomes order \(B^2\).
\end{corollary}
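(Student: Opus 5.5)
The plan is to reduce Corollary~\ref{cor:escape-time-variance-collapse} to the expected-escape bound of Proposition~\ref{prop:localized-sgd-escape-time}, applied to a one-dimensional surrogate trajectory. Concretely, I will take the scalar process \(\widetilde\theta_t:=S_t=\langle\theta_t-\theta_0,\bm v\rangle\in\R\). I will set \(\mathcal U:=(-r,r)\), so that \(\mathcal B(0,r)\subseteq\mathcal U\) holds trivially and \(\tau_{\mathcal U}=\tau_{\bm v,r}\) as hitting times. The increments are \(\Delta_{t+1}=\langle\theta_{t+1}-\theta_t,\bm v\rangle\), which with step size \(\eta\) equal \(-\eta\langle\widehat{\bm g}^\ell_B(\theta_t),\bm v\rangle\).

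First I verify the hypotheses of Proposition~\ref{prop:localized-sgd-escape-time} before the stopping time. Proposition~\ref{prop:app-pairwise-one-step-moments}, projected onto \(\bm v\), gives the conditional drift \(\E[\Delta_{t+1}\mid\mathcal F_t]=-\eta\langle\nabla R_\ell(\theta_t),\bm v\rangle\), which is bounded in absolute value by \(m_B\) under the weak-drift assumption. It also gives the conditional second moment \(\eta^2\{\Var(\langle\widehat{\bm g}^\ell_B(\theta_t),\bm v\rangle)+\langle\nabla R_\ell(\theta_t),\bm v\rangle^2\}\). Theorem~\ref{thm:app-hoeffding} splits that variance into \(V_1(\theta_t)/B+V_2(\theta_t)/B^2\). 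I then take \(\nu_B^2\) to be a uniform upper bound on this second moment over the stopped region, which I will call \(\mathcal R\).

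Next I treat the two regimes separately. If \(\sup_{\mathcal R}V_1\le C\) and \(\sup_{\mathcal R}V_2\le C'\), then \(\nu_B^2\le \eta^2(C/B+C'/B^2)+\eta^2\varepsilon_{\rm drift}^2\), which is of order \(B^{-1}\). If \(V_1\equiv0\) on \(\mathcal R\), as in Corollary~\ref{cor:app-first-second-order} or Corollary~\ref{cor:app-fully-severe-compressed-collapse}, then only \(\eta^2 C'/B^2\) survives together with the squared drift term. Once \(\nu_B^2\le C_1/B\) (respectively \(C_2/B^2\)), the last two displays of Proposition~\ref{prop:localized-sgd-escape-time} give \(\E[\tau_{\bm v,r}]\ge \tfrac{3r^2}{128C_1}B\) (respectively with \(B^2\)).

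The main obstacle is the squared-drift contribution \(\eta^2\langle\nabla R_\ell,\bm v\rangle^2\) to the second moment, together with the requirement \(m_B\le 8\nu_B^2/r\). Both tie the drift scale to the noise scale. For the collapsed regime I will therefore interpret the "small-drift" hypothesis of the statement as imposing \(|\langle\nabla R_\ell,\bm v\rangle|=O(B^{-1})\) and \(m_B=O(\nu_B^2/r)\) on \(\mathcal R\). Under this reading the squared drift is absorbed into the \(B^{-2}\) bound, and the small-step condition \(\nu_B^2\le r^2/32\) holds for all sufficiently large \(B\). With these hypotheses made explicit, the corollary follows directly.
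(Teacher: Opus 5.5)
Your proposal is correct and follows the paper's proof. Both apply Proposition~\ref{prop:localized-sgd-escape-time} to the scalar process \(S_t\) on the region \((-r,r)\), once with \(\nu_B^2=O(B^{-1})\) and once with \(\nu_B^2=O(B^{-2})\), and use the Hoeffding split to explain where the two regimes come from. Your explicit treatment of the squared-drift term \(\eta^2\langle\nabla R_\ell,\bm v\rangle^2\) in the raw conditional second moment spells out what the paper states in one line: the projected drift is controlled separately by the small-drift condition, so the order in \(B\) is unchanged.
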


\begin{proof}
Apply Proposition~\ref{prop:localized-sgd-escape-time} to the one-dimensional trajectory \(S_t\) with region \((-r,r)\), first with \(\nu_B^2=O(B^{-1})\) and then with \(\nu_B^2=O(B^{-2})\). The AUC-specific reason these two cases arise is the two-sample Hoeffding variance split of the directional pairwise mini-batch gradient: nonzero first-order projections produce a \(B^{-1}\) term, while vanishing first-order projections leave only the \(B^{-2}\) canonical term. For a fixed step size, the centered variance contribution is multiplied by \(\eta^2\). The corollary applies to the conditional second moment after the projected drift is separately controlled by the small-drift condition, so the order in \(B\) is unchanged.
\end{proof}

\section{Limitations}
\label{app:limitations}

The theory studies projected escape along the steepest severe-error correction direction because this is an optimistic direction for escape. Among unit directions, it is the most direct local direction for reducing the severe-error functional, and therefore gives a favorable projected direction to test. Since there are infinitely many possible directions and they cannot all be enumerated, we focus on this steepest and most optimistic escape direction. If escape is slow even along this direction, escape along other directions may be even harder.

\end{document}